\documentclass[12pt]{article}
\usepackage{amsmath}
\usepackage{graphicx}
\usepackage{enumerate}
\usepackage{url} % not crucial - just used below for the URL 
\usepackage{lineno}	% add line numbers 
%\linenumbers

%\pdfminorversion=4
% NOTE: To produce blinded version, replace "0" with "1" below.
\newcommand{\blind}{0}

%%%%%%%%%%%%%%%%
% PERSONAL PACKAGES
%%%%%%%%%%%%%%%%
\usepackage{graphicx}
\usepackage{tikz}
\usetikzlibrary{fit,positioning,arrows,automata,calc}
\tikzset{
  main/.style={circle, minimum size = 5mm, thick, draw =black!80, node distance = 10mm},
  connect/.style={-latex, thick},
  box/.style={rectangle, draw=black!100}
}
\usepackage{amsmath}
\usepackage{amsfonts}
\usepackage{amsthm}
\usepackage{mathtools}
\mathtoolsset{showonlyrefs}
\usepackage[colorinlistoftodos]{todonotes}
\usepackage{bm}
\usepackage{breqn}
\usepackage{physics}
\usepackage{enumitem}
\usepackage{bbold}
\usepackage{epigraph}

\usepackage{csquotes}
\usepackage{hyperref}       % hyperlinks
\usepackage{url}            % simple URL typesetting
\usepackage{booktabs}       % professional-quality tables
\usepackage{multirow}
\usepackage{amsfonts}       % blackboard math symbols
\usepackage{nicefrac}       % compact symbols for 1/2, etc.
\usepackage{microtype}      % microtypography
\usepackage{color}
\usepackage{scalerel}
\usepackage[toc,page]{appendix}
\usepackage[font=small,labelfont=bf]{caption}
\usepackage{blindtext}
\usepackage{amssymb}
\usepackage{mathtools}
\usepackage{dsfont}
\allowdisplaybreaks
\usepackage{float}
\usepackage{graphicx}
\usepackage{subcaption}
\graphicspath{ {images/} }
\usepackage{letltxmacro}%
\usepackage{thmtools,thm-restate}
\usepackage{relsize}
\usepackage{algpseudocode}
\usepackage{enumitem}
\usepackage[ruled]{algorithm2e}
\usepackage[moderate]{savetrees} % To Save Space
%%%%%%%%%%%%%% NON-NIPS Packages %%%%%%%%%%%%%%

%%%%%%%%%%%%%% PERSONAL MACROS %%%%%%%%%%%%%%
%\newtheorem{remark}{Remark}[section]
\newtheorem{definition}{Definition}[section]
\newtheorem{cor}{Corollary}[section]
\newtheorem{lemma}{Lemma}[section]
\newtheorem{remark}{Remark}[section]
\newtheorem{corollary}{Corollary}[section]
\newtheorem{prop}{Proposition}[section]
\newtheorem{proposition}{Proposition}[section]
\newtheorem{theorem}{Theorem}[section]
\newtheorem{property}{Property}[section]

\newcommand{\imag}{\boldsymbol{i}}
\newcommand{\poly}{\mathsf{poly}}

\newcommand{\Reals}{\mathbb{R}}
\newcommand{\Ind}[1]{\mathbb{1}\left\{ #1 \right\} }
\newcommand{\Prob}[1]{\mathbb{P}\left( #1 \right) }
\newcommand{\bA}{\boldsymbol{A}}
\newcommand{\bB}{\boldsymbol{B}}

\newcommand{\bE}{\boldsymbol{E}}
\newcommand{\bH}{\boldsymbol{H}}

\newcommand{\bR}{\boldsymbol{R}}
\newcommand{\bS}{\boldsymbol{S}}

\newcommand{\bU}{\boldsymbol{U}}
\newcommand{\bV}{\boldsymbol{V}}
\newcommand{\bW}{\boldsymbol{W}}
\newcommand{\bX}{\boldsymbol{X}}

\newcommand{\bZ}{\boldsymbol{Z}}

\newcommand{\bP}{\boldsymbol{P}}
\newcommand{\bQ}{\boldsymbol{Q}}
\newcommand{\bAO}{\boldsymbol{A}^{\Omega}}
\newcommand{\bBO}{\boldsymbol{B}^{\Omega}}

\newcommand{\YO}{Y^{\Omega}}
\newcommand{\bhAO}{\widehat{\bA}^{\Omega}}

\newcommand{\cE}{\mathcal{E}}
\newcommand{\hcE}{\widehat{\cE}}
\newcommand{\cG}{\mathcal{G}}
\newcommand{\cH}{\mathcal{H}}

\newcommand{\tZ}{\widetilde{Z}}

\newcommand{\btZ}{\widetilde{\bZ}}

\newcommand{\hY}{\widehat{Y}}
\newcommand{\bhA}{\widehat{\bA}}

\newcommand{\Econd}{\cE}
\newcommand{\F}{\mathcal{F}}
\newcommand{\D}{\mathcal{D}}
\newcommand{\cL}{\mathcal{L}}
\newcommand{\distas}[1]{\mathbin{\overset{#1}{\kern\z@\sim}}}%
\newsavebox{\mybox}\newsavebox{\mysim}
\newcommand{\distras}[1]{%
  \savebox{\mybox}{\hbox{\kern3pt$\scriptstyle#1$\kern3pt}}%
  \savebox{\mysim}{\hbox{$\sim$}}%
  \mathbin{\overset{#1}{\kern\z@\resizebox{\wd\mybox}{\ht\mysim}{$\sim$}}}%
}
\newcommand{\hrho}{\widehat{\rho}}

\newcommand{\bSigma}{\boldsymbol{\Sigma}}

\newcommand{\hbeta}{\widehat{\beta}}

\newcommand{\bbA}{\bA}%{\overline{\bA}}
\newcommand{\bbE}{\bE}%{\overline{\bE}}
\newcommand{\bbH}{\bH}%{\overline{\bH}}
\newcommand{\bbP}{\bP}%{\overline{\bP}}
\newcommand{\bbX}{\bX}%{\overline{\bX}}
\newcommand{\bbZ}{\bZ}%{\overline{\bZ}}
\newcommand{\bbhA}{\bhA}%{\widehat{\bbA}}

\newcommand{\Ex}{\mathbb{E}}
\newcommand{\Pb}{\mathbb{P}}
\newcommand{\Rb}{\mathbb{R}}
\newcommand{\Nb}{\mathbb{N}}

\newcommand{\pcr}{\text{PCR}}
\newcommand{\hsvt}{\text{HSVT}}
% DON'T change margins - should be 1 inch all around.
%\addtolength{\oddsidemargin}{-.5in}%
%\addtolength{\evensidemargin}{-.5in}%
%\addtolength{\textwidth}{1in}%
%\addtolength{\textheight}{-.3in}%
%\addtolength{\topmargin}{-.8in}%
\addtolength{\oddsidemargin}{-.875in}
\addtolength{\evensidemargin}{-.875in}
\addtolength{\textwidth}{1.75in}
\addtolength{\topmargin}{-.875in}
\addtolength{\textheight}{1.75in}
\newtoggle{FIGURES}
\toggletrue{FIGURES}
 %\togglefalse{FIGURES}

\begin{document}

\def\spacingset#1{\renewcommand{\baselinestretch}%
{#1}\small\normalsize} \spacingset{1}

%%%%%%%%%%%%%%%%%%%%%%%%%%%%%%%%%%%%%%%%%%%%%%%%%%%%%%%%%%%%%%%%%%%%%%%%%%%%%%

\if0\blind
{
  \title{\bf On Robustness of Principal Component Regression}
  \author{Anish Agarwal,
    Devavrat Shah,
    Dennis Shen,
    Dogyoon Song \\
    MIT}
   \date{}
  \maketitle 
  \vspace{-5mm}
} \fi

\if1\blind
{
  \title{\bf On Robustness of Principal Component Regression}
  \date{}
  \maketitle
} \fi

\begin{abstract}
	Principal component regression (PCR) is a simple, but powerful and ubiquitously utilized method. 
Its effectiveness is well established when the covariates exhibit low-rank structure.
However, its ability to handle settings with noisy, missing, and mixed-valued, i.e., discrete and continuous, covariates is not understood and remains an important open challenge. 
As the main contribution of this work we establish the robustness of PCR, without any change, in this respect and provide meaningful finite-sample analysis. 

To do so, we establish that PCR is equivalent to performing linear regression after pre-processing the covariate matrix via hard singular value thresholding (HSVT). 
As a result, in the context of counterfactual analysis using observational data, we show PCR is equivalent to the recently proposed robust variant of the synthetic control method, known as robust synthetic control (RSC). 
As an immediate consequence, we obtain finite-sample analysis of the RSC estimator that was previously absent. 
As an important contribution to the synthetic controls literature, we establish that an (approximate) linear synthetic control exists in the setting of a generalized 
factor model, or latent variable model; traditionally in the literature, the existence of a synthetic control needs to be assumed to exist as an axiom.
We further discuss a surprising implication of the robustness property of PCR with respect to noise, i.e., PCR can learn a good predictive model even if 
the covariates are tactfully transformed to preserve differential privacy. 

Finally, this work advances the state-of-the-art analysis for HSVT by establishing stronger guarantees with respect to the $\ell_{2, \infty}$-norm rather than the frobenius norm as is commonly done in the matrix estimation literature, which may be of interest in its own right.

\end{abstract}

\noindent%
{\it Keywords:}  principal component regression, synthetic controls, error-in-variables regression, hard singular value thresholding, matrix estimation
\vfill

\newpage

%% UNDO BEFORE SUBMISSION
\spacingset{2} % DON'T change the spacing!
%% UNDO BEFORE SUBMISSION

\section{Introduction} \label{sec:intro}
A common thread of many modern datasets is that they are high-dimensional, and often noisy and partially observed. 
When such datasets are used for regression, this means that {\em both} the response variables (also known as the label of target) and the covariates (also known as features) are corrupted.
This setting is known in the statistics literature as error-in-variables regression. 
Another common feature of most real-world datasets are that they are mixed valued, i.e., contain both discrete and continuous data, which further complicates the regression procedure.
Within this context,  we are interested in developing a better understanding of a popular prediction method known as principal component regression (PCR).
Indeed, PCR's ability to handle settings with noisy, missing, and mixed-valued covariates is not understood and remains an important open challenge \cite{recent-survey}. 

A further motivation of this work is to connect the error-in-variables setting to the exciting and growing literature on synthetic controls (SC), a standard framework in econometrics (and beyond) to make counterfactual predictions utilizing only observational data (\cite{abadie1, abadie2, Hsiao12, imbens, Xu2016GeneralizedSC, athey, athey1, amjad, LiBell17, Li18, hsiao2018, asc, ark}).
Broadly speaking, there is a notion of a ``target'' and ``donor'' units, for which we collect observations over time.
While the donors units remain under control, the target undergoes an intervention at some time period.
Here, the goal is to estimate what would have happened to the target unit had it also remained under control.
Towards answering this question, standard SC methods build a synthetic model of the target unit using observations associated with the donor units.
In the language of regression, the target unit observations represent the response variables and the donor unit observations represent the covariates.
In the SC literature, the observations associated with both the target and donor units are assumed to be {\em noisily} observed due to the presence of idiosyncratic shocks at each time step.
As a result, SC can be seen as an instance of error-in-variables regression; more generally, panel data settings, where one collects measurements over time, can also be viewed through this error-in-variables lens.

As the main contribution of this work, we establish the effectiveness of PCR, without any change, for  error-in-variables regression and provide meaningful finite-sample analysis for both in- and out-of-sample prediction error.
Given the connection between error-in-variables regression and SC, our analysis also implies that using PCR in this context leads to it implicitly de-noising the observations we have of the donor units, which are corrupted by idiosyncratic shocks.
Thus, we advocate for PCR's usage in panel data settings.

%\vspace{-5mm}
\subsection{Problem Statement}\label{sec:intro_aim}

In a typical prediction problem setup, we are given access to a labeled dataset $\{(Y_i, \bA_{i, \cdot})\}$ over $i \ge 1$; 
here, $Y_i \in \Reals$ represents the response variable we wish to predict, and $\bA_{i, \cdot} \in \Reals^{1 \times p}$ represents the associated covariate to be utilized in the prediction process. 
Let $N \geq 1$ denote the total number of observations, where the number of predictors $p$ can possibly exceed $N$.
Let $\bbA \in \Reals^{N \times p}$ denote the matrix of true covariates.

\smallskip \noindent 
{\bf Error-in-variables.} Rather than perfectly observing the covariates $\bbA$, 
the error-in-variables setting only reveals 
%in the error-in-variables regression setting, we only have access to 
a corrupted version denoted as  $\bbZ \in \Reals^{N \times p}$. 
That is,
the $(i,j)$-th entry of $\bbZ$, denoted as $Z_{ij}$, is defined as $A_{ij} + \eta_{ij}$ with probability $\rho$ and $\star$ with probability $1 - \rho$, for some $\rho \in (0,1]$; here, $\star$ denotes a missing value and $\eta_{ij}$ denotes the noise in the $(i,j)$-th entry. 
In other words, each entry $Z_{ij}$ is observed with probability $\rho$, independently of other entries; 
however, even when observed, $Z_{ij}$ is still only a noisy instance of the true $A_{ij}$. 

\smallskip \noindent 
{\bf Approximate linear model.}
We assume the response variables are generated as follows:  
for $i \in [N]$, the random response $Y_i$ is associated with the covariate $\bbA_{i, \cdot} $ via 
\begin{align}\label{eq:regression_model_general}  
	Y_i & = \bbA_{i, \cdot} \beta^* + \epsilon_i + \phi_i,	
\end{align}
where $\beta^* \in \mathbb{R}^{p}$ is the unknown latent model parameter, 
$\epsilon_i \in \Reals$ denotes zero mean response noise with variance bounded by $\sigma^2$, 
and $\phi_i \in \Reals$ is the linear model misspecification, or mismatch, error; 
for simplicity, we assume the mismatch error is deterministic.
Additionally, the observed response variables $Y_i$ are restricted to a subset of the $N$ observations. 
More formally, we denote $\Omega \subset [N]$, with $ | \Omega | = n < N$, as the index set of observed responses, i.e., we observe $Y_i$ for $i \in \Omega$. 

\smallskip \noindent 
{\bf Goal.} Given noisy observations of all $N$ covariates $\{\bZ_{1, \cdot}, \dots, \bZ_{N, \cdot} \}$ and a subset of  response variables $\{Y_i: \, i \in \Omega\}$, our aim is to produce an estimate $\widehat{Y} \in \Reals^N$ so that the prediction error is minimized. 
Specifically, we measure performance in terms of the {\em training error}
\begin{align} \label{eq:train_error}
	 \text{MSE}_{\Omega}(\widehat{Y}) = \frac{1}{n} \,\,  \Ex \left[ \sum_{i \in \Omega} ( \widehat{Y}_{i} - \bA_{i, \cdot} \beta^*)^2 \right]
	\end{align}
and {\em testing error}
\begin{align} \label{eq:test_error}
		\text{MSE}(\widehat{Y}) = \frac{1}{N} \,\, \Ex \left[ \sum_{i=1}^N ( \widehat{Y}_{i} - \bA_{i, \cdot} \beta^* )^2 \right].
\end{align}
We note that for the bound $\text{MSE}(\hY)$ to be meaningful, $|\Omega^c| = N - n$ (the size of the test set) should be of the same order as that of the the training set $|\Omega| = n$.

\smallskip
\noindent
{\em Transductive semi-supervised learning.} 
It is worth remarking that in \eqref{eq:test_error}, the algorithm is given access to the observations associated with the covariates for {\em both} training and testing data during the training procedure. 
Of course, however, the algorithm does {\em not} access the test response variables. 
This is commonly referred to in the literature as transductive semi-supervised learning; here, we want to infer the response variables for the specific unlabeled data.
%
%\footnote{%
Traditionally, it is assumed that a statistical estimator only has access to the training covariates and response variables during the model learning process. 
The reason we consider a transductive learning setting is a consequence of the nature of the algorithm of interest, PCR. 
Specifically, PCR pre-processes the covariates using PCA, which changes the training procedure if only a subset of the covariates are utilized. 
Therefore, to allow for a meaningful evaluation, it is natural to allow the algorithm to have access to {\em all} available covariate information.
%}
%
Indeed, as we will discuss in Section \ref{sec:pcr_sc}, as well as Appendices \ref{sec:private} and \ref{sec:mixed}, it is natural to have access to all covariates in many important real-world applications. 

% CONTRIBUTIONS
\subsection{Contributions}

% \subsubsection{Error-in-variables Regression Literature}
% %
% Despite its ubiquity in practice, the formal literature providing an analysis of PCR is surprisingly sparse.
% %
% Moreover, as noted in a recent survey \cite{recent-survey}, despite the tremendous success of PCR in a variety of applications, its ability to handle noisy, missing, mixed-valued covariates (i.e., error-in-variables regression) remains unknown.
% %
% We tackle these questions in this paper.

% \smallskip
\noindent 
\textbf{PCR implicitly de-noises.} 
As the main contribution of this work, we argue that PCR, without any change, is robust to noise and missing values in the observed covariates. 
In particular, despite only having access to $\bZ$, %---the corrupted version of $\bA$ with noisy, missing covariates---
we show the training error of PCR scales (up to logarithmic factors) as  $ \rho^{-4} r / \min(n, p) + \|\phi\|_2^2 / n $, where $\rho$ denotes the fraction of observed (noisy) covariates and $r$ is the rank of $\bA$ (Corollary \ref{thm:training_pcr}).
That is, PCR {\em implicitly de-noises $\bZ$} by projecting it onto the subspace spanned by the top $r$ right singular vectors.
We note that the prediction error rate of $r / n$ for the training data matches (up to log factors) the minimax rate achievable by ordinary least squares (OLS) if one had perfectly observed the true underlying covariate matrix $\bA$ (see \cite{wainwright2019high} and references therein).
%
% {\color{red} 
% If the ground-truth covariate matrix, $\bA$, is perfectly observed, then without utilizing any additional structure, the OLS estimator yields training error scaling as $p / n$; 
% %
% if $\bA$ is {\em exactly} rank $r$, then the training error scaling of OLS improves to $r  / n$ (e.g., see \cite{wainwright2019high} and references therein). 
% %
% For a detailed comparison between PCR and OLS, please refer to Section \ref{ssec:ols_pcr}.
% }

%
We extend our results to the case where $\bA$ is only approximately low-rank (Theorem \ref{thm:training_pcr_generic} and Corollaries \ref{cor:training_pcr_generic} and \ref{cor:training_pcr_generic_LVM}). 
To the best of our knowledge, under this setting,
%in the high-dimensional setting where $\bA$ is approximately low-rank, 
%and we are in the high-dimensional setting, we note that to the best of our knowledge, 
there do not exist prediction consistency results for OLS or regularized variants thereof such as Lasso and Ridge, without making additional assumptions on the sparsity of $\beta^*$.
This remains true even if $\bA$ is perfectly observed.
Thus, the first step in PCR of finding a low-dimensional representation is likely crucial for this setting, and further motivated if the covariates are noisily observed. 
Given the ubiquity of approximately low-rank matrices in real-world datasets, it reinforces the utility and robustness of applying PCR in practice.

Moreover, we note that PCR does {\em not} require any knowledge about the underlying noise model that corrupts the covariates in order to to have vanishing train and test errors. 
Despite the exciting recent advancement in the high-dimensional error-in-variables literature, such as in \cite{loh_wainwright, cocolasso, tsybakov_2}, the current inventory of methods require knowledge of the underlying covariate noise model (in particular, exact knowledge of its second moment of matrix) and do not provide finite sample guarantees for train or test error.
We do note, however, that the aim of these previous papers is to estimate the latent linear model parameter $\beta^*$ (assuming it is sparse), rather than to analyze prediction errors.
For a detailed comparison, see Appendix \ref{sec:lit_review}.

% {\color{red}
% Please refer to Proposition \ref{prop:lvm} for theoretical motivation for why covariate matrices with large $n$ and/or $p$ are likely to be approximately low-rank 
% }

\smallskip
\noindent 
\textbf{PCR implicitly regularizes.}
We define an appropriate notion of generalization error for the transductive learning setting we consider. 
We establish that the testing prediction error of PCR is bounded above by the training error plus a term that scales as $k^{5/2}/\sqrt{n}$, where $k$ is the number of retained principal components (Theorem \ref{thm:test_pcr}).
Our testing error result provides a systematic way to select the correct number of principal components in a data-driven manner, i.e., to choose the value of $k$ that minimizes the training error plus the generalization penalty term $k^{5/2} / \sqrt{n}$.

Our test error analysis utilizes the standard framework of Rademacher complexity (see \cite{Bartlett_2003} and references therein). 
However, there are two crucial differences that we need to overcome in order to obtain sharp, meaningful bounds. 
First, our notion of generalization is different from that of the traditional setup since the noisy test covariates (but not responses) are included in the training process, which requires careful analysis.
Second, we argue that the Rademacher complexity under PCR scales with the dimensionality of the number of principle components utilized, denoted as $k$, rather than the ambient covariate dimension $p$. 
To do so, we identify the Rademacher complexity class of PCR with $k$-sparse $\beta$'s.

\smallskip
\noindent 
{\bf PCR applications.}
%\textbf{Applications of PCR---Differentially Private Regression, Mixed-valued Covariates.}
%
We discuss the robustness of PCR to contaminated covariates by analyzing its ability to learn a predictive model when only differentially private covariates are available. 
In particular, we find that it is feasible for PCR to achieve good prediction accuracy and simultaneously maintain differential privacy of the covariates (Appendix \ref{sec:private}). 
We also describe how the robustness of PCR allows it to seamlessly utilize mixed valued covariates under a general probabilistic model (Appendix \ref{sec:mixed}).

%\subsubsection{SC literature}
%
\smallskip \noindent 
{\bf SC literature.}
First, we note that regardless of method used to construct synthetic controls, the fundamental hypothesis that drives these prior works is the existence of a linear relationship between the  target and donors;
in fact, the original proposal of \cite{abadie1, abadie2} suggests restricting the linear model coefficients to be non-negative and sum to one, i.e., a convex combination.
However, it is not clear when such a hypothesis holds.
Second, meaningful finite-sample analysis of the mean-squared post-intervention error of SC has remained elusive.  
We tackle these two questions via our results on PCR. 
Towards the first question, we establish that (approximate) synthetic controls exist under a generalized factor model (also known as a latent variable model).
Here, the measurement associated with a given unit and time is a sufficiently smooth function of the latent unit and time factors.
Therefore in a general sense, a synthetic control almost always exists and need not be assumed as a hypothesis or axiom (see Proposition \ref{prop:linear_comb}).
Towards the second question, we show that PCR is identical to a recently proposed SC estimator known as robust synthetic control (RSC) \cite{amjad}.
Hence, we immediately establish meaningful training (pre-intervention) and testing (post-intervention) error guarantees for RSC (see Theorem \ref{thm:mse_sc}). 

\subsection{Organization of Paper}
In Section \ref{sec:PCR}, we describe the PCR algorithm. 
Section \ref{sec:results} then details the various training and test prediction error bounds for PCR and the conditions under which they hold. 
In Section \ref{sec:pcr_sc}, we formally connect PCR to SC. 
%connect PCR to RSC; in particular, we establish the existence of a synthetic control under the generalized factor model, provide the post-intervention prediction error bound for the RSC estimator, and highlight the robustness of PCR (and thus RSC) through several canonical empirical case studies.
%
In Appendix \ref{sec:lit_review}, we do a detailed comparison with previous related works.
In Appendices \ref{sec:private} and \ref{sec:mixed}, we discuss the application of PCR for differentially private regression and mixed valued covariates, respectively.
The remaining appendices are to prove our  theoretical results.

\section{Principal Component Regression} \label{sec:PCR} 
%\section{PCR---Algorithm and Contributions} 
%
We recall the description of PCR, as in \cite{pcr_jolliffe}. 
We suggest a minor modification of PCR in the presence of missing data where we simply re-scale the observed covariates by the inverse of the fraction of observed data.

\noindent 
\textbf{Algorithm.} 
Let $\hrho$ denote the fraction of observed entries in $\bbZ$, i.e., $\hrho = 1/(Np) \sum_{i=1}^{N} \sum_{j=1}^p \mathbb{1}(Z_{ij} \neq \star) \vee 1/(Np)$. 
Let $\btZ \in \mathbb{R}^{N \times p}$ represent the rescaled version of $\bbZ$, where every unobserved value $\star$ is replaced by $0$, i.e., $\tZ_{ij}  = Z_{ij} / \hrho$ if $Z_{ij} \neq \star$ and $0$ otherwise.

The singular value decomposition (SVD) of $\btZ$ is denoted as $\btZ = \bU \bS \bV^T = \sum_{i=1}^N s_i u_i v_i^T$, where $\bU \in \Reals^{N \times N}$, $\bS \in \Reals^{N \times p}$, and $\bV \in \Reals^{p \times p}$. 
Without loss of generality, assume that the singular values $s_i$'s are arranged in decreasing order, i.e., $s_1 \ge \dots \ge s_N \ge 0$. 
Note that $\bU = [u_1, \dots, u_N]$ and $\bV = [v_1, \dots, v_p]$ are orthogonal matrices, i.e., the $u_i$'s and $v_j$'s are orthonormal vectors. 

For any $k \in [N]$, let $\bU_k = [u_1, \dots, u_k]$, $\bV_k = [v_1, \dots, v_k]$, and $\bS_k = \text{diag}(s_1, \dots, s_k)$. 
Then, the $k$-dimensional representation of $\btZ$, as per PCA, is given by $\bZ^{\text{PCR}, k} = \btZ \bV_k$.
Let $\beta^{\text{PCR}, k} \in \mathbb{R}^k$ be the solution to the linear regression problem under $\bZ^{\pcr, k}$, i.e., $\beta^{\pcr, k}$ is the minimizer of
$$
{\sf minimize} ~ \sum_{i\in \Omega} \left(Y_i -  \bZ^{\text{PCR}, k}_{i\cdot} w \right)^2 ~\text{over}~w \in \mathbb{R}^k.
$$
Then, the estimated $N$-dimensional response vector $\hY^{\text{PCR}, k} = \bZ^{\text{PCR}, k} \beta^{\text{PCR}, k}$. 

\smallskip
\noindent 
\textbf{Intuition.}
%
% In the high-dimensional setting, the required number of samples if one performs Ordinary Least Squares (OLS) may be too great since the error bound scales with the number of features $p$, which is large. 
% %
% However, this problem can be circumvented when the covariates have a latent, low-dimensional representation.
% %
% In particular, PCR, see \cite{pcr_jolliffe}, has been precisely designed to address such a setting. 
%
Using all the noisily observed observed covariates, PCR first finds a $k$ dimensional representation of the covariate matrix using the method of principal component analysis (PCA), where $k$ might be much smaller than $p$.
Specifically, PCA projects every covariate $\bZ_{i, \cdot}$ onto the subspace spanned by the top $k$ right singular vectors of the observed covariate matrix, $\bZ$. 
PCR then uses the $k$-dimensional features to perform linear regression.

\subsection{Connecting PCR to the Matrix Estimation Literature}\label{sec:intro_pcr_ME}
To establish our results, we study PCR via its equivalence with performing linear regression after pre-processing covariates via hard singular value thresholding (HSVT), as described below.

\smallskip
\noindent
\textbf{Linear regression with covariate pre-processing via HSVT.}
Given any $\lambda > 0$, we define the map $\text{HSVT}_{\lambda}: \mathbb{R}^{N \times p} \to \mathbb{R}^{N \times p}$, which simply shaves off the input matrix's singular values that are below the threshold $\lambda$. 
Precisely, given a matrix $\bB \in \mathbb{R}^{N \times p}$, denote its SVD as $\bB = \sum_{i=1}^{N} \sigma_i x_i y_i^T$, and let $\text{HSVT}_{\lambda}(\bB) = \sum_{i = 1}^{N} \sigma_i \mathbb{1}(\sigma_i \ge \lambda) x_i y_i^T$.
For any $k \in [N]$, given $\btZ$ as before, define $\bZ^{\text{HSVT}, k} = \text{HSVT}_{s_{k}}(\btZ)$.
Let $\beta^{\text{HSVT}, k} \in \mathbb{R}^p$ be a solution of linear regression under $\bZ^{\hsvt, k}$, i.e., $\beta^{\hsvt, k}$ is the minimizer of
\begin{align}
{\sf minimize} ~&~ \sum_{i\in \Omega} \left(Y_i -  \bZ^{\text{HSVT}, k}_{i\cdot} w \right)^2 ~\text{over}~w \in \mathbb{R}^p.
\end{align}
Then, the estimated $N$-dimensional response vector $\hY^{\text{HSVT}, k} = \bZ^{\text{HSVT}, k} \beta^{\text{HSVT}, k}$. 

\smallskip
\noindent
\textbf{Equivalence with PCR.}
We now state a simple, yet key relation between PCR and the algorithm above.
Precisely, the two algorithms produce identical estimated response vectors. 
\begin{proposition}\label{prop:equivalence}
For any $k \leq N$, 
$ \hY^{\emph{PCR}, k}  = \hY^{\emph{HSVT}, k} .$
\end{proposition}
\noindent 
By establishing the equivalence above, it allows us to analyze PCR through the growing matrix estimation/completion literature, of which HSVT is one of the most commonly analyzed methods.
In fact, there is significant literature establishing that HSVT is a noise-model-agnostic method that recovers the ground-truth matrix given a sparse, noisy observation of it, e.g., see \cite{Chatterjee15}

\smallskip
\noindent
\textbf{$\|\cdot\|_{2, \infty}$-norm error bound for HSVT.}
The limitation of the current results concerning HSVT is that they only establish its estimation accuracy in terms of the mean-squared error or expected squared Frobenius norm of the error matrix. 
To establish our above mentioned results on the prediction error of PCR, it seems necessary to bound the expected squared $\ell_{2, \infty}$-norm of the error matrix (see Lemmas \ref{lemma:mcse_hvst} and \ref{lemma:mcse_hvst_LVM}), which is a stronger guarantee than the Frobenius norm. 
To see this, let $\bE=[e_{ij}] \in \Reals^{n \times p}$ denote the error 
matrix; then, 
\[
\frac{1}{np} \| \bE \|_F^2 \,= \frac{1}{np} \sum_{i=1}^n \sum_{j=1}^p e_{ij}^2 \le \frac{1}{n} \max_{j \in [p]} \sum_{i=1}^n e_{ij}^2 = \frac{1}{n} \| \bE \|_{2, \infty}^2.
\]
Given the ubiquity of HSVT, the $\|\cdot\|_{2, \infty}$-norm result for HSVT may be of interest in its own right.

\subsection{Connecting PCR to Synthetic Controls} 
We briefly describe the application of the analysis of PCR to SC, which has become a standard method in econometrics (and beyond) to make counterfactual predictions utilizing only observational data. 
%

% %
% \smallskip
% \noindent
% \textbf{Classical Synthetic Controls.}
% %
% Given observations of a metric of interest for both a ``target'' unit and collection of ``donor'' units, the aim is to develop a linear model that extrapolates the observations associated with the target unit from that of the donor units. 
% %
% The original proposal of \cite{abadie1, abadie2} suggests restricting the linear model coefficients to be non-negative and sum to one, i.e., a convex combination. 
% %
% Empirically, we observe that this approach is not robust to noisy and sparse observations, and has more stringent data requirements (e.g., see Figures \ref{fig:basque_sc_mar} and \ref{fig:cali_sc_mar} and Section \ref{sec:pcr_sc_empirical} for details). 

%
\smallskip
\noindent
\textbf{Robust synthetic control.}
In \cite{amjad}, the authors propose the robust synthetic control (RSC) method, which pre-processes observations using HSVT before performing linear regression to learn the model. 
They observed empirically that the resulting synthetic control had attractive robustness properties such as robustness to noisy and partially observed data, and thus suggested an alternative model to the convex weights originally proposed by \cite{abadie1, abadie2}; 
compare Figures \ref{fig:basque_mar} and \ref{fig:cali_mar} with Figures \ref{fig:basque_sc_mar} and \ref{fig:cali_sc_mar}.
Using Proposition \ref{prop:equivalence}, we establish PCR is identical to the RSC estimator.
This provides empirical evidence of the importance of pre-processing the covariates (in the setting of SC, this is the donor pool data) by finding its low-dimensional representation. See Section \ref{sec:pcr_sc} for details.
%
% Empirically, we observe that this approach is not robust to noisy and sparse observations, and has more stringent data requirements (e.g., see Figures \ref{fig:basque_sc_mar} and \ref{fig:cali_sc_mar} and Section \ref{sec:pcr_sc_empirical} for details).  
\vspace{-5mm}
\section{Main Results} \label{sec:results}

%\vspace{-5mm}
%\subsection{Notations} 
%
{\bf Notations.} For any matrix $\bB \in \Reals^{N \times p}$, let $\| \bB \|_F, \| \bB \|_2, \| \bB \|_\infty$ denote the Frobenius norm, operator norm, and max norm (i.e., largest absolute value among all entries) of a matrix $\bB$, respectively; let $\| \bB \|_{2, \infty}$ denote the max column $\ell_2$-norm of  $\bB$. 
%\
For an index set $\Omega \subset [N]$, let $\bBO$ denote the $|\Omega| \times p$  submatrix of $\bB$ formed by stacking the rows of $\bB$ according to $\Omega$, i.e., $\bBO$ is the concatenation of $\{ \bB_{i, \cdot}: i \in \Omega\}$. 
The superscript $\Omega$ is sometimes omitted if the matrix representation is clear from context. 
%{\color{red} $\text{poly}(\alpha_1, \dots, \alpha_k)$, denotes a function that scales at most polynomially in its arguments. 
%\DG{I don't think we need to define this. It is used only once to represent a poly log term in Line 297}}
%
Let $x \vee y = \max(x, y)$ and $x \wedge y = \min(x, y)$ for any $x, y \in \Reals$. Lastly, let $\mathbb{1}$ denote the indicator function.

\vspace{-5mm}
\subsection{Key Modeling Assumptions}\label{sec:modeling_assumptions}
We recall the approximate linear model given by \eqref{eq:regression_model_general}. %in Section \ref{sec:setup_eiv_regression}.

\noindent {\bf Bounded covariates.}
We assume the entries of $\bbA$ are bounded.
Without loss of generality, we assume the entries are bounded by $1$.
\begin{property}  \label{prop:bounded_covariates}
    The entries of $\bA$ are bounded by one in absolute value, i.e., $\| \bA \|_\infty ~\le 1$.
\end{property}

\noindent {\bf Noise on response variables.}
We make the standard assumption that the noise on the response variable, denoted by $\epsilon_i$, is mean zero and has bounded variance.
\begin{property}  \label{prop:observation_noise_structure}
The response noise $\epsilon = [\epsilon_i] \in \mathbb{R}^N$ is a random vector with independent, mean zero entries such that each of its components has variance bounded above by $\sigma^2$.
\end{property}

\noindent 
{\bf Noise on covariates.}
Recall that rather than observing $\bA$, we are given access to its partially observed and noisy version $\bZ$. %, which contains noisy and missing values. 
%
%Recall the definition of $\eta_{ij}$ from Section \ref{sec:setup_eiv_regression}.
%
Let $\bbH = [\eta_{ij}] \in \Reals^{N \times p}$ denote the covariate noise matrix. 
We define $\bbX = \bbA + \bbH$ as the noisy perturbation of the covariate matrix, without missing values. 
We assume the following property about the noise matrix $\bbH$ (see Definition \ref{def:psialpha} for the definition of $\psi_\alpha$-random variables/vectors).

\begin{property}  \label{prop:covariate_noise_structure}
Let $\bbH$ be a matrix of independent, mean zero $\psi_{\alpha}$-rows for some $\alpha \geq 1$, i.e., 
there exists an $\alpha \geq 1$ and $K_{\alpha} < \infty$ such that $\norm{\eta_{i, \cdot}}_{\psi_{\alpha}} \le K_{\alpha}$ for all $i \in [N]$. 
Further, assume there exists a $\gamma^2 > 0$ such that  $\big\| \Ex \eta_{i,\cdot}^T \eta_{i, \cdot} \big\|_2 \leq \gamma^2$ for all $i \in [N]$. 
Lastly, for all $i \in [N], j \in [p]$, assume variance of $\eta_{i,j}$ is bounded above $\sigma^2$.
\end{property}

\begin{remark}
One can verify that if the entries of $\eta_{i,\cdot}$ are independent, then $\gamma^2 = \mathcal{O}(1)$ (e.g., for independent standard normal random variables, $\gamma^2 = 1$). 
In general, $\gamma^2$ will scale linearly with the number of correlated entries in $\eta_{i, \cdot}$;
similarly, $K_{\alpha}$ scales with the square root of the correlated entries.
\end{remark} 

\begin{remark}
We assume that the response noise $\epsilon$ and covariate noise $\bbH$ are independent of each other. Further, if we denote $D_{ij} \in \{0, 1\}$ as the random variable indicating whether $Z_{ij}$ is missing or not, we assume $D_{ij}$ is independent of $\epsilon$ and $\bbH$.
Relaxing these assumptions and allowing for dependencies between these three sources of noise remains interesting future work.
\end{remark}

%*************
% TRAIN ERROR 
%*************
\vspace{-5mm}
\subsection{Training Prediction Error} 
In this section, we present bounds on the training error under different settings. 
\vspace{-3mm}
\subsubsection{General Results} 
We first state Theorem \ref{thm:training_pcr_generic} (proof in Appendix \ref{sec:appendix_noisy_regression_via_MCSE}), which bounds the training error of PCR in terms of three natural quantities, as described below. 
\begin{theorem}[Training Error of PCR: Generic Result]\label{thm:training_pcr_generic}
Consider PCR with parameter $k \geq 1$. Suppose Property \ref{prop:observation_noise_structure} holds. Then,
under the model described by \eqref{eq:regression_model_general}, 
\begin{align} \label{eq:mse_upper_generic}
\emph{MSE}_{\Omega}(\hY) &\le \frac{4 \sigma^2 k}{n} + \frac{3 \| \beta^* \|_1^2 }{n} \Ex \| (\bZ^{\emph{HSVT}, k, \Omega} - \bAO)\|_{2, \infty}^2  \,+\, \frac{20 \|\phi\|_2^2}{n} 
\end{align} 
\end{theorem}
\noindent {\em Interpretation.} 
The bound in \eqref{eq:mse_upper_generic} has three terms on the right hand side: 
(a) $ \sigma^2 k / n$ represents the standard ``regression'' prediction error, which scales with the model complexity $k$ and inversely with number of samples $n$;
(b) $(1/n) \, \| \beta^* \|_1^2 \, \Ex \| \bZ^{\text{HSVT}, k, \Omega} - \bAO\|_{2, \infty}^2 $, which is a consequence of the corruption of $\bA$ (if $\bA$ was fully observed and rank $k$, then this error term would vanish); 
(c) {$(1/n) \|\phi\|^2_2$ represents the (inevitable) impact of the model mismatch. }

\medskip
\noindent 
{\em Quantification.} 
To quantify \eqref{eq:mse_upper_generic}, we need to evaluate $\Ex[\|\bZ^{\text{HSVT}, k, \Omega} - \bbA^\Omega\|_{2,\infty}^2]$, where $\bZ^{\text{HSVT}, k}$ is the estimate of $\bbA$ produced from the sparse, noisy observation of it, $\bbZ$.
Our interest is in evaluating the estimation error with respect to the $\ell_{2, \infty}$-error. 
As stated earlier, the estimation error for HSVT is typically evaluated with respect to the Frobenius norm and this quantity is well understood, e.g., see \cite{Chatterjee15}. 
On the other hand, the error bound with respect to $\ell_{2, \infty}$-norm is unknown. 
To that end, we provide a novel characterization of this error in Lemma \ref{lemma:mcse_hvst} below (proof in Appendix \ref{sec:appendix_mcse_hsvt}). 

Let $\bbA = \sum_{i=1}^N \tau_i u_i v_i^T$ with its singular values $\tau_i$ arranged in descending order. 
Let $\bbA^k = \sum_{i=1}^k \tau_i u_i v_i^T$ denote the truncation of $\bbA$ obtained by retaining the top $k$ components. 
\begin{lemma}[$\ell_{2,\infty}$-error bound for HSVT] \label{lemma:mcse_hvst}
Let Properties \ref{prop:bounded_covariates}, \ref{prop:observation_noise_structure},  \ref{prop:covariate_noise_structure} hold.  
If $\rho \geq 64 \log(Np) / (Np)$, 
then 
\begin{align}\label{eq:MCSE_train_bound}
\Ex[\|\bZ^{\emph{HSVT}, k} - \bbA\|_{2,\infty}^2]	 
&\le \frac{C'}{\rho^4} \left( \frac{N (N \vee p) }{ (\tau_k - \tau_{k+1})^2} + k \right) \log^5(Np) + 
2 \| \bA^k - \bA \|_{2, \infty}^2,
\end{align}
where $C' = C (1+\sigma^2)(1+\gamma^2)(1+K^4_\alpha)$ and $C>0$ is an absolute constant.
\end{lemma}

% low-rank
\vspace{-3mm}
\subsubsection{Low-Rank Covariates, Well-Balanced Spectra}\label{sec:well_balanced_low_rank}
We state the following result for PCR when the covariate matrix is low-rank, i.e., $\bA$ admits a low-dimensional representation, and PCR chooses the correct number of principal components. 
\begin{property} \label{property:spectra} 
Let $r$ denote the rank of $\bbA$. The $r$-th largest singular value (i.e., the smallest nonzero singular value) of $A$ satisfies $\tau_r = \Omega( \sqrt{Np/ r} )$.
%
%The covariate matrix $\bbA$ is rank $r$. 
%Further, the $r$ non-singular values $\tau_i$ of $\bA$ are well-balanced, i.e., {\color{red}$\tau_i = \Theta( \sqrt{Np/r})$ for all $i \in [r]$} % $\tau_i = \Theta(Np/r)$.  \DG{   }
%\DG{Do we need $\tau_i = \Theta( \sqrt{Np/ r} )$ for all $i \in [r]$, or just the balance $\frac{\tau_1}{\tau_r} = O(1)$?}
%
\end{property} 
\noindent Property \ref{property:spectra} combined with Property \ref{prop:bounded_covariates} imply the singular spectrum of $\bA$ is 
``well-balanced'' in the sense that $\frac{\tau_1}{\tau_r} =  O( \sqrt{r})$.
Below, we describe another natural setting under which Property \ref{property:spectra} holds. 
%It is easy to see this because $\tau_1^2 + (r-1) \tau_r^2 \leq \sum_{i=1}^r \tau_i^2 = \| \bA \|_F^2 = \sum_{i, j} A_{ij}^2 \leq Np$ and therefore, $\tau_1^2 = O( Np )$.
%
\begin{remark}
A natural setting in which Property \ref{property:spectra} holds is if $\bA =\Theta(1)$ and the non-zero singular values of $\bA$ satisfy $\tau^2_i = \Theta(\zeta)$ for some $\zeta$. 
Then, $C r \zeta = \| \bA \|_F^2 \, = \Theta( Np )$ for some constant $C$, i.e., $\tau^2_i = \Theta(Np / r)$.
See Proposition \ref{prop:gaussian_example} below for a canonical probabilistic generating process used to analyze probabilistic PCA in \cite{bayesianpca, probpca}, under which Property \ref{property:spectra} holds.  
\end{remark}
\begin{corollary}\label{thm:training_pcr}
Let Properties \ref{prop:bounded_covariates}, \ref{prop:observation_noise_structure}, \ref{prop:covariate_noise_structure},  \ref{property:spectra} hold. 
Suppose PCR chooses the correct number of principal components $k = r = \emph{rank}(\bA)$. 
Let $\rho \geq 64 \log(Np) / (Np)$ and $n = \Theta(N)$. 
Then for any given $\Omega \subset [N]$,
\begin{align} \label{eq:mse_train_hsvt_simple}
\emph{MSE}_{\Omega}(\hY) 
&\le \frac{4 \sigma^2 r}{n} + \frac{C' \| \beta^*\|_1^2}{\rho^4} \, \frac{r \log^5(np)}{n \wedge p} + \frac{20 \|\phi\|_2^2}{n}  ,
\end{align}
where $C' = C (1+\sigma^2)(1+\gamma^2)(1+K^4_\alpha)$  and $C>0$ is an absolute constant. 
\end{corollary}
\begin{proof}
Corollary \ref{thm:training_pcr} follows from Theorem \ref{thm:training_pcr_generic} and Lemma \ref{lemma:mcse_hvst} by setting $k = r$, $\bbA^k = \bbA$, $\tau_{k+1} = 0$. 
\end{proof}

\medskip
\noindent 
{\em Interpretation.} 
The statement of Corollary \ref{thm:training_pcr} requires that the {\em correct} number of principal components
are chosen in PCR. 
In settings where all $r$ singular values of $\bbA$ are roughly equal (Property \ref{property:spectra}), the training prediction decays (up to logarithmic factors) as $ \rho^{-4} r / (n \wedge p) + \|\phi\|_2^2 / n $.
We note that for this exact low-rank setting, analyzing the case where $k > r$ (e.g., as done in \cite{MoonLinearRegression}) is interesting future work.

\medskip 
\noindent 
{\bf Example: embedded Gaussian features.}
We present a classical data generating process under which PCR (and PCA) is justified. 
Consider the setting where $\bA \in \Reals^{N \times p}$ is generated by sampling its rows from a distribution on $\Reals^p$, which in turn, is an embedding of some underlying latent distribution on $\Reals^r$; this is similar in spirit to the probabilistic model for PCA, cf. \cite{bayesianpca, probpca}.
\begin{proposition}\label{prop:gaussian_example}
Let $\bA = \tilde{\bA} \tilde{\bR}$, where the entries of $\tilde{\bA} \in \Reals^{N \times r} $ are independent standard normal random variables, i.e., $\tilde{A}_{ij} \sim \mathcal{N}(0,1)$ and $\tilde{\bR} \in \Reals^{r \times p}$ is another random matrix with independent entries drawn uniformly at random from $\{-1/\sqrt{r}, ~1/\sqrt{r}\}$.
Suppose, $r \leq \frac{\sqrt{p}}{4\sqrt{2\log p}}+ 1$ and $r = o(N)$
{ Then $\| \bA \|_{\infty} \leq 4 \sqrt{ \log(Np) }$ and Property \ref{property:spectra} holds with probability at least $1 - \frac{2}{N^2p} - 2 \exp(-c \sqrt{Nr})$ for some constant $c > 0$. }
\end{proposition}
\noindent{In a strict sense, $\bA$ in Proposition \ref{prop:gaussian_example} does not satisfy Property \ref{prop:bounded_covariates} because of the extra $\log$ factor. Taking a closer look at the proof of Lemma \ref{lemma:mcse_hvst}, we can see that this slack only makes the exponent of the log slightly larger ($5 \to 6$) in Lemma \ref{lemma:mcse_hvst} and Corollary \ref{thm:training_pcr}.} Proof of Proposition \ref{prop:gaussian_example} can be found in Appendix \ref{sec:gaussian_features}.

% approx. low-rank + misspecified 
\vspace{-3mm}
\subsubsection{Beyond Low-Rank Covariates---Low-Rank Approximation in $\| \cdot \|_2$-norm}
\label{sec:geom_decaying_singular_values}
\noindent In Corollary \ref{cor:training_pcr_generic}, we generalize the result of Corollary \ref{thm:training_pcr} to the setting where the low-rank model is misspecified, i.e., $\bA$ does not equal $\bA^{k}$.
\begin{corollary} \label{cor:training_pcr_generic}
Let Properties \ref{prop:bounded_covariates}, \ref{prop:observation_noise_structure},  \ref{prop:covariate_noise_structure} hold.  
Suppose $\rho \geq 64 \log(Np) / (Np)$.  
Let $n = \Theta(N)$. 
Then, 
\begin{align} \label{eq:mse_upper_generic_refined}
	\emph{MSE}_{\Omega}(\hY) & \le
	\frac{4\sigma^2 k}{n}  
	+ \frac{C' \| \beta^* \|_1^2 }{\rho^4} \left( \frac{n \vee p}{ (\tau_k - \tau_{k+1})^2} + \frac{k}{n} \right) \log^5(np) 
	+ \frac{3 \| \beta^* \|_1^2}{n}\| \bA^k - \bA \|_{2, \infty}^2
	+  \frac{20}{n} \| \phi \|_2^2,
\end{align} 
where $C' = C (1+\sigma^2)(1+\gamma^2)(1+K^4_\alpha)$   and $C>0$ is an absolute constant. 
\end{corollary}
\begin{proof}
Corollary \ref{cor:training_pcr_generic} follows immediately from Theorem \ref{thm:training_pcr_generic} and Lemma \ref{lemma:mcse_hvst}. 
\end{proof}

\noindent 
{\em Interpretation.} 
Corollary \ref{cor:training_pcr_generic} implies the training prediction error, not including the linear model mismatch $\phi$, decays to zero if: 
(i) the gap between the $k$-th and $(k+1)$-st singular values of $\bA$ grows faster than $n \vee p$ (ignoring log factors); 
(ii) $k = o(n)$;
(iii) $\| \bA^k - \bA \|_{2, \infty}^2 = o(n)$. 
Below, we show that if the spectrum of $\bA$ is geometrically decaying, then there exists a range of $k$ such that Properties  \ref{prop:bounded_covariates}, \ref{prop:observation_noise_structure}, and  \ref{prop:covariate_noise_structure} are satisfied.

\medskip 
\noindent 
{\bf Example: geometrically decaying singular values.} 
To explain the utility of Corollary \ref{cor:training_pcr_generic}, we consider a setting where $\bbA$ has geometrically decaying singular values, and is thus {\em approximately} low-rank. 
We note that such a setting is representative of many real-world datasets; as an example, see Figures \ref{fig:basque_spectrum} and \ref{fig:cali_spectrum}.
Further, matrices with geometrically decaying singular values are also ubiquitous models in the study of a variety of domains including graphon estimation and signal processing.

Let $e_{\cdot, j} \in \Reals^p$ denote the $j$-th canonical basis vector. 
Recall that $u_i, v_i$, and $\tau_i$ denote the left singular vectors, right singular vectors, and singular values of $\bbA$, respectively.
\begin{proposition} \label{prop:geo_decay_finite_sample}
Let Properties \ref{prop:bounded_covariates}, \ref{prop:observation_noise_structure},  \ref{prop:covariate_noise_structure} hold.  
Suppose $\rho \geq 64 \log(Np) / (Np)$.  
Let $n = \Theta(N)$. 
Let $\tau_1 = C_1 \sqrt{Np}$ and  $\tau_k = \tau_1 \theta^{k-1}$ for all $k \in [N]$ with $\theta \in (0,1)$.
Further, let $v_i^T e_j = O(1/\sqrt{p})$ for all $i, j \in [p]$.
Consider PCR with parameter $k = \frac{1}{4} \cdot \frac{ \log(n \wedge p)}{ \log ( 1/\theta )}$. 
Then,
\begin{align}\label{eq:geo_decay}
\emph{MSE}_{\Omega}(\widehat{Y}) 
& \leq \frac{C' C(\theta) \| \beta^* \|_1^2 }{\rho^4}  \frac{\log^{6}(np)}{ (n \wedge p)^{1/2}} + { \frac{20}{n} \|\phi\|_2^2}, 
\end{align}
where $C' = C (1+\sigma^2)(1+\gamma^2)(1+K^4_\alpha)$, $C(\theta) > 0$ depends only on $\theta$, and $C_{1}> 0$ is an absolute constant.  
\end{proposition}
\noindent Proof of Proposition \ref{prop:geo_decay_finite_sample} can be found in Appendix \ref{sec:appendix_geo_decay_finite_sample}.

\noindent {\em Interpretation.}
The conditions on the spectrum of $\bA$ in Proposition \ref{prop:geo_decay_finite_sample}  are self-explanatory with potentially one exception, $v_i^T e_j = O(1/\sqrt{p})$. 
In effect, this assumption states that the right singular vectors of $\bbA$ satisfy an ``incoherence'' condition, 
cf. \cite{candes2007sparsity}, with the canonical basis of $\mathbb{R}^p$; 
or, equivalently, all entries of the right singular vectors are roughly of the same magnitude, $O(1/\sqrt{p})$. 
See Appendix \ref{sec:proof_geo_decay} for an explicit construction of such a matrix from signal processing.
The bound in Proposition \ref{prop:geo_decay_finite_sample} implies that if the number of principal components is chosen as $4\left(  \log (np) / \log \left( 1/\theta \right) \right)$ and $(n \wedge p) = \Omega(\rho^{-4} \poly(\log p))$, then the training prediction error is dominated by $(1/n) \|\phi\|_2^2$. 
This is precisely the unavoidable linear model mismatch error. 

\vspace{-3mm}
\subsubsection{Beyond Low-Rank Covariates---Low-Rank Approximation in $\| \cdot \|_\infty$-norm}\label{sec:PCR_GLM}
Thus far, in Sections \ref{sec:well_balanced_low_rank} and \ref{sec:geom_decaying_singular_values}, $\bA$ has been assumed to be well-approximated by a {\em specific} low-rank matrix $\bbA^k$ 
that is induced by retaining the top $k$ singular values of $\bA$. 
Such an approximation is optimal with respect to Frobenius and spectral norm. 
However, for approximating with respect to other norms, e.g., $\ell_{2,\infty}$ or $\ell_{\infty}$, exciting progress has been made to obtain different styles of low-rank approximations (see for example \cite{udell2019big, xu2017rates} and references therein). 
Indeed, such low-rank approximations of $\bA$ may not correspond to $\bbA^k$. 
For that reason, we provide an analogous result to Lemma \ref{lemma:mcse_hvst} and Corollary \ref{cor:training_pcr_generic} for the setting when $\bA$ is well-approximated by some {\em arbritrary} low-rank matrix.

Specifically, let $\bbA = \bbA^{\text{(lr)}} + \bbE^{\text{(lr)}}$. 
In words, $\bbA^{\text{(lr)}}$ denotes a low-rank matrix and $\bbE^{\text{(lr)}}$ denotes the approximation error between $\bbA$ and $\bbA^{\text{(lr)}}$. 
Let $r = \text{rank}(\bbA^{\text{(lr)}})$ and let the SVD of $ \bbA^{\text{(lr)}} = \sum_{i=1}^r \tau_i u_i v_i^T$ (again, with the singular values $\tau_i$ arranged in descending order). 
\begin{lemma}\label{lemma:mcse_hvst_LVM}
Let Properties \ref{prop:bounded_covariates}, \ref{prop:observation_noise_structure},  \ref{prop:covariate_noise_structure} hold.  
Consider PCR with parameter $k = r$.  
Let $\rho \geq 64 \log(Np) / (Np)$. 
Then, 
\begin{align}\label{eq:MCSE_train_bound}
\Ex[\|\bZ^{\emph{HSVT}, k} - \bbA\|_{2,\infty}^2]
&\le \frac{C'}{\rho^4} \left( \frac{N(N \vee p \vee \| \bE^{\emph{(lr)}} \|_2^2)}{\tau_r^2}  + r \right) \log^5(Np) + 2 \| \bE^{\emph{(lr)}} \|^2_{2, \infty}, 
\end{align}
where $C' = C (1+\sigma^2)(1+\gamma^2)(1+K^4_\alpha)$ and $C>0$ is an absolute constant. 
\end{lemma} 
\noindent Proof of Lemma \ref{lemma:mcse_hvst_LVM} can be found in Appendix \ref{sec:appendix_mcse_hsvt_LVM}.

\noindent 
{\em Interpretation.} 
Lemma \ref{lemma:mcse_hvst_LVM} is similar to the result of Lemma \ref{lemma:mcse_hvst};
however, because we now only assume that $\bA$ is well-approximated by {\em an} arbitrary low-rank matrix $\bbA^{\text{(lr)}}$ rather than $\bA^{k}$ as done in Section \ref{sec:geom_decaying_singular_values}, this introduces an additional $\| \bE^{\text{(lr)}} \|_2^2 / \tau_r^2$ term compared to the bound in Lemma \ref{lemma:mcse_hvst}.

\begin{corollary} \label{cor:training_pcr_generic_LVM}
Let Properties \ref{prop:bounded_covariates}, \ref{prop:observation_noise_structure},  \ref{prop:covariate_noise_structure} hold.  
Consider PCR with parameter $k = r$.  
Let $\rho \geq 64 \log(Np) / (Np)$. 
Let $n = \Theta(N)$. 
Then, 
\begin{align} \label{eq:mse_upper_generic_refined_LVM}
\emph{MSE}_{\Omega}(\hY) 
& \le \frac{4\sigma^2 r}{n}  
+ \frac{C' \| \beta^*\|_1^2}{\rho^4} \left( \frac{n \vee p \vee \| \bE^{\emph{(lr)}} \|_2^2}{\tau_r^2} + \frac{r}{n} \right) \log^5(np) 
+ \frac{6 \|\beta^*\|_1^2}{n} \| \bE^{\emph{(lr)}} \|^2_{2, \infty} 
\,+\,  \frac{20}{n} \| \phi \|_2^2,
\end{align} 
where $C' = CK^2_\alpha (1+\sigma^2)(1+\gamma^2)(1+K^4_\alpha)$ and $C>0$ is an absolute constant. 
\end{corollary}
\begin{proof}
Corollary \ref{cor:training_pcr_generic_LVM} follows immediately from Theorem \ref{thm:training_pcr_generic} and Lemma \ref{lemma:mcse_hvst_LVM}. 
\end{proof}

\noindent 
{\em Interpretation.} 
If $\bbA^{\text{(lr)}}$ satisfies Property \ref{property:spectra}, i.e., the well-balanced spectra condition, then one can verify (again, ignoring log factors) the prediction error scales as 
$\rho^{-4} r / (n \wedge p) + \|\phi\|_2^2 / n +  \rho^{-4} r  \|\bE^{(\text{lr})} \|^{2}_{\infty}$.
This is identical to the bound in Corollary \ref{thm:training_pcr} with an additional $\rho^{-4} r \|\bE^{(\text{lr})} \|^{2}_{\infty}$ term, which arises since $\bA$ is not assumed to be low-rank but rather is well-approximated by an arbitrary low-rank matrix $\bbA^{\text{(lr)}}$.
Below, we show that under a generalized factor model, $r \|\bE^{(\text{lr})} \|^{2}_{\infty}$ vanishes to zero as $n$ grows.

% example 
\medskip 
\noindent {\bf Example: generalized factor model.}
We say the $\bA$ is generated as per a generalized factor model or latent variable model (LVM) if 
\begin{align}  \label{eq:lvm_2}
	A_{ij} &= g(\theta_i, \rho_j), 
\end{align}
where $\theta_i \in \Reals^{d_1}$ and $\rho_j \in \Reals^{d_2}$ are latent features that capture measurement $i$ and feature $j$ specific information, respectively, for some $d_1, d_2 \ge 1$; 
and the latent function $g: \Reals^{d_1} \times \Reals^{d_2} \to \Reals$ captures the model relationship. 
If $g$ is ``well-behaved'', e.g., H\"older continuous, and the latent spaces are compact, then Proposition \ref{prop:lvm} shows $\bA$\ is well approximated by a low-rank matrix with respect to the $\| \cdot \|_\infty$-norm, where the approximation error vanishes as more data is collected. 

\medskip
\noindent
{\em H\"older continuous functions.} 
%We now precisely define what we mean when we say $g$ is ``well-behaved''. 
We now define the H\"older class of functions, which is widely adopted in the non-parametric regression literature (see \cite{xu2017rates, Tsybakov:2008:INE:1522486}). 
Given a function $g: [0, 1)^K \to \Rb$, and a multi-index $\kappa \in \mathbb{N}^K$, let the partial derivate of $g$ at $x \in [0, 1)^K$ (if it exists) be denoted as
\begin{align}\label{eq:def_partial_derivative}
\triangledown_\kappa g(x) = \frac{\partial^{|\kappa|} g(x)}{(\partial x)^\kappa} =  \frac{\partial^{|\kappa|} g(x)}{(\partial x_1)^{\kappa_1} (\partial x_2)^{\kappa_2} \dots (\partial x_K)^{\kappa_K} }. 
\end{align}

\begin{definition}[\textbf{$(\zeta, \cL)$-H\"older Class}]\label{def:holder}
Let $\zeta, \cL$ be two positive numbers. The H\"older class $\cH(\zeta, \cL)$ on $[0, 1)^K$
%
%\footnote{The domain is easily extended to any compact subset of $\Rb^K$}
%
is defined as the set of functions $g: [0, 1)^K \to \Rb$ whose partial derivatives satisfy
\begin{align}\label{eq:holder}
\sum_{\kappa: |\kappa| = \lfloor \zeta \rfloor} \frac{1}{\kappa !} |\triangledown_\kappa g(x) - \triangledown_\kappa g(x') | \le \cL \norm{x - x'}_\infty^{\zeta - \lfloor \zeta \rfloor}
\quad \text{for all } x, x' \in [0, 1)^K.
\end{align}
Here, $\lfloor \zeta \rfloor$ denotes the largest integer strictly smaller than $\zeta$.
We note that the domain is easily extended to any compact subset of $\Rb^K$.
\end{definition}

\begin{remark}
Note if $\zeta \in (0, 1]$, then \eqref{eq:holder} is equivalent to the $(\zeta, \cL)$-Lipschitz condition, i.e.,
\[
|g(x) - g(x')| \le \cL \norm{x - x'}_\infty^{\zeta - \lfloor \zeta \rfloor} \quad
\text{for all } x, x' \in [0, 1)^K.
\]
However, for $\zeta > 1$, $(\zeta, \cL)$-H\"older smoothness no longer implies $(\zeta, \cL)$-Lipschitz smoothness. 
\end{remark}

\begin{proposition}\label{prop:lvm}
Let $\bA$ satisfy \eqref{eq:lvm_2} with $\theta_i, \rho_j  \in [0, 1)^K$ as latent parameters.
Further, for all $\rho_j$, let $g(\cdot, \rho_j) \in \cH(\zeta, \cL)$ as defined in \eqref{eq:holder}.
Then, for any $\delta > 0$, there exists a low-rank matrix $\bA^{\emph{(lr)}}$ of rank $r \le C(\zeta, K) \delta^{-K}$ such that
$
	\norm{\bA - \bA^{\emph{(lr)}}}_{\infty} \le \cL \cdot \delta^\zeta. 
$
Here, $C(\zeta, K)$ is a term that depends only on $\zeta$ and $K$.
\end{proposition} 
\noindent The proof of Proposition \ref{prop:lvm} can be found in Appendix \ref{sec:lvm_low_rank_proof}. 
\begin{remark}
We remark on the H\"older continuity of a typical linear factor model, i.e.,
$ g(\theta_i, \rho_j) = \langle \theta_i,  \rho_j \rangle$ 
for some latent vectors $\theta_i, \rho_j \in \Rb^K$.
It is easily seen that such a model satisfies Definition \ref{def:holder} for all $\zeta \in \Nb$, and $\cL = C$, for some absolute positive constant, $C$. 
Thus, one can think of H\"older continuous functions as generalizations of typical linear factor models to sufficiently smooth non-linear functions.
\end{remark}
\begin{corollary}\label{cor:LVM-spectra-error}
Let Properties \ref{prop:bounded_covariates}, \ref{prop:observation_noise_structure},  \ref{prop:covariate_noise_structure} hold.  
Consider PCR with $k = r$.  
Let $\rho \geq 64 \log(Np) / (Np)$. 
Let $n = \Theta(N)$. 
Let the conditions of Proposition \ref{prop:lvm} hold and further assume $\bA^{\emph{(lr)}}$ (as defined in Proposition \ref{prop:lvm}) satisfies Property \ref{property:spectra}.
Then, 
\begin{align} \label{eq:mse_upper_generic_refined_LVM_detailed}
\emph{MSE}_{\Omega}(\hY) 
& \le 
\frac{C' C(\zeta, K) \cL^{2} \|  \beta^*\|_1^2}{\rho^4} \left(\frac{1}{(n \wedge p)^{{1 - \frac{K}{2\zeta}}}}  \right) \log^5(np) 
\,+\,  \frac{20}{n} \| \phi \|_2^2,
\end{align} 
where $C' = C (1+\sigma^2)(1+\gamma^2)(1+K^4_\alpha)$ and $C>0$ is an absolute constant. %, and $C(\zeta, K), \cL$ are defined as in Proposition \ref{prop:lvm}. 
\end{corollary}
\noindent The proof of Corollary \ref{cor:LVM-spectra-error} can be found in Appendix \ref{sec:proof_LVM_MSE}.
Note that as long as $\zeta > K/2$, this leads to vanishing training error.

%************
% TEST ERROR
%************
\vspace{-5mm}
\subsection{Test Prediction Error} 
We now evaluate the generalization performance of PCR. 
As previously mentioned, the emphasis of this work is to provide a rigorous analysis on the prediction properties of the PCR algorithm through the lens of HSVT. 
Recall from Proposition \ref{prop:equivalence}, PCR with parameter $k$ is equivalent to linear regression with pre-processing of the noisy covariates using HSVT where the top $k$ singular values are retained. 
To that end, we study candidate vectors $\beta^{\hsvt,k} = \bV_k \cdot \beta^{\pcr,k} \in \Reals^p$. 
In light of this observation, we establish the following simple but useful result that suggests restricting our model class to sparse linear models only (the proof of which can be found in Appendix \ref{sec:low_rank_prop}). 
\begin{proposition} \label{prop:low_rank_sparsity}
Let $\bX \in \Reals^{n \times p}$ and $\emph{rank}(\bX) = k$.
Without loss of generality, let $\{\bX_{\cdot, 1}, \dots, \bX_{\cdot, k}\}$ form a collection of $k$ linearly independent vectors, i.e., for any  $i \in \{k+1, \dots, p\}$, there exists some $c(i) \in \mathbb{R}^k$ such that $\bX_{\cdot, i} = \sum_{\ell=1}^k c_l(i) \bX_{\cdot, \ell}$.
Assume the following condition on $\bX$ holds: 
\begin{align}\label{eq:test_error_condition}
\max_{i \in \{k+1, \dots, p\}}\| c(i) \|_{\infty} \le C''.
\end{align}
\noindent Then if, $M = \bX v$ for some $v \in \mathbb{R}^p$, there exists $v^* \in \mathbb{R}^p$ such that $M = \bX v^*$,  $\norm{v^*}_0 = k$, and $\norm{v^*}_1 \le C'' k \norm{v}_1$.
\end{proposition}
\noindent {\em Interpretation.}
By Proposition \ref{prop:low_rank_sparsity}, for any $\bZ^{\hsvt, k}$ and $\beta^{\hsvt,k} = \bV_k  \beta^{\pcr,k}$, there exists a $\beta' \in \Reals^p$ such that $\bZ^{\hsvt, k}  \beta^{\hsvt,k} = \bZ^{\hsvt, k}  \beta'$ where $\| \beta' \|_0 \, \le k$ and $\| \beta' \|_1 \, \le k \| \beta^{\hsvt,k} \|_1$. 
Thus, for the purposes of bounding the test error of PCR with parameter $k$ via the toolkit of Rademacher complexity, {\em it suffices to restrict our hypothesis class to linear predictors with sparsity $k$}. 
Condition (i) in Proposition \ref{prop:low_rank_sparsity} is a mild assumption circumventing the pathological case that the linear coefficients used to represent columns $\bX_{\cdot, i}$ for $i \in \{k+1, \dots, p\}$ in terms of $\{\bX_{\cdot, 1}, \dots, \bX_{\cdot, k}\}$ are unbounded.
\begin{theorem} [Test Error of PCR] \label{thm:test_pcr}
Let Property \ref{prop:bounded_covariates} hold.  
Let $n = \Theta(N)$. 
Consider PCR with parameter $k \ge 1$ and assume $\bZ^{\emph{HSVT}, k}$ satisfies \eqref{eq:test_error_condition} in Proposition \ref{prop:low_rank_sparsity}.
Then
%\footnote{
% We note that for the bound $\Ex_\Omega [\text{MSE}(\hY) ]$ to be meaningful, $|\Omega^c| = N - n$ (the size of the test set) should be of the same order as that of the the training set $|\Omega| = n$. 
% }, 
%
\begin{align} \label{eq:mse_test_hsvt}
\Ex_\Omega \left[\emph{MSE}(\hY) \right] 
&\le \Ex_\Omega \left[ \emph{MSE}_\Omega(\hY) \right] + \frac{ C'''  k^{5/2} } {\sqrt{n}} \| \beta^* \|_1,
\end{align} 
\noindent where $C''' = C \cdot C'' \cdot \Ex[ \| \beta^{\emph{HSVT},k} \|^{2}_{1} \cdot \| \bZ^{\emph{HSVT}, k} \|_\infty^2 ]$, with
$C''$ defined as in Proposition \ref{prop:low_rank_sparsity} and $C > 0$ is an absolute constant;
$\Ex_\Omega$ denotes the expectation taken with respect to $\Omega \subset [N]$ (of size $n$), which is chosen uniformly at random without replacement. 
\end{theorem}
\noindent 
See Appendix \ref{sec:mse_test_hsvt} for a proof of Theorem \ref{thm:test_pcr}.

\smallskip
\noindent {\em Interpretation.}
%
% We recall that for the bound $\Ex_\Omega [\text{MSE}(\hY) ]$ to be meaningful, $|\Omega^c| = N - n$ (the size of the test set) should be of the same order as that of the the training set $|\Omega| = n$.
%
We note all our training error bounds do not depend on $\Omega$; see \eqref{eq:mse_upper_generic}, \eqref{eq:mse_train_hsvt_simple}, \eqref{eq:mse_upper_generic_refined}, \eqref{eq:geo_decay}, \eqref{eq:mse_upper_generic_refined_LVM}, \eqref{eq:mse_upper_generic_refined_LVM_detailed}.
Hence, the bound on $\Ex_\Omega[ \text{MSE}_{\Omega}(\hY)]$ also does not depend on $\Omega$ for these settings. 
Note the test error decays at a rate $1 / \sqrt{n}$, in comparison with $1 / n$ for the training error.
This ``slow rate'' of $1 / \sqrt{n}$ for test error is indeed the best achievable using the standard Rademacher complexity analysis (see Chapter 4 of \cite{wainwright2019high}).
An important open problem is to achieve the fast rate of $1 / n$ for the test error in the error-in-variables setting;
we remark that a  related work \cite{agarwal2020principal} takes key steps towards solving this.

\medskip 
\noindent 
{\bf Choosing $k$.} %in a data-driven manner.} 
\label{remark:picking_k}
We describe how the test prediction error %, such as that implied by Theorem \ref{thm:test_pcr}, 
can help in choosing the parameter for PCR (model complexity) in a data-driven manner. 
Specifically, Theorem \ref{thm:test_pcr} suggests that the overall error is at most the training error plus a term that scales as $k^{5/2} / \sqrt{n}$. Therefore, one should choose the $k$ that minimizes this bound. 
Naturally, as $k$ increases, the training error is likely to decrease, but the additional term $k^{5/2}/\sqrt{n}$ will increase; an optimal $k$ can thus be found in a data-driven manner. 

%************
% DISCUSSION 
%************
\vspace{-5mm}
\subsection{Discussion} \label{ssec:ols_pcr}
{\bf Comparison with ordinary least squares (OLS).} 
It is known that OLS implicitly performs regularization if the covariates $\bA$ are exactly low-rank, noiseless, and fully observed (see Lemma 3.1 of \cite{rigollet2011}). 
In most real-world settings, however, data is never precisely low-rank, but is rather {\em approximately} low-rank, such as in the examples detailed in Sections \ref{sec:geom_decaying_singular_values} and \ref{sec:PCR_GLM} (see \cite{Udell} and references therein for further theoretical justification for approximately low-rank covariate matrices). 
In such a setting, it is not established, nor is it likely, that OLS has the same implicit regularization effect as before. 
Indeed, in the example shown in Figure \ref{fig:basque_lr}, OLS has very poor empirical generalization performance even though over $99\%$ of the spectral energy is captured in the top singular value, i.e., the covariate matrix is very-well approximated by a rank-one matrix. 
In contrast, if the principal components are chosen correctly, then PCR continues to have the desired regularization property, even in the approximate low-rank case. The contrast can be seen in Figure \ref{fig:basque_pcr}. 
Additionally, we provide the explicit tradeoff between training and testing error based on the number of selected principal components $k$.

\medskip
\noindent {\bf ``Information'' spread across covariates is necessary.}
Within the high-dimensional (error-in-variables) regression literature, there are several different structural assumptions required of the covariate matrix to achieve vanishing prediction or parameter estimation error (see \cite{re_condition} and references therein for some detailed examples). 
Intuitively, these assumptions state that the signal is ``well-spread'' across the various columns of the covariate matrix. 
Below, we consider a simple yet illustrative example for which both PCR and traditional methods from the literature do not seem to provide meaningful answers. 

\medskip
\noindent{\em Example.} Suppose $\bA_{\cdot,1} = e_1$ and $\bA_{\cdot, 2} = e_2$, where $e_1, e_2$ are the canonical basis vectors in $\Reals^N$, and $\bA = [\bA_{\cdot,1}, \bA_{\cdot,2}, \dots, \bA_{\cdot,2}] \in \Reals^{N \times p}$. Then, it is clear that $r = \text{rank}(\bA) = 2$.

\medskip
\noindent{\em What happens to PCR.} 
To estimate $\bA$, even with the additional (oracle) knowledge of the positions of $\bA_{\cdot, 1}$ and $\bA_{\cdot,2}$, one can verify the optimal estimators for $\bA_{\cdot, 1}$ and $\bA_{\cdot,2}$ are $\bZ_{\cdot, 1}$ and $1/(p-1) \sum_{j=2}^p \bZ_{\cdot, j}$, respectively. 
This results in the following lower bound on the recovery error
\[ 
\| \bhA - \bA \|_{2, \infty}^2 \, \ge \| \bZ_{\cdot, 1} - \bA_{\cdot, 1} \|_2^2 \, = \| \eta_{\cdot, 1} \|_2^2 \,\stackrel{\mathbb{E}}{=} N,
 \]
yielding 
\[ 
\frac{1}{N} \Ex \left[ \| \hY - \bA \beta^* \|_2^2 \right] \le \frac{\sigma^2 r}{N} + \| \beta^*\|_1, 
\]
which does not lead to prediction consistency. 
In fact, the second term, $\| \beta^* \|_1$, is exactly what arises if the bias is not corrected in the error-in-variables regression setting of \cite{loh_wainwright, tsybakov_1, tsybakov_3, tsybakov_4}. 

\medskip
\noindent{\em What happens to traditional error-in-variables regression estimators.}  
Now, consider the same setup as above but let $\bA$ be fully observed, i.e., $\bZ = \bA$.
In such settings where $\bA$ is uncontaminated, it is known that the restricted eigenvalue (RE) condition (see Definition \ref{def:re} of Appendix \ref{sec:definitions}), which is the de-facto assumption in the literature, guarantees $\ell_2$-recovery of the underlying $\beta^*$ via the Lasso method. 
However, this particular $\bA$ breaks the RE condition and thus $\beta^*$ cannot be accurately estimated. 
To see this, let $\Delta = e_3 \in \Reals^p$ in Definition \ref{def:re}. 
Then, $( 1/N) \| \bA \Delta \|_2^2 \,= 0 $, hence violating the RE condition needed for all the existing analyses of methods for the error-in-variables regression setting. 

\medskip
\noindent{\em In summary.}
From this simple example, we observe that a lack of information spread across the columns of $\bA$ seem to yield poor prediction and parameter estimation errors. 
However, it has been well established that a large ensemble of covariate matrices $\bA$ satisfy the RE condition; 
specifically, when the entries (or rows) of $\bA$ are sampled independently from a sub-gaussian distribution. 
Analogously, we show that two canonical generating processes for the covariate matrix, namely embedded Gaussian features and geometrically decaying singular values, satisfy the desired properties needed to achieve vanishing prediction error.
That is, the singular value gap $\tau_k - \tau_{k+1}$ is sufficiently large under such generating processes for appropriate $k$, where $k$ is the number of chosen principal components. 
\vspace{-5mm}
\section{PCR and Synthetic Controls} \label{sec:pcr_sc}
\subsection{Synthetic Controls Setup}\label{sec:sc_setup}
{\bf Pre- \& post-intervention periods.}
As is standard in the SC literature, let there be $p + 1$ different time series over $N$ periods associated with a target unit and $p$ donor units. 
%
%To remain consistent with our established notation, 
Suppose the target unit receives the intervention at time period $n$, where $1 \le n < N$. 
We will refer to the pre- and post- intervention periods as the time periods prior to and after the intervention point.

\smallskip
\noindent
{\bf Donor observations under control.}
Let $\bA \in \Reals^{N \times p}$ represent the true utilities of the $p$ donor units across the entire time horizon $N$ in the absence of intervention;
 i.e., $\bA_{\cdot, j} \in \Reals^N$ represents the time series over $N$ periods for donor $j \in [p]$. 
Rather than observing $\bA$, we assume we are only given access to $\bZ \in \Reals^{N \times p}$, a sparse, noisy instantiation of $\bA$. 
In words, $\bZ$ denotes the corrupted donor pool observations; as made precise later in the section, we assume $\bZ$ follows the distributional characteristics described in Section \ref{sec:modeling_assumptions}.

\smallskip
\noindent
{\bf Target unit observations under control.}
For every $i \in [N]$, let $Y_i$ denote the noisy utility associated with the target unit in the absence of intervention (control). 
However, since the target unit experiences an intervention for all time instances $n < i \le N$, we only have access to a noisy version of the target unit's utility for the pre-intervention period, i.e., we only observe $Y^{\text{pre}} = [Y_i]$ for $i \in [n]$.
Analogously, we denote $Y^{\text{post}}= [Y_i]$ for $i \in N \setminus [n]$ as the target's (noisy) utility in the post-intervention period.
We will denote $\Ex[Y_i] \in \Reals$ as the true, latent utility at time $i$ for the target unit, if the intervention never occurred. 
In summary, given data $(Y^{\text{pre}}, \bZ)$, the aim is to recover $\Ex[Y^{\text{post}}]$, the counterfactual trajectory of the target unit under control in the post-intervention period. 
For a pictorial view of the setup of the problem, please refer to Figure \ref{fig:sc}.

\vspace{-5mm}
\subsection{(Approximate) Linear Synthetic Controls Exist}\label{sec:linear_model_sc}
{\bf Existence of linear synthetic controls.} 
In the SC literature, two standard assumptions are made: 
first, there exists a linear relationship between the target and donor units---in \cite{abadie2, abadie1}, a more restrictive assumption is made that a convex relationship between the target and units exists; 
second, the underlying utilities follow a low-rank factor model. 

Below, we show that if the underlying utilities of the target and donor units follow a generalized factor model or latent variable model (LVM) as in \eqref{eq:lvm_2}, then an (approximate) linear relationship between the target and donor units is actually {\em implied} by such a model.
That is, the existence of an {(approximate)} linear synthetic control does not need to be additionally assumed. 
Further, we establish that the linear approximation error goes to zero the more data that is collected.
As stated in Section \ref{sec:PCR_GLM}, LVMs are a natural nonlinear generalization of the typical {\em factor model} ubiquitous in studying panel data in econometrics.

To that end, let $\bA' = [A'_{ij}] \in \Reals^{N \times (p+1)}$ denote the concatenation of $\bA$, the latent donor pool utilities, with $\Ex[Y]$, the vector of underlying utilities for the target unit in the absence of an intervention. 
We denote $\bA_{\cdot, 0}' = \Ex[Y]$ as the latent true utility vector for the target unit, and $\bA_{\cdot, j}' = \bA_{\cdot, j}$ for all $j \in [p]$ as the latent true utilities for the donor pool.
We assume $\bA'$ follows a LVM as detailed below:

\begin{property}\label{glm_linearity}
Let $\bA'$ follow a LVM as defined in \eqref{eq:lvm_2} -- as established in Proposition \ref{prop:lvm}, for any $\delta > 0$,  there exists ${\bA'}^{\emph{(lr)}}$ of rank $r \le C(\zeta, K) \delta^{-K}$ such that $\norm{\bA' - {\bA'}^{\emph{(lr)}}}_{\infty} \le \cL \cdot \delta^\zeta$.
Let $\zeta > K$.
Denote ${\bA'}^{\emph{(lr)}} = \bU \bV^{T}$ as its singular value decomposition where $\bU \in \Rb^{N \times r}, \bV \in \Rb^{(p + 1) \times r}$ and $v_{i}$ denotes the $i$-th row of $\bV$.
Let $v_{0}$ lie within $\text{span}(\{v_{i}\}_{i \in [p]})$.
%
%, i.e., there exists $\beta^{*} \in \Rb^p$ such that $v_{0} = \sum^{p}_{i = 1}\beta^{*}_{i} v_{i}$.
%
\end{property}
\noindent {\em Interpretation.}
If $\bA'$ satisfies a LVM as defined in \eqref{eq:lvm_2}, then the existence of ${\bA'}^{\text{(lr)}}$ as in Property \ref{glm_linearity} is simply a restatement of Proposition \ref{prop:lvm}.
To analyze SC, we make the additional mild assumption in Property \ref{glm_linearity} that $v_{0}$ lies within $\text{span}(\{v_{i}\}_{i \in [p]})$.
This assumption helps avoid the ``pathological'' case where the right singular vector associated with the target unit, $v_{0}$, does not lie within the span of the right singular vectors associated with the donor units, $\{v_{i}\}_{i \in [p]}$. 
Even in the worst case, by the definition of a rank of a matrix, there can only exist $r$ out of the $p$ right singular vectors $v_{i}$ that do not lie within the span of the remaining singular vectors.
Indeed, we can pick $\delta$ as defined in Property \ref{glm_linearity}, such that $r = \text{rank}({\bA'}^{\text{(lr)}}) \le C(\zeta, K) \delta^{-K} = o(p)$, rendering this pathological case overwhelmingly unlikely to hold.
Lastly, we note this assumption that $v_{0}$ lies within $\text{span}(\{v_{i}\}_{i \in [p]})$ is implicitly always made in the SC literature.
\begin{prop}\label{prop:linear_comb}
Assume $\bA'$ satisfies Property \ref{glm_linearity}. 
Then there exists a $\beta^* \in \Reals^{p}$ such that the target unit (represented by index $0$) satisfies for all $i \in [N]$, and for any $\delta > 0$,
\begin{align} \label{eq:sc_existence}
	| A'_{i0} - \sum_{k=1}^{p} \beta^*_k \cdot A'_{ik}| \le C(\zeta, K) \cdot \cL \cdot \delta^{(\zeta - K)}. 
\end{align}
Here, $C(\zeta, K)$ is defined as in Property \ref{glm_linearity}. 
\end{prop}
\noindent Proof of Proposition \ref{prop:linear_comb} can be found in Appendix \ref{sec:linearity_exists}.

\smallskip
\noindent {\em Interpretation.}
Proposition \ref{prop:linear_comb} shows that if $\bA'$ follows a LVM as in \eqref{eq:lvm_2}, then a (approximate) linear synthetic control exists, where the linear misspecification error decays to zero for appropriate choice of $\delta$ in \eqref{eq:sc_existence}.
Moreover, empirically a LVM is well-motivated -- across many real-world datasets, including the canonical SC case studies of California Proposition 99 and terrorism in Basque Country of \cite{abadie2, abadie1}, we see they exhibit an approximate (very) low-rank structure (see Figures \ref{fig:basque_spectrum}, \ref{fig:basque_energy}, \ref{fig:cali_spectrum}, and \ref{fig:cali_energy}). 

\vspace{-5mm}
\subsection{Synthetic Controls and Error-in-variables Regression} \label{ssec:pcr_rsc_eiv}
\noindent
{\bf SC framework fits error-in-variables regression with model mismatch.}  
If $\bA'$ satisfies Property \ref{glm_linearity}, Proposition \ref{prop:linear_comb} establishes that we can express the underlying utility of the target unit under no intervention for all $i \in [N]$ as
\begin{align}\label{eq:sc_linear_model}
\Ex[Y_i] = \bA'_{i0} = \bA_{i, \cdot} \beta^* + \phi_i.
\end{align}
Here $\beta^* \in \Reals^p$ is defined as in Proposition \ref{prop:linear_comb}, and $\phi _i$ is the model mismatch bounded by $C(\zeta, K) \cdot \cL \cdot \delta^{(\zeta - K)}$.
That is, in the SC framework, \eqref{eq:regression_model_general} holds under Property \ref{glm_linearity}. 
In summary Proposition \ref{prop:linear_comb} reduces the question of interest in SC of estimating $\Ex[Y^{\text{post}}]$ -- the counterfactual trajectory of the target unit under no intervention in the post-intervention period -- to that of linear regression with model mismatch.
We note that we are in the error-in-variable setting as instead of observing $(\Ex[Y^{\text{pre}}_i], \bA)$, we only get to observe $(Y^{\text{pre}}, \bZ)$.
  
\smallskip
\noindent
{\bf Restating objective in SC framework.} 
Given \eqref{eq:sc_linear_model}, we can write the {\em pre-intervention error} as
\begin{align} \label{eq:pre_int_error}
	\text{MSE}_{\text{pre}}(\hY) &= \frac{1}{n} \, \Ex\left[ \sum_{i \in [n]} (\hY_i - \bA_{i, \cdot} \beta^*)^2 \right],
\end{align}
and the {\em post-intervention error} as
\begin{align} \label{eq:post_int_error}
	\text{MSE}_{\text{post}}(\hY) &= \frac{1}{N-n} \, \Ex\left[ \sum_{i \in [N] \setminus  [n]} (\hY_i - \bA_{i, \cdot} \beta^*)^2 \right].
\end{align} 
\eqref{eq:pre_int_error} is precisely the training error defined in \eqref{eq:train_error} and
\eqref{eq:post_int_error} is a slightly modified form of \eqref{eq:test_error} since the objective now is to accurately estimate the counterfactual in the absence of any intervention only during the post-intervention stage. 
Observe that the objective in SC exactly fits the setting of transductive semi-supervised learning as described in Section \ref{sec:intro_aim}. 

\vspace{-5mm}
\subsection{Finite-sample Analysis of RSC via PCR} \label{ssec:pcr_rsc}
{\bf RSC is equivalent to PCR.}
The RSC method proposed by \cite{amjad} has exhibited empirical success in duplicating the celebrated results of \cite{abadie2, abadie1} for the California Proposition 99 and terrorism in Basque Country case studies, respectively, using: 
(i) only the outcome data, i.e., without any usage of auxiliary covariates
(ii) in the presence of noisy data.
Under these two conditions, the classical SC algorithm of \cite{abadie2, abadie1} provides poor post-intervention predictions (see Figures \ref{fig:basque_sc_mar} and \ref{fig:cali_sc_mar}).

The RSC method is a three step procedure: 
(i) perform HSVT on the donor matrix (include both pre- and post-intervention data); 
(ii) linearly regress thresholded donor matrix with pre-intervention data of the target unit to learn linear weights for each of the donors;
(iii) apply these linear weights on the post-intervention donor data to estimate the counterfactual trajectory for the target unit.

Observe that the RSC method is precisely the algorithm detailed in Section \ref{sec:intro_pcr_ME}, of doing HSVT followed by OLS. 
Empirically, \cite{amjad} demonstrated that the RSC method's first step of pre-processing via HSVT effectively de-noises and imputes missing values in the donor observations, which is crucial in building a robust linear synthetic 
control that has good post-intervention performance. 
Pleasingly, by Proposition \ref{prop:equivalence}, we can equivalently interpret the RSC method as simply PCR. 

It is worth noting that one of the primary motivations for utilizing convex regression (as proposed in \cite{abadie2, abadie1}) was to impose sparsity in the number of donors chosen, i.e., enforcing most of the coefficients of the synthetic control to be zero. 
Rather than introducing sparsity in the original donor space, PCR can be interpreted as introducing sparsity in the subspace induced by the right singular vectors corresponding to the donors since only the top few right singular components are retained.
Indeed, as made precise by Proposition \ref{prop:low_rank_sparsity}, PCR performs implicit $\ell_0$-regularization on the learnt linear model. 

\smallskip
\noindent
{\bf Theoretical results.}
By viewing RSC via the lens of PCR, it allows us to bound the post-intervention prediction error for the target unit. 
We recall some necessary notation.
Recall the definition of $\bA$ and $\bZ$ from Section \ref{sec:sc_setup}; 
the definition of $\bA'$ from Section \ref{sec:linear_model_sc};
and denote $\bZ^{\hsvt, k}$ and $\beta^{\hsvt,k}$ as the de-noised donor matrix and the fitted linear model outputted from RSC, respectively.
\begin{theorem} \label{thm:mse_sc}
Let $\bA, \bZ$ satisfy Properties \ref{prop:bounded_covariates}, \ref{prop:observation_noise_structure}, \ref{prop:covariate_noise_structure}.
Let $\bA'$ satisfy: 
(i) \eqref{eq:lvm_2} and further assume $\theta_i$ for $i \in [N]$, the latent parameters associated with time, are sampled i.i.d from some latent distribution $\Theta$;
(ii) Property \ref{glm_linearity} and further assume ${\bA}^{\emph{(lr)}} \in \Rb^{N \times p}$, the restriction of ${\bA'}^{\emph{(lr)}}$ to the donor units, satisfies Property \ref{property:spectra}.
Let $\bZ^{\emph{HSVT}, k}$ satisfies \eqref{eq:test_error_condition} in Proposition \ref{prop:low_rank_sparsity}.
Let $N - n = \Theta(n)$.
Then,
\begin{align} \label{eq:mse_upper_generic_refined_LVM_detailed-SC}
\emph{MSE}_{\emph{post}}(\hY)
& \le 
\frac{C' C(\zeta, K) \cL^{2} \|  \beta^*\|_1^2}{\rho^4} \left(\frac{1}{(n \wedge p)^{{1 - \frac{K}{2\zeta}}}}  \right) \log^5(np) 
\,+\, \frac{ C'''  k^{5/2} } {\sqrt{n}} \| \beta^* \|_1,
\end{align} 
where:
$C' = C (1+\sigma^2)(1+\gamma^2)(1+K^4_\alpha)$;  
$C(\zeta, K)$ and $\cL$ are defined as in Property \ref{glm_linearity};
$C''' = C \cdot C'' \cdot \| \beta^{\emph{HSVT},k} \|_{1} \cdot \Ex[\| \bZ^{\emph{HSVT}, k} \|_\infty^2 ]$, with
$C''$ defined as in Proposition \ref{prop:low_rank_sparsity};
$C>0$ is an absolute constant.
\end{theorem}
\vspace{-3mm}
\noindent The proof of Theorem \ref{thm:mse_sc} can be found in Appendix \ref{sec:mse_sc_proof}.

\smallskip
\noindent {\em Interpretation.}
We highlight that Theorem \ref{thm:test_pcr} bounds $\Ex_\Omega [\text{MSE}(\hY)]$, while Theorem \ref{thm:mse_sc} bounds $\text{MSE}_{\text{post}}(\hY)$.
That is, Theorem \ref{thm:mse_sc} differs from Theorem \ref{thm:test_pcr} in that the set of observations for which we see labels (i.e., observations of the target in the pre-intervention period) is {\em not assumed to be drawn uniformly at random}.
Such an assumption obviously cannot hold in the setting of SC as the pre-intervention period chronologically occurs before the post-intervention period.
Instead, we make a more standard assumption that the latent features $\theta_i$, which correspond to different time periods, are sampled i.i.d. from some unknown distribution, $\Theta$. 
Lastly, we leave it as open problem of how to achieve confidence intervals for the post-intervention error for RSC; one could possibly do so by extending our results to hold in high-probability rather than in expectation.

\smallskip
\noindent \textbf{Comparison with related works in SC.}
The most relevant results to compare against are Corollary 4.1 (pre-intervention prediction error) and 
Theorem 4.6 (post-intervention prediction error) in \cite{amjad}. 
To begin with, as in standard in the SC literature, \cite{amjad} does not establish the existence of a synthetic control, rather it simply assumes one exists. 
Corollary 4.1 in \cite{amjad} does not show consistency of the RSC method with respect to pre-intervention error as there is an irreducible term, $\sigma^2$, the measurement noise in the donor pool, that does not vanish. 
Further, with respect to post-intervention error, Theorem 4.6 of \cite{amjad} suffers from the same irreducible $\sigma^2$ term. In addition, the authors do not show that the second term of their bound decays to zero as more data is collected. 
As importantly, in both bounds, it is assumed that the RSC method picks the correct number of singular components, 
i.e., the rank of the underlying matrix of donor utilities is correctly chosen and is of a lower order compared to the ambient dimensions. 
In contrast, in our setting, we allow the low-rank condition of the underlying donor matrix to be misspecified, i.e., follows a generalization factor model.
Finally, their result does not provide guidance for picking the right parameter $k$ for rank (or in PCR) as done by our result through the generalization or post-intervention error analysis. 

Additionally, \cite{ark} considers a similar setting where the observed covariates $\bZ$ are a corrupted version (additive noise model) of the true, underlying covariates $\bA$, which follow an approximately low-rank factor model, i.e., $\bA$ cannot have too many large singular values (specifically, refer to Assumption 3 of \cite{ark}).
However, they do not allow for missing data within $\bZ$. 
Here, the authors perform convex regression (with $\ell_2$-norm constraints) along both the unit and time axes (unlike standard SC methods, such as RSC, which only consider regression along the unit axis) to estimate the causal average treatment effect. 
As is classically done in the SC literature, the authors of \cite{ark} assume that convex weights exist amongst the rows of $\bA$; in contrast, we show that (approximate) linear weights are directly implied by a (approximate) low-rank factor model.
For this setting, they establish a rigorous asymptotic normality result for their causal estimand of interest, which is the average treatment effect of all treated units over the entire post-intervention period; in contrast, our target causal estimand is the entire post-intervention vector for each treated unit, for which we show mean squared error consistency at rate $1 / \sqrt{n}$.
The work of \cite{ark} complements our own in terms of clarifying the tradeoff between the assumptions made on $\bA$ (i.e., the low-rank approximation error), the constraints on the synthetic control weights (linear vs. convex), and the target causal estimand.
Building on these works to explicitly define the tradeoff between what can be assumed on the spectra of $\bA$, the synthetic control weights, and the subsequent results one can get for various target causal estimand is an interesting future research direction.  

Another work that is less related, but is worth commenting on, as it also heavily relies on matrix estimation techniques for SC, is \cite{athey1}. 
Here, the authors consider an underlying low-rank matrix of $N$ units and $T$ measurements per unit, and the entries of the observed matrix are considered ``missing'' once that unit has been exposed to a treatment. 
To estimate the counterfactuals, \cite{athey1} applies a nuclear norm regularized matrix estimation procedure. 
Some key points of difference are that their performance bounds are with respect to the Frobenius norm over all entries (i.e., units and measurements) in the matrix; meanwhile, we provide a stronger bound that is specific to the single treated unit and only during the post-intervention period. 
Additionally, the bound of \cite{athey1} depends on a parameter, which they denote as $p_c$, that represents the minimum probability of observe all $T$ measurements associated with a given unit. 
The authors establish consistency of their estimator provided that $p_c \gg 1/\sqrt{T}$. 
When data is randomly missing, even if the probability of observing each entry is $1-\varepsilon$ for any $\varepsilon > 0$, then $p_c < (1-\varepsilon)^T = o(1/\sqrt{T})$; thus, this result is not applicable for our setting.

%%
%\begin{table}[t]
%\centering 
%\smaller 
%\begin{tabular}{@{}ccc@{}}
%\toprule
%%%%%%
%Literature 
%& Key Assumptions 
%%& \begin{tabular}[c]{@{}c@{}}Metric\end{tabular} 
%& \begin{tabular}[c]{@{}c@{}}Post-Intervention Error \end{tabular} 
%\\ 
%%%%%%
%\midrule
%\cite{abadie2}    
%& \begin{tabular}[c]{@{}c@{}}factor model, existence of convex synthetic control\end{tabular} 
%%& $\Ex[\hY_i - Y_i]$
%& --   \\
%%%%%%%%%
%\cite{ark}    
%& \begin{tabular}[c]{@{}c@{}}factor model, existence of convex synthetic control\end{tabular} 
%%& $(\hY_i - \Ex[Y_i])^{2}$
%& asymptotic consistency   \\
%%%%%%%
%\cite{amjad}    
%& \begin{tabular}[c]{@{}c@{}}factor model,  existence of linear synthetic control\end{tabular} 
%%& $(\hY_i - \bA_i \beta^{*})^{2}$
%& --   \\
%%%%%%
%{\color{blue} This work}
%& \begin{tabular}[c]{@{}c@{}}{\color{blue}generalized factor model}\end{tabular}
%%& {\color{blue}  $(\hY_i - \bA_i \beta^{*})^{2}$} 
%& {\color{blue} $1/ \sqrt{n}$}  \\
%%%%%%
%\bottomrule          
%\end{tabular}
%\vspace{1mm} 
%\caption{Comparison with synthetic control literature (for the setting  $N = \Theta(n)$ and $p \ge n$). Note that \cite{abadie2} analyze the bias of the estimator (rather than the prediction error), and establish its asymptotic unbiasedness property when $n \rightarrow \infty$ (with $p$ fixed).}
%%does not lead to consistency for the high-dimensional setting $p \ge n$.} 
%\label{table:PCR_comparison}
%%\vspace{-3mm}
%\end{table}
%
\vspace{-5mm}
\subsection{Empirical Results}\label{sec:pcr_sc_empirical}
\vspace{-3mm}
We present empirical results using the RSC method on several well-known datasets in the literature to highlight its robustness properties in comparison with the traditional SC estimator and OLS.

\smallskip
\noindent {\bf Terrorism in Basque Country.} 
A canonical case study within the SC literature investigates the impact of terrorism on the economy in Basque Country (see \cite{abadie1}). 
Here, the target unit of interest is Basque Country, the donor pool consists of neighboring Spanish regions, and the intervention is represented by the first wave of terrorist activity in 1970. 
The aim in this study is to isolate the effect of terrorism on the GDP of Basque Country. 
In other words, to evaluate the effect of terrorism, SC-like methods aim to estimate the unobservable counterfactual GDP growth in the absence of terrorism for Basque Country using observations from various other Spanish regions, which are assumed to be unaffected by the terrorist activity. 

Since we do not have access to the counterfactual realities of the Basque Country GDP post 1970 in the absence of terrorism, we will use the celebrated estimates of \cite{abadie1} as our baseline; 
this is our chosen ``ground-truth'' because these counterfactual trajectories of Basque's GDP in the absence of terrorism have been widely accepted in the econometrics community. 
The resulting synthetic Basque is displayed in Figure \ref{fig:basque_cvx}. 

\smallskip
\noindent {\em Classical SC and OLS under missing data.} 
We randomly obfuscate data, ranging from 5-20\%, and in Figure \ref{fig:basque_sc_mar} we plot the resulting synthetic Basque GDPs predicted via convex regression on the outcome GDP data, i.e., the original SC method {\em without} auxiliary covariates --
the solid blue and orange lines represent the observed and synthetic Basque (predicted by \cite{abadie1}), respectively, while the dashed lines represent the synthetic Basques under varying levels of missing data. 
As clearly seen from the figure, the original SC method is not robust to sparse observations, which may explain its dependency on auxiliary covariates to learn its model.

Additionally, we construct a synthetic Basque using OLS, i.e., running linear regression without any pre-processing of the donor observations (on only the outcome GDPs). 
As seen in Figure \ref{fig:basque_lr}, OLS clearly overfits to the idiosyncratic noise of the pre-intervention data and fails to produce sensible post-intervention estimates. 
In fact, the synthetic Basque GDP as predicted by OLS suggests terrorism actually had a long-term benefit for the Basque economy!
This example motivates the importance of appropriately regularizing and de-noising the donor data, as PCR does, prior to learning a synthetic control.

\smallskip
\noindent {\em Importance of covariate pre-processing via PCA.} 
The first step of PCR (i.e., PCA) is even more starkly empirically motivated by inspecting the singular value spectrum and cumulative energy of the Basque dataset, which are shown in Figures \ref{fig:basque_spectrum} and \ref{fig:basque_energy}, respectively. 
The data exhibits low-dimensional structure with over 99\% of the spectral energy captured in just the {\em top} singular value, which fits the setting under which our theoretical results imply low pre- and post-intervention prediction errors. 
The resulting synthetic Basque as per PCR (or, equivalently, the RSC method) is shown in Figure \ref{fig:basque_pcr}, which pleasingly closely matches that of \cite{abadie1}. 
Similarly, in Figure \ref{fig:basque_mar}, we display various synthetic Basque GDPs after randomly obfuscating the donor observations. 
Across the varying levels of missing data from 5-20\%, the synthetic Basque GDPs continue to resemble the baseline estimates of \cite{abadie1} such that the same conclusion on the negative economic effects of terrorism can be drawn.

Importantly, we underscore that all of the results computed via the PCR method shown in Figures \ref{fig:basque_pcr} and \ref{fig:basque_mar} only use the outcome data (only the per-capita GDP values), i.e., the PCR estimator does {\em not} utilize the auxiliary covariate information that was required to achieve the results in \cite{abadie1}. 
Hence, PCR exhibits desirable robustness properties with respect to missing and noisy data, and with less stringent data requirements to achieve similar counterfactual estimates. 

% California example
\smallskip
\noindent {\bf California Proposition 99.} 
Another popular case study in the SC literature investigates the impact of California's Proposition 99, an anti-tobacco legislation, on the per-capita cigarette consumption in the state (see \cite{abadie2}). 
Similar to the Basque example, we will use the widely accepted counterfactual estimate of \cite{abadie2} as our baseline, which is shown in Figure \ref{fig:cali_pcr_cvx}. 
Here, the authors of \cite{abadie2} considered California as the target state, the collection of states in the U.S. that did not adopt some variant of a tobacco control program as the donor pool, and Proposition 99 (enacted in 1988) as the intervention.

We again plot the resulting Californias learned via convex regression {\em without} auxiliary covariates and under varying levels of missing data (5-20\%) in Figure \ref{fig:cali_sc_mar}; 
similar to the Basque case study, this highlights the poor performance of the original SC method in the presence of missing data. 
Further, we plot the singular value spectrum and energy of the California Proposition 99 dataset, seen in Figures \ref{fig:cali_spectrum} and \ref{fig:cali_energy}, and observe that over 99\% of the cumulative spectral energy is again captured by the top singular value, which fits the setting under which our theoretical results apply and motivates the application of PCR. 

Empirically, we observe that the resulting synthetic California predicted via PCR, also displayed in Figure \ref{fig:cali_pcr_cvx}, closely matches the baseline, again without using any of the auxiliary covariates considered in the work of \cite{abadie2}. 
This is indeed expected from the theoretical analysis given the extremely low-dimensional structure of the data. 
Much like the previous Basque example, across the varying levels of missing data from 5-20\%, the synthetic California per-capita cigarette consumption trajectories continue to mirror the baseline estimates of \cite{abadie2}---even in the presence of missing data, the counterfactual estimates produced by PCR suggest that Proposition 99 successfully cut smoking in California.

%\vspace{-9mm}
\section{Conclusion} \label{sec:conclusion}
%\vspace{-4mm}
\noindent {\bf Summary of contributions.}
As the main contribution of this work, we address a long-standing problem of showing PCR (as is) is surprisingly robust to a wide array of problems that plague large-scale modern datasets, including high-dimensional and noisy, sparse, and mixed valued covariates. 
We provide meaningful non-asymptotic bounds for both the training and testing (transductive semi-supervised setting) errors for these settings, even when the covariate matrix is only {\em approximately} low-rank and the linear model is {\em misspecified}. 
From a practical standpoint, our testing error bound further provides guidance as to how to choose the PCR hyper-parameter $k$ in a data-driven manner. 
To achieving our formal results, we establish a simple, but powerful equivalence between PCR and linear regression with covariate pre-processing via HSVT; 
in the process, we provide a novel error analysis of HSVT with respect to the $\ell_{2, \infty}$-norm. 
We then formally connect our theoretical results with three important applications to highlight the broad meaning of ``noisy'' covariates; 
namely, SC (measurement noise), differentially private regression (noise added by design), and mixed covariate regression (``structural'' noise). 
Of particular note, given the equivalence between PCR and the RSC estimator, it immediately leads to a finite-sample bound for the post-intervention error of RSC under a generalized factor model, which is currently absent from the literature.
We note that finite-sample analyses are absent for most SC estimators. 
%, and further is always with respect to the {\em noisy} treatment effect ($\hY_i - Y_i$) as opposed to the de-noised treatment effect ($\hY_i - \bA_{i,\cdot} \beta^*$), as done in this paper.

\medskip
\noindent {\bf How to ``robustify'' an estimator.}
In essence, this work shows that the PCA component of PCR is an effective pre-processing tool in finding a linear low-dimensional embedding of the covariates, which carries the added benefits of implicit de-noising and $\ell_0$-regularization. 
We postulate that when the covariate data is ``unstructured'' (e.g., speech or video), finding meaningful nonlinear low-dimensional embeddings of the data can also achieve similar implicit benefits, e.g., via a variational auto-encoder or a general adversarial network.
We hope this work motivates a general statistical principle that to ``robustify'' a statistical estimator---first find a low-dimensional embedding of the data before fitting a prediction model.

%\newpage
{\smaller
\bibliographystyle{abbrv}
\bibliography{Bibliography-MM-MC}
}

% \newpage
\vspace{-5mm}
\section{Figures} \label{sec:figures}

\iftoggle{FIGURES}{
\begin{figure}[!htb]
	\centering
	\includegraphics[width=0.4\textwidth]{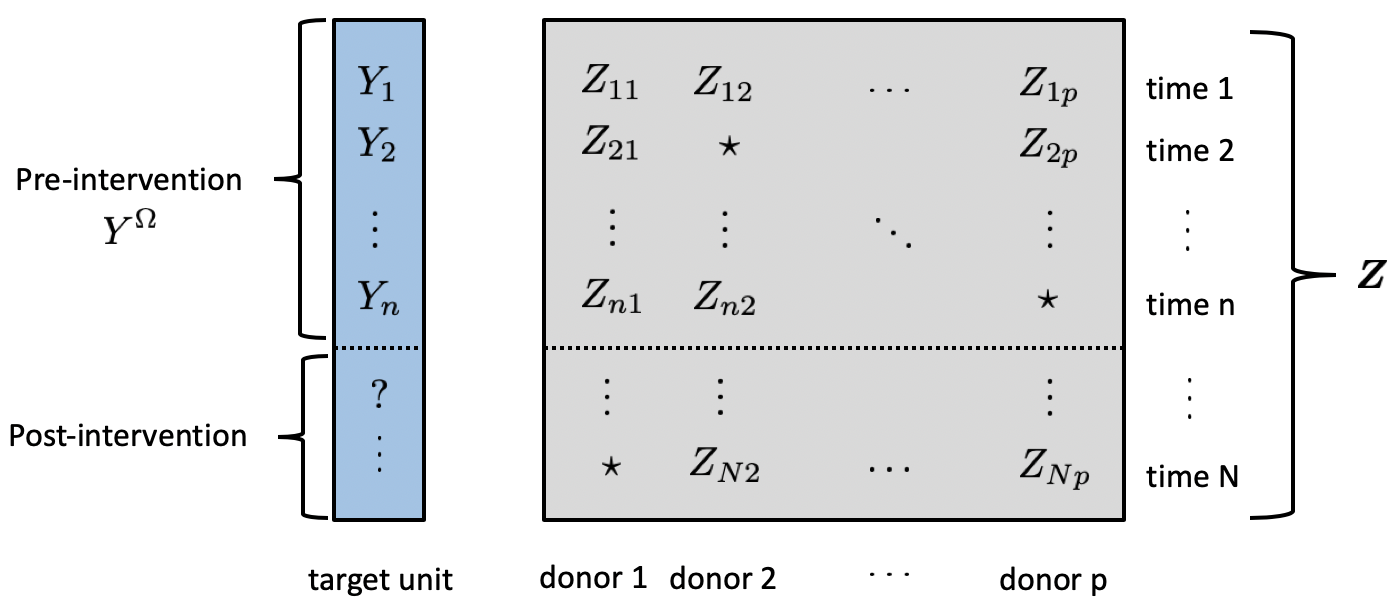}
	\caption{Caricature of observed data $(Y^{\text{pre}}, \bZ)$ in SC framework
	(with $\star$ denoting unobserved and/or missing data in the donor matrix). ``?'' represents the 
	counterfactual observations for the target unit in the absence of intervention, which is what we wish to estimate.}
	\label{fig:sc}
\end{figure}
}

\iftoggle{FIGURES}{
\begin{figure}[!htb]
	\centering
	\caption{\,}
	\begin{subfigure}[t]{0.3\textwidth}
		\centering
		\includegraphics[width=\linewidth]{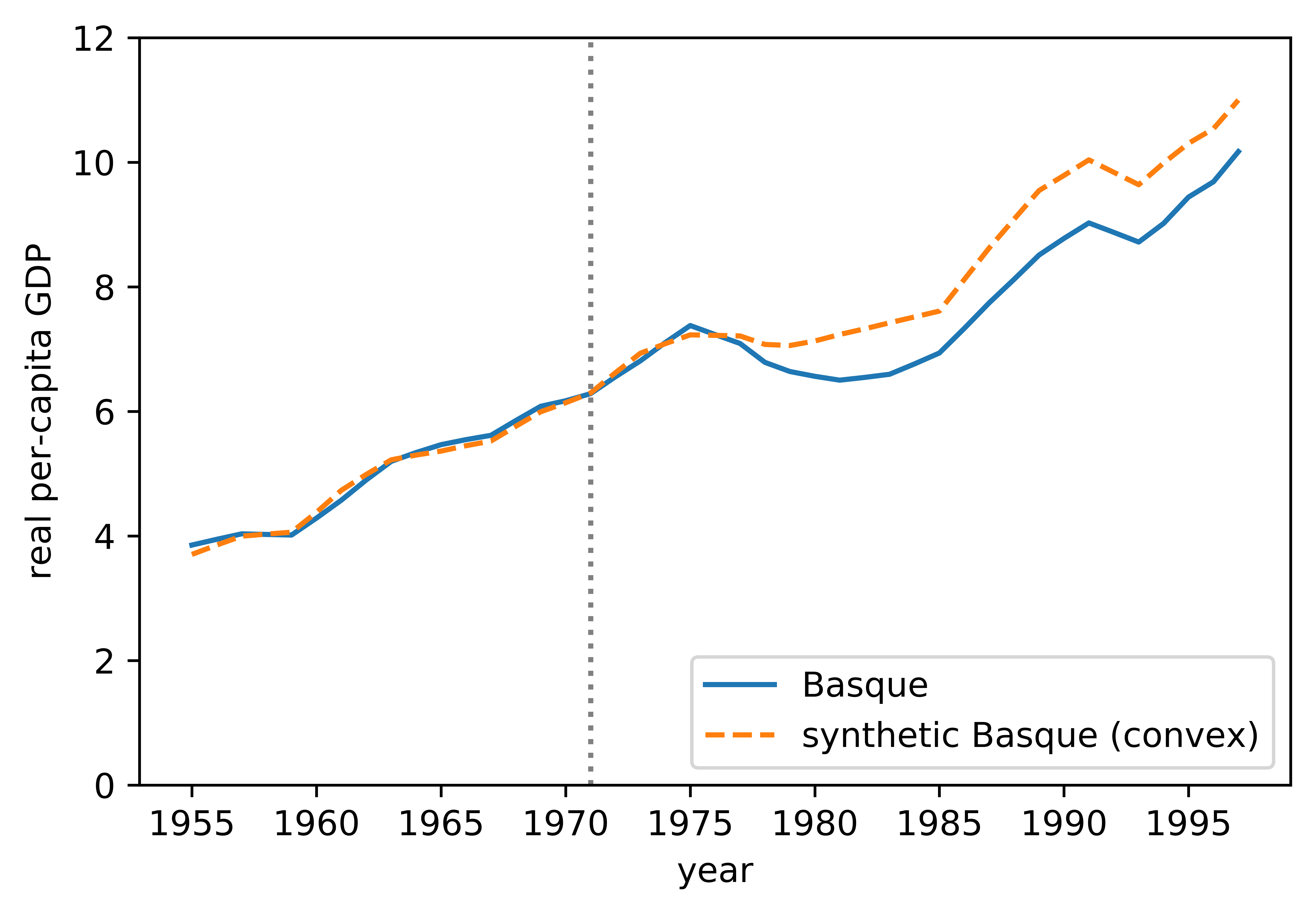}
		\caption{Synthetic Basque as predicted by \cite{abadie1}.}
		\label{fig:basque_cvx}
	\end{subfigure}
	~
	\begin{subfigure}[t]{0.3\textwidth}
		\centering
		\includegraphics[width=\linewidth]{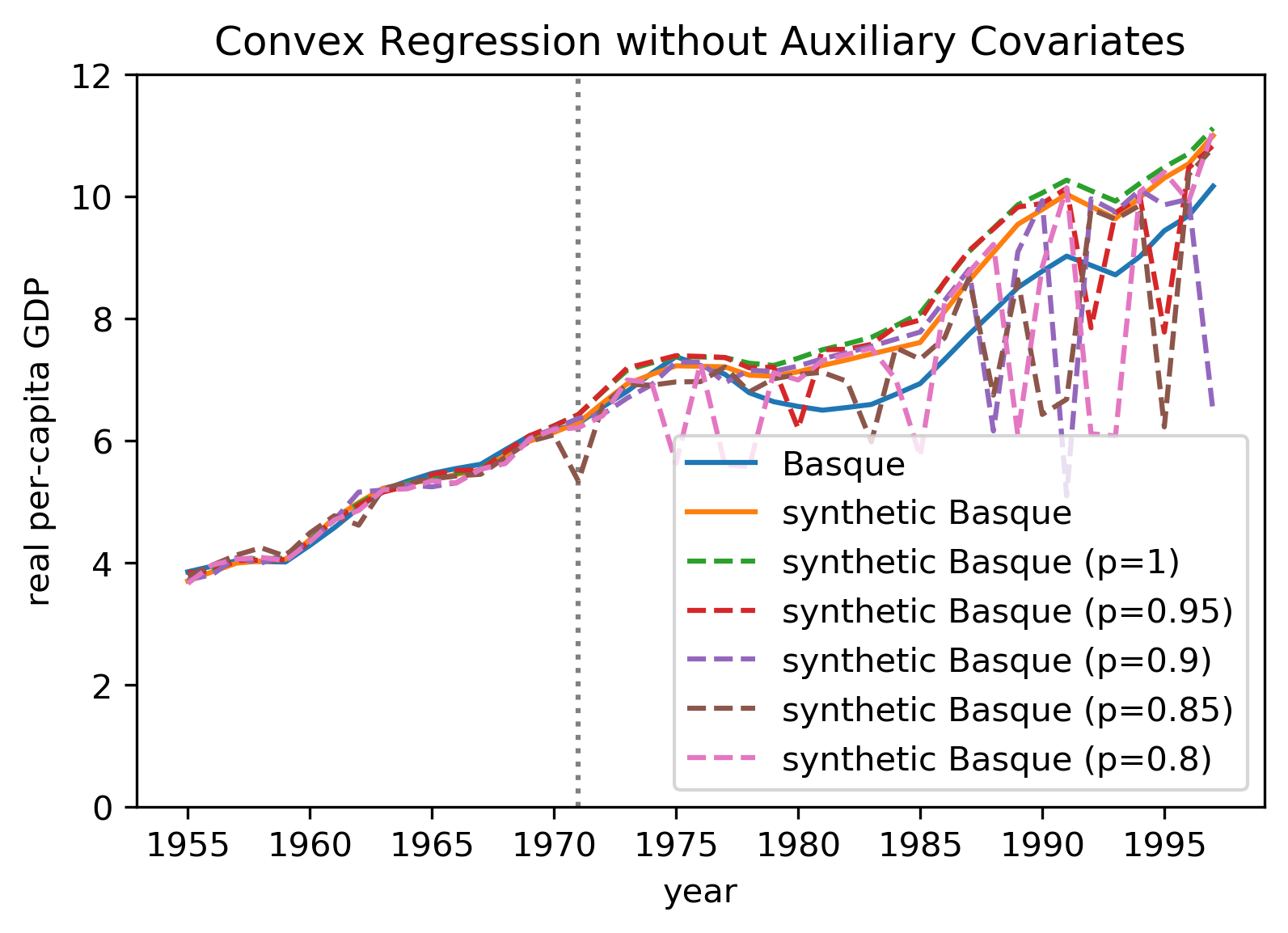}
		\caption{Synthetic Basque as predicted by \cite{abadie1} under varying levels of missing data.}
		\label{fig:basque_sc_mar}
	\end{subfigure}
	~
	\begin{subfigure}[t]{0.3\textwidth}
		\centering
		\includegraphics[width=\linewidth]{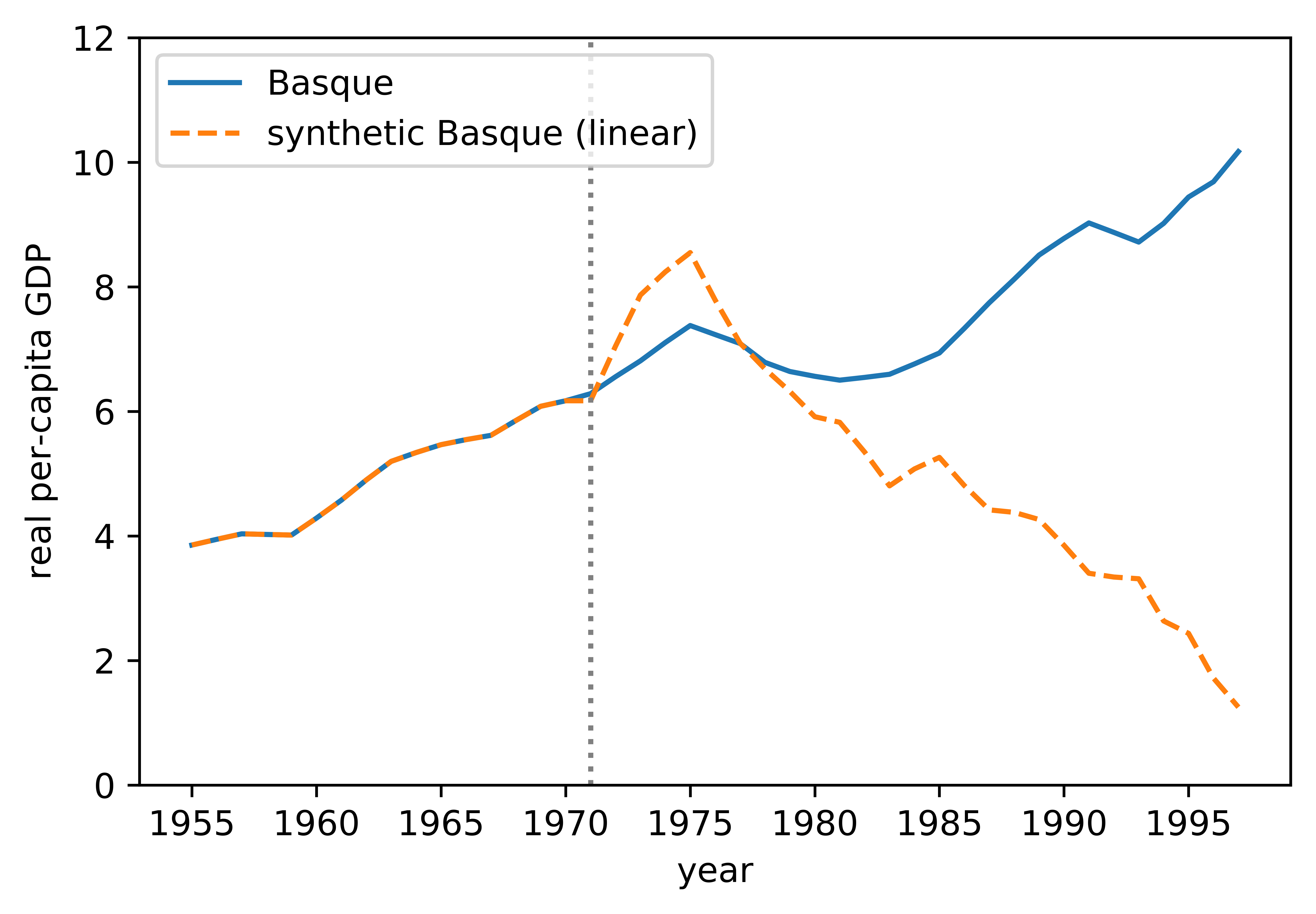}
		\caption{Synthetic Basque as predicted by Linear Regression.}
		\label{fig:basque_lr}
	\end{subfigure}
	%\label{fig:basque1}
\end{figure}
}

\iftoggle{FIGURES}{
\begin{figure}[!htb]
	\centering
	\caption{\,}
	\begin{subfigure}[t]{0.4\textwidth}
		\centering
		\includegraphics[width=0.75\linewidth]{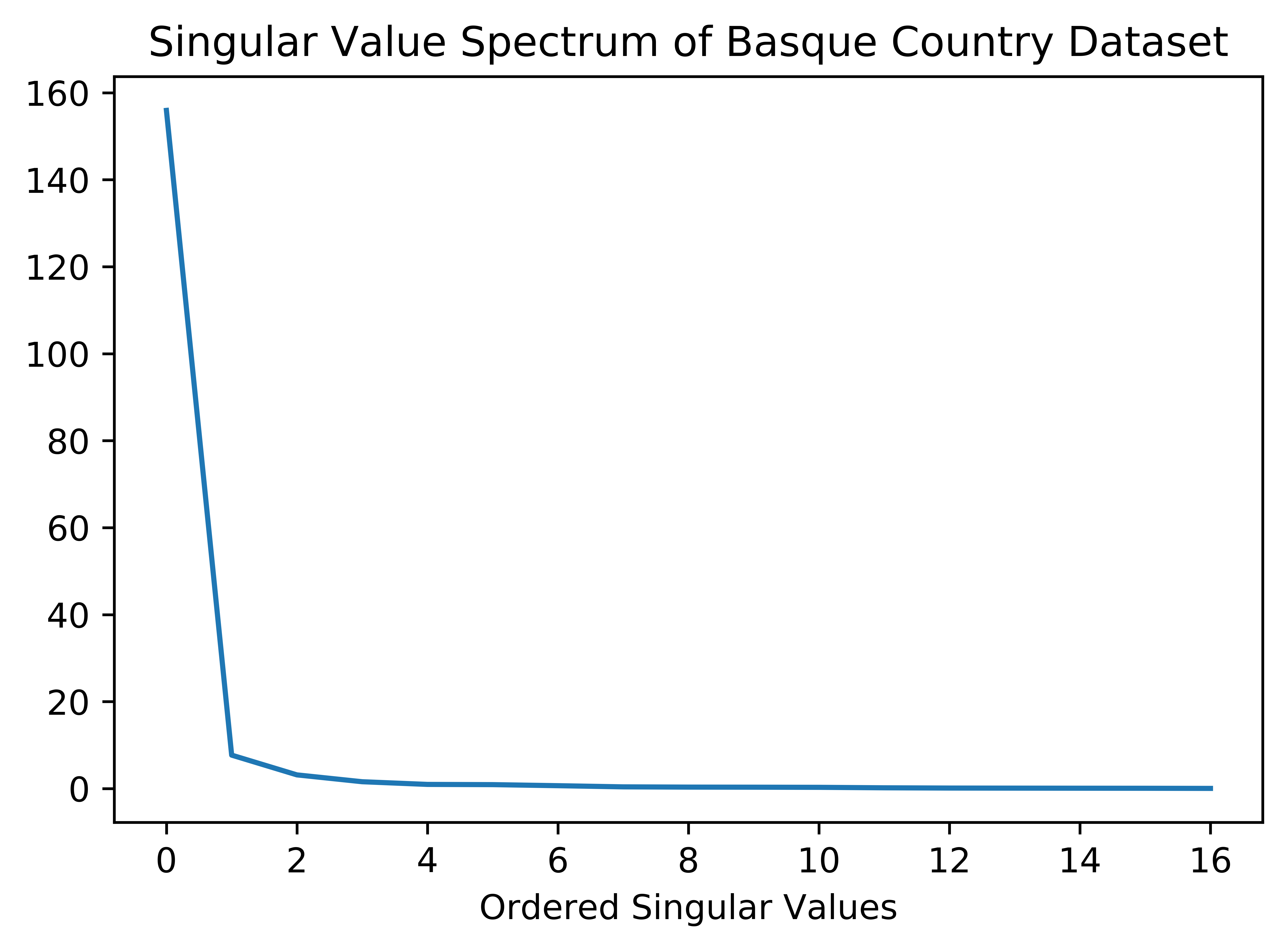}%{a}
		\caption{Singular value spectrum of Basque Country dataset.}
		\label{fig:basque_spectrum}
	\end{subfigure}
	~
	\begin{subfigure}[t]{0.4\textwidth}
		\centering
		\includegraphics[width=0.75\linewidth]{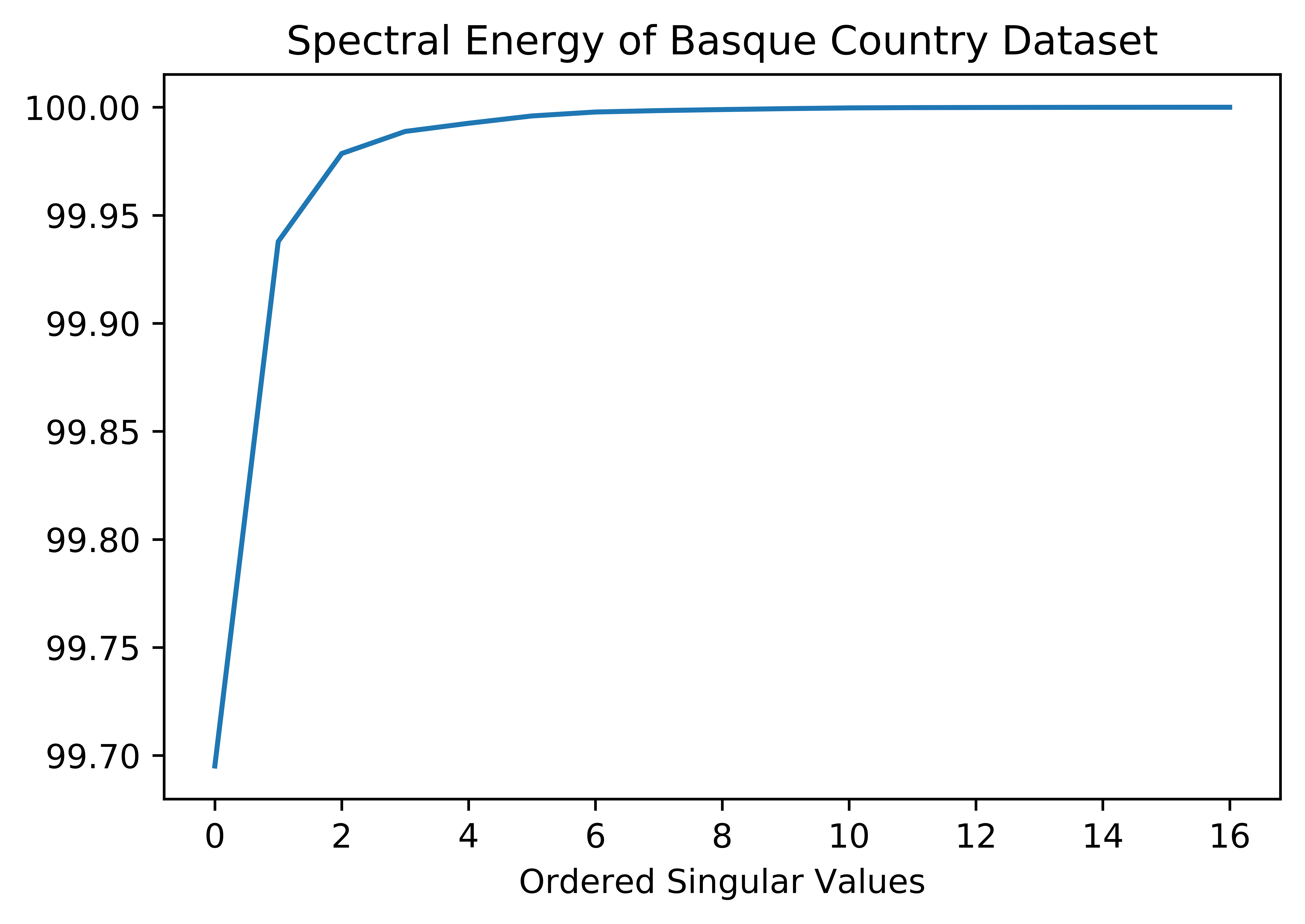}%{a}
		\caption{Spectral energy of Basque Country dataset.}
		\label{fig:basque_energy}
	\end{subfigure}
	\label{fig:basque2}
\end{figure}
}

\iftoggle{FIGURES}{
\begin{figure}[!htb]
	\centering
	\caption{\,}
	\begin{subfigure}[t]{0.4\textwidth}
		\centering
		\includegraphics[width=0.75\linewidth]{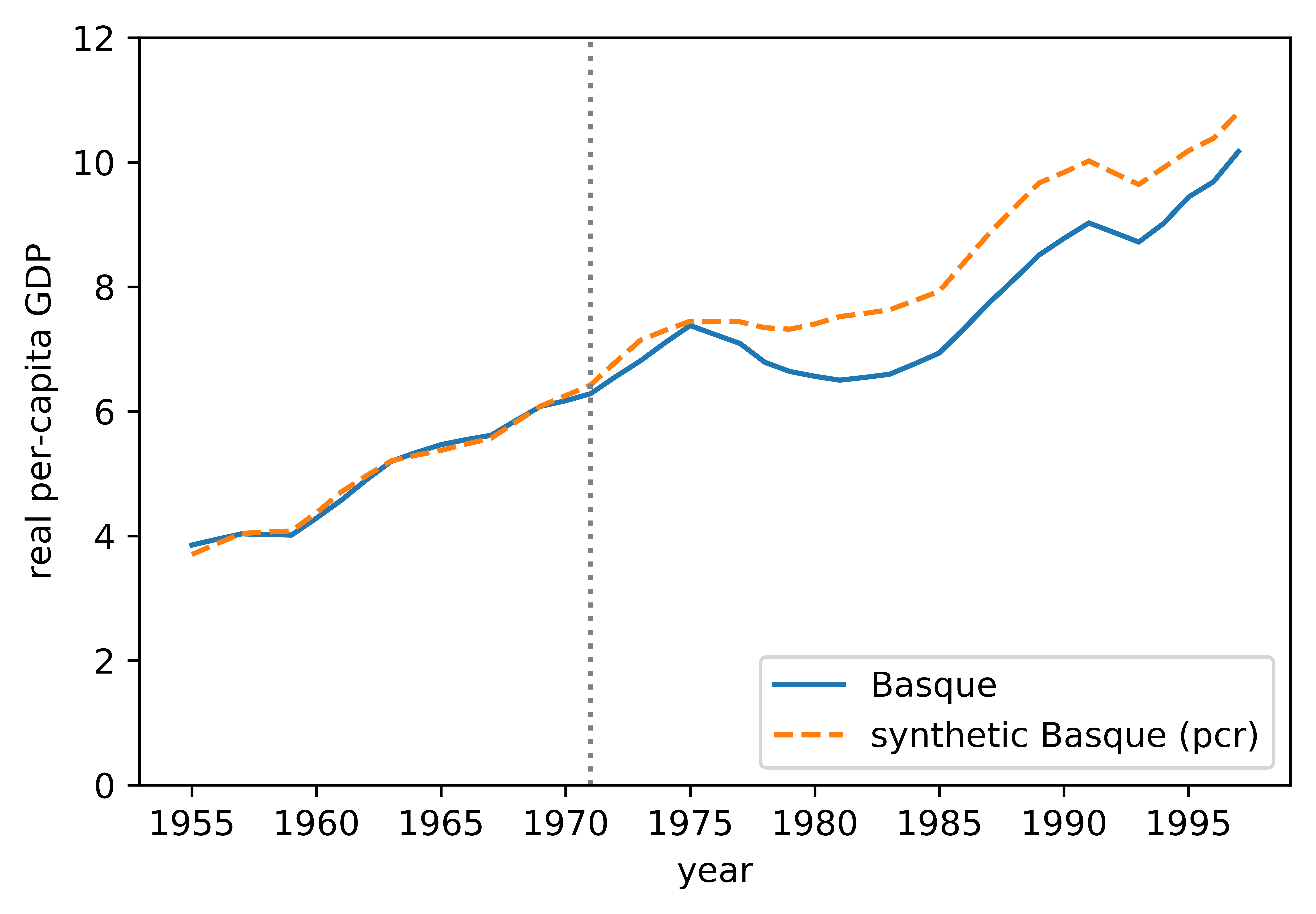}%{a}
		\caption{Synthetic Basque as predicted by PCR.}
		\label{fig:basque_pcr}
	\end{subfigure}
	~
	\begin{subfigure}[t]{0.4\textwidth}
		\centering
		\includegraphics[width=0.75\linewidth]{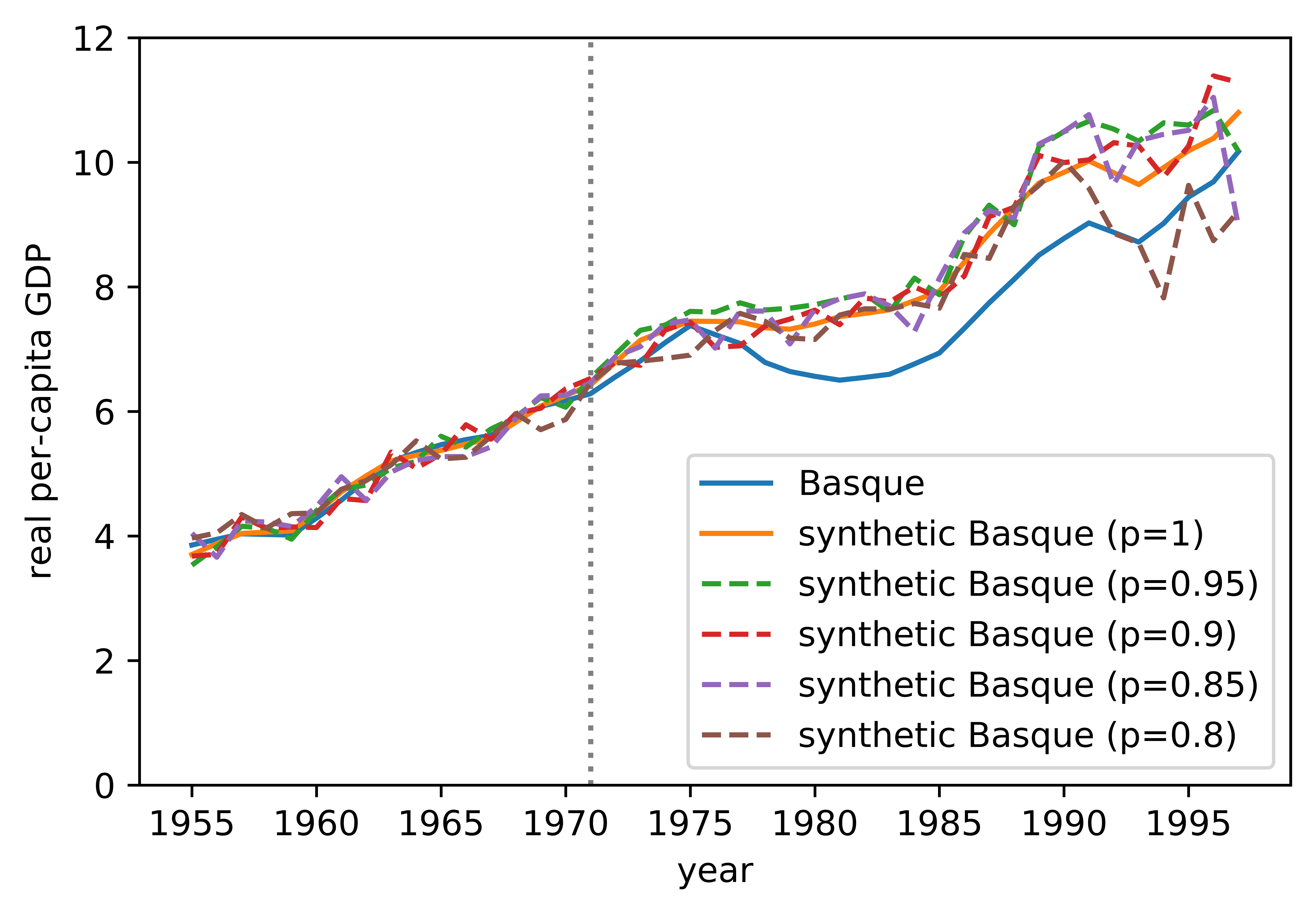}%{a}
		\caption{Synthetic Basque as predicted by PCR under varying levels of missing data.}
		\label{fig:basque_mar}
	\end{subfigure}
	\label{fig:basque3}
\end{figure}
}

\iftoggle{FIGURES}{
\begin{figure}[!htb]
	\centering
	\caption{\,}
	\begin{subfigure}[t]{0.4\textwidth}
		\centering
		\includegraphics[width=0.75\linewidth]{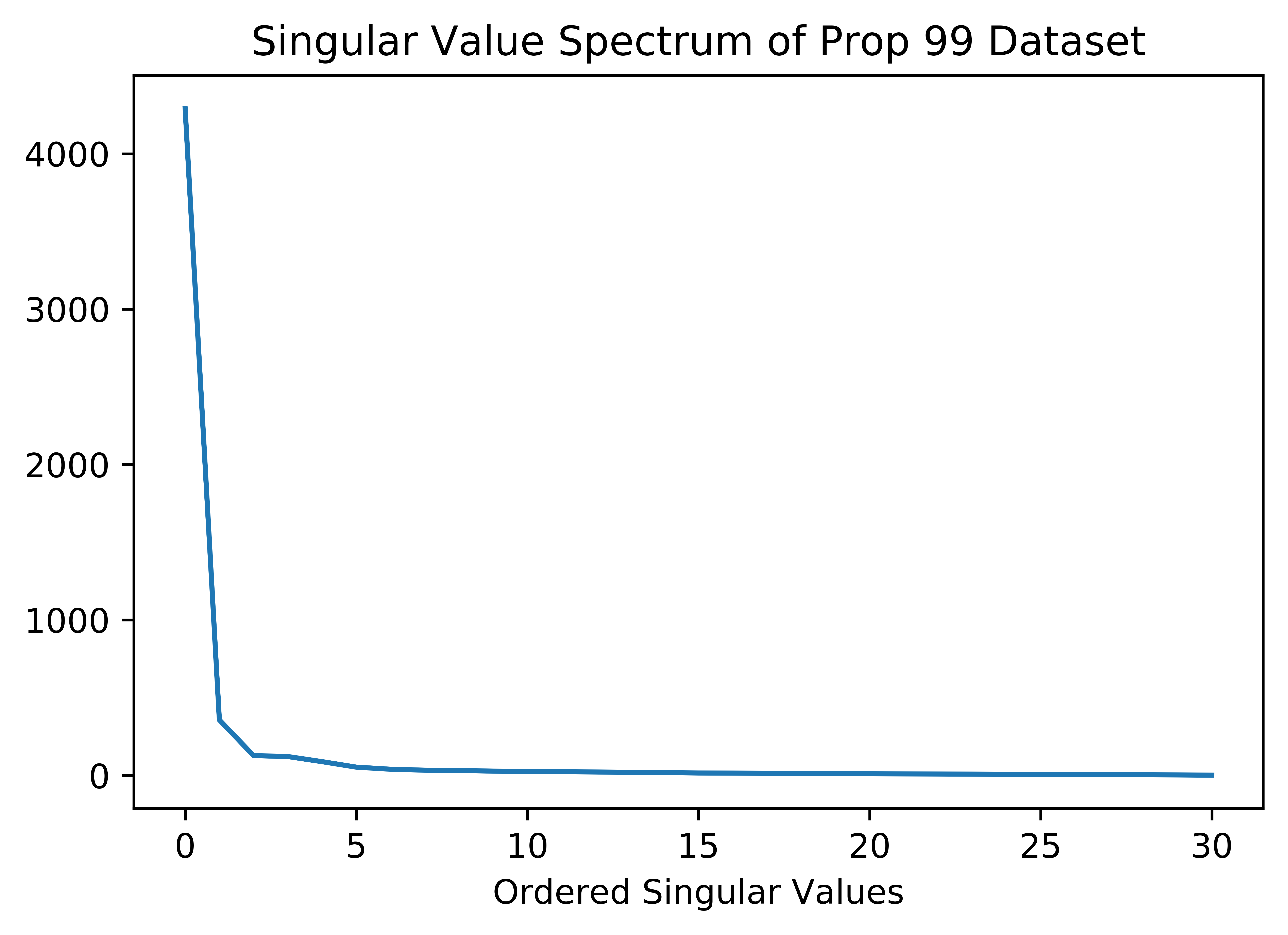}%{a}
		\caption{Singular value spectrum of California Prop 99 dataset.}
		\label{fig:cali_spectrum}
	\end{subfigure}
	~
	\begin{subfigure}[t]{0.4\textwidth}
		\centering
		\includegraphics[width=0.75\linewidth]{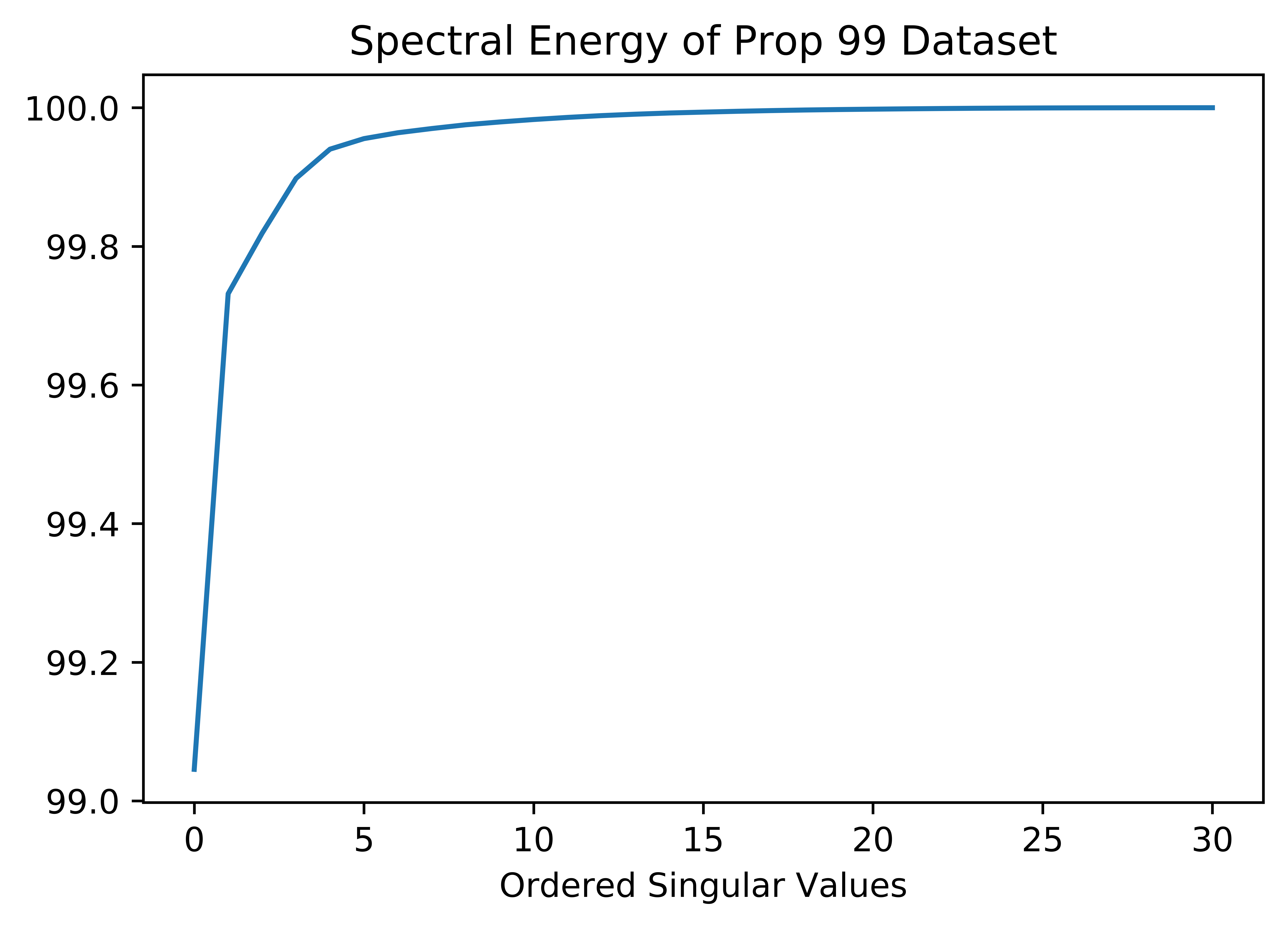}%{a}
		\caption{Spectral energy of California Prop 99 dataset.}
		\label{fig:cali_energy}
	\end{subfigure}
	\label{fig:cali1}
\end{figure} 
}

\iftoggle{FIGURES}{
\begin{figure}[!htb]
	\centering
	\caption{\,}
	\begin{subfigure}[t]{0.3\textwidth}
		\centering
		\includegraphics[width=\linewidth]{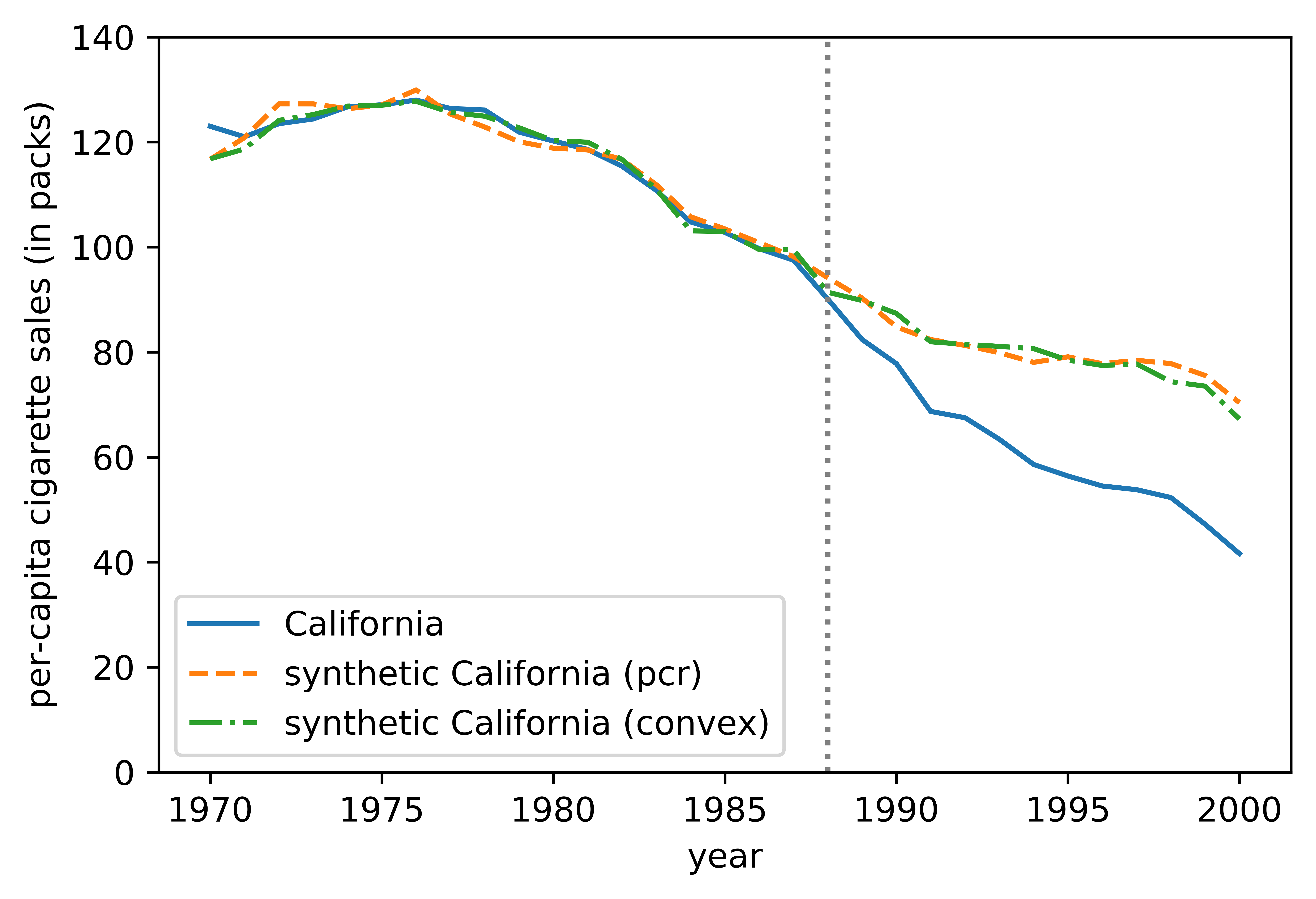}%{a}
		\caption{Synthetic California as predicted by PCR and \cite{abadie2}.}
		\label{fig:cali_pcr_cvx}
	\end{subfigure}
	~
	\begin{subfigure}[t]{0.3\textwidth}
		\centering
		\includegraphics[width=\linewidth]{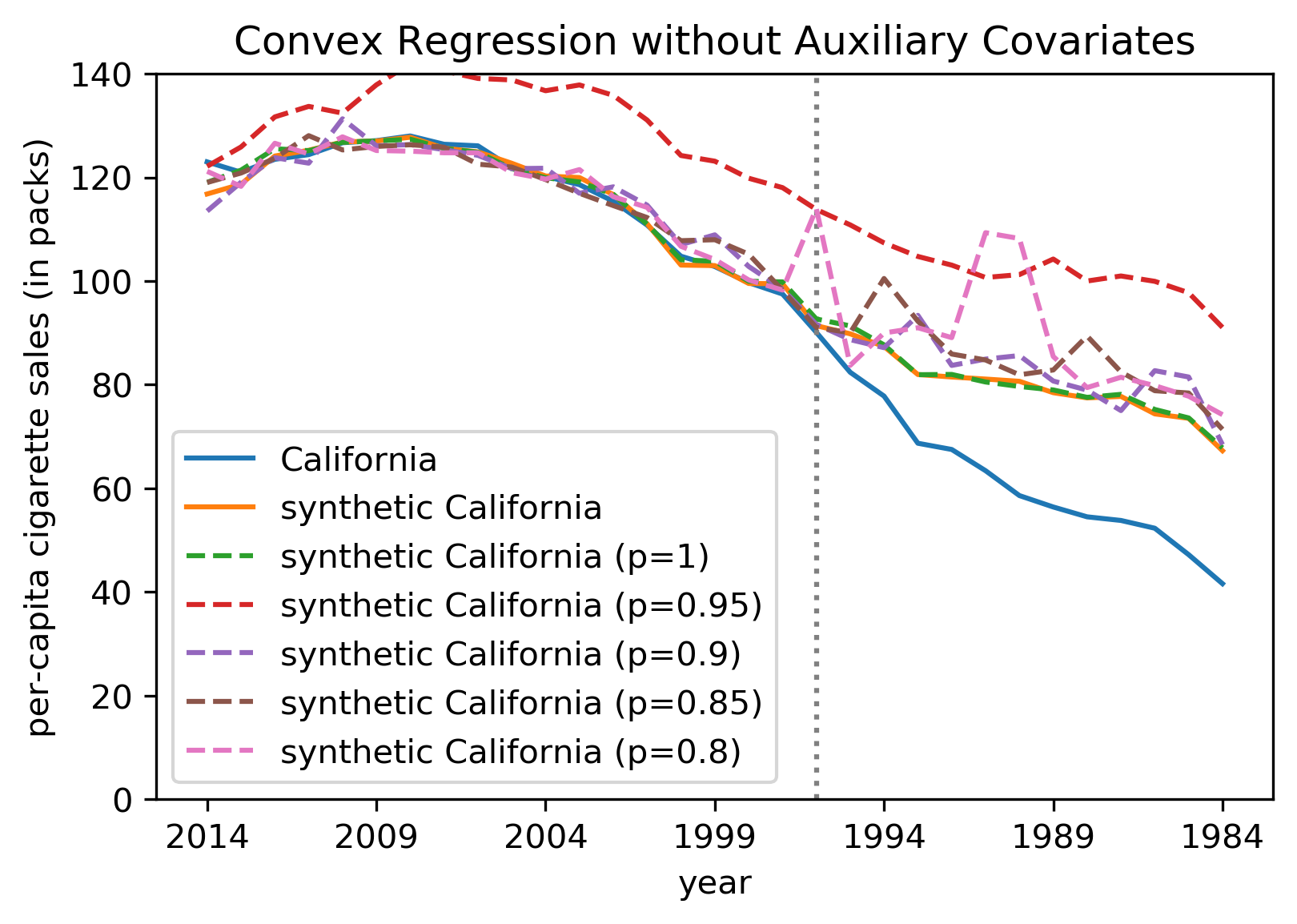}%{a}
		\caption{Synthetic California as predicted by \cite{abadie2} under varying levels of missing data.}
		\label{fig:cali_sc_mar}
	\end{subfigure}
	~
	\begin{subfigure}[t]{0.3\textwidth}
		\centering
		\includegraphics[width=\linewidth]{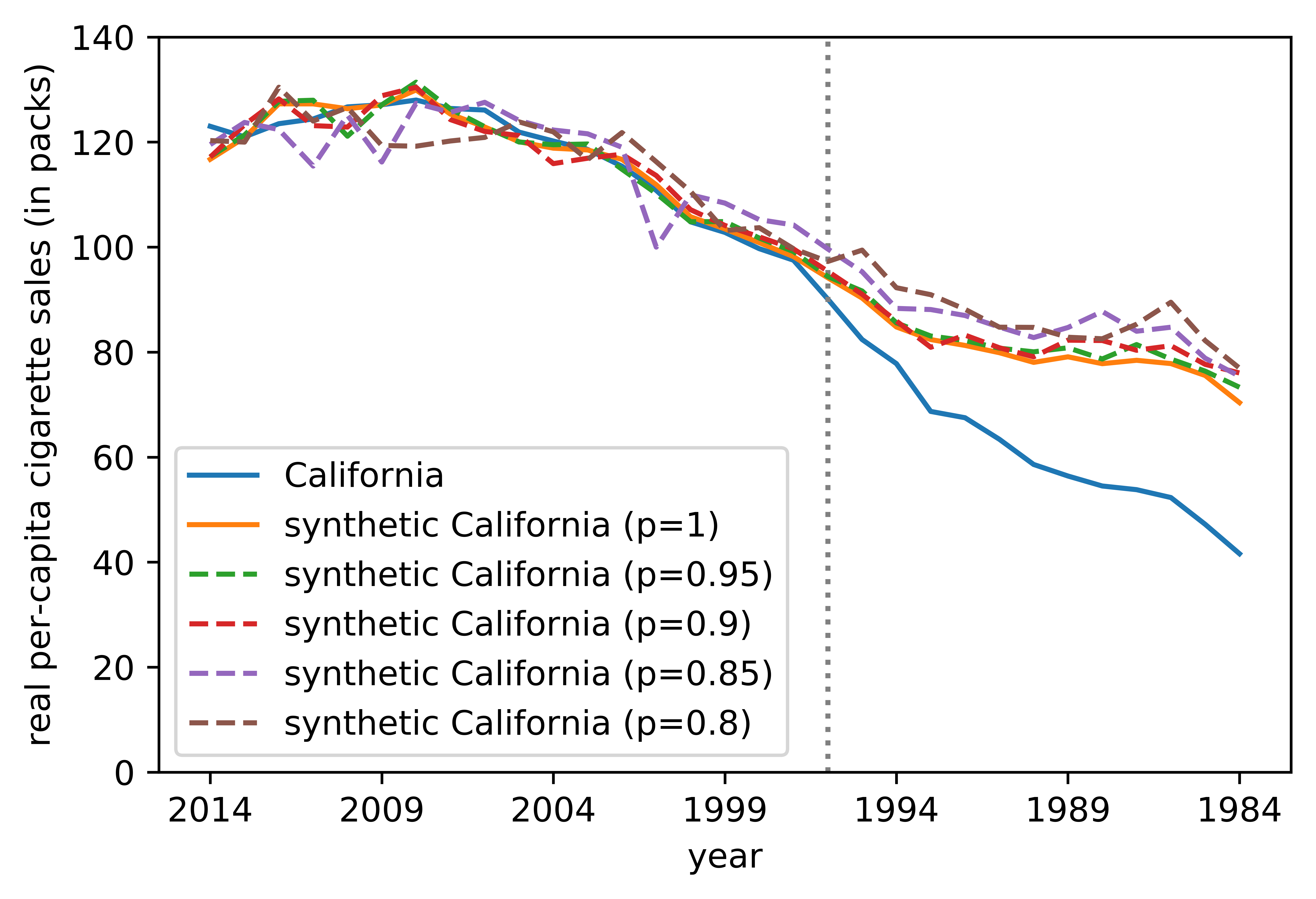}%{a}
		\caption{Synthetic California as predicted by PCR under varying levels of missing data.}
		\label{fig:cali_mar}
	\end{subfigure}
	\label{fig:cali2}
\end{figure} 
}

% APPENDIX

\begin{appendix}
\pagebreak

\clearpage
\begin{center}
{\LARGE \bf Online Supplement:} \\
{\LARGE \bf On Robustness of Principal Component Regression} \\
  Anish Agarwal, Devavrat Shah, Dennis Shen, Dogyoon Song \\
    MIT
\end{center}

\section{Related Works} \label{sec:lit_review}

We focus on the related literature pertaining to error-in-variable regression and PCR, but also 
include a brief discussion on the literature for matrix estimation/completion.

\medskip
\noindent
\textbf{Error-in-variables regression.}
There exists a rich body of work regarding high-dimensional error-in-variable regression (see \cite{loh_wainwright}, \cite{cocolasso}, \cite{tsybakov_1}, \cite{orthogonal_1}, \cite{weighted_l1}). Three common threads of these works include: (1) making a sparsity assumption on $\beta^*$; (2) establishing error bounds with convergence rates for estimating $\beta^*$ under different norms, i.e., $\| \hbeta - \beta^* \|_q$ where $\norm{\cdot}_q$ denotes the $\ell_q$-norm; (3) assuming the covariate matrix satisfying ``incoherence''-like condition such as the Restricted Eigenvalue Condition, cf. \cite{loh_wainwright}. In all of these works, the goal is to recover the underlying model, $\beta^*$. In contrast, as discussed, the goal of PCR is to primarily provide good prediction. Some notable works closest to our setup include \cite{loh_wainwright}, \cite{cocolasso}, \cite{tsybakov_2}, which are described in some detail next.

In \cite{loh_wainwright}, a non-convex $\ell_1$-penalization algorithm is proposed based on the plug-in principle to handle covariate measurement errors. This approach requires explicit knowledge of the unobserved noise covariance matrix $\Sigma_{\bH} = \Ex \bH^T \bH$ and the estimator designed \textit{changes} based on their assumption of $\Sigma_{\bH}$. They also require explicit knowledge of a bound on the $\norm{\cdot}_2$-norm of $\beta^*$,  the object they aim to estimate. In contrast, PCR does not require any such knowledge about the distribution of the noise matrix $\bH$ (i.e., the algorithm does not explicitly use this information to make predictions).

The work of \cite{cocolasso} builds upon \cite{loh_wainwright} by proposing a convex formulation of Lasso. Although the algorithm introduced does not require knowledge of $\norm{\beta^*}_2$, similar assumptions on $\bZ$ and $\bH$ (e.g., sub-gaussianity and access to $\Sigma_{\bH}$) are made. This renders their algorithm to not be {noise-}model agnostic. In fact, many works (e.g., \cite{tsybakov_1}, \cite{tsybakov_2}, \cite{tsybakov_3}) require either $\Sigma_{\bH}$ to be known or the structure of $\bH$ is such that it admits a data-driven estimator for its covariance matrix. This is so because these algorithms rely on correcting the bias for the matrix $\bZ^T \bZ$, which PCR does not need to compute. 

It is worth noting that all these works in error-in-variables regression focus only on parameter estimation (i.e., learning $\beta^*$) and not explicitly de-noising the noisy covariates. Thus, even with the knowledge of $\beta^*$, it is not clear how these methods can be used to produce predictions of the response variables associated with unseen, noisy covariates.

%it is not clear how to use it for producing predictions of response variable when given noisy covariates.

%A key difference with the above works is their to aim to estimate $\beta^*$ exactly while we aim to achieve low training/testing prediction error. Learning $\beta^*$ is important, but proving low training/testing error is vital in guaranteeing good predictions when out of sample measurements $\bZ_{i, \cdot}, \ i \notin \Omega$ are sparse and noisy. To best of our knowledge, the above works do not provide a formal method to de-noise $\bZ_{i, \cdot}$.

%In summary, from a model standpoint, our work analyzes a more general setting: we allow $\bZ$ to be simultaneously corrupted by noise (including sub-exponential noise) and missing data. Algorithmically, we propose (1) a model agnostic estimator that does not change depending on the underlying model (i.e., $\beta^*$ and $\Sigma_{\bH}$); and (2) can provably generalize beyond the training set $\Omega$ in predicting the expected response values via a de-noising process of the observed covariates.  
\vspace{12pt}
\noindent
\textbf{Principal Component Regression.} 
A notable work is that of \cite{pcr_tibshirani}, which suggests a variation of PCR to infer the direction of the principal components. However, it stops short of providing meaningful finite sample analysis beyond what is naturally implied by that of standard Linear Regression. The regularization property of PCR is also well known, at least empirically, due to its ability to reduce the variance. As a contribution, we provide rigorous finite sample guarantees of PCR: (i) under noisy, missing covariates; (ii) when the linear model is misspecified; (iii) when the low-rank model for covariate matrix is misspecified.

As a further contribution, we argue that PCR's regression model
% the resulting regression model from PCR
has sparse support (established using the equivalence between PCR and Linear Regression with covariate pre-processing via HSVT); this sparsity allows for improved generalization as the Rademacher complexity of the resulting model class scales with the sparsity parameter (i.e., the rank of the covariate matrix pre-processed with HSVT). Hence, PCR not only addresses the challenge of noisy, missing covariates, but also, in effect, performs implicit regularization. 

\vspace{12pt}
\noindent
{\bf Matrix estimation.} Matrix estimation has spurred tremendous theoretical and empirical research across numerous fields (see \cite{CandesTao10, KeshavanMontanariOh10a, Recht11, Chatterjee15}), 
% matrix estimation
%social network analysis (see \cite{AbbeSandon15a, AnandkumarGeHsuKakade13, hopkins2017efficient}), % community detection 
% graph learning (graphon estimation) (see\cite{AiroldiCostaChan13, ZhangLevinaZhu15, BorgsChayesCohnGanguly15, BorgsChayesLeeShah17}),
%time series analysis (see \cite{timeseries}),
%and adversarial learning (see \cite{Zhi1, Zhi2}).
Traditionally, the end goal is to recover the underlying mean matrix from an incomplete, noisy sampling of its entries; the quality of the estimate is often measured through the Frobenius norm. Further, entry-wise independence and sub-gaussian noise is typically assumed. A key property of many matrix estimation methods is they are {noise-}model agnostic (i.e., the de-noising procedure does not change with the noise assumptions). We advance state-of-art for HSVT, arguably the most ubiquitous matrix estimation method, by (i) analyzing its error with respect to the $\ell_{2,\infty}$-norm and (ii) allowing for a broader class of noise distributions (e.g., sub-exponential). Such generalizations are necessary to enable the various applications detailed in Section \ref{sec:pcr_sc} and Appendices \ref{sec:private} and \ref{sec:mixed}. 

% empirical studies

\section{Differentially Private Regression}\label{sec:private}

\medskip
\noindent {\bf Setup and Question.}
With the increasing use of machine learning for critical operations, analysts must maximize the accuracy of their predictions and simultaneously protect sensitive information (i.e., covariates). 
An important notion of privacy is that of differential privacy; this requires that the outcome of a database query cannot greatly change due to the presence or absence of any individual data record (see \cite{dwork_1} and references therein). 
More specifically, let $\delta$ be a positive real number, $\mathcal{D}$ be a collection of datasets, and $\mathcal{A}:\mathcal{D} \rightarrow {\rm im}(\mathcal{A})$ be a randomized algorithm that takes a dataset as input. 
The algorithm $\mathcal{A}$ is said to provide $\delta$-differential privacy if, for all datasets $\mathcal{D}_1$ and $\mathcal{D}_2$ in $\mathcal{D}$ that differ on a single element, and all subsets $\mathcal{S} \in {\rm im}(\mathcal{A})$, the following holds: 
\begin{align} \label{eq:diff_privacy}
	\Pb \left( \mathcal{A}(D_1) \in S \right) &\le \exp(\delta) \cdot \Pb \left( \mathcal{A}(D_2)\in S \right),
\end{align}
where the randomness lies in the algorithm. 
Thus, \eqref{eq:diff_privacy} guarantees that little can be learned about any particular record within the database. 

The canonical mechanism $\mathcal{A}$ to guarantee differential privacy is known as the Laplacian mechanism. 
In this setting, noise is drawn from a Laplacian distribution and added to query responses. 
In particular, introducing additive noise $W \sim {\rm Laplace}(0, \Delta_{f}/ \delta)$ to any database query guarantees $\delta$-privacy (see \cite{dwork_1} and references therein); here, $\Delta_{f} = \max_{\mathcal{D}_1, \mathcal{D}_2 \in \mathcal{D}} | f(\mathcal{D}_1) - f(\mathcal{D}_2) |$, where the maximum is taken over all pairs of datasets $\mathcal{D}_1$ and $\mathcal{D}_2$ in $\mathcal{D}$ differing in at most one element, and $f: \mathcal{D} \rightarrow \Reals^d$ is a vector-valued function denoting the true, latent query response. 
We now describe how PCR can be applied in the context of a differentially private framework. 

\vspace{12pt}
\noindent {\bf How it fits our framework.}
Let $\bA$ denote the true, fixed database of $N$ sensitive individual records and $p$ covariates. 
We consider the setting where an analyst is allowed to ask two types of queries of the data: (1) $f_{\bA}$ - querying for individual data records, i.e., $\bA_{i, \cdot}$ for $i \in [N]$; (2) $f_Y$ - querying for a linear combination of an individual's covariates, i.e. $\bA_{i, \cdot} \beta^*$. A typical example would be where $\bA_{i, \cdot}$ is the genomic information for patient $i$ and $\bA_{i, \cdot} \beta^*$ denotes patient $i$'s outcome for a clinical study. 

In order to provide $\delta$-differential privacy, the Laplacian mechanism will return query responses with additive Laplacian noise. For query type (1), let  $Z_{ij}$ for $i \in [N], j \in [p]$ be the returned response; here, $Z_{ij} = A_{ij} + \eta_{ij}$ with probability $\rho$ and $Z_{ij} = \star$ with probability $1 - \rho$, where $\eta_{i, \cdot} = [\eta_{ij}]$ for $j \in [p]$ is independent Laplacian noise with the variance parameter proportional to $\Delta_{f_\bA} / \delta$; we note that an auxiliary benefit of our setup is that it allows for a significant fraction of the query response to be masked, in addition to to the Laplacian noise corruption. For query type (2), when an analyst queries for the response variable $\bA_{i, \cdot} \beta^*$, she observes $Y_i = \bA_{i, \cdot} \beta^* + \epsilon_i$, where $\epsilon_i$ is again independent Laplacian noise with variance parameter proportional to  $\Delta_{f_Y} / \delta$. We note that the above setup naturally fits our framework since the Laplacian distribution belongs to the family of sub-exponential distributions, i.e., satisfying Property \ref{prop:covariate_noise_structure} with $\alpha = 1$. 

Finally, let $Y^\Omega$ denote the $n$ noisy observed responses (e.g., corresponding to the outcomes of $n$ patient clinical trials), and let $\bZ$ denote the noisy observed covariates (e.g., the collection of genomic information of all $N$ patients). Ultimately, the goal in such a setup is to accurately learn in- and out-of-sample global statistics (e.g., having low $\text{MSE}_{\Omega}(\hY)$ and $\text{MSE}(\hY)$, respectively) about the data, while preserving the individual privacy of the users.

\vspace{12pt}
\noindent 
\textit{Is privacy preserved?} 
Lemma \ref{lemma:mcse_hvst} demonstrates that the estimated covariate matrix $\bZ^{\text{HSVT}, k} $ via HSVT achieves small average $\|\cdot\|_{2, \infty}$-norm error (column-squared error); hence, for instance, HSVT can accurately learn the \textit{average} age of all patients. However, this does not translate to accurately estimating the age of any particular patient -- this would correspond to a small $\|\cdot\|_{\infty}$-norm error. Similarly, Corollary \ref{cor:training_pcr_differential} (stated below), establishes that PCR can estimate the vector $\bA \beta^*$ well on average, but not any particular element of this vector. 
We leave it as an open question as to whether or not de-noising the covariate matrix through HSVT can give a $\|\cdot\|_{\infty}$-norm bound.

%Hence, the privacy of any individual record is maintained while small average prediction error is achieved.   
%It is worth highlighting that the de-noising step of the PCR algorithm (i.e., applying HSVT or PCA to $\bZ$) does not compromise the security of any single data record. To begin,

\vspace{12pt}
\noindent {\bf Results.} 
We now state the following corollary, an instantiation of Corollary \ref{thm:training_pcr}, which demonstrates the efficacy of PCR (with respect to prediction) in the context of differential privacy. 
We note a similar bound could easily be produced for any of the results in Section \ref{sec:results} 
--  see \eqref{eq:mse_upper_generic}, \eqref{eq:mse_train_hsvt_simple}, \eqref{eq:mse_upper_generic_refined}, \eqref{eq:geo_decay}, \eqref{eq:mse_upper_generic_refined_LVM}, \eqref{eq:mse_upper_generic_refined_LVM_detailed}, \eqref{eq:mse_test_hsvt} -- 
by appropriately substituting $\gamma, K_\alpha$ with $\frac{\Delta_{f_\bA}}{\delta}$ and  $\sigma$ with $\frac{\Delta_{f_Y}}{\delta}$.

\begin{cor}\label{cor:training_pcr_differential}
Let the conditions of Corollary \ref{thm:training_pcr}.
Let $\eta_{ij}$ be sampled independently from $\sim {\rm Laplace}(0, \Delta_{f_\bA}/ \delta)$ for $i \in [N], j \in [p]$. 
Let  $\epsilon_i$ be sampled independently from $\sim {\rm Laplace}(0, \Delta_{f_Y}/ \delta)$. 
Let $n = \Theta(N)$. 
Then, PCR preserves $\delta$-differential privacy of $\bA$ and $\bA \beta^*$ with
\begin{align} \label{eq:mse_train_hsvt_simple}
\emph{MSE}_{\Omega}(\hY) 
&\le \frac{C' \| \beta^*\|_1^2}{\rho^4} \, \frac{r \log^5(np)}{n \wedge p} + \frac{20 \|\phi\|_2^2}{n} ,
\end{align}
where $C' = C(1+ (\Delta_{f_Y}/ \delta)^2)(1+ (\Delta_{f_\bA}/ \delta)^8)$ and $C>0$ is an absolute constant.
\end{cor}
\begin{proof}
Proof is immediate from Corollary \ref{thm:training_pcr} by substituting $\gamma, K_\alpha$ with $\frac{\Delta_{f_\bA}}{\delta}$ and  $\sigma$ with $\frac{\Delta_{f_Y}}{\delta}$.
\end{proof}
\noindent {\em Interpretation.}
From Corollary \ref{cor:training_pcr_differential}, we observe that PCR learns a predictive linear model in a differentially private framework, where the covariates are purposefully contaminated with Laplacian noise to maintain $\delta$-differential privacy. 

\section{Regression with Mixed Valued Covariates}\label{sec:mixed}

\medskip
\noindent {\bf Setup and Question.} Regression models with mixed discrete and continuous covariates are ubiquitous in practice. 
With respect to discrete covariates, a standard generative model assumes the covariates are generated from a categorical distribution (i.e., a multinomial distribution). Formally, a categorical distribution for a random variable $X$ is such that $X$ has support in $[G]$ and the probability mass function (pmf) is given by $\Pb(X = g) = \rho_g$ for $g \in [G]$ with $\sum_{g = 1}^G \rho_g= 1$. 

For simplicity, we focus on the case where the regression is being done with a collection of Bernoulli random variables (i.e., each $X$ has support in $\{0, 1\}$). The extension to general categorical random variables is straightforward and discussed below. 

A standard model in regression with Bernoulli random variables assumes that the response variable is a linear function of the latent parameters of the observed discrete outcomes. Formally, $\bA_{i, \cdot} =  [\rho^{(i)}_1, \rho^{(i)}_2, \dots, \rho^{(i)}_p] \in \Reals^{1 \times p}$, where $\rho^{(i)}_j$ for $j \in [p]$ is the latent Bernoulli parameter for the $j$-th feature and $i$-th measurement. Further, the mean of the response variable satisfies $\Ex[Y_i] = \sum_{j = 1}^p \rho^{(i)}_j \beta_j$. However, for each feature, we only get binary observations, i.e., $X_{ij} \in \{0, 1\}$.

As an example, consider $\Ex[Y_i]$ to be the expected health outcome of patient $i$. Let there be a total of $p$ possible observable binary symptoms (e.g., cold, fever, headache, etc.). Then $\bA_{i, \cdot}$ denotes the vector of (unobserved) probabilities that patient $i$ has some collection of symptoms (e.g., $A_{i1} = \mathbb{P}(\text{patient $i$ has a cold}), A_{i2} = \mathbb{P}(\text{patient $i$ has a fever}), \dots$). However, for each patient, we only observe the ``noisy'' binary outcome of these symptoms (i.e., $X_{i1} = \mathbb{1}(\text{patient $i$ has a cold}), X_{i2} = \mathbb{1}(\text{patient $i$ has a fever})$).
Ideally, we get to observe the underlying probabilities of the symptoms as that is what we assume the response is linearly related to. 
The objective in such a setting is to accurately recover $\bA \beta^*$ given $Y^{\Omega}$ and $\bX$.  

\vspace{12pt}
\noindent \textit{Current practice for mixed valued features.} A common practice for regression with categorical variables is to build a separate regression model for every possible combination of the categorical outcomes (i.e., to build a separate regression model conditioned on each outcome). In the healthcare example above, this would amount to building $2^p$ separate regression models corresponding to each combination of the observed $p$ binary symptoms. This is clearly not ideal for the following two major reasons: (i) the sample complexity is exponential in $p$; (ii) we do not have access to the underlying probabilities $\bA_{i, \cdot}$  (recall $\bX_{i, \cdot} \in \{0, 1\}^p$), which is what we actually want to regress $Y^{\Omega}$ against.

\vspace{12pt}
\noindent {\bf How it fits our framework.} Recall from Property \ref{prop:covariate_noise_structure} that the key structure we require of the covariate noise $\eta_{ij}$ is that $\Ex[\eta_{ij}] = 0$. Now even though $X_{ij} \in \{0, 1\}$, it still holds that $\Ex[X_{ij}] = \rho^{(i)}_j = A_{ij}$, which immediately implies $\Ex[\eta_{ij}] = \Ex[X_{ij} - A_{ij}] = 0$. Further, $\eta_{ij}$ is sub-Gaussian ($\alpha = 2$) since $| \eta_{ij} | \le 1$. Thus, the key conditions on the noise are satisfied for PCR to effectively (in the $\|\cdot\|_{2, \infty}$-norm) de-noise $\bX$ to recover the underlying probability matrix $\bA$; this, in turn, allows PCR to produce accurate estimates $\bhA \hbeta$ through regression, as seen by Theorem \ref{thm:test_pcr}. 

%Thus, we have the key required condition for PCR to effectively (in the $\norm{\cdot}_{2, \infty}$ norm) de-noise $\bZ$ to estimate the underlying probabilities $[\hat{\rho}^{(i)}_j]_{i \in [N], j \in [p]}$, and in turn estimate $\hat{\bA}\hat{\beta}$ through regression.

Pleasingly, the required sample complexity grows with the rank of $\bA$ (the inherent model complexity of the underlying probabilities), rather than exponentially in $p$. Further, the de-noising step allows us to regress against the estimated latent probabilities rather than their ``noisy", binary outcomes.

\vspace{12pt}
\noindent \textit{Extension from Bernoulli to general categorical random variables.} Recall from above that a categorical random variable has support in $[G]$ for $G \in \mathbb{N}$. In this case, one can translate a categorical random variable to a a collection of binary random variables using the standard one-hot encoding method. It is worth highlighting that by using one-hot encoding, clearly $\eta_{ij_1}$ will not be independent of $\eta_{i j_2}$ for any $(j_1, j_2)$ pair, which encodes the same categorical variable. 
However, from Property \ref{prop:covariate_noise_structure}, we only require independence of the noise across rows, not within them. Thus this lack of independence is not an issue. Further, the generalization to multiple categorical variables, in addition to continuous covariates, is achieved by simply appending these features to each row and collectively de-noising the entire matrix before the regression step.

% math background

%\newpage

\section{Useful Theorems Known from Literature}\label{sec:useful_theorems}

%%%	Psi_alpha upper bounds
\subsection{Bounding $\psi_{\alpha}$-norm}
\begin{lemma} \label{lemma:sum_of_subgaussians} {\bf Sum of independent sub-gaussians random variables.} \\
Let $X_1, \dots, X_n$ be independent, mean zero, sub-gaussian random variables. Then $\sum_{i=1}^n X_i$ is also a sub-gaussian random variable, and
\begin{align}
	\Big \| \sum_{i=1}^n X_i\Big \|_{\psi_2}^2 &\le C \sum_{i=1}^n \norm{X_i}_{\psi_2}^2
\end{align}
where $C$ is an absolute constant.
\end{lemma}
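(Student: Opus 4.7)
The plan is to prove this via moment generating function (MGF) estimates, which is the standard route for establishing sub-gaussian tail bounds for sums of independent variables. The key fact, already recorded as Lemma \ref{lem:MGF_upper} in this paper with $\alpha = 2$, is the two-sided equivalence between the $\psi_2$-norm and exponential MGF control: a mean-zero random variable $X$ satisfies $\|X\|_{\psi_2} \leq K$ if and only if $\mathbb{E}\exp(\lambda X) \leq \exp(c \lambda^2 K^2)$ for every $\lambda \in \mathbb{R}$, for some absolute constant $c > 0$. I will invoke this equivalence as a black box in both directions.

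First, I would fix $\lambda \in \mathbb{R}$ and use Lemma \ref{lem:MGF_upper} applied to each $X_i$ to obtain $\mathbb{E}\exp(\lambda X_i) \leq \exp(c\lambda^2 \|X_i\|_{\psi_2}^2)$. Then, by independence of the $X_i$, the MGF of the sum factorizes:
\[
\mathbb{E}\exp\left(\lambda \sum_{i=1}^n X_i\right) = \prod_{i=1}^n \mathbb{E}\exp(\lambda X_i) \leq \exp\!\left(c \lambda^2 \sum_{i=1}^n \|X_i\|_{\psi_2}^2\right).
\]
Setting $K^2 = \sum_{i=1}^n \|X_i\|_{\psi_2}^2$, this shows that the sum satisfies the same MGF control with parameter $K$, and since $\sum_i X_i$ is mean-zero by linearity of expectation, I can then reverse the equivalence (again invoking Lemma \ref{lem:MGF_upper}) to conclude $\|\sum_i X_i\|_{\psi_2}^2 \leq C \sum_i \|X_i\|_{\psi_2}^2$ for an absolute constant $C$.

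Since this statement is simply the $\alpha = 2$ specialization of the more general Lemma \ref{lem:ind_sum} already appearing in the appendix, the main ``obstacle'' is essentially bookkeeping: ensuring that the absolute constants produced in each direction of the MGF $\leftrightarrow$ $\psi_2$-norm equivalence are tracked consistently so that a single absolute constant $C$ can be stated in the conclusion. No other subtlety arises because independence allows the MGF to factorize cleanly, and the sub-gaussian class is closed under this factorization.
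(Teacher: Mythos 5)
Your proposal is correct and follows essentially the same route the paper relies on: the paper states this lemma without proof as a standard fact, and its general $\psi_{\alpha}$ analogue (Lemma \ref{lem:ind_sum}) is dispatched as ``immediate by Lemma \ref{lem:MGF_upper},'' which is exactly the MGF-factorization argument you spell out. The only point worth being careful about is that Lemma \ref{lem:MGF_upper} as stated gives the MGF bound only for $|\lambda| \le \frac{1}{C\norm{X}_{\psi_\alpha}}$, whereas for the sub-gaussian case $\alpha = 2$ the bound $\Ex \exp(\lambda X) \le \exp(c\lambda^2 \norm{X}_{\psi_2}^2)$ holds for all $\lambda \in \Reals$ (a standard strengthening, e.g.\ \cite{vershynin2018high}), which is what your two-sided equivalence actually uses.
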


\begin{lemma} \label{lemma:subgauss_subexp} {\bf Product of sub-gaussians is sub-exponential.}\\
	Let $X$ and $Y$ be sub-gaussian random variables. Then $XY$ is sub-exponential. Moreover,
	\begin{align}
		\norm{XY}_{\psi_1} &\le \norm{X}_{\psi_2} \norm{Y}_{\psi_2}.
	\end{align}
\end{lemma}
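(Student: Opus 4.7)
The plan is to prove the bound $\norm{XY}_{\psi_1} \le \norm{X}_{\psi_2}\norm{Y}_{\psi_2}$ directly from the Orlicz-norm definition, and then observe that finiteness of this quantity is exactly what it means for $XY$ to be sub-exponential. Write $a = \norm{X}_{\psi_2}$ and $b = \norm{Y}_{\psi_2}$; by the definition of the $\psi_2$-norm (equation \eqref{eq:alpha_norm} with $\alpha = 2$), the constants $a, b$ satisfy $\Ex \exp(X^2/a^2) \le 2$ and $\Ex \exp(Y^2/b^2) \le 2$. The goal is to show $\Ex \exp\bigl(|XY|/(ab)\bigr) \le 2$, which by definition gives $\norm{XY}_{\psi_1} \le ab$.

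The key inequality is Young's inequality in its elementary form: $|uv| \le \tfrac{1}{2}(u^2 + v^2)$ for all real $u, v$. Applying this with $u = X/a$ and $v = Y/b$ yields the pointwise bound
\begin{equation*}
\frac{|XY|}{ab} \;\le\; \frac{1}{2}\left( \frac{X^2}{a^2} + \frac{Y^2}{b^2} \right).
\end{equation*}
Exponentiating, taking expectations, and then applying Cauchy--Schwarz to factor the product of exponentials, I would obtain
\begin{equation*}
\Ex \exp\!\left( \frac{|XY|}{ab} \right)
\;\le\; \Ex\!\left[ \exp\!\left(\tfrac{X^2}{2a^2}\right) \exp\!\left(\tfrac{Y^2}{2b^2}\right) \right]
\;\le\; \left( \Ex \exp\!\left(\tfrac{X^2}{a^2}\right) \right)^{\!1/2} \left( \Ex \exp\!\left(\tfrac{Y^2}{b^2}\right) \right)^{\!1/2}.
\end{equation*}
Both factors on the right are bounded by $\sqrt{2}$ by the defining property of $a$ and $b$, so the product is bounded by $2$. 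This establishes $\Ex \exp(|XY|/(ab)) \le 2$, and hence by the infimum definition of the $\psi_1$-norm, $\norm{XY}_{\psi_1} \le ab = \norm{X}_{\psi_2}\norm{Y}_{\psi_2}$, which is the desired inequality. In particular $\norm{XY}_{\psi_1} < \infty$, so $XY$ is sub-exponential.

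There is no real obstacle here: the only subtlety is a pedantic one about whether the infimum in the definition of $\norm{X}_{\psi_2}$ is attained, so that $\Ex\exp(X^2/a^2) \le 2$ rather than only $\Ex\exp(X^2/(a+\epsilon)^2) \le 2$ for all $\epsilon > 0$. If that is a concern, I would run the argument with $a + \epsilon$ and $b + \epsilon$ in place of $a$ and $b$, conclude $\norm{XY}_{\psi_1} \le (a+\epsilon)(b+\epsilon)$, and send $\epsilon \downarrow 0$. Otherwise the argument is essentially a two-line application of Young's inequality followed by Cauchy--Schwarz.
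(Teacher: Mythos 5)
Your proof is correct: the pointwise bound $|XY|/(ab) \le \tfrac{1}{2}(X^2/a^2 + Y^2/b^2)$ from Young's inequality, followed by Cauchy--Schwarz to split the expectation of the product of exponentials, gives $\Ex\exp(|XY|/(ab)) \le \sqrt{2}\cdot\sqrt{2} = 2$ exactly as you claim, and your use of Cauchy--Schwarz rather than independence is the right move since the lemma does not assume $X$ and $Y$ independent. The paper states this lemma in its appendix of useful facts without proof (it is the standard result, e.g.\ Lemma 2.7.7 of Vershynin's text), and your argument is precisely the standard one; the $\epsilon$-perturbation remark about attainment of the infimum is a fine way to close the only loose end, though one can also note the infimum is attained by monotone convergence.
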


%%%	Concentration for Random Variables
\subsection{Concentration Inequalities for Random Variables}
%\begin{lemma}[\textbf{Bounded Hoeffding's Inequality}]\label{lem:bounded_hoeffding}
%Let $X_1, X_2, \dots, X_N$ be independent zero-mean random variables. Further assume that $X_i \in [m_i, M_i]$ for ever $i$. Then for every $t > 0$, we have 
%\begin{align}
%\mathbb{P} \bigg( |\sum_{i=1}^N X_i| \ge t  \bigg) \le 2 \exp \bigg( - \frac{2t^2}{\sum^N_{i=1} (M_i - m_i)^2} \bigg)
%\end{align}
%\end{lemma}

%\begin{lemma}[\textbf{Bounded Bernstein's Inequality}]\label{lem:bounded_bernstein}
%Let $X_1, X_2, \dots, X_N$ be independent zero-mean random variables. Further assume that $|X_i| \le M$. Let $S = \sum_{i=1}^n X_i$ and $v = \Var(S)$. Then for every $t > 0$, we have 
%\begin{align}
%\mathbb{P} \{ \abs{S} \ge t  \} &\le 2 \exp ( - \frac{3t^2}{6v + 2Mt} ). 
%\end{align}
%\end{lemma}

\begin{lemma}\label{lem:general_bernsteins} {\bf Bernstein's inequality.}\\
Let $X_1, X_2, \dots, X_N$ be independent, mean zero, sub-exponential random variables. Let $S = \sum_{i=1}^n X_i$. Then for every $t > 0$, we have 
\begin{align}
\mathbb{P} \{ \abs{S} \ge t  \} \le 2 \exp ( -c \min \Bigg[\frac{t^2}{\sum^N_{i=1} \norm{X_i}^2_{\Psi_1}}, \frac{t}{\max_i \norm{X_i}_{\Psi_1}} \Bigg] )
\end{align}
\end{lemma}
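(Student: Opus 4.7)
The plan is to follow the classical Chernoff--Cram\'er route, which is the standard way to establish Bernstein-type bounds. First, I would fix $t>0$, introduce a parameter $\lambda > 0$ to be optimized, and apply Markov's inequality to the exponential:
\begin{align*}
\mathbb{P}\{ S \geq t \} \;\leq\; e^{-\lambda t}\,\mathbb{E} e^{\lambda S} \;=\; e^{-\lambda t}\prod_{i=1}^{N}\mathbb{E} e^{\lambda X_i},
\end{align*}
where the last equality uses independence. The two-sided bound then follows by applying the same argument to $-S$ and taking a union bound, which contributes only the factor of $2$ in front.

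Next, I would invoke the moment generating function bound for centered sub-exponential random variables, which is exactly the content of Lemma \ref{lem:MGF_upper} already proved in the paper (with $\alpha = 1$): there exists an absolute constant $C'>0$ such that $\mathbb{E} e^{\lambda X_i} \leq \exp(C' \lambda^2 \|X_i\|_{\psi_1}^2)$ whenever $|\lambda| \leq 1/(C'\|X_i\|_{\psi_1})$. To make the product bound uniformly valid, I restrict $\lambda$ to the interval $|\lambda|\leq 1/(C'\max_i\|X_i\|_{\psi_1})$, obtaining
\begin{align*}
\mathbb{P}\{ S\geq t\} \;\leq\; \exp\!\Bigl( -\lambda t + C' \lambda^2 \sum_{i=1}^N \|X_i\|_{\psi_1}^2 \Bigr).
\end{align*}

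The last step is the standard two-regime optimization. Let $\sigma^2 := \sum_i \|X_i\|_{\psi_1}^2$ and $M := \max_i \|X_i\|_{\psi_1}$. The unconstrained minimizer of the exponent is $\lambda^* = t/(2C'\sigma^2)$. If $\lambda^*$ lies inside the admissible interval, i.e.\ $t/(2C'\sigma^2) \leq 1/(C'M)$, equivalently $t \leq 2\sigma^2/M$, plugging in yields the sub-gaussian tail $\exp(-t^2/(4C'\sigma^2))$. Otherwise, I would take $\lambda = 1/(C'M)$ (the boundary of the admissible region), which gives the sub-exponential tail $\exp(-t/(2C'M))$ after using $t > 2\sigma^2/M$ to drop the quadratic term. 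Combining both cases (and absorbing all constants into a single $c>0$) produces the $\min$ inside the exponent stated in the lemma; the factor of $2$ in front absorbs the symmetric bound on $\mathbb{P}\{-S\geq t\}$.

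I do not expect any real obstacle here since every ingredient is already present in the paper: Lemma \ref{lem:MGF_upper} supplies the MGF control, and the rest is a routine Chernoff optimization with a case split. The only care required is in matching the absolute constants from Lemma \ref{lem:MGF_upper} with the single constant $c$ claimed in the statement, which is straightforward bookkeeping.
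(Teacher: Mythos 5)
Your proof is correct. The paper itself states this lemma without proof in its appendix of standard tools---it is the classical Bernstein inequality for sums of independent sub-exponential random variables (cf.\ Vershynin, \emph{High-Dimensional Probability}, Theorem 2.8.1)---and your Chernoff--Cram\'er argument, using the MGF bound of Lemma \ref{lem:MGF_upper} with $\alpha=1$ followed by the two-regime optimization over $\lambda$ in the admissible interval $|\lambda| \leq 1/(C'\max_i \|X_i\|_{\psi_1})$, is exactly the canonical derivation; the boundary case $\lambda = 1/(C'M)$ and the absorption of constants are handled correctly.
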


\begin{lemma} \label{lem:mcdiarmid} {\bf McDiarmid inequality.}\\ 
	Let $x_1, \dots, x_n$ be independent random variables taking on values in a set $A$, and let $c_1, \dots, c_n$ be positive real constants. If $\phi: A^n \rightarrow \mathbb{R}$ satisfies
	\begin{align*}
		\sup_{x_1, \dots, x_n, x_i' \in A} \abs{\phi(x_1, \dots, x_i, \dots, x_n) - \phi(x_1, \dots, x'_i, \dots, x_n)} &\le c_i,
	\end{align*}
	for $1 \le i \le n$, then
	\begin{align*}
		\Pb \Big\{ \abs{\phi(x_1, \dots, x_n) - \Ex \phi(x_1, \dots, x_n)} \ge \epsilon \Big\} &\le \exp(\frac{-2\epsilon^2}{\sum_{i=1}^n c_i^2}). 
	\end{align*}
\end{lemma}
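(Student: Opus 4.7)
The plan is to establish McDiarmid's inequality by the classical Doob martingale construction combined with a conditional application of Hoeffding's lemma in its range form. First I would set up the Doob martingale associated with $\phi$: for $i = 0, 1, \dots, n$, define $Z_i := \mathbb{E}[\phi(x_1,\dots,x_n) \mid x_1,\dots,x_i]$, so that $Z_0 = \mathbb{E}\phi(x_1,\dots,x_n)$ and $Z_n = \phi(x_1,\dots,x_n)$. Writing $D_i := Z_i - Z_{i-1}$, the differences telescope to give $\phi - \mathbb{E}\phi = \sum_{i=1}^n D_i$, and the tower property shows that $(Z_i)$ is indeed a martingale with respect to the filtration generated by $(x_1,\dots,x_i)$.

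The central step, and the place where the bounded-differences hypothesis is actually used, would be to control the conditional range of $D_i$. For a fixed realization of $x_1,\dots,x_{i-1}$, I would define
\begin{align*}
U_i &:= \sup_{x \in A} \, \mathbb{E}\bigl[\phi(x_1,\dots,x_{i-1},x,x_{i+1},\dots,x_n) \mid x_1,\dots,x_{i-1}\bigr] - Z_{i-1},\\
L_i &:= \inf_{x \in A} \, \mathbb{E}\bigl[\phi(x_1,\dots,x_{i-1},x,x_{i+1},\dots,x_n) \mid x_1,\dots,x_{i-1}\bigr] - Z_{i-1}.
\end{align*}
By independence of the $x_j$'s, the inner expectations over $x_{i+1},\dots,x_n$ can be taken with $x_i$ frozen, and the bounded differences hypothesis then implies $U_i - L_i \le c_i$, while $L_i \le D_i \le U_i$ almost surely conditional on $x_1,\dots,x_{i-1}$.

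Next I would invoke Hoeffding's lemma in the form: if $\mathbb{E}D = 0$ and $D \in [L,U]$ almost surely, then $\mathbb{E}e^{sD} \le \exp(s^2 (U-L)^2 / 8)$. Applied conditionally and then iterated via the tower property, this yields $\mathbb{E}\exp(s(\phi - \mathbb{E}\phi)) \le \exp(s^2 \sum_{i=1}^n c_i^2 / 8)$ for every $s \in \mathbb{R}$. A standard Chernoff/Markov bound with the optimizing choice $s = 4\epsilon / \sum_i c_i^2$ produces the one-sided tail
\begin{equation*}
\Pb\bigl\{\phi - \mathbb{E}\phi \ge \epsilon\bigr\} \le \exp\!\Bigl(-\tfrac{2\epsilon^2}{\sum_{i=1}^n c_i^2}\Bigr),
\end{equation*}
and running the same argument with $-\phi$ (which inherits the bounded differences property with the same constants) together with a union bound gives the two-sided statement. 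The main obstacle, and the only place any real work occurs, is the conditional range computation: one must verify carefully that taking $\sup$ and $\inf$ over the $i$-th coordinate (after averaging out the later coordinates using independence) still respects the original coordinate-wise bound $c_i$, which relies crucially on the independence of $x_1,\dots,x_n$ — without it, the averaging step does not commute with the supremum in the required way.
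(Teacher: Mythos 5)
Your proposal is the standard and correct proof of McDiarmid's inequality: the Doob martingale construction, the conditional range bound $U_i - L_i \le c_i$ obtained by averaging out the later coordinates (which is where independence is used), Hoeffding's lemma applied conditionally, the Chernoff bound with $s = 4\epsilon/\sum_i c_i^2$, and symmetrization. The paper states this lemma in its ``Useful Theorems'' appendix without proof, as a known result, so there is no paper argument to compare against; your route is the canonical one. One small remark: the two-sided bound obtained from your union-bound step is $2\exp\bigl(-2\epsilon^2/\sum_{i=1}^n c_i^2\bigr)$, so the statement as printed in the paper is missing a leading factor of $2$ (this does not affect any of its downstream uses, e.g.\ in the proof of Lemma \ref{lemma:E6}, beyond a constant).
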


\subsubsection{Upper Bound on the Maximum Absolute Value in Expectation}

\begin{lemma} \label{lemma:max_subg} {\bf Maximum of sequence of random variables. } \\
	Let $X_1, X_2, \dots, X_n$ be a sequence of random variables, which are not necessarily independent, and satisfy $\Ex[X_i^{2p}]^{\frac{1}{2p}} \le K p^{\frac{\beta}{2}}$ for some $K, \beta >0$ and all $i$. Then, for every $n \ge 2$,
	\begin{align}
		\Ex \max_{i \le n} \abs{X_i} &\le C K \log^{\frac{\beta}{2}}(n).
	\end{align}
\end{lemma}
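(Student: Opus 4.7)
The plan is to prove this via the standard moment method: use Jensen's inequality to pass from the expectation of the maximum to the $(2p)$-th moment of the maximum, bound the max of non-negative quantities by a sum, invoke the hypothesis on $\Ex[X_i^{2p}]$, and then optimize the free parameter $p$ by setting $p = \Theta(\log n)$.

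More concretely, I would first apply Jensen's inequality to the concave map $x \mapsto x^{1/(2p)}$, giving
\begin{align*}
\Ex \max_{i \le n} |X_i|
    &= \Ex \Bigl( \max_{i \le n} |X_i|^{2p} \Bigr)^{1/(2p)}
    \le \Bigl( \Ex \max_{i \le n} |X_i|^{2p} \Bigr)^{1/(2p)}.
\end{align*}
Next, since $|X_i|^{2p} \ge 0$, I would crudely bound the maximum by the sum:
$\Ex \max_{i \le n} |X_i|^{2p} \le \sum_{i=1}^n \Ex |X_i|^{2p}$.
Plugging in the hypothesis $\Ex[X_i^{2p}] \le K^{2p} p^{p\beta}$ (valid for every $i$ and every positive integer $p$), this yields
\begin{align*}
\Ex \max_{i \le n} |X_i| \le \bigl( n K^{2p} p^{p\beta} \bigr)^{1/(2p)} = n^{1/(2p)} \, K \, p^{\beta/2}.
\end{align*}

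Finally, I would choose $p = \lceil \log n \rceil$ so that $n^{1/(2p)} = \exp\bigl( \frac{\log n}{2p} \bigr) \le e^{1/2}$, which gives the bound
$\Ex \max_{i \le n} |X_i| \le e^{1/2} K (\lceil \log n \rceil)^{\beta/2} \le C K \log^{\beta/2}(n)$
for some absolute constant $C$ (absorbing $e^{1/2}$ and the ceiling) and any $n \ge 2$.

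I do not anticipate a real obstacle here: the argument is entirely elementary and does not invoke independence (the hypothesis is per-index and the only inequality used across indices is the trivial $\max \le \sum$). The only mild point to verify is that $p = \lceil \log n \rceil$ is an admissible choice (positive integer for $n \ge 2$) and that the hypothesis holds at this value of $p$; both are immediate since $\log n \ge \log 2 > 0$ and the hypothesis is stated for all positive $p$.
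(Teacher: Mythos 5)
Your proof is correct. The paper itself states this lemma in its ``Useful Theorems'' appendix without any proof, treating it as a standard auxiliary fact (it is the classical moment-method bound on the expected maximum, cf.\ Vershynin's text), so there is no in-paper argument to compare against; the route you take --- Jensen on $x \mapsto x^{1/(2p)}$, the bound $\max_i |X_i|^{2p} \le \sum_i |X_i|^{2p}$, and the choice $p = \lceil \log n \rceil$ to make $n^{1/(2p)} \le e^{1/2}$ --- is exactly the canonical derivation, and it correctly avoids any independence assumption. The only cosmetic caveat is that the final constant $C$ absorbs a factor like $(1 + 1/\log 2)^{\beta/2}$ from replacing $\lceil \log n \rceil$ by $\log n$, so $C$ depends on $\beta$; this is harmless here since the paper only invokes the lemma with $\beta = 2/\alpha$ for a fixed $\alpha$.
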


\begin{remark}\label{rem:max_psialpha}
Lemma \ref{lemma:max_subg} implies that if $X_1, \ldots, X_n$ are $\psi_{\alpha}$ random variables with 
$\| X_i \|_{\psi_{\alpha}} \leq K_{\alpha}$ for all $i \in [n]$, then 
\begin{align*}
	\Ex \max_{i \le n} \abs{X_i} &\le C K_{\alpha} \log^{\frac{1}{\alpha}}(n).
\end{align*}
\end{remark}

%%%	Concentration for Random Matrices
%\subsection{Concentration Inequalities for Random Matrices}
%\begin{lemma} \label{lem:subgauss_matrix_rows} {\bf Two-sided bound on sub-gaussian matrices.} \\
%Let $\bA$ be an $m \times n$ matrix whose rows $A_i$ are independent, mean zero, sub-gaussian isotropic random vectors in $\mathbb{R}^n$. Then for any $t \ge 0$, we have
%\begin{align}
%	C K (\sqrt{m} + \sqrt{n} + t) &\le s_n(\bA) \le s_1(\bA) \le \sqrt{m} + CK^2 (\sqrt{n}  + t)
%\end{align}
%with probability at least $1 -2 \exp(-t^2)$. Here, $K = \max_i \norm{\bA_i}_{\psi_2}$. 
%\end{lemma}
%
%\begin{lemma} \label{lemma:ex_spectral_norm} {\bf Spectral norm of general rectangular matrices.} \\
%	Let $\bX$ be an $m \times n$ matrix with $X_{ij} = \zeta_ij b_{ij}$, where $\zeta_{ij}$ are independent centered random variables and $b_{ij}$ are given scalars. If $\Ex[\zeta_{ij}^{2p}]^{1/2p} \le K p^{\beta /2 }$ for some $K, \beta > 0$ and all $p, i, j$, then 
%	\begin{align}
%		\Ex \norm{\bX} &\le C \Big( \sigma_1 + \sigma_2 + \sigma_* \log^{(\beta \vee 1)/2} (n \wedge p) \Big)
%	\end{align}
%	with 
%	\begin{align}
%		\sigma_1 &:= \max_{i} \sqrt{\sum_{j} b_{ij}^2}, \qquad \sigma_2 := \max_j \sqrt{\sum_i b_{ij}^2}, \qquad \sigma_* := \max_{ij} \abs{b_{ij}}. 
%	\end{align}
%	Here, $C$ depends on $K, \beta$ only.
%\end{lemma}

%%%	Other Useful Lemmas
\subsection{Other Useful Lemmas}

\begin{lemma}  {\bf Perturbation of singular values (Weyl's inequality).} \label{lem:weyls}\\
	Let $\bA$ and $\bB$ be two $m \times n$ matrices. Let $k = m \wedge n$. Let $\lambda_1,\dots, \lambda_k$ be the singular values of $\bA$ in decreasing order and repeated by multiplicities, and let $\tau_1, \dots, \tau_k$ be the singular values of $\bB$ in decreasing order and repeated by multiplicities. Let $\delta_1, \dots, \delta_k$ be the singular values of $\bA - \bB$, in any order but still repeated by multiplicities. Then,
	\begin{align*}
	\max_{1 \le i \le k} \abs{ \lambda_i - \tau_i} &\le \max_{1 \le i \le k} \abs{ \delta_i}.
	\end{align*}
\end{lemma}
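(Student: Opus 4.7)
The plan is to derive Weyl's inequality for singular values by invoking the Courant--Fischer variational characterization and reducing the bound on $|\lambda_i - \tau_i|$ to the spectral norm $\|\bA - \bB\|$, which is precisely $\max_{1 \le j \le k} |\delta_j|$.

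First, I would recall the max--min characterization: for any $m \times n$ matrix $\bM$ with $k = m \wedge n$ and each $i \in [k]$,
\[
	\sigma_i(\bM) \;=\; \max_{\substack{S \subseteq \mathbb{R}^n \\ \dim S = i}} \; \min_{\substack{x \in S \\ \|x\|_2 = 1}} \|\bM x\|_2.
\]
Applied to $\bM = \bA$ this expresses $\lambda_i$ as such a max--min, and applied to $\bM = \bB$ it expresses $\tau_i$ in the same form. The reason for choosing this form (rather than the dual min--max form) is that the inner minimization plays well with an additive triangle inequality bound.

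Next, I would write $\bA x = \bB x + (\bA - \bB) x$ for any unit $x$ and invoke the triangle inequality: $\|\bA x\|_2 \le \|\bB x\|_2 + \|(\bA - \bB) x\|_2 \le \|\bB x\|_2 + \|\bA - \bB\|$, where $\|\cdot\|$ denotes the spectral norm. For any fixed $i$-dimensional subspace $S \subseteq \mathbb{R}^n$, taking the minimum over unit $x \in S$ on both sides and then the maximum over such $S$ yields $\lambda_i \le \tau_i + \|\bA - \bB\|$. Repeating the identical argument with the roles of $\bA$ and $\bB$ swapped gives $\tau_i \le \lambda_i + \|\bA - \bB\|$, hence $|\lambda_i - \tau_i| \le \|\bA - \bB\|$ for every $i \in [k]$.

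Finally, I would note that $\|\bA - \bB\|$ equals the largest singular value of $\bA - \bB$, and since each $\delta_j \ge 0$ (so $|\delta_j| = \delta_j$), this largest singular value coincides with $\max_{1 \le j \le k} |\delta_j|$ regardless of the ordering of the $\delta_j$. Because the resulting right-hand side is independent of $i$, taking the maximum over $i \in [k]$ on the left preserves the inequality and delivers the claim. The argument is entirely classical, so there is no genuine obstacle; the only mild care point is choosing the max--min form of the variational characterization so that the triangle inequality applies cleanly under the inner minimization.
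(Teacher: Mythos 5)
Your proof is correct: the max--min (Courant--Fischer) characterization of singular values combined with the triangle inequality is the standard route to Weyl's perturbation bound, and each step you outline (bounding $\min_{x\in S}\|\bA x\|_2$ by $\min_{x\in S}\|\bB x\|_2 + \|\bA-\bB\|$, symmetrizing, and identifying $\|\bA-\bB\|$ with $\max_j|\delta_j|$) goes through without issue. The paper itself states this lemma as a classical fact in its appendix of useful theorems and supplies no proof, so there is nothing to compare against; your argument is a complete and standard justification of the cited result.
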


% DEFINITIONS
\section{Definitions} \label{sec:definitions}

\begin{definition} [$\psi_\alpha$-random variables/vectors] \label{def:psialpha}
For any $\alpha \geq 1$, we define the $\psi_{\alpha}$-norm of a random variable $X$ as $\norm{X}_{\psi_{\alpha}} = \inf \{ t > 0: \Ex \exp(|X|^{\alpha} /t^{\alpha}) \le 2 \}$.
%\begin{align} \label{eq:alpha_norm}
%	\norm{X}_{\psi_{\alpha}} &= \inf \Big \{ t > 0: \Ex \exp(|X|^{\alpha} /t^{\alpha}) \le 2 \Big \}.
%\end{align}
If $\norm{X}_{\psi_{\alpha}} < \infty$, we call $X$ a $\psi_{\alpha}$-random variable. 
More generally, we say $X$ in $\mathbb{R}^n$ is a $\psi_{\alpha}$-random vector if all one-dimensional marginals $\langle X, v \rangle$ are $\psi_{\alpha}$-random variables for any fixed vector $v \in \mathbb{R}^n$. 
We define the $\psi_{\alpha}$-norm of the random vector $X \in \mathbb{R}^n$ as $\norm{X}_{\psi_{\alpha}} = \sup_{v \in \mathcal{S}^{n-1}} \norm{ \langle X, v \rangle }_{\psi_{\alpha}}$,
%\begin{align} \label{eq:alpha_vector_norm}
%	\norm{X}_{\psi_{\alpha}} &= \sup_{v \in \mathcal{S}^{n-1}} \norm{ \langle X, v \rangle }_{\psi_{\alpha}},
%\end{align}
where $\mathcal{S}^{n-1} := \{ v \in \mathbb{R}^n: \norm{v}_2 = 1\}$, $\langle \cdot, \cdot \rangle$ usual inner product. Note that $\alpha = 2$ and $\alpha =1 $ represent the class of sub-gaussian and sub-exponential random variables/vectors, respectively. 
\end{definition}

\begin{definition} [Restricted Eigenvalue (RE) condition] \label{def:re}
For some $\alpha \ge 1$, and non-empty subset $S \in [p]$, let
\[ \mathcal{C}_\alpha(S) = \{ \Delta \in \Reals^p: \| \Delta_{S^c}\|_1 ~\le~ \alpha \| \Delta_S\|_1\},\]
where $S^c = [p] \setminus S$ and $\Delta_S = \{\Delta_j: j \in S\}$. 

We say that $\bX \in \Reals^{n \times p}$ satisfies the $\text{RE}(\alpha, \kappa)$ condition w.r.t. $S$ if 
\[ \frac{1}{n} \| \bX \Delta \|_2^2 ~\ge~ \kappa \|\Delta\|_2^2, \quad \forall \Delta \in \mathcal{C}_\alpha(S)\]
where $\kappa > 0$. 
\end{definition}

% equivalence of PCR and LR + HSVT
\section{Proof of Proposition \ref{prop:equivalence}: Equivalence between PCR and HSVT-OLS} \label{sec:appendix_equivalence}
\begin{proof}[Proof of Proposition \ref{prop:equivalence}]
Using the orthonormality of $\bU, \bV$, we obtain
\begin{align}
	\hY^{\pcr, k} &= \btZ \cdot \bV_k \cdot \beta^{\pcr, k}
	 \\ & = \btZ \cdot \bV_k \cdot \left( \bZ^{\pcr, k, \Omega} \right)^{\dagger} Y^{\Omega}\nonumber
	\\ &= \bU \cdot \bS \cdot \bV^T \cdot \bV_k \cdot \left( (\btZ \cdot \bV_k)^{\Omega} \right)^{\dagger} \cdot Y^{\Omega}
 	\\ & = \bU_k \cdot \bS_k \cdot \left( (\bU_k \cdot \bS_k)^{\Omega} \right)^{\dagger} \cdot Y^{\Omega}\nonumber
	\\ &= \bU_k \cdot \bS_k \cdot \left( \bU_k^{\Omega} \cdot \bS_k \right)^{\dagger} \cdot Y^{\Omega}
	\\ &= \bU_k \cdot \bS_k \cdot \bS_k^{-1} (\bU_k^{\Omega})^T \cdot Y^{\Omega} \nonumber
	\\ &= \bU_k \cdot (\bU_k^{\Omega})^T \cdot Y^{\Omega}. \label{eq:prop2.1}
\end{align}
Similarly, 
\begin{align}
	\hY^{\hsvt, k} &= \bZ^{\hsvt, k} \cdot \beta^{\hsvt, k}~=~ \bZ^{\hsvt, k} \cdot \left( \bZ^{\hsvt, k, \Omega} \right)^{\dagger} \cdot Y^{\Omega}
	\nonumber \\ &= \bU_k \cdot \bS_k \cdot \bV_k^T \cdot \left( (\bU_k \cdot \bS_k \cdot \bV_k^T)^{\Omega} \right)^{\dagger} \cdot Y^{\Omega} 
	\nonumber \\ &= \bU_k \cdot \bS_k \cdot \bV_k^T \cdot \left( \bU_k^{\Omega} \cdot \bS_k \cdot \bV_k^T \right)^{\dagger} \cdot Y^{\Omega} 
	\nonumber \\ &= \bU_k \cdot \bS_k \cdot \bV_k^T \cdot \bV_k \cdot \bS_k^{-1} \cdot (\bU_k^{\Omega})^{\dagger} \cdot Y^{\Omega}
	\nonumber \\ &=  \bU_k \cdot (\bU_k^{\Omega})^T \cdot Y^{\Omega}.  \label{eq:prop2.2}
\end{align}
%
%Note that, $\beta^{\hsvt, k}$ might be one of infinitely many least squares solutions that solve the underdetermined system of linear equations. However, the minimum norm least squares solution is always unique. Therefore, using 
From \eqref{eq:prop2.1} and \eqref{eq:prop2.2}, we obtain $\hY^{\pcr,k} = \hY^{\hsvt, k}$ for any $k \le N$. 
\end{proof}

% mse 
%\newpage
\section{Proof of Theorem \ref{thm:training_pcr_generic} }\label{sec:appendix_noisy_regression_via_MCSE}
\subsection{Background} \label{ssec:matrix_norms}
Recall that the $(a,b)$-mixed norm of a matrix $\bB \in \mathbb{R}^{N \times p}$ is defined as 
\begin{align} \label{eq:mixed_norm}
	\| \bB \|_{a,b} = \left( \sum_{j=1}^p \|\bB_{\cdot, j} \|_{a}^b \right)^{1/b} = \left( \sum_{j=1}^p \left( \sum_{i =1}^N \bB_{ij}^a \right)^{b/a} \right)^{1/b}.
\end{align}
We are interested in the $(2, \infty)$-mixed norm, which corresponds to the maximum $\ell_2$ column norm:
\begin{align} \label{eq:column_norm}
	\| \bB \|_{2,\infty} = \max_{j \in [p]} \big\| \bB_{\cdot, j} \big\|_2 = \max_{j \in [p]} \left( \sum_{i=1}^N \bB_{ij}^2 \right)^{1/2}.\end{align}
\begin{lemma} \label{lemma:holder_general}
	Let $\bB$ be a real-valued $n \times p$ matrix and $x$ a real-valued $p$ dimensional vector. Let $q_1, q_2 \in [1, \infty]$ with $1/q_1 + 1/q_2 = 1$. Then,
	\begin{align}
		\norm{\bB x}_2 &\le \norm{x}_{q_1} \, \norm{\bB}_{2, q_2}. 
	\end{align}
\end{lemma}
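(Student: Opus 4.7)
The plan is to reduce the claim to the scalar Hölder inequality by first decomposing $\bB x$ as a linear combination of the columns of $\bB$, then applying the triangle inequality, and finally applying the standard Hölder inequality to the resulting sum.

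Concretely, I would write
\begin{align*}
	\norm{\bB x}_2 &= \Big\| \sum_{j=1}^p x_j\, \bB_{\cdot, j} \Big\|_2
	\le \sum_{j=1}^p |x_j| \cdot \big\| \bB_{\cdot, j} \big\|_2,
\end{align*}
where the inequality uses the triangle inequality for the Euclidean norm on $\Reals^n$. Define the nonnegative vector $c \in \Reals^p$ by $c_j = \|\bB_{\cdot, j}\|_2$. The right-hand side is precisely $\langle |x|, c \rangle$, where $|x|$ denotes the entrywise absolute value of $x$.

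Next, I would invoke the classical Hölder inequality for vectors in $\Reals^p$ with conjugate exponents $q_1, q_2$: for any $u, v \in \Reals^p$,
\begin{align*}
	\sum_{j=1}^p |u_j v_j| \le \norm{u}_{q_1} \norm{v}_{q_2}.
\end{align*}
Applying this with $u = x$ and $v = c$ yields
\begin{align*}
	\sum_{j=1}^p |x_j| \cdot \big\| \bB_{\cdot, j} \big\|_2 \le \norm{x}_{q_1} \cdot \Big( \sum_{j=1}^p \big\| \bB_{\cdot, j} \big\|_2^{q_2} \Big)^{1/q_2} = \norm{x}_{q_1} \cdot \norm{\bB}_{2, q_2},
\end{align*}
where the final equality is just the definition of the $(2, q_2)$-mixed norm in \eqref{eq:mixed_norm}. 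Chaining these inequalities gives the conclusion.

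There is essentially no hard step here; the only item worth being careful about is the boundary cases $q_1 \in \{1, \infty\}$ (with the convention that $1/\infty = 0$), which are handled by the standard conventions in Hölder's inequality (replacing $\ell_\infty$-norms by maxima and interpreting the mixed norm accordingly). In particular, the case $q_1 = \infty$, $q_2 = 1$ recovers the useful inequality $\|\bB x\|_2 \le \|x\|_\infty \sum_j \|\bB_{\cdot, j}\|_2$, and the case $q_1 = 1$, $q_2 = \infty$ recovers $\|\bB x\|_2 \le \|x\|_1 \max_j \|\bB_{\cdot, j}\|_2$, both of which are the forms most naturally used downstream in bounding the training MSE via MCSE.
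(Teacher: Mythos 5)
Your proof is correct, but it takes a genuinely different route from the paper's. The paper expands $\norm{\bB x}_2^2 = \sum_{i=1}^n \langle \bB_{i,\cdot}, x\rangle^2$ row-by-row, applies the scalar H\"older inequality to each inner product to get $\norm{x}_{q_1}^2 \sum_{i=1}^n \norm{\bB_{i,\cdot}}_{q_2}^2$, and then identifies that last sum with $\norm{\bB}_{2,q_2}^2$. You instead decompose $\bB x = \sum_{j=1}^p x_j \bB_{\cdot,j}$ column-wise, use the triangle inequality, and apply H\"older to the vector of column norms $c_j = \norm{\bB_{\cdot,j}}_2$. Your route has a concrete advantage: it lands directly on the column-based mixed norm $\norm{\bB}_{2,q_2} = \big(\sum_{j} \norm{\bB_{\cdot,j}}_2^{q_2}\big)^{1/q_2}$ as defined in \eqref{eq:mixed_norm}, whereas the paper's final step $\sum_{i=1}^n \norm{\bB_{i,\cdot}}_{q_2}^2 = \norm{\bB}_{2,q_2}^2$ is an identity only when $q_2 = 2$; for $q_2 > 2$ (in particular the case $q_1 = 1$, $q_2 = \infty$ actually invoked in the proof of Theorem \ref{thm:mse_train_general}), the generalized Minkowski inequality gives $\sum_i \norm{\bB_{i,\cdot}}_{q_2}^2 \geq \norm{\bB}_{2,q_2}^2$, i.e., the row-wise quantity dominates the column-based mixed norm, so the paper's chain does not by itself deliver the stated bound. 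Your argument closes that gap cleanly and handles the boundary exponents with the usual conventions, so it is the preferable proof of the lemma as stated.
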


\begin{proof}
	Using H\"older's Inequality, we have
	\begin{align*}
		\norm{\bB x}_2^2 &= \sum_{i=1}^n \langle \bB_{i, \cdot}, x \rangle^2
		\le\norm{x}_{q_1}^2 \sum_{i=1}^n \norm{\bB_{i, \cdot}}_{q_2}^2
		= \norm{x}_{q_1}^2 \cdot \norm{\bB}_{2, q_2}^2. 
	\end{align*}
\end{proof}

{ \subsection{Proof of Theorem \ref{thm:training_pcr_generic}}}

\begin{proof}%[Proof of Theorem \ref{thm:training_pcr_generic}]
For simplicity of notation, let us define $\bhA = \bZ^{\text{HSVT}, k}$, $\bhAO = \bZ^{\text{HSVT}, k, \Omega}$. 
Due to the equivalence between PCR and performing linear regression using $\bhAO$ via Proposition \ref{prop:equivalence}, for the remainder of the proof we shall focus on linear regression using $\bhAO$. 

As per notation in Section \ref{sec:intro_pcr_ME}, 
let $\beta^{\text{HSVT}, k}$ be the solution of linear regression using $\bhAO$ and predicted response
variables $\hY^{\text{HSVT}, k} = \bZ^{\text{HSVT}, k} \beta^{\text{HSVT}, k}$; for simplicity, we will denote
$\widehat{\beta} = \beta^{\text{HSVT}, k}$ and $\hY = \hY^{\text{HSVT}, k} = \bhA \widehat{\beta}$. Recall, per our model specification in \eqref{eq:regression_model_general} , $\YO = \bAO\beta^* + \phi+ \epsilon$. Now observe
	\begin{align} \label{eq:linear_1}
		\| \bhAO \widehat{\beta} - \YO \|_2^2  &= \| \bhAO \widehat{\beta} - \bAO \beta^* + \phi\|_2^2  \,+\,  \|\epsilon\|_2^2 \,-\, 2 \epsilon^T (\bhAO \widehat{\beta} - \bAO \beta^*) - 2 \epsilon^T \phi. 
	\end{align}
	On the other hand, the optimality of $\widehat{\beta}$ (recall that $\widehat{\beta} \in \arg \min \| \bhAO \widehat{\beta} - \YO \|_2^2$) yields
	\begin{align} \label{eq:linear_2}
		\|\bhAO \widehat{\beta} - \YO \|_2^2  &\le \|\bhAO \beta^* - \YO\|_2^2  \nonumber
		\\ &= \|(\bhAO - \bAO) \beta^* + \phi\|_2^2 \,+\,  \|\epsilon\|_2^2 \,-\, 2 \epsilon^T (\bhAO - \bAO) \beta^* - 2 \epsilon^T \phi.
	\end{align}
	Combining \eqref{eq:linear_1} and \eqref{eq:linear_2} and taking expectations, we have
	\begin{align} \label{eq:linear_3}
		\Ex \| \bhAO \widehat{\beta} - \bAO \beta^* + \phi\|_2^2 &\le \Ex \|(\bhAO - \bAO) \beta^* + \phi\|_2^2  \,+\, 2 \Ex[\epsilon^T \bhAO (\widehat{\beta} - \beta^*)].
	\end{align}
	Let us bound the final term on the right hand side of \eqref{eq:linear_3}. Under our independence assumptions ($\epsilon$ is independent of $\bH$), 
	observe that
	\begin{align}
		\Ex[\epsilon^T \bhAO] \beta^* &= \Ex[\epsilon^T] \Ex[\bhAO] \beta^* = 0. 
	\end{align}
	Recall that $\widehat{\beta} = \big(\bhAO \big)^{\dagger} Y = \big(\bhAO\big)^{\dagger}\bAO \beta^* + \big(\bhAO \big)^{\dagger} \epsilon +  \big(\bhAO \big)^{\dagger}  \phi$. Using the cyclic and 
	linearity properties of the trace operator (coupled with similar independence arguments), we further have
	\begin{align} \label{eq:linear_trace}
		\Ex [\epsilon^T \bhAO \widehat{\beta}] &= \Ex[\epsilon^T \bhAO \big(\bhAO \big)^{\dagger}] \bAO \beta^* + \Ex[\epsilon^T \bhAO \big(\bhAO \big)^{\dagger} \epsilon]  + \Ex[\epsilon^T \big(\bhAO \big)^{\dagger} ] \phi   \nonumber
		\\ &= \Ex[\epsilon]^T \Ex [ \bhAO \big(\bhAO \big)^{\dagger}] \bAO \beta^* +  \Ex \Big[ \text{tr}\Big( \epsilon^T  \bhAO \big(\bhAO \big)^{\dagger} \epsilon \Big) \Big] + \Ex[\epsilon]^T \Ex[(\bhAO)^{\dagger}] \phi \nonumber
		\\ &= \Ex \Big[ \text{tr}\Big( \bhAO \big(\bhAO\big)^{\dagger} \epsilon \epsilon^T \Big) \Big] \nonumber
		= \text{tr}\Big( \Ex [ \bhAO \big(\bhAO\big)^{\dagger} ] \cdot \Ex [ \epsilon \epsilon^T ]  \Big) \nonumber
 		\le \sigma^2 \Ex \Big[ \text{tr} \Big( \bhAO \big(\bhAO \big)^{\dagger} \Big) \Big]	\nonumber
		\\ &= \sigma^2 \Ex [\text{rank}(\bhAO)]~ \leq ~ \sigma^2 k,
	\end{align}
	where the inequality follows from Property \ref{prop:observation_noise_structure} and the fact
	that rank of $\bhAO$ is at most that of $\bhA  = \bZ^{\text{HSVT}, k}$ and which by definition at most $k$.
Consider 
\begin{align}\label{eq:thm1.1}
\| \bhAO \widehat{\beta} - \bAO \beta^* + \phi \|_2^2 & = \|\bhAO \widehat{\beta} - \bAO \beta^* \|_2^2 \, + \, \|\phi \|_2^2  \,+\, 2 \phi^T (\bhAO \widehat{\beta} - \bAO \beta^*).
\end{align}
and 
\begin{align}\label{eq:thm1.2}
\|(\bhAO - \bAO) \beta^* + \phi \|_2^2 & = \|(\bhAO - \bAO) \beta^*\|_2^2 \,+\, \| \phi \|_2^2 \,+\, 2 \phi^T ((\bhAO - \bAO) \beta^*).
\end{align}
From \eqref{eq:linear_trace}, \eqref{eq:thm1.1} and \eqref{eq:thm1.2}, the \eqref{eq:linear_3} becomes 
\begin{align}
		\Ex \|\bhAO \widehat{\beta} - \bAO \beta^* \|_2^2 &\le 2 \sigma^2 k + \Ex \|(\bhAO - \bAO) \beta^* \|_2^2   \nonumber \\
		& \qquad + 2 \Ex |\phi^T (\bhAO \widehat{\beta} - \bAO \beta^*)| \, + \, 2 \Ex |\phi^T ((\bhAO - \bAO) \beta^*)|. \label{eq:linear_4}
\end{align}
By Cauchy-Schwartz, we have
\begin{align}
|\phi^T (\bhAO \widehat{\beta} - \bAO \beta^*)| & \leq \|\phi\|_2 \, \|\bhAO \widehat{\beta} - \bAO \beta^*\|_2, \label{eq:thm1.3a} \\
|\phi^T ((\bhAO - \bAO) \beta^*)| & \leq \|\phi\|_2 \, \|(\bhAO - \bAO) \beta^*\|_2. \label{eq:thm1.3b}
\end{align}
Using \eqref{eq:thm1.3a} and \eqref{eq:thm1.3b} in \eqref{eq:linear_4}, we obtain
\begin{align} 
		\Ex \| \bhAO \widehat{\beta} - \bAO \beta^*\|_2^2 &\le 
		2 \sigma^2 k + \Ex \| (\bhAO - \bAO) \beta^* \|_2^2 \, +\,  2 \| \phi \|_2 \, \Ex \| (\bhAO - \bAO) \beta^* \|_2  \nonumber 
		\\ & \qquad + 2 \| \phi \|_2 \, \Ex \| \bhAO \hbeta - \bAO \beta^* \|_2.
\end{align}
Applying Jensen's Inequality then gives
\begin{align}
	\Ex \| \bhAO \widehat{\beta} - \bAO \beta^*\|_2^2 &\le 
		2 \sigma^2 k + \Ex \| (\bhAO - \bAO) \beta^* \|_2^2 \, +\,  2 \| \phi \|_2 \, \sqrt{ \Ex \| (\bhAO - \bAO) \beta^* \|_2^2}  \nonumber 
		\\ & \qquad + 2 \| \phi \|_2 \, \sqrt{\Ex \| \bhAO \hbeta - \bAO \beta^* \|_2^2}.\label{eq:thm1.5}
\end{align}
Now, let 
\[ x = \Ex \| \bhAO\hbeta - \bAO \beta^* \|_2^2, ~~ y = 2 \sigma^2 k +  \Ex \| (\bhAO - \bAO) \beta^* \|_2^2 \, + \, 2 \| \phi\|_2 \sqrt{\Ex \| (\bhAO - \bAO) \beta^* \|_2^2}. \] 
Then, \eqref{eq:thm1.5} can be viewed as $x \leq y + 2 \|\phi\|_2 \, \sqrt{x}$ with both $x, y \ge 0$. Therefore, either $x \leq 4 \| \phi \|_2 \, \sqrt{x}$ or $x \le 2 y$, i.e., $x \le 2 y + 16 \|\phi\|_2^2$. Replacing the values of $x,y$ as above yields
\begin{align}
	 \Ex \| \bhAO\hbeta - \bAO \beta^* \|_2^2 &\le 4\sigma^2k + 2 \Ex \| (\bhAO - \bAO)\beta^* \|_2^2 \,+\, 4 \| \phi \|_2 \sqrt{ \Ex \| (\bhAO - \bAO)\beta^* \|_2^2} + 16 \|\phi\|_2^2
	 \\ &\le 4\sigma^2k + 2 \Ex \| (\bhAO - \bAO)\beta^* \|_2^2 \,+\, 4 \| \phi \|_2^2 \,+\, \Ex \| (\bhAO - \bAO)\beta^* \|_2^2 \,+\, 16 \|\phi\|_2^2
	 \\ &= 4 \sigma^2k + 3 \Ex \| (\bhAO - \bAO)\beta^* \|_2^2 \,+\, 20 \|\phi\|_2^2, \label{eq:thm1.6}
\end{align} 
where the second inequality uses the fact that for any $a,b \in \Reals$, $2ab \le a^2 + b^2$. 
We now apply Lemma \ref{lemma:holder_general} with $q_1 = 1$ and $q_2 = \infty$ to obtain
\begin{align}\label{eq.thm1.4}
		\| (\bhAO - \bAO) \beta^* \|_2^2 &\le \|\beta^*\|_1^2 \, \| \bAO - \bhAO\|_{2, \infty}^2.
\end{align}
Dividing by $n$ on both sides of \eqref{eq:thm1.6} gives the desired result: 
\[
	\frac{1}{n}  \Ex \| \bhAO\hbeta - \bAO \beta^* \|_2^2 ~\le \frac{4 \sigma^2 k}{n} + \frac{3 \| \beta^* \|_1^2 }{n} \Ex \| (\bhAO - \bAO)\|_{2, \infty}^2  \,+\, \frac{20 \|\phi\|_2^2}{n}. 
\]
\end{proof}

\section{Towards the Proof of Lemma \ref{lemma:mcse_hvst}: Spectral Norm Upper Bound of Random Matrices with Sub-Exponential Rows}

Here we state and derive bound on the spectral norm of random matrix whose rows (or columns) are generated independently per $\psi_\alpha$-distribution for $\alpha \geq 1$. 
This will be crucial in establishing the required ``de-noising'' properties of HSVT

\begin{theorem}\label{thm:spectral_norm_noise_matrix_bound}
Suppose Properties \ref{prop:bounded_covariates}, \ref{prop:covariate_noise_structure} for some $\alpha \ge 1$ hold.
Then for any $\delta_1 > 0$, 
	\begin{align}
		\norm{\bbZ - \rho \bbA} 
		&\leq 
		 \sqrt{N (1+ \sigma^2) (1+ \gamma^2) }
		%\sqrt{N\rho}\sqrt{ \rho \gamma^2 +  (1-\rho) \Gamma^2 } \nonumber \\
		+ C(\alpha) \sqrt{1+\delta_1}\sqrt{p} (K_{\alpha} + 1)
		\Big( 1 + \big(2 + \delta_1 \big) \log(Np) \Big)^{\frac{1}{\alpha}} \sqrt{ \log(Np) } 
	\end{align}
	with probability at least $ 1 - \frac{2}{N^{1 + \delta_1} p^{\delta_1}}$. Here, $C(\alpha)$ is an absolute constant that depends only on $\alpha$.
\end{theorem}

The upper bound stated in Theorem \ref{thm:spectral_norm_noise_matrix_bound} is not the sharpest possible. 
But they are sufficient for our purposes. Sharp bounds for $\alpha =1 $ and $\alpha \geq 2$ can be found in \cite{adamczak2011sharp} 
and \cite{vershynin2010introduction} for example. %We provide the proof here for completeness as well as ease of exposition. 

\subsection{Helper Lemmas for the Proof of Theorem \ref{thm:spectral_norm_noise_matrix_bound}}

We begin by presenting Proposition \ref{prop:spectral_upper_bound}, which holds for general random matrices $\bW \in \Reals^{N \times p}$. We note that this result depends on two quantities: (1) $\norm{ \Ex \bW^T \bW}$ and (2) $\norm{\bW_{i, \cdot}}_{\psi_\alpha}$ for all $i \in [N]$. We then instantiate $\bW := \bbZ - \rho \bbA$ and present Lemmas \ref{lemma:masked_noise_operator_norm} and \ref{lemma:masked_noise_row_norm}, which bound (1) and (2), respectively, for our choice of $\bW$. 

%We now instantiate $\bW := \bZ - \rho \bA$ and present Lemmas \ref{lemma:masked_noise_operator_norm} and \ref{lemma:masked_noise_row_norm}, which bound (1) and (2), respectively, for our choice of $\bW$. 

%\subsection{Key Result: Theorem \ref{thm:spectral_norm_noise_matrix_bound}}

%\section{Proof of Theorem \ref{thm:spectral_norm_noise_matrix_bound}} \label{sec:appendix_spectral_norm_noise_matrix_bound}
%As we sketch in Section \ref{sec:proof_sketch_spectral_norm_noise_matrix_bound}, the proof of Theorem \ref{thm:spectral_norm_noise_matrix_bound} follows by plugging the results of Lemmas \ref{lemma:masked_noise_operator_norm} and \ref{lemma:masked_noise_row_norm} into Proposition \ref{prop:spectral_upper_bound} for $\bW := \bbZ - \rho \bbA$ and applying Properties \ref{prop:bounded_covariates} and \ref{prop:covariate_noise_variance}. In this section, we prove Proposition \ref{prop:spectral_upper_bound}, 
%Lemma \ref{lemma:masked_noise_operator_norm} and Lemma \ref{lemma:masked_noise_row_norm}.

%\subsection{Operator Norm Bound for Random Matrices with Independent $\psi_{\alpha}$-Rows}

\begin{proposition}\label{prop:spectral_upper_bound}
	Let $\bW \in \mathbb{R}^{N \times p}$ be a random matrix whose rows $\bW_{i, \cdot}$ ($i \in [N]$) are independent 
	$\psi_{\alpha}$-random vectors for some $\alpha \geq 1$. Then for any $\delta_1 > 0$,
	\begin{align*}
		\norm{\bW} \leq \norm{ \Ex \bW^T \bW }^{1/2}
		+ C(\alpha) \sqrt{(1+\delta_1) p } \max_{i \in [N]}\norm{ \bW_{i, \cdot} }_{\psi_\alpha}
		\Big( 1 + \big(2 + \delta_1 \big) \log(Np) \Big)^{\frac{1}{\alpha}} \sqrt{ \log(Np) } 
	\end{align*}
	with probability at least $ 1 - \frac{2}{N^{1 + \delta_1}p^{\delta_1}}$. Here, $C(\alpha)>0$ is an absolute constant that depends only on $\alpha$.
\end{proposition}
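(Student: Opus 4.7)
The plan is to reduce $\norm{\bW}$ to a supremum over the sphere $\mathcal{S}^{p-1}$ via the variational characterization $\norm{\bW}^2 = \sup_{v \in \mathcal{S}^{p-1}} \norm{\bW v}_2^2$, control each pointwise value through concentration for a sum of independent squared $\psi_\alpha$-random variables, and then lift the pointwise bound to a uniform one via an $\epsilon$-net argument. For each fixed $v$, the summands $X_i(v) := \langle \bW_{i,\cdot}, v\rangle$ are independent over $i$, and by the definition of the vector $\psi_\alpha$-norm, $\norm{X_i(v)}_{\psi_\alpha} \leq \norm{\bW_{i,\cdot}}_{\psi_\alpha} \leq K := \max_i \norm{\bW_{i,\cdot}}_{\psi_\alpha}$. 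I would then split
\[
	\norm{\bW v}_2^2 \;=\; v^T \Ex[\bW^T \bW]\, v \;+\; \sum_{i=1}^N \bigl( X_i(v)^2 - \Ex X_i(v)^2 \bigr),
\]
so that the first term is bounded above by $\norm{\Ex[\bW^T \bW]}$, producing the $\norm{\Ex[\bW^T \bW]}^{1/2}$ piece after taking square roots.

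For the centered remainder, each summand is a centered $\psi_{\alpha/2}$-random variable with norm of order $K^2$. I would truncate each $X_i(v)$ at a level $M_0 \asymp K \log^{1/\alpha}(Np)$ so that the $\psi_\alpha$ tail makes the discarded mass negligible after union bounds over the $N$ summands and over the net, and then apply the classical Bernstein inequality to the bounded truncated sum, whose per-term magnitude is $O(K^2 \log^{2/\alpha}(Np))$ and variance is $O(K^4)$. The resulting tail is, with probability at least $1 - 2e^{-s}$, of order $K^2 \sqrt{N s} + K^2 \log^{2/\alpha}(Np)\cdot s$. Next, I would build a $(1/4)$-net $\mathcal{N}$ of $\mathcal{S}^{p-1}$ with $|\mathcal{N}| \leq 9^p$, invoke the standard inequality $\norm{\bW} \leq 2 \sup_{v \in \mathcal{N}} \norm{\bW v}_2$, and take a union bound over $\mathcal{N}$ with $s \asymp p + (1+\delta_1)\log(Np)$ so that the total failure probability is at most $2/(N^{1+\delta_1} p^{\delta_1})$. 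Taking square roots via $\sqrt{a+b} \leq \sqrt{a} + \sqrt{b}$ and absorbing constants into $C(\alpha)$, the dominant Bernstein contribution $K^2 \log^{2/\alpha}(Np) \cdot s$ yields exactly the stated deviation $C(\alpha) \sqrt{(1+\delta_1) p}\, K \bigl( 1 + (2+\delta_1)\log(Np) \bigr)^{1/\alpha} \sqrt{\log(Np)}$.

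The main obstacle will be the concentration step for the centered $\psi_{\alpha/2}$ sum. When $\alpha = 1$ the squared summands $X_i(v)^2$ are only $\psi_{1/2}$, whose tails are heavier than sub-exponential, so the classical Bernstein inequality does not apply directly; one must instead truncate and Bernstein (or invoke a Bernstein-type inequality for $\psi_{\alpha/2}$ summands). The truncation threshold $M_0$ must be calibrated with some care: small enough that the Bernstein bound produces precisely the logarithmic exponent $\log^{1/\alpha}(Np)$ in the statement, yet large enough that the discarded tail remains negligible after the double union bound over the $N$ summands and the $9^p$ net points. Once this truncation-and-Bernstein lemma is in hand, the remainder (variational formula, approximation-to-the-sup on the net, and the final union bound) is routine bookkeeping.
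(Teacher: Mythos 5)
Your high-level skeleton (control the mean term by $\norm{\Ex \bW^T\bW}$, truncate, apply Bernstein, union bound) is in the same spirit as the paper's, but the concentration step as you have calibrated it does not go through, and this is precisely the step you flag as the main obstacle. The problem is that your truncation of $X_i(v) = \langle \bW_{i,\cdot}, v\rangle$ at level $M_0 \asymp K \log^{1/\alpha}(Np)$ is $v$-dependent, and it cannot be made to hold uniformly over a $(1/4)$-net $\mathcal{N}$ with $|\mathcal{N}| \le 9^p$. Indeed $\sup_{v \in \mathcal{N}} |\langle \bW_{i,\cdot}, v\rangle| \ge \tfrac{3}{4}\norm{\bW_{i,\cdot}}_2$, and a typical row has $\norm{\bW_{i,\cdot}}_2 \asymp \sqrt{p}\,K \gg M_0$, so for many net points the truncation fires with probability close to one and the truncated sum no longer upper-bounds $\norm{\bW v}_2^2$; equivalently, the union bound of the per-$v$ event $\{\max_i |X_i(v)| \le M_0\}$, whose failure probability is only polynomially small in $Np$, is overwhelmed by the $9^p$ net points. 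If you instead raise the truncation to the $v$-independent level $\norm{\bW_{i,\cdot}}_2 \le \sqrt{t_0}$ with $t_0 \asymp p K^2 \log^{2/\alpha}(Np)$ (the only level at which truncation is essentially free), the Bernstein linear term becomes $t_0 \cdot s$ with $s \asymp p$ forced by the net cardinality, i.e.\ of order $p^2 K^2 \log^{2/\alpha}(Np)$, which after taking square roots gives a deviation of order $pK\log^{1/\alpha}(Np)$ in $\norm{\bW}$ rather than the claimed $\sqrt{p}\,K\,\mathrm{polylog}$. A secondary issue: even granting the truncation, your variance term $K^2\sqrt{Ns}$ with $s \asymp p$ contributes $K(Np)^{1/4}$ after the naive $\sqrt{a+b} \le \sqrt{a}+\sqrt{b}$, and this is dominated neither by $\norm{\Ex\bW^T\bW}^{1/2}$ nor by the $\sqrt{p}$-term when $N \gg p$ and $\norm{\Ex\bW^T\bW} \ll NK^2$ (the small-$\rho$ regime that matters in Theorem \ref{thm:spectral_norm_noise_matrix_bound}); one needs the sharper variance proxy $\Var(\tilde X_i(v)^2) \le t_0\, \Ex X_i(v)^2$ together with an AM--GM (or the $\max(\delta,\delta^2)$ device) to absorb it.

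The paper's proof avoids both issues by never introducing a net at the level of scalar Bernstein. It performs the single, $v$-free row truncation $\bW^{\circ}_{i,\cdot} = \bW_{i,\cdot}\Ind{\norm{\bW_{i,\cdot}}_2^2 \le t_0(\delta_1)}$ with $t_0(\delta_1) \asymp p\, K^2 \log^{2/\alpha}(Np)$ (a level at which $\Prob{\bW \neq \bW^\circ} \le N(2p)^{-(C-1)}$ by a union bound over the $p$ coordinates of each row only), and then invokes a matrix-concentration bound for matrices with independent, almost-surely norm-bounded rows (Theorem 5.44 of \cite{vershynin2010introduction}), which yields $\norm{\bW^\circ} \le \norm{\Ex\bW^T\bW}^{1/2} + s\sqrt{t_0}$ with failure probability $p\exp(-cs^2)$. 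Because the dimension factor there is $p$ rather than $9^p$, it suffices to take $s \asymp \sqrt{(1+\delta_1)\log(Np)}$, and the deviation $s\sqrt{t_0} \asymp \sqrt{(1+\delta_1)p}\,K\log^{1/\alpha}(Np)\sqrt{\log(Np)}$ comes out exactly as stated. To repair your argument you would need to replace the scalar-Bernstein-plus-net step with such a matrix concentration inequality (or reproduce its proof via matrix Bernstein applied to $\sum_i \bW^{\circ T}_{i,\cdot}\bW^{\circ}_{i,\cdot}$); as written, the proposal does not establish the proposition.
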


%\begin{proposition}%\label{prop:spectral_upper_bound}
%	Let $\bW \in \mathbb{R}^{N \times p}$ be a random matrix whose rows $\bW_{i, \cdot}$ are independent $\psi_{\alpha}$-random vectors 
%	for some $\alpha \geq 1$. Then for any $\delta_1 > 0$,
%	\begin{align*}
%		\norm{\bW} \leq \norm{ \Ex \bW^T \bW }^{1/2}
%		+ C(\alpha) \sqrt{(1+\delta_1) p } \norm{ \bW_{i, \cdot} }_{\psi_\alpha}
%		\Big( 1 + \big(2 + \delta_1 \big) \log(N) \Big)^{\frac{1}{\alpha}} \sqrt{ \log(Np) } 
%	\end{align*}
%	with probability at least $ 1 - \frac{2}{N^{1 + \delta_1}}$. Here, $C(\alpha)>0$ is an absolute constant that depends only on $\alpha$.
%\end{proposition}
\begin{proof}
We prove the proposition in four steps.
%%%	Step 1
\paragraph{Step 1: picking the threshold value.} 
Let $e_1, \ldots, e_p \in \Reals^p$ denote the canonical basis\footnote{Column vector representation} of $\Reals^p$. Observe that 
$\norm{ \bW_{i, \cdot} }_2^2 =  \bW_{i, \cdot} \bW_{i, \cdot}^T = \sum_{j=1}^p \left( \bW_{i, \cdot} e_j \right)^2$\footnote{Recall that $\bW_{i, \cdot}$ is a row vector and hence $\bW_{i, \cdot} \bW_{i, \cdot}^T$ is a scalar.} . Therefore, for any $t \ge 0$,
\begin{align*}
	\mathbb{P} \Big\{ \norm{\bW_{i, \cdot}}_2^2 > t \Big\} 
	&= \mathbb{P}\bigg \{ \sum_{j=1}^p \left( \bW_{i, \cdot} e_j \right)^2 > t\bigg\} \\
	&\stackrel{(a)}\le \sum_{j=1}^p \mathbb{P}\bigg \{ \left( \bW_{i, \cdot} e_j \right)^2 > \frac{t}{p}\bigg\} \\
	&\le \sum_{j=1}^p \mathbb{P}\bigg\{ \left| \bW_{i, \cdot} e_j \right| \ > \sqrt{\frac{t}{p}}\bigg\} \\
	&\stackrel{(b)}\le 2p\exp( - C({\alpha}) \left( \frac{t}{p  \norm{ \bW_{i, \cdot} }_{\psi_\alpha}^2 } \right)^{\frac{\alpha}{2}} ),
\end{align*}
where (a) uses the union bound and (b) follows from the definition of $\psi_{\alpha}$-random vector ($C({\alpha})$ is an absolute constant 
which depends only on $\alpha \geq 1$). %Lemma \ref{lemma:masked_noise_row_norm}. 
Choosing $t = C^{\frac{2}{\alpha}}  C({\alpha})^{-\frac{2}{\alpha}}  p \norm{ \bW_{i, \cdot} }_{\psi_\alpha}^2 \big(\log(2p) \big)^{\frac{2}{\alpha}}$ 
for some $C > 1$ gives 
\[
\mathbb{P}\Big \{ \norm{ \bW_{i, \cdot} }_{2}^2 
	> C^{\frac{2}{\alpha}}  C({\alpha})^{-\frac{2}{\alpha}}  p \norm{ \bW_{i, \cdot} }_{\psi_\alpha}^2 \big(\log(2p) \big)^{\frac{2}{\alpha}} \Big \} 
	\le \Big( \frac{1}{2p} \Big)^{C - 1}. 
\]
Applying the union bound, we obtain
\[
\mathbb{P}\bigg\{ \max_{i \in [N]} \norm{ \bW_{i, \cdot} }_{2}^2 
	> C^{\frac{2}{\alpha}}  C({\alpha})^{-\frac{2}{\alpha}}  p \max_{i \in [N]}  \norm{ \bW_{i, \cdot} }_{\psi_\alpha}^2 \big(\log(2p) \big)^{\frac{2}{\alpha}} \bigg\}
	\le N \Big( \frac{1}{2p} \Big)^{C - 1}.
\]
For $\delta_1 > 0$, we define $C(\delta_1) \triangleq 1 + \big(2 + \delta_1 \big) \log_{2p}(Np)$ and let $C = C(\delta_1)$. Also, we define 
\[
t_0(\delta_1) \triangleq  C(\delta_1)^{\frac{2}{\alpha}}  C({\alpha})^{-\frac{2}{\alpha}}  p \max_{i \in [N]}  \norm{ \bW_{i, \cdot} }_{\psi_\alpha}^2 \big(\log(2p) \big)^{\frac{2}{\alpha}}.
\]
We have
\begin{equation}\label{eqn:step1}
	\mathbb{P}\Big \{ \max_{i \in [N]} \norm{ \bW_{i, \cdot} }_{2}^2 > t_0(\delta_1)\Big \} 
	\le N \Big( \frac{1}{2p} \Big)^{\big(2 + \delta_1 \big) \log_{2p}(Np)} = \frac{1}{N^{1 + \delta_1} p^{2 + \delta_1}}.
\end{equation}
\medskip
%%%	Step 2
\paragraph{Step 2: decomposing $\bW$ by truncation.} 
Next, given $\delta_1 > 0$, we decompose the random matrix $\bW$ as follows:
\[
	\bW = \bW^{\circ}(\delta_1) + \bW^{\times}	(\delta_1)
\]
where for each $i \in [N]$,
\[ 
\bW^{\circ}(\delta_1)_{i, \cdot} = \bW_{i, \cdot} \Ind{ \norm{\bW_{i, \cdot} }_2^2 \leq t_0(\delta_1) }  \quad\text{and}\quad 
 	\bW^{\times}(\delta_1)_{i, \cdot} = \bW_{i, \cdot} \Ind{ \norm{\bW_{i, \cdot} }_2^2 > t_0(\delta_1) }.
\]
Then it follows that
\begin{equation}\label{eqn:step2}
\norm{\bW} 
	\leq \norm{\bW^{\circ}(\delta_1) } + \norm{ \bW^{\times}(\delta_1) }		
	\leq \norm{\bW^{\circ}(\delta_1) } + \norm{ \bW^{\times}(\delta_1) }_F.		
\end{equation}
\medskip
%%%	Step 3
\paragraph{Step 3: bounding $\norm{\bW^{\circ}(\delta_1) }$ and $\norm{ \bW^{\times}(\delta_1) }_F$.}
We define two events for conditioning:
\begin{align}
	E_1(\delta_1) &:= \left\{ \norm{\bW^{\circ} (\delta_1) } \leq  \norm{ \Ex \bW^T \bW }^{1/2} 
			+ \sqrt{ \frac{1 + \delta_1}{c} t_0(\delta_1) \log(Np) } \right\},		\label{eqn:step3.E1}\\
	E_2(\delta_1) &:= \left\{ \norm{ \bW^{\times} (\delta_1) }_F = 0 \right\}.				\label{eqn:step3.E2}
\end{align}

First, given $\delta_1 > 0$, we let $\Sigma^{\circ}(\delta_1) = \Ex \bW^{\circ}(\delta_1)^T \bW^{\circ}(\delta_1)$. 
By definition of $\bW^{\circ}(\delta_1)$, we have $\norm{\bW_{i, \cdot}}_2 \leq \sqrt{t_0(\delta_1)}$ for all $i \in [N]$. Then it follows 
that for every $s \geq 0$, 
\[	\norm{\bW^{\circ} (\delta_1) } \leq  \norm{ \Sigma^{\circ}(\delta_1) }^{1/2} + s \sqrt{t_0(\delta_1)}	\]
with probability at least $1 - p \exp(-c s^2 )$ (see Theorem 5.44 of \cite{vershynin2010introduction} and Eqs. (5.32) and (5.33) in reference, and replacing the common second moment $\bSigma = \Ex \bW_{i, \cdot}^T \bW_{i, \cdot}$ with the average second moment for all rows, $\bSigma = \frac{1}{N} \sum_{i=1}^N \Ex \bW_{i, \cdot}^T \bW_{i, \cdot}$, i.e., redefining $\bSigma$). 
%{\color{red} (see \cite[Theorem 5.44]{vershynin2010introduction} and Eqs. (5.32) and (5.33) in the reference)}.  %- Need to Prove Above Rigorously. }
Note that $\norm{ \Sigma^{\circ}(\delta_1) } =  \norm{\Ex \bW^{\circ}(\delta_1)^T \bW^{\circ}(\delta_1)} \leq \norm{\Ex \bW^T \bW}$. 
Now we define $\tilde{E}_1(s)$ parameterized by $s > 0$ as
\begin{equation}
	\tilde{E}_1(s; \delta_1) := \left\{ \norm{\bW^{\circ} (\delta_1) } >  \norm{ \Ex \bW^T \bW }^{1/2}  + s \sqrt{t_0(\delta_1)} \right\}.
\end{equation}
If we pick $s = \left( \frac{1 + \delta_1}{c} \log(Np) \right)^{1/2}$, then $E_1(\delta_1) = \tilde{E}_1(s; \delta_1)$ and
\[
	%\Prob{\tilde{E}_1(s)^c}	
	\Prob{ E_1(\delta_1)^c }	\leq p \exp(-c s^2 ) = p \exp \left( -(1 + \delta_1) \log(Np) \right) = \frac{1}{N^{1+\delta_1} p^{\delta_1}}.
\]
Next, we observe that $\norm{ \bW^{\times} (\delta_1) }_F = 0$ if and only if $\bW^{\times} (\delta_1) = 0$.
If $\bW^{\times} (\delta_1) \neq 0$, then $\max_{i \in [n]} \norm{ \bW_{i, \cdot} }_{2}^2 > t_0(\delta_1)$. Therefore, 
\[
	\Prob{E_2^c} \leq \frac{1}{N^{1+\delta_1} p^{2 + \delta_1}}
\]
by the analysis in Step 1; see \eqref{eqn:step1}.
%%%	Step 4
\medskip
\paragraph{Step 4: concluding the proof.} 
For any given $\delta_1 > 0$,
\[
	\Prob{ \norm{\bW} > \norm{ \Ex \bW^T \bW }^{1/2} 
			+ \sqrt{ \frac{1 + \delta_1}{c} t_0(\delta_1) \log(Np) }  ~ \bigg|~ E_1(\delta_1) \cap E_2(\delta_1)} = 0.
\]
by \eqref{eqn:step2}, \eqref{eqn:step3.E1}, and \eqref{eqn:step3.E2}.
By the law of total probability and the union bound, 
\begin{align*}
	&\Prob{ \norm{\bW} > \norm{ \Ex \bW^T \bW }^{1/2} 
			+ \sqrt{ \frac{1 + \delta_1}{c} t_0(\delta_1) \log(Np) } }\\
	&\qquad\leq
		\Prob{ \norm{\bW} > \norm{ \Ex \bW^T \bW }^{1/2}  
			+ \sqrt{ \frac{1 + \delta_1}{c} t_0(\delta_1) \log(Np) }  ~ \bigg|~ E_1(\delta_1) \cap E_2(\delta_1)}\\
		&\qquad\quad+ \Prob{E_1(\delta)^c} + \Prob{E_2(\delta)^c}\\
	&\qquad\leq \frac{1}{N^{1+\delta_1} p^{\delta_1}} + \frac{1}{N^{1+\delta_1} p^{2 + \delta_1} }\\
	&\qquad\leq \frac{2}{N^{1+\delta_1} p^{\delta_1} }.
\end{align*}
This completes the proof. 
\end{proof}

\subsection{Lemmas  \ref{lemma:masked_noise_operator_norm} and \ref{lemma:masked_noise_row_norm}}

\subsubsection{Lemma  \ref{lemma:masked_noise_operator_norm} }
\begin{lemma} \label{lemma:masked_noise_operator_norm}
\begin{align}
	\norm{\Ex (\bbZ - \rho \bbA)^T(\bbZ - \rho \bbA)} &\le 
	\rho(1-\rho)
	\left( \max_{j \in [p] } \norm{ \bbA_{\cdot, j} }_2^2 
	    + \| \emph{diag}(\Ex[\bH^T \bH]) \| 
	\right) 
	+ \rho^2 \norm{ \Ex \bbH^T \bbH }.
\end{align}
\end{lemma} 

\begin{proof}
 We follow the proof of Lemma A.2 of \cite{shah2018learning} and state it here for completeness.
Throughout, for any matrix $\bQ \in \Rb^{N \times p}$, let $Q_\ell \in \Rb^n$ denote the $\ell$-th row of $\bQ$. 

\noindent 
To begin, observe that
\begin{align*}
    \Ex[(\bZ - \rho \bA)^T (\bZ - \rho \bA)] &= \sum_{\ell=1}^N \Ex[ (Z_\ell - \rho A_\ell) \otimes (Z_\ell - \rho A_\ell) ]. 
\end{align*}
Let $\bX = \bA + \bH$. Importantly, we highlight the following relations: for any $(\ell, i) \in [N]\times [p]$, 
\begin{align*}
    \Ex[Z_{\ell i}] &= \rho A_{\ell i}
    \\ \Ex[Z_{\ell i}^2] &= \rho \Ex[X_{\ell i}^2]. 
\end{align*}
Now, let us fix a row $\ell \in [N]$ and denote
\[ \bW^{(\ell)} = (Z_\ell - \rho A_\ell) \otimes (Z_\ell - \rho A_\ell). \]  
Using the linearity of expectations, the expected value of the $(i,j)$-th entry of $\bW^{(\ell)}$ can be written as 
\begin{align*}
    \Ex[W_{ij}^{(\ell)}] &= \Ex[Z_{\ell i} Z_{\ell j}] - \rho \Ex[ Z_{\ell i} A_{\ell j}] - \rho \Ex[ Z_{\ell j} A_{\ell i}] + \rho^2 \Ex[ A_{\ell i} A_{\ell j}].
\end{align*}
Suppose $i = j$, then 
\begin{align} \label{eq:ij.1} 
    \Ex[W_{ii}^{(\ell)}] &= \rho \Ex[X_{\ell i}^2] - \rho^2 A_{\ell i}^2
    = \rho (1-\rho) \Ex[X_{\ell i}^2] + \rho^2 \Ex[(X_{\ell i} - A_{\ell i})^2]. 
\end{align}
On the other hand, if $i \neq j$,   
\begin{align}\label{eq:ij.2}
    \Ex[W_{ij}^{(\ell)}] &= \rho^2 \Ex[ (X_{\ell i} - A_{\ell i}) (X_{\ell j} - A_{\ell j})].
\end{align}
Therefore, we can express $\bW^{(\ell)}$ as the sum of two matrices where the diagonal components are generated from \eqref{eq:ij.1} and the off-diagonal components are generated from \eqref{eq:ij.2}. That is, 
\begin{align*}
    \Ex[\bW^{(\ell)}] &= 
    \Ex\Big( \rho(1-\rho) \text{diag}(X_\ell \otimes X_\ell)
    + \rho^2 \text{diag}( H_\ell \otimes H_\ell) \Big) 
    + \Ex\Big( \rho^2 (H_\ell \otimes H_\ell)
    - \rho^2 \text{diag}( H_\ell \otimes H_\ell) \Big)
    \\ 
    &= \rho(1-\rho) \Ex[\text{diag}(X_\ell \otimes X_\ell)] + \rho^2 \Ex[H_\ell \otimes H_\ell]. 
\end{align*}
Taking the sum over all rows $\ell \in [N]$ yields
\begin{align} \label{eq:matrix.1} 
    \Ex[ (\bZ - \rho \bA)^T (\bZ - \rho \bA)] &= \rho (1-\rho) \text{diag}(\Ex[\bX^T \bX]) + \rho^2 \Ex[ \bH^T \bH]. 
\end{align}
To complete the proof, we apply triangle inequality to \eqref{eq:matrix.1} to obtain 
\begin{align*}
    \norm{ \Ex[(\bZ - \rho \bA)^T (\bZ - \rho \bA)]} &\le 
    \rho (1- \rho) \norm{ \text{diag}(\Ex[\bX^T \bX])} + \rho^2 \norm{ \Ex[\bH^T \bH]}. 
\end{align*}
Since $\bH$ is zero mean, we have
\begin{align}
    \norm{ \text{diag}(\Ex[\bX^T \bX])} 
    &=  \norm{ \text{diag}(\bA^T \bA) + \text{diag}(\Ex[\bH^T \bH])}
    \\ &\le \norm{ \text{diag}(\bA^T \bA)} + \norm{ \text{diag}(\Ex[\bH^T \bH])}.
\end{align}
Collecting terms completes the proof.
\end{proof}

%%\subsection{Operator Norm Bound for Masked Noise Matrix}
%\subsection{Proof of Lemma \ref{lemma:masked_noise_operator_norm}}
%
%
%
%

%*******************
% bounds on rows
%*******************
%\subsection{$\psi_{\alpha}$-Norm Bounds on Noise Vectors with Missing Entries} \label{sec:helper_lemmas}

%\subsubsection{$\psi_{\alpha}$ Norm Bounds on Noise Vectors with Missing Entries}
\subsection{Lemma \ref{lemma:masked_noise_row_norm}}
\begin{lemma}\label{lem:technical_Ka}
	Suppose that $X \in \mathbb{R}^n$ and $P \in \{0, 1\}^n$ are random vectors. Then for any $\alpha \geq 1$,
%	{\color{red} (Don't think need $\alpha \geq 1$)}
%	\DG{For $\alpha < 1$, Orlicz function has different form than $\exp(|X|^{\alpha}/t^{\alpha})$, which means $\| \cdot \|_{\psi_{\alpha}}$ is not defined in the way as in the fourth line of the proof. Therefore, we need $\alpha \geq 1$ here.}
	\[	\norm{ X \circ P }_{\psi_{\alpha}} \leq \norm{ X }_{\psi_{\alpha}}.	\]
\end{lemma}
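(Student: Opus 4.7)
The plan is to reduce to a one-dimensional claim via the sup-over-directions definition of the vector $\psi_\alpha$-norm. Specifically, I would fix $v \in \mathcal{S}^{n-1}$, rewrite $\langle X \circ P, v\rangle = \langle X, P\circ v\rangle$ (so that the masking of $X$ gets absorbed into a random shortening of $v$), bound the scalar $\psi_\alpha$-norm of this random variable by $\norm{X}_{\psi_\alpha}$ uniformly in $v$, and then take the supremum over $v$ to finish.

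The bulk of the work sits at the fixed-$v$ step, which I would handle by conditioning on $P$. Once $P = p$ is fixed, $u := p\circ v$ is a deterministic vector with $\norm{u}_2 \leq \norm{v}_2 = 1$. By homogeneity of the $\psi_\alpha$-norm together with the sup-definition applied to the unit direction $u/\norm{u}_2$ (and handling the degenerate case $u = 0$ trivially), one obtains $\norm{\langle X, u\rangle}_{\psi_\alpha} \leq \norm{u}_2 \cdot \norm{X}_{\psi_\alpha} \leq \norm{X}_{\psi_\alpha}$. Equivalently, writing $t := \norm{X}_{\psi_\alpha}$, this gives the conditional exponential-moment inequality
\[
	\Ex\bigl[\exp\bigl(|\langle X, u\rangle|^\alpha/t^\alpha\bigr) \bigm| P = p \bigr] \leq 2.
\]
Averaging over $P$ via the tower property preserves the bound, so by definition of the scalar $\psi_\alpha$-norm we conclude $\norm{\langle X \circ P, v\rangle}_{\psi_\alpha} \leq t$, which is uniform in $v$.

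The only delicate point --- and I would flag it up front --- is that the conditional step tacitly requires $X \perp P$, since otherwise the conditional law of $X$ given $P = p$ need not coincide with the law used to define $\norm{X}_{\psi_\alpha}$, and the bound $\norm{\langle X, u\rangle}_{\psi_\alpha} \leq \norm{X}_{\psi_\alpha}$ may fail to carry through. Independence is the operative assumption in the intended use case (Lemma \ref{lemma:masked_noise_row_norm}), where $P$ is the Bernoulli mask drawn independently of the noise, so I would state this assumption explicitly in the proof. Beyond this I do not anticipate any substantive obstacle; the homogeneity $\norm{cY}_{\psi_\alpha} = |c|\cdot \norm{Y}_{\psi_\alpha}$ needed for the scaling step is a one-line change of variables in the infimum definition of the $\psi_\alpha$-norm.
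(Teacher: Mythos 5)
Your proof is correct and follows essentially the same route as the paper's: condition on $P$, note that the masked direction $P \circ v$ has Euclidean norm at most one so the conditional exponential moment is controlled by the definition of $\norm{X}_{\psi_\alpha}$, then average over $P$ and take the supremum over $v$. Your explicit flag that the conditioning step requires $X$ and $P$ to be independent is well taken --- the paper's proof uses this tacitly at the same point, and the lemma as stated omits the hypothesis --- but since independence holds in the only place the lemma is invoked (Lemma \ref{lemma:masked_noise_row_norm}), nothing downstream is affected.
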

\begin{proof}
	Given a deterministic binary vector $P_0 \in \{0, 1\}^n$, let $I_{P_0} = \{ i \in [n]: Q_i = 1 \}$. 
	Observe that 
	\[
	X \circ P_0 = \sum_{i \in I_{P_0}} e_i e_i^T X.
	\] 
	Here, $\circ$ denotes the Hadamard product (entrywise product) of two matrices.
	By definition of the $\psi_{\alpha}$-norm, 
	\begin{align*}
		\norm{X}_{\psi_{\alpha}} 
			&= \sup_{u \in \mathbb{S}^{n-1}} \norm{ u^T X } _{\psi_{\alpha}}
			= \sup_{u \in \mathbb{S}^{n-1}}\inf\left\{ t > 0: \Ex_X \Big[ \exp\big( | u^T X|^{\alpha} / t^{\alpha} \big) \Big] \leq 2 \right\}.
	\end{align*}
	Let $u_0 \in \mathbb{S}^{n-1}$ denote the maximum-achieving unit vector (such $u_0$ exists because $\inf\{\cdots\}$ 
	is continuous with respect to $u$ and $\mathbb{S}^{n-1}$ is compact). Then,
	\begin{align*}
		\norm{ X \circ P }_{\psi_{\alpha}}
			&= \sup_{u \in \mathbb{S}^{n-1}} \norm{ u^T X \circ P } _{\psi_{\alpha}}\\
			&= \sup_{u \in \mathbb{S}^{n-1}} \inf\left\{ t > 0: 
				\Ex_{X,P} \Big[ \exp \left(  \big| u^T X \circ P \big|^{\alpha} / t^{\alpha} \right) \Big] \leq 2 \right\}\\
			&= \sup_{u \in \mathbb{S}^{n-1}} \inf\left\{ t > 0: 
				\Ex_{P} \Big[ \Ex_{X} \Big[  \exp \left(  \big| u^T X \circ P \big|^{\alpha} / t^{\alpha} \right)
				 ~\Big|~ P \Big] \Big] \leq 2 \right\}\\
			&= \sup_{u \in \mathbb{S}^{n-1}} \inf\left\{ t > 0: 
				\Ex_{P} \bigg[ \Ex_{X} \bigg[  \exp \bigg(  \Big| u^T  \sum_{i \in I_P} e_i e_i^TX \Big|^{\alpha}
				 / t^{\alpha} \bigg) ~\bigg|~ P \bigg] \bigg] \leq 2 \right\}\\
			&= \sup_{u \in \mathbb{S}^{n-1}} \inf\left\{ t > 0: 
				\Ex_{P} \bigg[ \Ex_{X} \bigg[  \exp \bigg(  \bigg|  \Big(  \sum_{i \in I_P} e_i e_i^T u \Big)^T  X \bigg|^{\alpha}
				 / t^{\alpha} \bigg) ~\bigg|~ P \bigg] \bigg] \leq 2 \right\}.
	\end{align*}
	For any $u \in \mathbb{S}^{n-1}$ and $P_0 \in \{0, 1 \}^n$, observe that
	\begin{align*}
		\Ex_{X} \bigg[  \exp \bigg(  \bigg|  \Big(  \sum_{i \in I_P} e_i e_i^T u \Big)^T  X \bigg|^{\alpha}
				 / t^{\alpha} \bigg) ~\bigg|~ P = P_0 \bigg] 
			\leq \Ex_X \Big[ \exp\Big( | u_0^T X |^{\alpha}/t^{\alpha} \Big)  \Big].
	\end{align*}
	Therefore, taking supremum over $u \in \mathbb{S}^{n-1}$, we obtain
	\begin{align*}
		\norm{ X \circ P }_{\psi_{\alpha}}&\leq \norm{X}_{\psi_{\alpha}}.
	\end{align*}
\end{proof}

\begin{lemma}\label{lem:MGF_upper}
	Let $X$ be a mean-zero, $\psi_{\alpha}$-random variable for some $\alpha \geq 1$. 
	Then for $| \lambda | \leq \frac{1}{C \norm{X}_{\psi_{\alpha}}}$,
	\[	\Ex \exp\left( \lambda X \right) \leq \exp\left( C \lambda^2  \norm{X}_{\psi_{\alpha}}^2 \right).	\]
\end{lemma}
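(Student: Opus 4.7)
The plan is to convert the $\psi_{\alpha}$-norm bound into a moment bound, then expand the MGF as a power series and exploit the mean-zero hypothesis to kill the linear term; the $\lambda^{2} K^{2}$ behaviour then emerges from the leading quadratic term, with the higher-order terms summed as a geometric tail.

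First I would derive a moment bound from the $\psi_{\alpha}$ condition. Writing $K = \norm{X}_{\psi_{\alpha}}$ and applying Markov's inequality to $\exp(|X|^{\alpha}/K^{\alpha})$ yields the tail estimate $\Pb\{|X| \geq t\} \leq 2\exp(-t^{\alpha}/K^{\alpha})$. Using the layer-cake formula $\Ex |X|^{k} = \int_{0}^{\infty} k t^{k-1} \Pb\{|X| \geq t\}\, dt$ and changing variables via $u = (t/K)^{\alpha}$ produces the moment estimate $\Ex |X|^{k} \leq c_{1} K^{k} \Gamma(k/\alpha + 1) \leq c_{2} K^{k} k^{k/\alpha}$ for all integers $k \geq 1$, where $c_{1}, c_{2}$ are absolute constants depending only on $\alpha$.

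Next I would expand the MGF as a Taylor series. Since $\Ex X = 0$, the linear term vanishes, so
\[
\Ex \exp(\lambda X) \;=\; 1 + \sum_{k \geq 2} \frac{\lambda^{k}\, \Ex X^{k}}{k!}.
\]
Bounding $|\Ex X^{k}| \leq \Ex |X|^{k}$ and combining the moment estimate above with Stirling's inequality $k! \geq (k/e)^{k}$ gives
\[
\frac{|\lambda|^{k}\, \Ex |X|^{k}}{k!} \;\leq\; c_{2} \bigl( e |\lambda| K \bigr)^{k} \, k^{k/\alpha - k}.
\]
Since $\alpha \geq 1$ and $k \geq 2$, the exponent $k/\alpha - k = -k(1 - 1/\alpha)$ is non-positive, hence $k^{k/\alpha - k} \leq 1$ and each summand is bounded by $c_{2}(e|\lambda| K)^{k}$.

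Finally, choose the absolute constant $C$ large enough that $|\lambda| \leq 1/(CK)$ forces $e|\lambda| K \leq 1/2$. The geometric tail then satisfies $\sum_{k \geq 2} (e|\lambda| K)^{k} \leq 2(e|\lambda|K)^{2} = 2 e^{2} \lambda^{2} K^{2}$, giving $\Ex \exp(\lambda X) \leq 1 + c_{3} \lambda^{2} K^{2} \leq \exp(c_{3} \lambda^{2} K^{2})$ via $1 + x \leq e^{x}$. Taking $C$ to simultaneously dominate both the range-of-$\lambda$ condition and the prefactor $c_{3}$ in the exponent (e.g.\ $C := \max\{2e, c_{3}\}$) completes the argument. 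The only mild obstacle is bookkeeping, namely arranging a single constant $C$ to serve both roles in the statement; no new conceptual ideas are needed beyond the standard Bernstein-type machinery for $\psi_{\alpha}$ variables.
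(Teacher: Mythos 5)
Your proposal is correct. The paper gives no proof of its own for this lemma---it simply cites Section 2.7 of Vershynin's book---and your argument (tail bound via Markov, moment bound via the layer-cake formula, Taylor expansion of the MGF using $\Ex X = 0$, and a geometric tail after Stirling) is exactly the standard argument from that reference, with the only cosmetic remark being that your constants can in fact be taken absolute rather than $\alpha$-dependent, since $\Gamma(k/\alpha+1) \leq k^{k/\alpha}$ for all $k \geq 1$ and $\alpha \geq 1$.
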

\begin{proof}
	See \cite{vershynin2018high}, Section 2.7.
\end{proof}

\begin{lemma}\label{lem:ind_sum}
	Let $X_1, \ldots, X_n$ be independent random variables with mean zero.
	For $\alpha \geq 1$, 
	\[	\norm{\sum_{i=1}^n X_i}_{\psi_{\alpha}} \leq C \left( \sum_{i=1}^n \norm{X_i}_{\psi_{\alpha}}^2 \right)^{1/2}.	\]
\end{lemma}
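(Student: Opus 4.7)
The plan is to derive the desired $\psi_{\alpha}$-norm bound on the sum by first passing through a moment generating function (MGF) argument, then converting a Bernstein-type tail bound back into a $\psi_{\alpha}$-norm control. Abbreviate $K_i := \norm{X_i}_{\psi_{\alpha}}$, $K_* := \max_{i} K_i$, $\sigma^2 := \sum_{i=1}^n K_i^2$, and $S_n := \sum_{i=1}^n X_i$. Note that $K_* \le \sigma$ trivially.

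First I would invoke Lemma \ref{lem:MGF_upper} for each $X_i$ individually. This yields, for every $|\lambda| \le 1/(C K_*)$ (so that the restriction $|\lambda| \le 1/(C K_i)$ holds uniformly), the bound $\Ex \exp(\lambda X_i) \le \exp(C \lambda^2 K_i^2)$. Since the $X_i$ are independent and mean-zero, multiplying these MGFs gives
\[
    \Ex \exp(\lambda S_n) \;=\; \prod_{i=1}^n \Ex \exp(\lambda X_i) \;\le\; \exp\!\bigl( C \lambda^2 \sigma^2 \bigr) \qquad \text{for } |\lambda| \le \tfrac{1}{C K_*}.
\]
A standard Chernoff optimization in $\lambda$ then produces a mixed Bernstein tail: choosing $\lambda = t/(2C\sigma^2)$ when this lies in the allowed range ($t \le 2\sigma^2/K_*$), and $\lambda = 1/(CK_*)$ otherwise, and symmetrizing for the lower tail, I obtain
\[
    \Prob{ |S_n| > t } \;\le\; 2 \exp\!\Bigl( -c \min\!\bigl( t^2/\sigma^2,\, t/K_* \bigr) \Bigr) \qquad \text{for all } t \ge 0,
\]
for an absolute constant $c>0$. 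This is the sub-gaussian/sub-exponential tail one expects from Bernstein's inequality.

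Next I would convert this tail bound to a $\psi_{\alpha}$-norm estimate. Setting $L := C'\sigma$ for a constant $C'$ to be chosen, the layer-cake identity with $f(t) = \exp(t^{\alpha}/L^{\alpha})$ gives
\[
    \Ex \exp\!\bigl( |S_n|^{\alpha}/L^{\alpha} \bigr) \;=\; 1 + \int_0^{\infty} \tfrac{\alpha t^{\alpha-1}}{L^{\alpha}} \exp\!\bigl(t^{\alpha}/L^{\alpha}\bigr)\, \Prob{|S_n| > t} \, dt.
\]
Plugging in the tail bound and splitting the integral at $t_0 = \sigma^2/K_*$ reduces the question to controlling two integrals: on $[0, t_0]$ the exponent is $t^{\alpha}/L^{\alpha} - c t^2/\sigma^2$, and on $[t_0, \infty)$ it is $t^{\alpha}/L^{\alpha} - c t/K_*$. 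Using $K_* \le \sigma$, $L = C'\sigma$, and $\alpha \ge 1$, one checks that for $C'$ a sufficiently large absolute constant both exponents are dominated by the negative term everywhere (on $[t_0,\infty)$ one uses $t/K_* \ge t^{\alpha}/K_*^{\alpha} \cdot K_*^{\alpha-1}/t^{\alpha-1}$ combined with $t \ge \sigma \ge L/C'$). This forces the integral to be $\le 1$, whence $\Ex \exp(|S_n|^{\alpha}/L^{\alpha}) \le 2$ and therefore $\norm{S_n}_{\psi_{\alpha}} \le L = C'\sigma$, which is the stated bound.

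The main obstacle is the final integration step: the Bernstein tail is genuinely two-regime for $\alpha < 2$, and one must verify that the ``$t^{\alpha}/L^{\alpha}$'' term coming from the $\psi_{\alpha}$ gauge never overtakes the tail exponent in either regime. The sub-gaussian piece is handled straightforwardly when $\alpha \le 2$ (since $t^{\alpha} \le t^2 \vee 1$ up to constants on the relevant range), but reconciling the sub-exponential piece with the $\psi_{\alpha}$ gauge for $\alpha>1$ requires the crude inequality $K_* \le \sigma$ together with a judicious choice of the splitting point; the computation is routine but requires careful bookkeeping of constants, which is why one absorbs everything into a single absolute $C$ at the end rather than tracking sharp dependencies.
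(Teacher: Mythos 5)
Your overall route is the one the paper intends: the paper's entire proof of this lemma is the single line ``Immediate by Lemma \ref{lem:MGF_upper}'', and your expansion --- multiply the individual MGF bounds over the common range $|\lambda|\le 1/(CK_*)$, Chernoff-optimize to get a two-regime Bernstein tail, then integrate against the $\psi_{\alpha}$ gauge --- is the natural way to fill that in. For $\alpha=1$ your argument is complete: on $[t_0,\infty)$ the exponent is $t/L-ct/K_*$, and since $K_*\le\sigma$, taking $C'$ large enough makes it at most $-t/(2L)$, so the integral is indeed small.

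The gap is the regime $\alpha\in(1,2]$. On $[t_0,\infty)$ your exponent is $t^{\alpha}/L^{\alpha}-ct/K_*$; for any fixed absolute constants this tends to $+\infty$ as $t\to\infty$, because $t^{\alpha}$ eventually dominates $t$ when $\alpha>1$, so the integral over the sub-exponential regime diverges and no choice of $C'$ or of the splitting point repairs it. The identity you invoke, $t/K_* = (t^{\alpha}/K_*^{\alpha})\cdot(K_*^{\alpha-1}/t^{\alpha-1})$, combined with $t\ge K_*$ actually yields $t/K_*\le (t/K_*)^{\alpha}$ --- the opposite of the domination you need. The underlying issue is that the restricted MGF bound of Lemma \ref{lem:MGF_upper} only certifies a linear-in-$t$ exponent for $t\ge \sigma^2/K_*$, i.e.\ a $\psi_1$ bound on the sum, which is strictly weaker than the claimed $\psi_{\alpha}$ bound when $\alpha>1$. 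To close this you need an additional input: for $\alpha>1$ the MGF bound extends beyond the restricted range as $\Ex\exp(\lambda X)\le\exp\big(C(|\lambda|\norm{X}_{\psi_{\alpha}})^{\alpha/(\alpha-1)}\big)$, which after Chernoff optimization gives a large-$t$ tail of the form $\exp(-c(t/K_*)^{\alpha})$, and then $K_*\le\sigma$ delivers the stated $\psi_{\alpha}$ control; alternatively, for $\alpha=2$ one can fall back on Lemma \ref{lemma:sum_of_subgaussians}. To be fair, the paper's one-line proof glosses over exactly the same point, so you have exposed a real gap in the source rather than introduced a new one.
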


\begin{proof}
	Immediate by Lemma \ref{lem:MGF_upper}.
\end{proof}

\begin{lemma} \label{lemma:masked_noise_row_norm}
Assume Properties \ref{prop:bounded_covariates} and \ref{prop:covariate_noise_structure} hold. 
Then for any $\alpha \geq 1$ with which Property \ref{prop:covariate_noise_structure} holds, we have 
\begin{align}
	\norm{ \bbZ_{i,\cdot} -  \rho \bbA_{i, \cdot} }_{\psi_{\alpha}} \le C (K_{\alpha}	+1 )\qquad\text{for all }~i \in [N],
	%\norm{ \teta_{i,\cdot} }_{\psi_{\alpha}}
\end{align}
where $C > 0$ is an absolute constant. 
\end{lemma}

\begin{proof}
	Let $\bbP \in \{ 0, 1\}^{N \times p}$ denote a random matrix whose entries are i.i.d. random variables that take value $1$ 
	with probability $\rho$ and $0$ otherwise. Note that $\bbZ_{i, \cdot}$ can be written as  $\bbX_{i, \cdot} \circ \bbP_{i, \cdot}$ where $\star$ is identified with $0$. By triangle inequality,
	\begin{align*}
	\norm{\bbZ_{i, \cdot} - \rho\bbA_{i, \cdot}}_{\psi_{\alpha}}
	&= \norm{\bbX_{i, \cdot} \circ \bbP_{i, \cdot} - \rho\bbA_{i, \cdot}}_{\psi_{\alpha}} \\ 
	&= \norm{ (\bbX_{i, \cdot} \circ \bbP_{i, \cdot})  - (\bbA_{i, \cdot} \circ \bbP_{i, \cdot}) 
		- \rho\bbA_{i, \cdot} + (\bbA_{i, \cdot} \circ \bbP_{i, \cdot}) }_{\psi_{\alpha}} \\
	&\le \norm{(\bbX_{i, \cdot} - \bbA_{i, \cdot}) \circ  \bbP_{i, \cdot}}_{\psi_{\alpha}} 
		+ \norm{ (\bbA_{i, \cdot} \circ \bbP_{i, \cdot}) - \rho\bbA_{i, \cdot} }_{\psi_{\alpha}}.
	\end{align*}
	
	\noindent By definition of $\bX$, Property \ref{prop:covariate_noise_structure}, and Lemma \ref{lem:technical_Ka}, we have that
	\begin{align*}
		 \norm{(\bbX_{i, \cdot} - \bbA_{i, \cdot}) \circ  \bbP_{i, \cdot}}_{\psi_{\alpha}}
		 	&\leq \norm{\bbX_{i, \cdot} - \bbA_{i, \cdot}}_{\psi_{\alpha}} 
			= \norm{\eta_{i, \cdot}}_{\psi_{\alpha}} 
			\leq C K_{\alpha}.
	\end{align*}
	Moreover, Property \ref{prop:bounded_covariates} and the i.i.d. property of $\bbP_{ij}$ for different $j$ gives
	\begin{align*}
		\Big \| (\bbA_{i, \cdot} \circ \bbP_{i, \cdot}) - \rho\bbA_{i, \cdot} \Big \|_{\psi_{\alpha}}
			&= \sup_{u \in \mathbb{S}^{p-1}} \bigg \| \sum_{j=1}^p u_j \bbA_{i, j} \big( \bbP_{i, j} - \rho \big) \bigg \|_{\psi_{\alpha}}\\
			&\leq \sup_{u\in \mathbb{S}^{p-1}} \bigg( \sum_{j=1}^p u_j^2 \norm{ \bbA_{i,j} (\bbP_{i,j} - \rho)}_{\psi_{\alpha}}^2 \bigg)^{1/2}\\
			&\leq \bigg(\sup_{u\in \mathbb{S}^{p-1}} \sum_{j} u_j^2 \max_{j \in [p]} | \bbA_{i,j} |^2\bigg)^{1/2} \norm{\bbP_{1,1} - \rho}_{\psi_\alpha}\\
			&\leq  \norm{\bbP_{1,1} - \rho}_{\psi_\alpha}.
	\end{align*}
	The first inequality follows from Lemma \ref{lem:ind_sum}, the second inequality is immediate, and the last inequality follows from Property \ref{prop:bounded_covariates}. Lastly, $\norm{\bbP_{1,1} - \rho}_{\psi_\alpha} \leq C$ because $\bbP_{1,1} - \rho$ is a bounded random variable in $[-\rho, 1- \rho]$.
	%\DS{Final line should be $C \Gamma \norm{\bP_{i,j} - \rho}_{\psi_{\alpha}}$?}
\end{proof}

\subsection{Proof of Theorem \ref{thm:spectral_norm_noise_matrix_bound}}
\begin{proof}[Proof of Theorem \ref{thm:spectral_norm_noise_matrix_bound}]
The proof follows by plugging the results of Lemmas \ref{lemma:masked_noise_operator_norm} and \ref{lemma:masked_noise_row_norm} into Proposition \ref{prop:spectral_upper_bound} for $\bW := \bbZ - \rho \bbA$ and applying Properties \ref{prop:bounded_covariates} and \ref{prop:covariate_noise_structure}. 
\end{proof}

% train error (hsvt)
%***********************
% MCSE WHP bound
%***********************
\section{Proof of Lemma \ref{lemma:mcse_hvst}}\label{sec:appendix_mcse_hsvt}

To bound the error in estimation of HSVT, $\bbZ^{HSVT, k}$ with thresholding at $k$th singular value, 
and underlying covariate matrix $\bbA$ with respect to $\| \cdot \|_{2,\infty}$ matrix norm, we shall start by presenting 
Lemma \ref{lemma:column_error} which bounds $\| \bbZ^{HSVT, k} - \bbA\|_{2, \infty}$ as a function of few abstract quantities. 
Next, we bound these quantities with high probability in our setting through help of sequence of results including the 
spectral norm bound stated in Theorem \ref{thm:spectral_norm_noise_matrix_bound}. We conclude with the proof of Lemma \ref{lemma:mcse_hvst}.  

\paragraph{Notation.} Consider a matrix $\bB \in \Reals^{N \times p}$ such that 
$\bB = \sum_{i=1}^{N \wedge p} \sigma_i(\bB) x_i y_i^T$. With a specific choice of $\lambda \geq 0$, 
we can define a function $\varphi^{\bB}_{\lambda}: \mathbb{R}^{N} \to \mathbb{R}^{N}$ as follows: for any vector $w \in \mathbb{R}^N$,
\begin{align} \label{eq:prox_vector}
	\varphi^{\bB}_{\lambda}(w) &= \sum_{i = 1}^{N \wedge p} \mathbb{1}(\sigma_i (\bB) \ge \lambda) x_i x_i^T w.
\end{align}
Note that $\varphi^{\bB}_{\lambda}$ is a linear operator and it depends on the tuple $(\bB, \lambda)$; more precisely, 
the singular values and the left singular vectors of $\bB$, as well as the threshold $\lambda$. If $\lambda = 0$, 
then we will adopt the shorthand notation: $\varphi^{\bB} = \varphi_{0}^{\bB}$. 

\subsection{Lemma \ref{lemma:column_error}}\label{sec:key_lemma}

\subsubsection{Some Observations on HSVT Operator}\label{sec:more_svt}
%\paragraph{Interlacing of Singular Values.}
%First of all, we observe the interlacing of the singular values.
%\begin{lemma}\label{lemma:thresholding_rank}
%Given covariate matrix $\bbA \in \Reals^{N \times p}$ and its noisy observation with missing values, 
%$\bbZ \in \Reals^{N \times p}$, let $2  \| \bbZ - \rho \bbA \| < \rho (\tau_k - \tau_{k+1})$ where 
%$\tau_i$ is $i$th singular value of $\bbA$ for $i \in [N]$. Then, 
%\begin{align}
%s_{k+1} & \leq \rho \tau_{k+1}  + \| \bbZ - \rho \bbA \| ~<~ \rho \tau_k - \| \bbZ - \rho \bbA \| ~\leq s_k, 
%\end{align}
%where $s_i$ is the $i$th singular value of $\bbZ$ for $i \in [N]$. 
%\end{lemma}
%\begin{proof}
%We may write 
%\[	\bbZ = \rho \bbA + \big( \bbZ - \rho \bbA \big).	\]
%Recall that $s_i$ are the singular values of $\bbZ$. Then, from 
%Weyl's inequality as in Lemma \ref{lem:weyls} the result follows 
%immediately. 
%\end{proof}

%\paragraph{Column Operator Induced by HSVT}\label{sec:induced_operator.}
Observe that the function $\varphi^{\bB}_{\lambda}: \mathbb{R}^{N} \to \mathbb{R}^{N}$ defined in \eqref{eq:prox_vector} 
is actually the operator acting on the column spaces, which is induced by HSVT.
%We then have the following relation between the $\text{HSVT}_{\lambda}$ and $\varphi_{\lambda}$. 
\begin{lemma} \label{lemma:column_representation}
Let $\bB \in \mathbb{R}^{N \times p}$ and $\lambda \geq 0$ be given. Then for any $j \in [p]$,
\begin{align}
	\varphi^{\bB}_{\lambda} \big( \bB_{\cdot,j} \big) = \emph{HSVT}_{\lambda}\big(\bB \big)_{\cdot,j}.
\end{align}
\end{lemma}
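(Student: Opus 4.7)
The plan is essentially a direct computation using the SVD of $\bB$. Let $e_j \in \Reals^p$ denote the $j$-th canonical basis (column) vector, so that $\bB_{\cdot, j} = \bB e_j$. Writing $\bB = \sum_{i=1}^{N \wedge p} \sigma_i(\bB) x_i y_i^T$, I would first expand
\begin{align*}
    \bB_{\cdot, j} = \bB e_j = \sum_{i=1}^{N \wedge p} \sigma_i(\bB) \, (y_i^T e_j) \, x_i.
\end{align*}

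Next, I would apply the linear operator $\varphi^{\bB}_{\lambda}$ to both sides and substitute the definition \eqref{eq:prox_vector}. This produces a double sum indexed by $(k, i)$, which collapses by orthonormality of the left singular vectors: since $x_k^T x_i = \delta_{ki}$, only the diagonal terms $k = i$ survive. Explicitly,
\begin{align*}
    \varphi^{\bB}_{\lambda}\!\left(\bB_{\cdot, j}\right)
        &= \sum_{k=1}^{N \wedge p} \mathbb{1}(\sigma_k(\bB) \ge \lambda) \, x_k x_k^T \! \left(\sum_{i=1}^{N \wedge p} \sigma_i(\bB) \, (y_i^T e_j) \, x_i \right) \\
        &= \sum_{k=1}^{N \wedge p} \mathbb{1}(\sigma_k(\bB) \ge \lambda) \, \sigma_k(\bB) \, (y_k^T e_j) \, x_k.
\end{align*}

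Finally, I would recognize the right-hand side as $\text{HSVT}_{\lambda}(\bB) \, e_j$, using \eqref{eq:prox_matrix}, which by definition is precisely the $j$-th column $\text{HSVT}_{\lambda}(\bB)_{\cdot, j}$. There is no real obstacle here; the statement is essentially a bookkeeping identity that records the fact that the $\varphi$ operator is built from the projection onto the left singular vectors surviving the threshold, and applying it to a column of $\bB$ reproduces the HSVT-truncated column. The only thing to be careful about is to handle the indexing cleanly (so that the delta function argument from orthonormality lines up the indicator $\mathbb{1}(\sigma_k \ge \lambda)$ with the singular value $\sigma_k$).
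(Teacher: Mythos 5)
Your proposal is correct and follows essentially the same computation as the paper's proof: expand $\bB_{\cdot,j}$ via the SVD, apply $\varphi^{\bB}_{\lambda}$, collapse the resulting double sum using orthonormality of the left singular vectors, and identify the surviving sum with the $j$-th column of $\text{HSVT}_{\lambda}(\bB)$. No gaps.
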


\begin{proof}  
By \eqref{eq:prox_vector} and the orthonormality of the left singular vectors, 
\begin{align*}
	\varphi^{\bB}_{\lambda} \big( \bB_{\cdot,j} \big) 
		&= \sum_{i=1}^{N \wedge p} \mathbb{1}(\sigma_i (\bB) \ge \lambda) x_i x_i^T \bB_{\cdot, j} 
		= \sum_{i = 1}^{N \wedge p} \mathbb{1}(\sigma_i (\bB) \ge \lambda) x_i x_i^T 
			\Big( \sum_{i'=1}^{N \wedge p} \sigma_{i'} (\bB) x_{i'} y_{i'} \Big)_{\cdot, j} \\
		&= \sum_{i, i' = 1}^{N \wedge p} \sigma_{i'}(\bB)  \mathbb{1}(\sigma_i (\bB) \ge \lambda) x_i x_i^T x_{i'} (y_{i'})_{j} 
		= \sum_{i, i' = 1}^{N \wedge p} \sigma_{i'}(\bB)  \mathbb{1}(\sigma_i (\bB) \ge \lambda) x_i \delta_{i i'} (y_{i'})_{j} \\
		&= \sum_{i = 1}^{N \wedge p} \mathbb{1} (\sigma_i (\bB) \ge \lambda^*) \sigma_i x_i (y_i)_j  \\
		&= \text{HSVT}_{\lambda}(\bB)_{\cdot, j}.
\end{align*}
This completes the proof. 
\end{proof}

%In other words, $\varphi^{\bB}_{\lambda}: \mathbb{R}^{n} \to \mathbb{R}^{n}$ is a map induced by the HSVT operation on 
%$\bB$

\begin{remark} Suppose we have missing data. Then the estimator $\bhA$ has the following representation:
\[
\bbhA = \frac{1}{\hrho}  \emph{HSVT}_{\lambda^*}(\bbZ) = \frac{1}{\hrho}  \sum_{i=1}^{N \wedge p} s_i \mathbb{1}(s_i \ge \lambda^*) \cdot u_i v_i^T.
\]
By Lemma \ref{lemma:column_representation}, we note that
\begin{align}\label{eq:repsentation_A_hat}
\bbhA_{\cdot, j} = \frac{1}{\hrho} \varphi^{\bZ}_{\lambda^*}(\bbZ_{\cdot, j}). 
\end{align}
\end{remark}

Lastly, we remark that the column operator induced by HSVT is a contraction.
\begin{lemma}\label{lemma:HSVT_contraction}
Let $\bB \in \mathbb{R}^{N \times p}$ and $\lambda \geq 0$ be given. Then for any $j \in [p]$, 
\[
\norm{\emph{HSVT}_{\lambda}\big(\bB  \big)_{\cdot,j}}_2 \le \norm{\bB_{\cdot,j} }_2. 
\]
\end{lemma}

\begin{proof}
By \eqref{eq:prox_vector} and Lemma \ref{lemma:column_representation}, we have
\begin{align*}
\norm{\text{HSVT}_{\lambda}\big( \bB \big)_{\cdot,j}  }_2^2
	&= \norm{\varphi^{\bB}_{\lambda} \big( \bB_{\cdot,j} \big) }_2^2 
	= \norm{\sum_{i=1}^{N \wedge p} \mathbb{1}(\sigma_i (\bB) \ge \lambda) \cdot x_i x_i^T \cdot \bB_{\cdot, j} }_2^2 \\
	&\stackrel{(a)}= \sum_{i=1}^{N \wedge p} \norm{\mathbb{1}(\sigma_i (\bB) \ge \lambda) \cdot x_i x_i^T \cdot \bB_{\cdot, j} }_2^2 
	\le \sum_{i=1}^{N \wedge p} \norm{ x_i x_i^T \cdot \bB_{\cdot, j} }_2^2 \\
	&\stackrel{(b)}=  \norm{ \sum_{i=1}^{N \wedge p} x_i x_i^T \cdot \bB_{\cdot, j} }_2^2 
	= \norm{\bB_{\cdot,j} }_2^2.
\end{align*}
Note that (a) and (b) use the orthonormality of the left singular vectors. 

\end{proof}

\begin{lemma} \label{lemma:column_error}
Suppose that (1) $\norm{ \bbZ - \rho \bbA} \leq \Delta $ for some $\Delta \geq 0$ and (2) $ \frac{1}{\varepsilon} \rho \le \widehat{\rho} \le \varepsilon \rho$ for some $\varepsilon \ge 1$.

Let $\bhA = \bbZ^{\text{HSVT}, k}$, $\bbA^k = \text{HSVT}_{\tau_k}(\bbA)$ and $\bbE = \bbA-\bbA^k$.  Then for any $j \in [p]$, 
\begin{align}
	\Big\| \bbhA_{\cdot, j} - \bbA_{\cdot, j} \Big\|_2^2 
		&\leq		\frac{4\varepsilon^2}{\rho^2} \frac{\Delta^2}{ \rho^2( \tau_k - \tau_{k+1})^2}	
				\big\| \bbZ_{\cdot, j} - \rho \bbA_{\cdot, j} \big\|_2^2	\nonumber\\
		&\quad	+ \frac{4\varepsilon^2}{\rho^2}  \Big \| \varphi^{\bbA^k}(\bbZ_{\cdot, j} - \rho \bbA_{\cdot, j})  \Big \|_2^2
				+ 2 (\varepsilon-1)^2 \| \bbA_{\cdot, j} \|_2^2. \nonumber \\
		&\quad	+ \frac{2 \Delta^2}{ \rho^2( \tau_k - \tau_{k+1})^2} \norm{ \bbA^k_{\cdot, j} }_2^2 
				+ 2\, \norm{ \bbE_{\cdot, j}}_2^2. \label{eq:main_MCSE_inequality}
\end{align}
\end{lemma}

\begin{proof}
First, we recall two conditions assumed in the Lemma that will be used in the proof: (1) $\norm{ \bbZ - \rho \bbA} \leq \Delta $ for some $\Delta \geq 0$, (2) $ \frac{1}{\varepsilon} \rho \le \widehat{\rho} \le \varepsilon \rho$ for some $\varepsilon \ge 1$. 

We will use notation $\lambda^* = s_k$, the $k$th singular value of $\bbZ$ for simplicity. We prove our Lemma in three steps. 
\paragraph{Step 1.}
Fix a column index $j \in [p]$. Observe that
\begin{equation*}
	\bbhA_{\cdot, j} - \bbA_{\cdot, j}
		= \Big( \bbhA_{\cdot, j} - \varphi_{\lambda^*}^{\bbZ} \big( \bbA_{\cdot, j} \big) \Big) + \Big( \varphi_{\lambda^*}^{\bbZ} \big( \bbA_{\cdot, j} \big) - \bbA_{\cdot, j} \Big).
\end{equation*}
By choice,  $\text{rank}(\bbhA) = k$. By definition (see \eqref{eq:prox_vector}), we have that $\varphi_{\lambda^*}^{\bbZ}: \Reals^N \to \Reals^N$ is 
the projection operator onto the span of the top $k$ left singular vectors of $\bbZ$, namely, $\text{span}\big\{ u_1, \ldots, u_k \big\}$. Therefore, 
\[
\varphi_{\lambda^*}^{\bbZ} (\bbA_{\cdot, j}) - \bbA_{\cdot, j} \in \text{span}\{u_1, \ldots, u_k \}^{\perp}
\]
and by \eqref{eq:repsentation_A_hat} (using Lemma \ref{lemma:column_representation}),
\[
\bbhA_{\cdot, j} -  \varphi_{\lambda^*}^{\bbZ} (\bbA_{\cdot, j}) = \frac{1}{\hrho}\varphi_{\lambda^*}^{\bbZ} (\bbZ_{\cdot, j}) - \varphi_{\lambda^*}^{\bbZ} (\bbA_{\cdot, j}) \in \text{span}\{u_1, \ldots, u_k \}.
\]
Hence, $ \langle \bbhA_{\cdot, j} -  \varphi_{\lambda^*}^{\bbZ} (\bbA_{\cdot, j}), \varphi_{\lambda^*}^{\bbZ} (\bbA_{\cdot, j}) - \bbA_{\cdot, j} \rangle = 0$ 
and
\begin{equation}\label{eq:column_error}
	\Big\| \bbhA_{\cdot, j} - \bbA_{\cdot, j} \Big\|_2^2
		= \Big\| \bbhA_{\cdot, j} - \varphi_{\lambda^*}^{\bbZ} \big( \bbA_{\cdot, j} \big) \Big\|_2^2		%\label{eq:column_error.term1} \\
			+ \Big\| \varphi_{\lambda^*}^{\bbZ} \big( \bbA_{\cdot, j} \big) - \bbA_{\cdot, j} \Big\|_2^2	%\label{eq:column_error.term2}
\end{equation}
by the Pythagorean theorem. It remains to bound the terms on the right hand side of \eqref{eq:column_error}. 
%and \eqref{eq:column_error.term2}.

\paragraph{Step 2.}
We begin by bounding the first term on the right hand side of \eqref{eq:column_error}. 
Again applying Lemma \ref{lemma:column_representation}, we can rewrite
\begin{align*}
	\bbhA_{\cdot, j} - \varphi_{\lambda^*}^{\bbZ}(\bbA_{\cdot, j})
		&= \frac{1}{\hrho}\varphi_{\lambda^*}^{\bbZ}(\bbZ_{\cdot, j}) - \varphi_{\lambda^*}^{\bbZ}(\bbA_{\cdot, j})
		= \varphi_{\lambda^*}^{\bbZ} \Big(\frac{1}{\hrho} \bbZ_{\cdot, j} - \bbA_{\cdot, j} \Big)\\
		&=  \frac{1}{\hrho}  \varphi_{\lambda^*}^{\bbZ} (\bbZ_{\cdot, j} - \rho \bbA_{\cdot, j} ) 
			+ \frac{\rho - \hrho}{\hrho} \varphi_{\lambda^*}^{\bbZ}( \bbA_{\cdot, j} ).
\end{align*}
Using the Parallelogram Law (or, equivalently, combining Cauchy-Schwartz and AM-GM inequalities), we obtain
\begin{align}
\norm{\bbhA_{\cdot, j} - \varphi_{\lambda^*}^{\bbZ}(\bbA_{\cdot, j})}_2^2 
	&= \norm{\frac{1}{\hrho}  \varphi_{\lambda^*}^{\bbZ} (\bbZ_{\cdot, j} - \rho \bbA_{\cdot, j} ) 
		+ \frac{\rho - \hrho}{\hrho} \varphi_{\lambda^*}^{\bbZ}( \bbA_{\cdot, j}) }_2^2		\nonumber\\
	&\leq 2 \, \norm{\frac{1}{\hrho}  \varphi_{\lambda^*}^{\bbZ} (\bbZ_{\cdot, j} - \rho \bbA_{\cdot, j} ) }_2^2 
		+ 2 \, \norm{ \frac{\rho - \hrho}{\hrho} \varphi_{\lambda^*}^{\bbZ}( \bbA_{\cdot, j} )}_2^2		\nonumber\\
	&\leq \frac{2}{\hrho^2} \norm{\varphi_{\lambda^*}^{\bbZ}(\bbZ_{\cdot, j} - \rho \bbA_{\cdot, j})}_2^2
		+ 2 \Big( \frac{\rho - \hrho}{\hrho}\Big)^2 \| \bbA_{\cdot, j} \|_2^2		\nonumber\\
	&\leq \frac{2\varepsilon^2}{\rho^2} \norm{\varphi_{\lambda^*}^{\bbZ}(\bbZ_{\cdot, j} - \rho \bbA_{\cdot, j})}_2^2
		+ 2 (\varepsilon-1)^2 \| \bbA_{\cdot, j} \|_2^2.		\label{eqn:term.1a}%\nonumber\\
%	&\stackrel{(a)}\leq	 \frac{C}{\rho^2} \norm{\varphi_{\lambda^*}^{\bZ}(\bZ_{\cdot, j} - \rho \bA_{\cdot, j})}_2^2
%		+ C \Big( \frac{\rho - \hrho}{\rho}\Big)^2 \| \bA_{\cdot, j} \|_2^2.		
\end{align}
%\DG{We may not want to use an abstract $C$}
%where (a) is due to conditioning on the event $\mathcal{E}_2$, which allows us to identify $\hrho$ as $\rho$. 
because Condition 2 implies $\frac{1}{\widehat{\rho}} \leq \frac{\varepsilon}{\rho}$ and 
$\left( \frac{\rho - \widehat{\rho}}{\widehat{\rho}} \right)^2 \leq (\varepsilon-1)^2$.

Note that the first term of \eqref{eqn:term.1a} can further be decomposed (using the Parallelogram Law and recalling 
$\bbA = \bbA^k + \bbE$, we have
\begin{align}
	&\norm{\varphi_{\lambda^*}^{\bbZ}(\bbZ_{\cdot, j} - \rho \bbA_{\cdot, j})}_2^2 \nonumber\\
	&\qquad\le 2  \,  \Big \| \varphi_{\lambda^*}^{\bbZ}(\bbZ_{\cdot, j} - \rho \bbA_{\cdot, j}) 
	 	- \varphi^{\bbA^k}(\bbZ_{\cdot, j} - \rho \bbA_{\cdot, j})  \Big \|_2^2 
	 	+2 \,  \Big \| \varphi^{\bbA^k}(\bbZ_{\cdot, j} - \rho \bbA_{\cdot, j})  \Big \|_2^2.		 \label{eq:tricky}
\end{align}

We now bound the first term on the right hand side of \eqref{eq:tricky} separately. First, we apply the Davis-Kahan 
$\sin \Theta$ Theorem (see \cite{davis1970rotation, wedin1972perturbation}) to arrive at the following inequality:
\begin{align}\label{eq:davis_kahan}
	\big\| \mathcal{P}_{u_1, \ldots, u_k} - \mathcal{P}_{\mu_1, \ldots, \mu_k} \big\|_2
		&\leq  \frac{\| \bbZ - \rho \bbA \|}{\rho \tau_k - \rho \tau_{k+1}}
		\leq  \frac{\Delta}{ \rho( \tau_k  - \tau_{k+1})}
\end{align}
where $\mathcal{P}_{u_1, \ldots, u_k}$ and $\mathcal{P}_{\mu_1, \ldots, \mu_k}$ denote the projection operators onto 
the span of the top $k$ left singular vectors of $\bbZ$ and $\bbA^k$, respectively. We utilized Condition 1 to bound 
$\| \bbZ - \rho \bbA \|_2 \leq \Delta$.
Then it follows that
\begin{align*} 
	 \Big \| \varphi_{\lambda^*}^{\bbZ}(\bbZ_{\cdot, j} - \rho \bbA_{\cdot, j}) 
	 	- \varphi^{\bbA^k}(\bbZ_{\cdot, j} - \rho \bbA_{\cdot, j})  \Big \|_2
	&\le \big\| \mathcal{P}_{u_1, \ldots, u_k} - \mathcal{P}_{\mu_1, \ldots, \mu_k} \big\|_2
		\big\| \bbZ_{\cdot, j} - \rho \bbA_{\cdot, j} \big\|_2	\nonumber\\
	&\le \frac{\Delta}{ \rho( \tau_k  - \tau_{k+1})}	\big\| \bbZ_{\cdot, j} - \rho \bbA_{\cdot, j} \big\|_2.	
\end{align*}
Combining the inequalities together, we have
\begin{align}
	\Big\| \bbhA_{\cdot, j} - \varphi_{\lambda^*}^{\bbZ} \big( \bbA_{\cdot, j} \big) \Big\|_2^2
		&\leq		\frac{4\varepsilon^2}{\rho^2} \frac{\Delta^2}{ \rho^2( \tau_k  - \tau_{k+1})^2}	
				\big\| \bbZ_{\cdot, j} - \rho \bbA_{\cdot, j} \big\|_2^2	\nonumber\\
		&\quad	+ \frac{4\varepsilon^2}{\rho^2}  \Big \| \varphi^{\bbA^k}(\bbZ_{\cdot, j} - \rho \bbA_{\cdot, j})  \Big \|_2^2
				+ 2 (\varepsilon-1)^2 \| \bbA_{\cdot, j} \|_2^2.	\label{eq:term_step2}
\end{align}

\paragraph{Step 3.}
We now bound the second term of \eqref{eq:column_error}. Recalling 
$\bbA = \bbA^k + \bbE$ and using \eqref{eq:davis_kahan}
\begin{align}
	\norm{\varphi_{\lambda^*}^{\bbZ} \big( \bbA_{\cdot, j} \big) - \bbA_{\cdot, j} }_2^2
		&= \norm{ \varphi_{\lambda^*}^{\bbZ} \big( \bbA^k_{\cdot, j} + \bbE_{\cdot, j} \big) - \bbA^k_{\cdot, j} - \bbE_{\cdot, j} }_2^2	\nonumber\\
		&\leq 2 \, \norm{\varphi_{\lambda^*}^{\bbZ} \big( \bbA^k_{\cdot, j} \big) - \bbA^k_{\cdot, j}  }_2^2 +  2 \,\norm{ \varphi_{\lambda^*}^{\bbZ} \big( \bbE_{\cdot, j} \big) - \bbE_{\cdot, j}}_2^2		\nonumber\\
		&= 2 \, \norm{\varphi_{\lambda^*}^{\bbZ} \big( \bbA^k_{\cdot, j} \big) - \varphi^{\bbA^k} \big( \bbA^k_{\cdot, j} \big)  }_2^2 +  2 \,\norm{ \varphi_{\lambda^*}^{\bbZ} \big( \bbE_{\cdot, j} \big) - \bbE_{\cdot, j}}_2^2		\nonumber\\
		&\leq 2 \, \norm{\mathcal{P}_{u_1, \ldots, u_k} - \mathcal{P}_{\mu_1, \ldots, \mu_k} }^2 \norm{ \bbA^k_{\cdot, j} }_2^2 + 2 \,\norm{ \bbE_{\cdot, j} }_2^2 \nonumber
		\\ &\le   \frac{2 \Delta^2}{ \rho^2( \tau_k  - \tau_{k+1})^2} \norm{ \bbA^k_{\cdot, j} }_2^2 + 2\, \norm{ \bbE_{\cdot, j} }_2^2.
		\label{eq:term_step3}
\end{align}

\noindent
Inserting \eqref{eq:term_step2} and \eqref{eq:term_step3} back to \eqref{eq:column_error} completes the proof.
\end{proof}

\subsection{High probability events for conditioning} 
We define the following four events: 
\begin{align}
% 	\mathcal{E}_1 &:= \bigg\{ \norm{ \bbZ - \rho \bbA} \leq  \sqrt{ N \rho } \sqrt{ \rho \gamma^2 + (1-\rho) 1 }	\nonumber\\
% 				&\qquad\qquad\qquad\quad
% 					+2 C(\alpha) \sqrt{p} (K_{\alpha} + 1) \Big( 1 + 9 \log(Np) \Big)^{\frac{1}{\alpha}} \sqrt{ \log(Np) } \bigg\}	\label{eqn:cE_1}
\mathcal{E}_1 &:= \bigg\{ \norm{ \bbZ - \rho \bbA} \leq  
\sqrt{C_1} \left( \sqrt{N} + \sqrt{p} \log^{\frac 3 2}(Np) \right) \bigg\}	\label{eqn:cE_1}
	\\ \mathcal{E}_2&:= \Bigg\{ \bigg(1 - \sqrt{\frac{20 \log (Np)}{ Np \rho}}\bigg) \rho \le \widehat{\rho} 
					\le \frac{1}{1 - \sqrt{\frac{20 \log (Np)}{ Np \rho}}} \rho \Bigg\}		\label{eqn:cE_2}
	\\ \mathcal{E}_3 &:= \bigg\{ \max_{j \in [p]} \Big \| \bbZ_{\cdot, j} - \rho \bbA_{\cdot, j}  \Big \|_2^2 
							\leq 11 C K_{\alpha}^2 N \log^{\frac{2}{\alpha}}(Np)\bigg\}		\label{eqn:cE_3}
	\\ \mathcal{E}_4 &:= \bigg\{ \max_{j \in [p]} \Big \| \varphi^{\bbA^k}(\bbZ_{\cdot, j} - \rho \bbA_{\cdot, j})  \Big \|_2^2 \leq 
		11C K_{\alpha}^2 r \log^{\frac{2}{\alpha}}(Np)\bigg\}.		\label{eqn:cE_4}
\end{align}
Here, $C_1 = C(1+\sigma^2)(1+\gamma^2)(1+K_\alpha^2)$ for some constant $C > 0$.  
%The proof of Lemmas \ref{lemma:E1}, \ref{lemma:E2}, \ref{lemma:E3}, and \ref{lemma:E4} can be found in Appendix \ref{sec:conditioning_events}.

% bound on E_1
\paragraph{Observation 1: $\cE_1$ occurs with high probability.}
% \begin{lemma}\label{lemma:E1}
% 	Suppose that Properties \ref{prop:bounded_covariates}, \ref{prop:covariate_noise_structure} for $\alpha \ge 1$ hold. Then for any $\delta_1 > 0$,
% 	\begin{align*}
% 		\norm{\bbZ - \rho \bbA}
% 		&\leq \sqrt{N \rho} \sqrt{ \rho \gamma^2 + (1-\rho) 1  } \\
% 		&\qquad+ C(\alpha) \sqrt{1+\delta_1}\sqrt{p} (K_{\alpha} + 1)
% 		\Big( 1 + \big(2 + \delta_1 \big) \log(Np) \Big)^{\frac{1}{\alpha}} \sqrt{ \log(Np) } 
% 	\end{align*}
% with probability at least $ 1 - \frac{2}{N^{1 + \delta_1} p^{\delta_1}}$; $C(\alpha)>0$ is an absolute constant that depends only on $\alpha$. 
% \end{lemma}

\begin{lemma}\label{lemma:E1}
Suppose that Properties \ref{prop:bounded_covariates}, \ref{prop:covariate_noise_structure} for $\alpha \ge 1$ hold. Then, $\Prob{\cE_1^c} \leq \frac{2}{N^{10} p^{10}}$. 
\end{lemma} 

\begin{proof}
The proof is complete by letting $\delta_1 = 10$ in Theorem \ref{thm:spectral_norm_noise_matrix_bound}.
% 	Observe that $\norm{ \bbA_{\cdot, j} }_2^2 \leq N $ when Property \ref{prop:bounded_covariates} holds, 
% 	and $\norm{ \Ex \bH^T \bH } \leq N \gamma^2 $ when Property \ref{prop:covariate_noise_structure} holds.
% 	By Theorem \ref{thm:spectral_norm_noise_matrix_bound}, we know that for any $\delta_1 > 0$,
% 	\begin{align*}
% 		\norm{\bZ - \rho \bA} 
% 		&\leq \sqrt{ N (1+\sigma^2)(1+\gamma^2)}
% 		+ C \sqrt{1+\delta_1}\sqrt{p} (K_{\alpha} + \Gamma)
% 		\Big( 1 + \big(2 + \delta_1 \big) \log(Np) \Big)^{\frac{1}{\alpha}} \sqrt{ \log(Np) } 
% 	\end{align*}
% 	with probability at least $ 1 - \frac{2}{N^{1 + \delta_1} p^{\delta_1}}$.
\end{proof}

% \begin{remark}\label{rem:E1}
% 	By letting $\delta_1 = 10$ in 
% 	Theorem \ref{thm:spectral_norm_noise_matrix_bound}, 
% 	%Lemma \ref{lemma:E1}, 
% 	we have that $\Prob{\cE_1^c} \leq \frac{2}{N^{10} p^{10}}$. 
% \end{remark}

% bound on E_2
\paragraph{Observation 2: $\cE_2$ occurs with high probability.}
\begin{lemma}\label{lemma:E2}
For any $\varepsilon > 1$,
	\begin{align*}
		\Prob{  \frac{1}{\varepsilon} \rho \le \widehat{\rho} \le \varepsilon \rho } 
			\geq 1 - 2 \exp \left( - \frac{(\varepsilon - 1)^2}{2 \varepsilon^2} Np\rho \right).
	\end{align*}
\end{lemma}
\begin{proof}
Recall that $\hrho = \frac{1}{Np} \sum_{i=1}^{N} \sum_{j=1}^p \mathbb{1}(Z_{ij} \neq \star) \vee \frac{1}{Np}.$
By the binomial Chernoff bound, for $\varepsilon > 1$,
	\begin{align*}
		\Prob{ \widehat{\rho} > \varepsilon \rho }
			&\leq \exp\left( - \frac{(\varepsilon - 1 )^2}{\varepsilon + 1} Np \rho \right),	\quad\text{and}\\
		\Prob{ \widehat{\rho} < \frac{1}{\varepsilon} \rho  }
			&\leq \exp \left( - \frac{(\varepsilon - 1)^2}{2 \varepsilon^2} Np \rho \right).
	\end{align*}
	By the union bound,
	\[
		\Prob{  \frac{1}{\varepsilon} \rho \le \widehat{\rho} \le \varepsilon \rho }
			\geq 1 - \Prob{ \widehat{\rho} > \varepsilon \rho } -  \Prob{ \widehat{\rho} < \frac{1}{\varepsilon} \rho  }.
	\]
	Noticing $\varepsilon + 1 < 2 \varepsilon < 2 \varepsilon^2$ for all $\varepsilon > 1$ completes the proof.
\end{proof}

\begin{remark}\label{rem:E2}
	Let $\varepsilon = \left(1 - \sqrt{\frac{20 \log (Np)}{ Np \rho}} \right)^{-1} $ in Lemma \ref{lemma:E2}. 
	Then, $\Prob{\cE_2^c} \leq \frac{2}{N^{10}p^{10}}$. 
\end{remark}

\paragraph{Observation 3: $\cE_3$ and $\cE_4$ occur with high probability.}

\subsubsection{Two Helper Lemmas for $\cE_3$ and $\cE_4$}

\begin{lemma} \label{lemma:masked_noise_col_norm}
Assume Properties \ref{prop:bounded_covariates}, \ref{prop:covariate_noise_structure} hold. 
Then for any $\alpha \geq 1$, 
\[	\norm{ \bbZ_{\cdot, j} - \rho \bbA_{\cdot, j} }_{\psi_{\alpha}} \le C(K_{\alpha} + 1),	\qquad\forall j \in [p]	\] 
where $C > 0$ is an absolute constant. 
\end{lemma}

\begin{proof}
%	It suffices to show that $\norm{ \bbZ_{\cdot, j} - \rho \bbA_{\cdot, j}}_{\psi_{\alpha}} \leq C \norm{ \bbZ_{i, \cdot} - \rho \bbA_{i, \cdot}}_{\psi_{\alpha}}$. 
	Observe that
	\begin{align*}
		 \norm{ \bbZ_{\cdot, j} - \rho \bbA_{\cdot, j}}_{\psi_{\alpha}}
		 	&=  \sup_{u \in \mathbb{S}^{N-1}} \norm{ u^T \big( \bbZ_{\cdot, j} - \rho \bbA_{\cdot, j} \big)}_{\psi_{\alpha}}\\
			&=  \sup_{u \in \mathbb{S}^{N-1}} \norm{ u^T \big( \bbZ - \rho \bbA \big) e_j }_{\psi_{\alpha}}\\
			&=  \sup_{u \in \mathbb{S}^{N-1}} \norm{ \sum_{i=1}^n u_i \big( \bbZ_{i, \cdot} - \rho \bbA_{i, \cdot} \big) e_j }_{\psi_{\alpha}}\\
			&\stackrel{(a)}\leq C \sup_{u \in \mathbb{S}^{N-1}} \left( \sum_{i=1}^n  u_i^2  \norm{ \big( \bbZ_{i, \cdot} - \rho \bbA_{i, \cdot} \big)  e_j }_{\psi_{\alpha}}^2 \right)^{1/2}\\
			&\leq C \max_{i \in [N]} \norm{ \bbZ_{i, \cdot} - \rho \bbA_{i, \cdot} }_{\psi_{\alpha}},
	\end{align*}
	where (a) follows from Lemma \ref{lem:ind_sum}. Then the conclusion follows from Lemma \ref{lemma:masked_noise_row_norm}.
\end{proof}

\begin{lemma}\label{lem:norm_psi_alpha}
	Let $W_1, \ldots, W_n$ be a sequence of $\psi_{\alpha}$-random variables for some $\alpha \geq 1$. For any $t \geq 0$,
	\[
		\Prob{ \sum_{i=1}^n W_i^2 > t } 
		\leq 2 \sum_{i=1}^n \exp \left( - \left( \frac{t}{n \| W_i \|_{\psi_{\alpha}}^2 }\right)^{\alpha/2}  \right).
	\]
\end{lemma}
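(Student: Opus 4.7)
The plan is to combine the classical tail bound for $\psi_{\alpha}$-random variables with a simple pigeonhole-style union bound. First I would observe the elementary inequality: if $\sum_{i=1}^n W_i^2 > t$, then there must exist some index $i \in [n]$ such that $W_i^2 > t/n$, since otherwise all $n$ terms would be at most $t/n$ and the sum would be at most $t$. Therefore
\[
	\Big\{ \sum_{i=1}^n W_i^2 > t \Big\} \subset \bigcup_{i=1}^n \Big\{ W_i^2 > t/n \Big\} = \bigcup_{i=1}^n \Big\{ |W_i| > \sqrt{t/n}\, \Big\}.
\]

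Next, I would invoke the standard tail estimate for $\psi_\alpha$-random variables: from the defining moment inequality $\Ex \exp(|W_i|^\alpha / \|W_i\|_{\psi_\alpha}^\alpha) \le 2$ together with Markov's inequality, one obtains
\[
	\Prob{|W_i| > s} \leq 2 \exp\!\Big( - s^\alpha / \|W_i\|_{\psi_\alpha}^\alpha \Big),\qquad \forall s \geq 0.
\]
Setting $s = \sqrt{t/n}$ gives $\Prob{|W_i| > \sqrt{t/n}} \leq 2 \exp\!\big(-(t/(n \|W_i\|_{\psi_\alpha}^2))^{\alpha/2}\big)$.

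Finally, applying the union bound over $i \in [n]$ yields the stated inequality:
\[
	\Prob{\sum_{i=1}^n W_i^2 > t}
	\leq \sum_{i=1}^n \Prob{|W_i| > \sqrt{t/n}}
	\leq 2 \sum_{i=1}^n \exp\!\left( - \Big(\frac{t}{n \|W_i\|_{\psi_\alpha}^2}\Big)^{\alpha/2} \right).
\]
There is no real obstacle here: the lemma is genuinely a one-step consequence of the tail bound definition of the $\psi_\alpha$-norm combined with a trivial pigeonhole and union bound, and notably requires no independence assumption on the $W_i$'s. The only mild subtlety to flag is the justification of the $\psi_\alpha$ tail bound from the Orlicz-type definition in \eqref{eq:alpha_norm}, which is standard (e.g., Proposition 2.7.1 in Vershynin), but I would cite it explicitly rather than reprove it.
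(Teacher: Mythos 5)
Your proposal is correct and follows essentially the same route as the paper's own proof: the pigeonhole observation that $\sum_i W_i^2 > t$ forces some $W_i^2 > t/n$, a union bound, and the standard $\psi_\alpha$ tail estimate. Your explicit note that no independence is needed and your flag about citing the tail bound rather than reproving it are both reasonable additions, but the argument is the same.
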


\begin{proof}
	Note that $ \sum_{i=1}^n W_i^2 > t$ implies that there exists at least one $i \in [n]$ with $W_i^2 > \frac{t}{n}$.
	By the union bound,
	\begin{align*}
		\Prob{ \sum_{i=1}^n W_i^2 > t } 
			&\leq \sum_{i=1}^n \Prob{W_i^2 > \frac{t}{n}}
			&\leq \sum_{i=1}^n \Prob{|W_i| > \sqrt{\frac{t}{n}}}
			\leq \sum_{i=1}^n 2 \exp \left( - \left( \frac{t}{n \| W_i \|^2_{\psi_{\alpha}} }\right)^{\alpha/2}  \right). 
	\end{align*}
	%\DS{Should this be $ \le 2\sum_{i=1}^N \exp \left( - \left( \frac{t^{1/2}}{N^{1/2} \| W_i \|_{\psi_{\alpha}} }\right)^{\alpha}  \right)$?}
\end{proof}

\begin{lemma}\label{lemma:E3}
	Suppose Properties \ref{prop:bounded_covariates}, \ref{prop:covariate_noise_structure} hold. Then,
	\[
	\Prob{\cE_3^c} \leq \frac{2}{N^{10}p^{10}}.
	\]
\end{lemma}

\begin{proof}
	Fix $j \in [p]$. Let $e_i \in \Reals^N$ denote the $i$-th canonical basis of $\Reals^N$ (column vector representation). 
	Note that
	\[
	\Big \| \bbZ_{\cdot, j} - \rho \bbA_{\cdot, j}  \Big \|_2^2 = \sum_{i=1}^N \Big( e_i^T \big( \bbZ_{\cdot, j} - \rho \bbA_{\cdot, j}  \big) \Big)^2
	\]
	and $e_i^T \big( \bbZ_{\cdot, j} - \rho \bbA_{\cdot, j}  \big)$ is a $\psi_{\alpha}$-random variable with 
	$\norm{ e_i^T \big( \bbZ_{\cdot, j} - \rho \bbA_{\cdot, j}  \big) }_{\psi_{\alpha}} \leq \norm{ \bbZ_{\cdot, j} - \rho \bbA_{\cdot, j} }_{\psi_{\alpha}} $. 
	By Lemma \ref{lemma:masked_noise_col_norm}, $ \norm{ \bbZ_{\cdot, j} - \rho \bbA_{\cdot, j} }_{\psi_{\alpha}}  \leq C(K_{\alpha} + 1)$ 
	for all $j \in [p]$. By Lemma \ref{lem:norm_psi_alpha} and the union bound,
	\begin{align*}
		\Prob{\mathcal{E}_3^c}
			&\leq \sum_{j=1}^p \Prob{ \Big \| \bbZ_{\cdot, j} - \rho \bbA_{\cdot, j}  \Big \|_2^2 > 11C^2 (K_{\alpha} + 1)^2 N \log^{\frac{2}{\alpha}}(Np) }\\
			&\leq 2 \sum_{j=1}^p \sum_{i=1}^N \exp\left( -11 \log(Np) \right)\\
			&= \frac{2}{N^{10}p^{10}}.
	\end{align*}
\end{proof}

\begin{lemma}\label{lemma:E4}
	Suppose properties \ref{prop:bounded_covariates}, \ref{prop:covariate_noise_structure} hold. Then,
	\[
	\Prob{\cE_4^c} \leq \frac{2}{N^{10}p^{10}}.
	\]
\end{lemma}

\begin{proof}
	Recall that $\text{rank}(\bbA^k) = k$. We write 
	\[
		 \Big \| \varphi^{\bbA^k}(\bbZ_{\cdot, j} - \rho \bbA_{\cdot, j})  \Big \|_2^2
		 	= \sum_{i=1}^{k}  \Big( u_i^T (\bbZ_{\cdot, j} - \rho \bbA_{\cdot, j}) \Big)^2,
	\]
	where $u_1, \ldots, u_{k}$ denote the left singular vectors of $\bbA^k$. The proof has the same structure 
	with that of Lemma \ref{lemma:E3} with $u_1, \ldots, u_{k}$ in place of $e_1, \ldots, e_n$. 
\end{proof}

\subsection{Completing Proof of Lemma \ref{lemma:mcse_hvst}}\label{sec:proof_thm_MCSE}

\begin{proof}[Proof of Lemma \ref{lemma:mcse_hvst}]
Recall that our goal is to establish
\begin{align}\label{eq:MCSE_train_bound.1}
\Ex[\|\bZ^{\text{HSVT}, k} - \bbA\|_{2,\infty}^2]		
&\le \frac{C (K_\alpha^2 + 1)}{\rho^2} \Big( k + \frac{N \Delta^2}{\rho^2 (\tau_k - \tau_{k+1})^2} \Big) \log^{\frac2\alpha} Np + 2 \| \bbA^k - \bbA \|_{2,\infty}^2,
\end{align}
where $C > 0$ is a universal constant. To that end, define $E \triangleq \cE_1 \cap \cE_2 \cap \cE_3 \cap \cE_4$. 
By Lemmas \ref{lemma:E1}, \ref{lemma:E2}, \ref{lemma:E3} and \ref{lemma:E4}, it follows that
\begin{align*}
	\Prob{E^c} 
		&\leq \Prob{\cE_1^c \cup \cE_2^c \cup \cE_3^c \cup \cE_4^c}~\leq~\frac{8}{N^{10}p^{10}}.
\end{align*} 
Observe (with $\bhA = \bZ^{\text{HSVT}, k}$), 
\begin{align}
\Ex[\|\bhA - \bbA\|_{2,\infty}^2]	&= \Ex  \max_{j \in [p]} \norm{\bhA_{\cdot, j} - \bA_{\cdot, j}}_2^2 	\nonumber\\ 
	&= \Ex \bigg[ \max_{j \in [p]} \norm{\bhA_{\cdot, j} - \bA_{\cdot, j}}_2^2  \cdot \mathbb{1}(E)\bigg] 
			+  \Ex \left[ \max_{j \in [p]} \norm{\bhA_{\cdot, j} - \bA_{\cdot, j}}_2^2  \cdot \mathbb{1}(E^c)\right].
		\label{eq:MCSE_decomp}
\end{align}
In the rest of the proof, we upper bound the two terms in \eqref{eq:MCSE_decomp} separately.

\paragraph{Upper bound on the first term in \eqref{eq:MCSE_decomp}.} Under event $E$, from 
Lemma \ref{lemma:column_error}, we have 
\begin{align*}
	\max_{j \in [p]} \Big\| \bbhA_{\cdot, j} - \bbA_{\cdot, j} \Big\|_2^2 
	&\leq	 \frac{C (K_{\alpha} + 1)^2}{\rho^2}
		\left( \frac{  \Delta^2 N }{ \rho^2( \tau_{r} - \tau_{r+1} )^2} + r \right) \log^{\frac{2}{\alpha}}(Np) 	
		+ 2\max_{j \in [p]} \norm{ \bbE_{\cdot, j} }_2^2.
\end{align*}
where $C > 0$ is an absolute constant. To see this, note that $\varepsilon^2 \leq 10$ since $\rho \geq \frac{64 \log(Np)}{Np}$;
$\|\bbA^k_j\|_2^2 \leq \|\bbA_j\|_2^2 \leq N$, again appealing to the contraction property of the HSVT operator (refer to Lemma \ref{lemma:HSVT_contraction} and Property \ref{prop:bounded_covariates}). Since $\Prob{E} \leq 1$,  it follows that 
\begin{align}
\Ex \bigg[ \norm{\bhA - \bA}_{2, \infty}^2  \cdot \mathbb{1}(E)\bigg] 
		&\leq	 \frac{C (K_{\alpha} + 1)^2}{\rho^2}
		\left( \frac{  \Delta^2 N }{ \rho^2( \tau_{r} - \tau_{r+1} )^2} + r \right) \log^{\frac{2}{\alpha}}(Np) 	
		+ 2\max_{j \in [p]} \norm{ \bbE_{\cdot, j} }_2^2.		\label{eqn:term1_upper}
\end{align}

\paragraph{Upper bound on the second term in \eqref{eq:MCSE_decomp}.}
%\DG{***TODO: the proof does not seem complete to me... can we just modify our algorithm so that we threshold entries of 
%$\bhA$ if it exceeds $1$ (or any overestimator of $1$)?}

To begin with, we note that for any $j \in [p]$,
\[
	\norm{\bhA_{\cdot, j} - \bA_{\cdot,j}}_2 
		\le\norm{ \bhA_{\cdot, j} }_2 +  \| \bA_{\cdot, j} \|_2 
\]
by triangle inequality. By the model assumption, the covariates are bounded (Property \ref{prop:bounded_covariates}) and $\norm{ \bA_{\cdot, j} }_2 \leq \sqrt{N}$ for all $j \in [p]$.
By definition, for any $j \in [p]$,
\[	 \bhA_{\cdot, j} =  \frac{1}{\hrho} \text{HSVT}_{\lambda}\big(\bZ  \big)_{\cdot,j} 	\]
for a given threshold $\lambda = s_k$, the $k$th singular value of $\bbZ$.
Therefore,
\[
	\| \bhA_{\cdot, j} \|_2 = \frac{1}{\hrho} \big\|  \text{HSVT}_{\lambda}\big(\bZ  \big)_{\cdot,j} \big\|_2 
		\stackrel{(a)}{\le} Np \big\| \text{HSVT}_{\lambda}\big(\bZ  \big)_{\cdot,j} \big\|_2
		\stackrel{(b)}{\le} Np \| \bZ_{\cdot,j} \|_2.
\]
Here, (a) follows from $\hrho \geq \frac{1}{Np}$; and 
(b) follows from Lemma \ref{lemma:HSVT_contraction} -- the $\text{HSVT}$ operator is a contraction on the columns. 
%	Thus, when conditioned on $\cE_2$, for any $j \in [p]$ and some absolute constant $C > 0$,
%	where the last inequality follows from Lemma \ref{lemma:E2}.
%	
%	Recall that we assumed $\rho \geq \frac{32 \log(Np)}{Np}$
%	
\begin{align} 
	 \max_{j \in [p]} \| \bhA_{\cdot, j} - \bA_{\cdot,j} \|_2 
		 &\le \max_{j \in [p]} \| \bhA_{\cdot, j} \|_2 + \max_{j \in [p]} \, \| \bA_{\cdot, j} \|_2 \nonumber
		 \\&\le Np ~ \max_{j \in [p]} \| \bZ_{\cdot,j} \|_2 +  \sqrt{N} \nonumber
		 \\&\le \big(N^{\frac{3}{2}} p + \sqrt{N} \big) + N^{\frac{3}{2}}p \max_{ij} \abs{\eta_{ij}}	\nonumber
		  \\&\le 2N^{\frac{3}{2}} p \Big( 1 + \max_{ij} \abs{\eta_{ij}} \Big)	\label{eq:proof_mcse_1}
\end{align}
because $\max_{j \in [p]} \| \bZ_{\cdot,j} \|_2 ~	\leq \sqrt{N} \max_{i,j} \abs{Z_{ij}}
\leq \sqrt{N} \max_{i,j} \abs{A_{ij} + \eta_{ij}} 	\leq \sqrt{N} \big( 1 +  \max_{i,j} \abs{\eta_{ij}} \big)$.
Now we apply Cauchy-Schwarz inequality on $  \Ex \big[ \max_{j \in [p]} \|\bhA_{\cdot, j} - \bA_{\cdot, j}\|_2^2
 \cdot \mathbb{1}(E^c)\big]$ to obtain 
\begin{align}
	\Ex \Big[ \max_{j \in [p]} \big\|\bhA_{\cdot, j} - \bA_{\cdot, j}\big\|_2^2 \cdot \mathbb{1}(E^c)\Big] 
		&\leq \Ex \Big[ \max_{j \in [p]} \big\|\bhA_{\cdot, j} - \bA_{\cdot, j}\big\|_2^4\Big]^{\frac{1}{2}} 
			\cdot \Ex \Big[ \mathbb{1}(E^c)\Big]^{\frac{1}{2}}	\nonumber\\
		&= \Ex \Big[ \max_{j \in [p]} \big\|\bhA_{\cdot, j} - \bA_{\cdot, j}\big\|_2^4\Big]^{\frac{1}{2}} 
			\cdot \Prob{ E^c }^{\frac{1}{2}}		\nonumber\\
		&\stackrel{(a)}{\leq} 4 N^3 p^2 \Ex \Big[ \Big( 1 + \max_{ij} \abs{\eta_{ij}} \Big)^4 \Big]^{\frac{1}{2}}
			\cdot \Prob{ E^c }^{\frac{1}{2}}		\nonumber\\
		&\stackrel{(b)}{\leq} 8\sqrt{2} N^3 p^2  \Big( 1 + \Ex \big[ \max_{ij} \abs{\eta_{ij}}^4 \big] \Big)^{\frac{1}{2}}
			\cdot \Prob{ E^c }^{\frac{1}{2}}		\nonumber\\
		&\stackrel{(c)}{\leq} 8\sqrt{2} N^3 p^2  \Big( 1 + \Ex \big[ \max_{ij} \abs{\eta_{ij}}^4 \big]^{\frac{1}{2}} \Big)
			\cdot \Prob{ E^c }^{\frac{1}{2}}.		\label{eq:proof_mcse_2}
\end{align}
Here, (a) follows from \eqref{eq:proof_mcse_1}; and (b) follows from Jensen's inequality:
\begin{align*}	
	\Ex \Big[ \Big( 1 + \max_{ij} \abs{\eta_{ij}} \Big)^4 \Big]
		&= \Ex \bigg[ \Big( \frac{1}{2} \big( 2  + 2 \max_{ij} \abs{\eta_{ij}} \big) \Big)^4  \bigg]
		\leq \Ex \bigg[ \frac{1}{2} \Big( 2^4 + \big( 2 \max_{ij} \abs{\eta_{ij}} \big)^4 \Big) \bigg]\\
		&= 8 \Ex \Big[ 1 + \max_{ij} \abs{\eta_{ij}}^4 \Big]
		= 8 \Big( 1 +  \Ex [  \max_{ij} \abs{\eta_{ij}}^4 ] \Big) ;
\end{align*}
and (c) follows from the trivial inequality: $\sqrt{A + B} \leq \sqrt{A} + \sqrt{B}$ for any $A, B \geq 0$.

Now it remains to find an upper bound for $\Ex \big[ \max_{ij} \abs{\eta_{ij}}^4 \big]$. Note that for any $\alpha >0$ and 
$\theta \geq 1$, $\eta_{ij}$ being a $\psi_{\alpha}$-random variable implies that $\big| \eta_{ij}\big|^{\theta}$ is a 
$\psi_{\alpha/\theta}$-random variable. With the choice of $\theta =4 $, we have that 
\begin{align} 
	\Ex \max_{ij} \abs{\eta_{ij}}^4 &\le C_1 K_\alpha^4 \log^{\frac{4}{\alpha}}(Np)	\label{eq:proof_mcse_3}
\end{align}
for some absolute constant $C_1 > 0$ by Lemma \ref{lemma:max_subg} (also see Remark \ref{rem:max_psialpha}).
Inserting \eqref{eq:proof_mcse_3} to \eqref{eq:proof_mcse_2} yields
\begin{align}
	\Ex \Big[ \max_{j \in [p]} \big\|\bhA_{\cdot, j} - \bA_{\cdot, j}\big\|_2^2 \cdot \mathbb{1}(\Econd^c)\Big] 
		&\leq 8\sqrt{2} N^3 p^2  \Big( 1 + {C_1'}^{1/2} K_\alpha^2 \log^{\frac{2}{\alpha}}(Np) \Big)
			\cdot \Prob{ E^c }^{\frac{1}{2}}	\nonumber\\
		&\stackrel{(a)}{\leq }
			32 \Big( 1 + {C_1}^{1/2} K_\alpha^2 \log^{\frac{2}{\alpha}}(Np) \Big) \frac{1}{N^2 p^{2}},		\label{eqn:term2_upper}
\end{align}
where (a) follows from recalling that $\Prob{E^c} \leq 8/N^{10}p^{10}$.

\paragraph{Concluding the Proof.}
Thus, combining \eqref{eqn:term1_upper} and \eqref{eqn:term2_upper} in \eqref{eq:MCSE_decomp} 
and noticing that term in \eqref{eqn:term2_upper} is smaller order term than that in \eqref{eqn:term1_upper}, by
defining appropriate constant $C > 0$, we obtain:
\begin{align}
\Ex[\|\bhA - \bbA\|_{2,\infty}^2] & \leq 
\frac{C (K_{\alpha} + 1)^2}{\rho^2}
		\left( \frac{  \Delta^2 N }{ \rho^2( \tau_{r} - \tau_{r+1} )^2} + r \right) \log^{\frac{2}{\alpha}}(Np) 	
		+ 2\max_{j \in [p]} \norm{ \bbE_{\cdot, j} }_2^2 \\
		& \qquad + \frac{C}{N^2 p^{2}} \Big(1 + K^2_\alpha \log^{\frac{2}{\alpha}}(Np) \Big),
\end{align}
with 
$
\Delta = \sqrt{C^*} \left( \sqrt{N} + \sqrt{p} \log^{\frac 3 2}(Np) \right)$
and $C^* = C(1+\sigma^2)(1+\gamma^2)(1+K_\alpha^2)$. 

\noindent The proof is complete by defining  $C' = C(1+\sigma^2)(1+\gamma^2)(1+K_\alpha^4)$ and simplifying the bound further in a straightforward manner.
\end{proof}

%\newpage
%\subsection{Proof of Theorem \ref{thm:mcse_whp}} \label{sec:appendix_noisy_regression_via_MCSE}
%As outlined in Section \ref{sec:proof_sketch_mcse_whp}, we prove Theorem \ref{thm:mcse_whp} by utilizing Lemma
%\ref{lemma:column_error} and some conditioning events that ensure certain ``nice'' properties hold. 
%First, in \ref{sec:more_svt}, we state some important properties of HSVT, i.e., we present a helper lemma and define 
%an operator induced by HSVT that acts on the column space. Then, we prove Lemma \ref{lemma:column_error} in 
%\ref{sec:key_lemma}.

%%% ==============================================
%%%	The KEY LEMMA
%%% ==============================================

{
\section{Proof of Lemma \ref{lemma:mcse_hvst_LVM}}\label{sec:appendix_mcse_hsvt_LVM}
The proof of Lemma \ref{lemma:mcse_hvst_LVM} follows very closely the structure of the proof of Lemma \ref{lemma:mcse_hvst}. The key difference is Lemma \ref{lemma:column_error} no longer holds as is, and needs to be redefined for $\bbA^{\text{(lr)}}$ instead of $\bbA^k$.

\begin{lemma} \label{lemma:column_error_LVM}
Suppose that (1) $\norm{ \bbZ - \rho \bbA} \leq \Delta $ for some $\Delta \geq 0$ and (2) $ \frac{1}{\varepsilon} \rho \le \widehat{\rho} \le \varepsilon \rho$ for some $\varepsilon \ge 1$.

Let $\bbA = \bbA^{\emph{(lr)}} + \bbE^{\emph{(lr)}}$. 
Let $r = \text{rank}( \bbA^{\emph{(lr)}})$ and  $\tau_{r}$ denote the $r$-th singular value of $\bbA^{\emph{(lr)}}$.
Let $\bhA = \bbZ^{\emph{HSVT}, r}$.
Then for any $j \in [p]$, 
\begin{align}
	\Big\| \bbhA_{\cdot, j} - \bbA_{\cdot, j} \Big\|_2^2 
		&\leq		\frac{8\varepsilon^2}{\rho^4}\Big( \frac{\Delta^2}{  \tau^2_r} +  \frac{\|  \bbE^{\emph{(lr)}} \|_2^2}{\tau^2_r} \Big)	
				\Big(\big\| \bbZ_{\cdot, j} - \rho \bbA_{\cdot, j} \big\|_2^2 + \norm{ \bbA^{\emph{(lr)}}_{\cdot, j} }_2^2 \Big)	\nonumber \\
		&\quad	+ \frac{4\varepsilon^2}{\rho^2}  \Big \| \varphi^{\bbA^{\emph{(lr)}}}(\bbZ_{\cdot, j} - \rho \bbA_{\cdot, j})  \Big \|_2^2 + 2 (\varepsilon-1)^2 \| \bbA_{\cdot, j} \|_2^2 \\
		&\quad	+ 2\, \norm{ \bbE^{\emph{(lr)}}_{\cdot, j} }_2^2. \label{eq:main_MCSE_inequality_LVM}
\end{align}
\end{lemma}

The proof of Lemma \ref{lemma:column_error_LVM} is almost identical to the proof of Lemma \ref{lemma:column_error}, 
except the replacement of the subspace perturbation bound \eqref{eq:davis_kahan} with a new one in \eqref{eq:davis_kahan_LVM}. 
Roughly speaking, we control the principal angle between the top-$r$ left singular space of $\bbZ$ and the column space of $\bbA^{\text{(lr)}}$ 
by means of triangle inequality, using the column space of $\bbA^{k}$ as an intermeidary. Despite the similarity to the proof of 
Lemma \ref{lemma:column_error}, we present the full proof of Lemma \ref{lemma:column_error_LVM} for future reference in synthetic control literature.

\begin{proof}
%First, we recall three conditions assumed in the Lemma that will be used in the proof:  
%\begin{enumerate}
%	\item $\norm{ \bbZ - \rho \bbA} \leq \Delta $ for some $\Delta \geq 0$. 
%	\item $ \frac{1}{\varepsilon} \rho \le \widehat{\rho} \le \varepsilon \rho$ for some $\varepsilon \ge 1$. 
%%	\item $2 \Delta < \rho (\tau_k - \tau_{k+1})$, where $\tau_i$ is $i$th singular value of $\bbA$.
%\end{enumerate}
We will use notation $\lambda^* = s_r$, the $r$th singular value of $\bbZ$ for simplicity. We prove our Lemma in three steps. 
\paragraph{Step 1.}
Fix a column index $j \in [p]$. Observe that
\begin{equation*}
	\bbhA_{\cdot, j} - \bbA_{\cdot, j}
		= \Big( \bbhA_{\cdot, j} - \varphi_{\lambda^*}^{\bbZ} \big( \bbA_{\cdot, j} \big) \Big) + \Big( \varphi_{\lambda^*}^{\bbZ} \big( \bbA_{\cdot, j} \big) - \bbA_{\cdot, j} \Big).
\end{equation*}
By choice,  $\text{rank}(\bbhA) = r$. By definition (see \eqref{eq:prox_vector}), we have that $\varphi_{\lambda^*}^{\bbZ}: \Reals^N \to \Reals^N$ is 
the projection operator onto the span of the top $r$ left singular vectors of $\bbZ$, namely, $\text{span}\big\{ u_1, \ldots, u_r \big\}$. Therefore, 
\[
\varphi_{\lambda^*}^{\bbZ} (\bbA_{\cdot, j}) - \bbA_{\cdot, j} \in \text{span}\{u_1, \ldots, u_r \}^{\perp}
\]
and by \eqref{eq:repsentation_A_hat} (using Lemma \ref{lemma:column_representation}),
\[
\bbhA_{\cdot, j} -  \varphi_{\lambda^*}^{\bbZ} (\bbA_{\cdot, j}) = \frac{1}{\hrho}\varphi_{\lambda^*}^{\bbZ} (\bbZ_{\cdot, j}) - \varphi_{\lambda^*}^{\bbZ} (\bbA_{\cdot, j}) \in \text{span}\{u_1, \ldots, u_r \}.
\]
Hence, $ \langle \bbhA_{\cdot, j} -  \varphi_{\lambda^*}^{\bbZ} (\bbA_{\cdot, j}), \varphi_{\lambda^*}^{\bbZ} (\bbA_{\cdot, j}) - \bbA_{\cdot, j} \rangle = 0$ 
and
\begin{equation}\label{eq:column_error_LVM}
	\Big\| \bbhA_{\cdot, j} - \bbA_{\cdot, j} \Big\|_2^2
		= \Big\| \bbhA_{\cdot, j} - \varphi_{\lambda^*}^{\bbZ} \big( \bbA_{\cdot, j} \big) \Big\|_2^2		%\label{eq:column_error.term1} \\
			+ \Big\| \varphi_{\lambda^*}^{\bbZ} \big( \bbA_{\cdot, j} \big) - \bbA_{\cdot, j} \Big\|_2^2	%\label{eq:column_error.term2}
\end{equation}
by the Pythagorean theorem. It remains to bound the terms on the right hand side of \eqref{eq:column_error_LVM}. 
%and \eqref{eq:column_error.term2}.

\paragraph{Step 2.}
We begin by bounding the first term on the right hand side of \eqref{eq:column_error_LVM}. 
Again applying Lemma \ref{lemma:column_representation}, we can rewrite
\begin{align*}
	\bbhA_{\cdot, j} - \varphi_{\lambda^*}^{\bbZ}(\bbA_{\cdot, j})
		&= \frac{1}{\hrho}\varphi_{\lambda^*}^{\bbZ}(\bbZ_{\cdot, j}) - \varphi_{\lambda^*}^{\bbZ}(\bbA_{\cdot, j})
		= \varphi_{\lambda^*}^{\bbZ} \Big(\frac{1}{\hrho} \bbZ_{\cdot, j} - \bbA_{\cdot, j} \Big)\\
		&=  \frac{1}{\hrho}  \varphi_{\lambda^*}^{\bbZ} (\bbZ_{\cdot, j} - \rho \bbA_{\cdot, j} ) 
			+ \frac{\rho - \hrho}{\hrho} \varphi_{\lambda^*}^{\bbZ}( \bbA_{\cdot, j} ).
\end{align*}
Using the Parallelogram Law (or, equivalently, combining Cauchy-Schwartz and AM-GM inequalities), we obtain
\begin{align}
\norm{\bbhA_{\cdot, j} - \varphi_{\lambda^*}^{\bbZ}(\bbA_{\cdot, j})}_2^2 
	&= \norm{\frac{1}{\hrho}  \varphi_{\lambda^*}^{\bbZ} (\bbZ_{\cdot, j} - \rho \bbA_{\cdot, j} ) 
		+ \frac{\rho - \hrho}{\hrho} \varphi_{\lambda^*}^{\bbZ}( \bbA_{\cdot, j}) }_2^2		\nonumber\\
	&\leq 2 \, \norm{\frac{1}{\hrho}  \varphi_{\lambda^*}^{\bbZ} (\bbZ_{\cdot, j} - \rho \bbA_{\cdot, j} ) }_2^2 
		+ 2 \, \norm{ \frac{\rho - \hrho}{\hrho} \varphi_{\lambda^*}^{\bbZ}( \bbA_{\cdot, j} )}_2^2		\nonumber\\
	&\leq \frac{2}{\hrho^2} \norm{\varphi_{\lambda^*}^{\bbZ}(\bbZ_{\cdot, j} - \rho \bbA_{\cdot, j})}_2^2
		+ 2 \Big( \frac{\rho - \hrho}{\hrho}\Big)^2 \| \bbA_{\cdot, j} \|_2^2		\nonumber\\
	&\leq \frac{2\varepsilon^2}{\rho^2} \norm{\varphi_{\lambda^*}^{\bbZ}(\bbZ_{\cdot, j} - \rho \bbA_{\cdot, j})}_2^2
		+ 2 (\varepsilon-1)^2 \| \bbA_{\cdot, j} \|_2^2.		\label{eqn:term.1a_LVM}%\nonumber\\
%	&\stackrel{(a)}\leq	 \frac{C}{\rho^2} \norm{\varphi_{\lambda^*}^{\bZ}(\bZ_{\cdot, j} - \rho \bA_{\cdot, j})}_2^2
%		+ C \Big( \frac{\rho - \hrho}{\rho}\Big)^2 \| \bA_{\cdot, j} \|_2^2.		
\end{align}
%\DG{We may not want to use an abstract $C$}
%where (a) is due to conditioning on the event $\mathcal{E}_2$, which allows us to identify $\hrho$ as $\rho$. 
because Condition 2 implies $\frac{1}{\widehat{\rho}} \leq \frac{\varepsilon}{\rho}$ and 
$\left( \frac{\rho - \widehat{\rho}}{\widehat{\rho}} \right)^2 \leq (\varepsilon-1)^2$.

Note that the first term of \eqref{eqn:term.1a_LVM} can further be decomposed (using the Parallelogram Law and recalling 
$\bbA = \bbA^{\text{(lr)}} + \bbE^{\text{(lr)}}$, we have
\begin{align}
	&\norm{\varphi_{\lambda^*}^{\bbZ}(\bbZ_{\cdot, j} - \rho \bbA_{\cdot, j})}_2^2 \nonumber\\
	&\qquad\le 2  \,  \Big \| \varphi_{\lambda^*}^{\bbZ}(\bbZ_{\cdot, j} - \rho \bbA_{\cdot, j}) 
	 	- \varphi^{\bbA^{\text{(lr)}}}(\bbZ_{\cdot, j} - \rho \bbA_{\cdot, j})  \Big \|_2^2 
	 	+2 \,  \Big \| \varphi^{\bbA^{\text{(lr)}}}(\bbZ_{\cdot, j} - \rho \bbA_{\cdot, j})  \Big \|_2^2.		 \label{eq:tricky_LVM}
\end{align}

We now bound the first term on the right hand side of \eqref{eq:tricky} separately. First, we apply the Davis-Kahan 
$\sin \Theta$ Theorem (see \cite{davis1970rotation, wedin1972perturbation}) to arrive at the following inequality:
\begin{align}\label{eq:davis_kahan_LVM}
	\big\| \mathcal{P}_{u_1, \ldots, u_r} - \mathcal{P}_{\mu_1, \ldots, \mu_r} \big\|_2
		&\leq  \frac{\| \bbZ - \rho \bbA^{\text{(lr)}} \|_2}{\rho \tau_r } \\
		&\leq  \frac{\| \bbZ - \rho \bbA \|_2}{\rho \tau_r} +  \frac{\| \rho \bbA - \rho \bbA^{\text{(lr)}} \|_2}{\rho \tau_r} \\
		&\leq  \frac{\Delta}{ \rho \tau_r} +  \frac{\|  \bbE^{\text{(lr)}} \|_2}{\rho \tau_r},
\end{align}
where $\mathcal{P}_{u_1, \ldots, u_r}$ and $\mathcal{P}_{\mu_1, \ldots, \mu_r}$ denote the projection operators onto 
the span of the top $r$ left singular vectors of $\bbZ$ and $\bbA^{\text{(lr)}}$, respectively. We utilized Condition 1 to bound 
$\| \bbZ - \rho \bbA \|_2 \leq \Delta$.
Then it follows that
\begin{align*} 
	 \Big \| \varphi_{\lambda^*}^{\bbZ}(\bbZ_{\cdot, j} - \rho \bbA_{\cdot, j}) 
	 	- \varphi^{\bbA^{\text{(lr)}}}(\bbZ_{\cdot, j} - \rho \bbA_{\cdot, j})  \Big \|_2
	&\le \big\| \mathcal{P}_{u_1, \ldots, u_r} - \mathcal{P}_{\mu_1, \ldots, \mu_r} \big\|_2
		\big\| \bbZ_{\cdot, j} - \rho \bbA_{\cdot, j} \big\|_2	\nonumber\\
	&\le \Big( \frac{\Delta}{ \rho \tau_r} +  \frac{\|  \bbE^{\text{(lr)}} \|_2}{\rho \tau_r} \Big)	\big\| \bbZ_{\cdot, j} - \rho \bbA_{\cdot, j} \big\|_2.	
\end{align*}
Combining the inequalities together, we have
\begin{align}
	\Big\| \bbhA_{\cdot, j} - \varphi_{\lambda^*}^{\bbZ} \big( \bbA_{\cdot, j} \big) \Big\|_2^2
		&\leq		\frac{8\varepsilon^2}{\rho^4}\Big( \frac{\Delta^2}{  \tau^2_r} +  \frac{\|  \bbE^{\text{(lr)}} \|_2^2}{\tau^2_r} \Big)	
				\big\| \bbZ_{\cdot, j} - \rho \bbA_{\cdot, j} \big\|_2^2	\nonumber\\
		&\quad	+ \frac{4\varepsilon^2}{\rho^2}  \Big \| \varphi^{\bbA^{\text{(lr)}}}(\bbZ_{\cdot, j} - \rho \bbA_{\cdot, j})  \Big \|_2^2
				+ 2 (\varepsilon-1)^2 \| \bbA_{\cdot, j} \|_2^2.	\label{eq:term_step2_LVM}
\end{align}

\paragraph{Step 3.}
We now bound the second term of \eqref{eq:column_error_LVM}. Recalling 
$\bbA = \bbA^{\text{(lr)}} + \bbE^{\text{(lr)}}$ and using \eqref{eq:davis_kahan_LVM}
\begin{align}
	\norm{\varphi_{\lambda^*}^{\bbZ} \big( \bbA_{\cdot, j} \big) - \bbA_{\cdot, j} }_2^2
		&= \norm{ \varphi_{\lambda^*}^{\bbZ} \big( \bbA^{\text{(lr)}}_{\cdot, j} + \bbE^{\text{(lr)}}_{\cdot, j} \big) -\bbA^{\text{(lr)}}_{\cdot, j} - \bbE^{\text{(lr)}}_{\cdot, j} }_2^2	\nonumber\\
		&\leq 2 \, \norm{\varphi_{\lambda^*}^{\bbZ} \big( \bbA^{\text{(lr)}}_{\cdot, j} \big) - \bbA^{\text{(lr)}}_{\cdot, j}  }_2^2 +  2 \,\norm{ \varphi_{\lambda^*}^{\bbZ} \big( \bbE^{\text{(lr)}}_{\cdot, j} \big) - \bbE^{\text{(lr)}}_{\cdot, j}}_2^2		\nonumber\\
		&= 2 \, \norm{\varphi_{\lambda^*}^{\bbZ} \big( \bbA^{\text{(lr)}}_{\cdot, j} \big) - \varphi^{\bbA^{\text{(lr)}}} \big( \bbA^{\text{(lr)}}_{\cdot, j} \big)  }_2^2 +  2 \,\norm{ \varphi_{\lambda^*}^{\bbZ} \big( \bbE^{\text{(lr)}}_{\cdot, j} \big) - \bbE^{\text{(lr)}}_{\cdot, j}}_2^2		\nonumber\\
		&\leq 2 \, \norm{\mathcal{P}_{u_1, \ldots, u_r} - \mathcal{P}_{\mu_1, \ldots, \mu_r} }^2 \norm{\bbA^{\text{(lr)}}_{\cdot, j} }_2^2 + 2 \,\norm{ \bbE^{\text{(lr)}}_{\cdot, j} }_2^2 \nonumber
		\\ &\le  4\Big( \frac{\Delta^2}{ \rho^2 \tau^2_r} +  \frac{\|  \bbE^{\text{(lr)}} \|_2^2}{\rho^2 \tau^2_r} \Big)  \norm{ \bbA^{\text{(lr)}}_{\cdot, j} }_2^2 + 2\, \norm{ \bbE^{\text{(lr)}}_{\cdot, j} }_2^2.
		\label{eq:term_step3_LVM}
\end{align}

\noindent
Inserting \eqref{eq:term_step2_LVM} and \eqref{eq:term_step3_LVM} back to \eqref{eq:column_error_LVM} completes the proof.
\end{proof}

\subsection{Completing Proof of Lemma \ref{lemma:mcse_hvst_LVM}}\label{sec:proof_thm_MCSE_LVM}
\begin{proof}[Proof of Lemma \ref{lemma:mcse_hvst_LVM}]
Proof follows in an identical fashion to that of Lemma \ref{lemma:mcse_hvst} (see Section \ref{sec:proof_thm_MCSE}) and using the bound in Lemma \ref{lemma:column_error_LVM} instead of the one in Lemma \ref{lemma:column_error}. 
\end{proof}
}

\section{Proof of Corollary \ref{cor:LVM-spectra-error}}\label{sec:proof_LVM_MSE}

\begin{proof}
From Proposition \ref{prop:lvm}, we have that $r \le C(\zeta, K) \delta^{-K}$ and $\| \bE^{\text{(lr)}} \|_\infty \le  \cL \cdot \delta^\zeta$.
So, $\tau^{2}_r \ge C Np / r \ge C Np /  (C(\zeta, K) \delta^{-K})$.
\begin{align} 
\emph{MSE}_{\Omega}(\hY) 
& \le \frac{4\sigma^2 r}{n}  
+ \frac{C' \| \beta^*\|_1^2}{\rho^4} \left( \frac{n \vee p \vee \| \bE^{\text{(lr)}} \|_2^2}{\tau_r^2} + \frac{r}{n} \right) \log^5(np) 
+ \frac{6 \|\beta^*\|_1^2}{n} \| \bE^{\text{(lr)}} \|^2_{2, \infty} 
\,+\,  \frac{20}{n} \| \phi \|_2^2 \\ 
& \le  
 \frac{C'  \|  \beta^*\|_1^2}{\rho^4} \left( \frac{n \vee p \vee \| \bE^{\text{(lr)}} \|_2^2}{\tau_r^2} + \frac{r}{n} + \frac{\| \bE^{\text{(lr)}} \|^2_{2, \infty}}{n} \right) \log^5(np) 
\,+\,  \frac{20}{n} \| \phi \|_2^2 \\
& \le  
 \frac{C'  \|  \beta^*\|_1^2}{\rho^4} \left( \frac{ r \| \bE^{\text{(lr)}} \|_2^2}{N p} + \frac{r}{n \wedge p} +\frac{\| \bE^{\text{(lr)}} \|^2_{2, \infty}}{n} \right) \log^5(np) 
\,+\,  \frac{20}{n} \| \phi \|_2^2 \\
& \le  
 \frac{C'  \|  \beta^*\|_1^2}{\rho^4} \left( \ r \| \bE^{\text{(lr)}} \|_\infty^2 + \frac{r}{n \wedge p} + \frac{\| \bE^{\text{(lr)}} \|^2_{2, \infty}}{n} \right) \log^5(np) 
\,+\,  \frac{20}{n} \| \phi \|_2^2 \\
& \le  
 \frac{C' \|  \beta^*\|_1^2}{\rho^4} \left( \ r \| \bE^{\text{(lr)}} \|_\infty^2 + \frac{r}{n \wedge p}  \right) \log^5(np) 
\,+\,  \frac{20}{n} \| \phi \|_2^2 \\
& \le  
 \frac{C'  \|  \beta^*\|_1^2}{\rho^4} \left( C(\zeta, K) \delta^{-K} \cL^{2} \cdot \delta^{2\zeta} + \frac{C(\zeta, K) \delta^{-K}}{n \wedge p}  \right) \log^5(np) 
\,+\,  \frac{20}{n} \| \phi \|_2^2 \\
& \le  
 \frac{C' C(\zeta, K) \cL^{2} \|  \beta^*\|_1^2}{\rho^4} \left( \delta^{-K}  \cdot \delta^{2\zeta} + \frac{ \delta^{-K}}{n \wedge p}  \right) \log^5(np) 
\,+\,  \frac{20}{n} \| \phi \|_2^2 
\end{align} 
Substituting $\delta = (1 / (n \wedge p))^{{1 / 2\zeta}}$ completes the proof.
%
%Immediate from Corollary \ref{cor:training_pcr_generic_LVM}, Property \ref{property:spectra}, and Proposition \ref{prop:lvm} with $\delta = (1 / (n \wedge p))^{{1 / \zeta}}$.
%
\end{proof}

% test error (hsvt)
%\newpage

%% train error 
%\section{Proof of Theorem \ref{thm:training_pcr}} \label{sec:appendix_training_pcr}
%

% proposition 
\section{Proof of Proposition \ref{prop:low_rank_sparsity}} \label{sec:low_rank_prop}
\begin{proof}
We have
\begin{align*}
M &= \sum_{i=1}^p v_i \bX_{\cdot, i}  
\\ &= \sum_{i=1}^k v_i \bX_{\cdot, i} + \sum_{j=k+1}^p v_j \bX_{\cdot, j}
\\ &= \sum_{i=1}^k v_i \bX_{\cdot, i} + \sum_{j=k+1}^p v_j \Big( \sum_{i=1}^k c_i(j) \bX_{\cdot, i} \Big)
\\ &= \sum_{i=1}^k v_i \bX_{\cdot, i} + \sum_{i=1}^k \bX_{\cdot, i} \Big( \sum_{j=k+1}^p c_i(j) v_j \Big)
\\ &= \sum_{i=1}^k \Big(v_i + \sum_{j=k+1}^p c_i(j) v_j \Big) \bX_{\cdot, i}. 
\end{align*}
Define $v^*_i = v_i + \sum_{j=k+1}^p c_i(j) v_j$ for $i \in [k]$ and $0$ for $i \notin [k]$. 
Then $\| v^* \|_{0} \le k$.
Further, 
\begin{align}
\| v^* \|_{1} 
= \sum_{i=1}^k  \left| \Big(v_i + \sum_{j=k+1}^p c_i(j) v_j \Big) \right|
\le    C'' \sum_{i=1}^k  \Big(|v_i| + \sum_{j=k+1}^p | v_j | \Big) 
\le   C'' k \| v \|_{1}
\end{align}
\end{proof}

% TEST ERROR %
\section{Proof of Theorem \ref{thm:test_pcr}}\label{sec:mse_test_hsvt}
%%%%%

The proof of Theorem \ref{thm:test_pcr} follows the standard approach in terms of establishing 
generalization error bounds using Rademacher complexity (see \cite{Bartlett_2003} and references therein). 
We note two important contributions: (1) relating our notion of generalization error to the standard definitions; 
(2) arguing that the Rademacher complexity of our matrix estimation regression algorithm (using HSVT) 
can be identified with the Rademacher complexity of regression with $\ell_0$-regularization. 

%\paragraph{Outline.} We start by introducing some useful notation. Lemma \ref{lemma:rademacher_bound} then 
%bounds the expected generalization error in terms of the Rademacher complexity of the class of squared loss 
%functions for linear predictors. Lemma \ref{lemma:rademacher_final} further bounds the corresponding Rademacher complexity. These two lemmas are used to conclude the proof of Theorem \ref{thm:test_pcr}. However, to prove Lemma \ref{lemma:rademacher_final}, we use Proposition \ref{prop:low_rank_sparsity} to justify analyzing the Rademacher 
%complexity of squared loss functions under $k$-sparse linear predictors; this is summarized in Lemma \ref{lemma:rademacher_complexity_linear_functions_sparse} and its proof is found in Appendix \ref{sec:rademacher_helper_lemmas}.  

\subsection{Background}
\paragraph{Notation, Setup.} We consider PCR with parameter $k$ for some $k \geq 1$. 
Recall that the training sample set $\Omega \subset [N]$, with $|\Omega| = n$, is sampled uniformly at random and without replacement from $[N]$. 
Further, as argued in Proposition \ref{prop:equivalence}, PCR with parameter $k$ is equivalent to Linear Regression with pre-processing of noisy covariates using HSVT. 
Hence, we let $\bhA = \bbZ^{\text{HSVT}, k}$ and $\hbeta =  \beta^{\text{HSVT}, k}$. 

\paragraph{Generalization error and Rademacher complexity.} 

\noindent We measure the quality of our estimates through the following two quantities of error. For any hypothesis $\beta \in \mathbb{R}^p$ and training set $\Omega$, the empirical error is 
\begin{align} \label{eq:train_error.2}
	\hcE_{\Omega}(\beta) &= \frac1n\sum_{\omega \in \Omega} \Big( \bhA_{\omega, \cdot} \beta - \bA_{\omega, \cdot} \beta^* \Big)^2. 
\end{align}
Similarly, we define the overall error as
\begin{align} \label{eq:test_error.2}
	\cE(\beta) &= \frac{1}{N} \sum_{i=1}^N \Big( \bhA_{i, \cdot} \beta - \bA_{i, \cdot} \beta^* \Big)^2.  
\end{align}

\noindent 
For any linear hypothesis class $\F \subset \Rb^{p}$, define the generalization error as the supremum of the gap between \eqref{eq:train_error.2} and \eqref{eq:test_error.2} over $\F$. Precisely, for a given training set $\Omega$, 
\begin{align} \label{eq:generalization_error_general}
	\phi(\Omega) = \sup_{\beta \in \F} \left( \cE(\beta) -  \hcE_{\Omega}(\beta) \right). 
\end{align}

\noindent Next, we define the notion of Rademacher complexity, wich has been very effective to bound the generalization error. To begin with, the Rademacher complexity of a set $A \subset \Reals^n$ is defined as 
\begin{align}
R(A) & = \Ex_\sigma \left[\sup_{a \in A} \frac1n \sum_{i=1}^n \sigma_i a_i\right],
\end{align}
where $\sigma_1,\dots, \sigma_n$ are i.i.d. Rademacher variables, which are uniformly distributed on $\{-1, 1\}$, and the 
expectation above is taken with respect to their randomness. This has been naturally extended for the setting of prediction 
problems as follows: given a collection of real-valued response variables and covariates, say $(Y_i, X_i), ~i \in [n]$, a
collection of real-valued functions or hypotheses $\cG$ that map covariates to real values, and loss function $L: \Reals^2 \to [0, \infty)$ that measures the error or loss in prediction for a given function, define 
\begin{align}
R_S(\cG) & = \Ex_\sigma \left[\sup_{g \in \cG} \frac1n \sum_{i=1}^n \sigma_i g(X_i)\right],  %\\
~~R_S(L\circ\cG) ~ = \Ex_\sigma \left[\sup_{g \in \cG} \frac1n \sum_{i=1}^n \sigma_i L(Y_i, g(X_i))\right].
\end{align}
In our setting, the covariates that the predictor uses are the denoised rows of $\bhA$, denoted as$\{\bhA_{1, \cdot}, \dots, \bhA_{N, \cdot}\}$. 
The loss function of interest is the quadratic function: $\ell(y, y') = (y-y')^2$. The ideal response variable of our interest
is $\bA_{i, \cdot} \beta^*$ for $i \in [N]$. Given that our algorithm observes (noisy) response variables in the index set
$\Omega$, we shall use the sample set $\{ (\bA_{\omega, \cdot} \beta^*, \bhA_{\omega, \cdot}): \omega \in \Omega\}$.

\noindent It turns out that the appropriate adaptation of the Rademacher complexity for our setting is as follows: 
Let $\D$ denote the distribution of the observations $Z_{ij}$ (i.e., the randomness in the measurements). 
Hence, $\bhA$ is a random matrix as it derived from $\bZ$. 
Then,
\begin{align}\label{eq:Rademacher}
	R_n(\F) &=  \Ex_{\sigma, \Omega | \D} \Bigg[ \sup_{\beta \in \F} \Bigg( \frac{1}{n} \sum_{\omega \in \Omega} \sigma_\omega 
	\bhA_{\omega, \cdot} \beta  \Bigg) \Bigg] \\ ~~% ~\text{and}~%\\
	R_n(\ell\circ \F) &= \Ex_{\sigma, \Omega | \D} \Bigg[ \sup_{\beta \in \F} \Bigg( \frac{1}{n} 
	\sum_{\omega \in \Omega} \sigma_\omega \ell(\bA_{\omega, \cdot} \beta^*, \bhA_{\omega, \cdot} \beta)  \Bigg) \Bigg],
\end{align}
where $\Ex_\Omega$ is taken with respect to selecting $\Omega \subset [N]$ uniformly at random from $[N]$ without replacement (with $| \Omega | = n$). 

\paragraph{Rademacher Class - Sparse Linear Models.}
Define 
$\F_{(a, b)} \subset \Rb^{p}$ for $a \in \Nb, b \in \Rb$ as $$\F_{(a, b)} \coloneqq \{\beta \in \Rb^{p} :  \| \beta \|_0 \le a, \| \beta \|_1 \le b \}.$$

\noindent 
We denote $\F_{(\cdot, b)}$ as the case where there is no restriction on $a$, i.e., $\beta \in \F_{(\cdot, b)}$ has no constraint in its $\| \cdot \|_0$-norm.
We then have the following proposition,
\begin{proposition}\label{prop:rademacher_containment}
Assume $\bhA$ satisfies \eqref{eq:test_error_condition} in Proposition \ref{prop:low_rank_sparsity}.
Then, 
$$R_n(\F_{(\cdot, \ \| \hbeta \|_1)}) \le R_n(\F_{(k, \ C'' k  \|\hbeta\|_1)}),  \quad R_n(\ell \circ \F_{(\cdot, \ \|\hbeta\|_1}) \le R_n(\ell \circ \F_{(k, \ C'' k  \|\hbeta\|_1)})$$
where $C''$ is defined as in Proposition \ref{prop:low_rank_sparsity}.
\end{proposition}

\begin{proof}
%Let $\hbeta = \beta^{\hsvt, k}$. 
By definition, $\bhA$ has rank $k$. 
Then by Proposition \ref{prop:low_rank_sparsity} for $ \hbeta $, there exists an $k$-sparse vector $\beta' \in \Reals^p$ such that
\begin{align}\label{eq:sparse_eq_PCR}
 \bhA \cdot \hbeta = \bhA \cdot \beta', \ \text{s.t.} \ \| \beta' \|_1 \le C'' \| \hbeta \|_1.
\end{align}
Observe that due to the equality, we have,
\begin{align}
\hcE_{\Omega}(\hbeta) &= \hcE_{\Omega}(\beta^') \\
 \cE_{\Omega}(\hbeta) &= \cE_{\Omega}(\beta^').
\end{align}
\noindent Appealing to the definitions of $R_n(\cdot)$ and $R_n(\ell \circ \cdot)$ completes the proof.
\end{proof}
\noindent For the remainder of Section \ref{sec:mse_test_hsvt}, we define $B \coloneqq C'' \cdot k \cdot \| \hbeta \|_1$ and overload notation and define $\F \coloneqq \F_{(k, B)}$.

\subsection{Helper Lemmas \ref{lemma:rademacher_bound} and \ref{lemma:rademacher_final} to Prove Theorem \ref{thm:test_pcr}} 

\subsubsection{Lemma \ref{lemma:rademacher_bound}}
\begin{lemma} \label{lemma:rademacher_bound}
	Let $\phi(\Omega)$ be defined as in \eqref{eq:generalization_error_general}. Let $\Omega$ be random 
	subset of $[N]$ of size $n$ that is chosen uniformly at random without replacement. Then, 
	\begin{align*} 
		\Ex_{\Omega | \D} \left[ \phi(\Omega) \right] &\le  2 R_n(\ell \circ \F). 
	\end{align*} 
\end{lemma}

\begin{proof}
%= \{i'_1, \dots, i'_n\}
Let $\Omega = \{i_1, \dots, i_n\}$. Further, let $\Omega' = \{i'_1, \dots, i'_n\}$ be a ``ghost sample'', i.e., $\Omega'$ is an independent set of $n$ locations sampled uniformly at random and without replacement from $[N]$. 
%Moreover, let $S' = \{(\bhA_{i'_k}, \bA_{i'_k} \beta^*)\}_{k=1:n}$. 
%Observe that $\cE(\beta) | \D = \Ex_{\Omega' | \D} [\hcE_{\Omega'} (\beta) ]$ and $\hcE_{\Omega}(\beta) | \D = \Ex_{\Omega' | \D} [\hcE_{\Omega}(\beta) ]$. 
Thus, 
\begin{align*} %\label{eq:rademacher_bound.term1}
	\Ex_{\Omega | \D} [ \phi(\Omega) ]&= \Ex_{\Omega | \D} \left[ \sup_{\beta \in \F} \left(\cE(\beta) - \hcE_{\Omega}(\beta) \right) \right] %\nonumber
	\\ &= \Ex_{\Omega | \D} \left[ \sup_{\beta \in \F} \left( \Ex_{\Omega'} \left[ \hcE_{\Omega'} (\beta)  -\hcE_{\Omega}(\beta) \right] \right) \right] %\nonumber
	\\ &\le \Ex_{\Omega, \Omega' | \D} \left[ \sup_{\beta \in \F} \left( \hcE_{\Omega'} (\beta)  -\hcE_{\Omega}(\beta)  \right) \right] %\nonumber
	\\ &= \Ex_{\Omega, \Omega' | \D} \left[ \sup_{\beta \in \F} \frac{1}{n} \sum_{k=1}^n \left(  \ell(\bA_{i'_k} \beta^*; \bhA_{i'_k} \beta) - \ell( \bA_{i_k} \beta^*; \bhA_{i_k} \beta) \right) \right], 
\end{align*}
where the inequality follows by the convexity of the supremum function and Jensen's Inequality. 

To proceed, we will use the ghost sampling technique. Recall that the entries of $\Omega$ and $\Omega'$ were drawn uniformly at random from $[N]$. As a result, 
$\ell(\bA_{i'_k} \beta^*; \bhA_{i'_k} \beta) - \ell( \bA_{i_k} \beta^*; \bhA_{i_k} \beta)$ and 
$\ell( \bA_{i_k} \beta^*; \bhA_{i_k} \beta) - \ell(\bA_{i'_k} \beta^*; \bhA_{i'_k} \beta)$
have the same distribution. Further, since $\sigma_k$ takes value $1$ and $-1$ with equal probability, we have
\begin{align*}
& \Ex_{\Omega, \Omega' | \D} \left[ \sup_{\beta \in \F} \frac{1}{n} \sum_{k=1}^n \left( \ell(\bA_{i'_k} \beta^*; \bhA_{i'_k} \beta) - \ell( \bA_{i_k} \beta^*; \bhA_{i_k} \beta) \right) \right]  \\
& \qquad = \Ex_{\sigma, \Omega, \Omega' | \D} \left[  \sup_{\beta \in \F}  \frac{1}{n}  \sum_{k=1}^n \sigma_k \left(\ell(\bA_{i'_k} \beta^*; \bhA_{i'_k} \beta) - \ell( \bA_{i_k} \beta^*; \bhA_{i_k} \beta))  \right)  \right]. 
\end{align*}

Combining the above relation with the fact that the supremum of a sum is bounded above by the sum of supremums, we obtain
\begin{align*}
	\Ex_{\Omega | \D} [\phi(\Omega)] &\le  \Ex_{\sigma, \Omega, \Omega' | \D} \left[  \sup_{\beta \in \F}  \frac{1}{n}  \sum_{k=1}^n \sigma_k \left( \ell(\bA_{i'_k} \beta^*; \bhA_{i'_k} \beta) - \ell( \bA_{i_k} \beta^*; \bhA_{i_k} \beta)  \right)  \right]
	\\ &\le \Ex_{\sigma, \Omega, \Omega' | \D} \left[ \sup_{\beta \in \F}  \frac{1}{n} \sum_{k=1}^n \sigma_k \ell(\bA_{i'_k} \beta^*; \bhA_{i'_k} \beta)  + \sup_{\beta \in \F} \frac{1}{n} \sum_{k=1}^n -\sigma_k \ell( \bA_{i_k} \beta^*; \bhA_{i_k} \beta)   \right] 
	\\ &= \Ex_{\sigma, \Omega | \D} \left[ \sup_{\beta \in \F}  \frac{1}{n} \sum_{k=1}^n \sigma_k \ell( \bA_{i_k} \beta^*; \bhA_{i_k} \beta)  \right] + \Ex_{\sigma, \Omega' | \D} \left[ \sup_{\beta \in \F}  \frac{1}{n} \sum_{k=1}^n \sigma_k \ell(\bA_{i'_k} \beta^*; \bhA_{i'_k} \beta)  \right]
	\\&= 2 \cdot R_n(\ell \circ \F),
\end{align*}
where the second to last equality holds because $\sigma_k$ is a symmetric random variable.

\end{proof}

\subsubsection{Lemma \ref{lemma:rademacher_final}}

\noindent To prove Lemma \ref{lemma:rademacher_final}, we first prove a series of helper lemmas. 

\begin{lemma} \label{lemma:worst_error}
	%Let conditions of Lemmas \ref{lemma:estimate_bound} and \ref{lemma:signal_bound} hold. 
	Let Property \ref{prop:bounded_covariates} hold. Then,
	for any $\beta \in \F$, 
	\begin{align*}
		\max_{i \in [N]} \ell ( \bA_{i, \cdot} \beta^*, \bhA_{i, \cdot} \beta ) &\le C(\bhA). 
		%2  \left[ (rB \cdot \| \bhA \|_{\infty})^2 + (\Gamma \cdot \| \beta^* \|_1)^2 \right]. 
	\end{align*}
	Here, $C(\bhA) = 2 \left[ (B \cdot \| \bhA \|_{\infty})^2 + (\| \beta^* \|_1)^2 \right]$.
\end{lemma}

\begin{proof}
	Observe that for any $i \in [N]$ and $\beta \in \F$, 
	\begin{align*}
		\ell(\bA_{i, \cdot} \beta^*, \bhA_{i, \cdot} \beta) &= ( \bhA_{i, \cdot} \beta - \bA_{i, \cdot} \beta^*)^2 \le 2(\bhA_{i, \cdot} \beta)^2 + 2(\bA_{i, \cdot} \beta^*)^2. 
	\end{align*} 
	Recall that every candidate vector $\beta \in \F$ has the following propery: $\|\beta\|_1 \le B$. Hence, it follows that for any $i \in [N]$, 
	\begin{align} \label{eq:max_error_bound.term1}
		|\bhA_{i, \cdot} \hbeta| & \leq   \| \beta \|_1 \cdot  \max_{j\in [p]} | \widehat{A}_{ij}| ~\le  B \cdot \| \bhA\|_{\infty}. 
	\end{align}
	Further, By Property \ref{prop:bounded_covariates} and Holder's inequality, we have for any $i \in [N]$, 
\begin{align} \label{eq:max_error_bound.term2}
		\abs{\bA_{i, \cdot} \beta^*} \le  \norm{\bA_{i, \cdot}}_{\infty} \, \norm{\beta^*}_1 \le  \norm{\beta^*}_1. 
\end{align}
	The desired result then follows from an immediate application of the above results. 
\end{proof}

\begin{lemma} \label{lemma:rademacher_complexity_linear_functions_sparse}
	Recall $\emph{rank}(\bhA) = k$. Then, 
	\begin{align}
		R_n(\F) &\le \frac{\sqrt{k} B}{\sqrt{n}} \cdot \| \bhA \|_{\infty}. 
	\end{align}
\end{lemma}

\begin{proof}
Let $I_{\beta} = \{i \in [p]: \beta_i \neq 0\}$ denote the index set for the nonzero elements of $\beta \in \F$; recall that $| I_\beta| \le k$ by the definition of $\F$. For any vector $v \in \Reals^p$, we denote $v_{I_{\beta}}$ as the vector that retains only its values in $I_{\beta}$ and takes the value $0$ otherwise. Then,
	\begin{align*}
		R_n(\F) &= \Ex_{\sigma, \Omega | \D} \Bigg[ \sup_{\beta \in \F} \Bigg( \frac{1}{n} \sum_{i=1}^n \sigma_i  \bhA_{i, \cdot}  \beta  \Bigg) \Bigg]
		%\\ &= \frac{1}{n}  \Ex_{\sigma, \Omega} \Bigg[ \sup_{\beta \in \F} \Bigg( \langle \sum_{i=1}^n \sigma_i  \bhA_{i, \cdot}, \beta \rangle \Bigg) \Bigg]
		\\ &= \frac{1}{n}  \Ex_{\sigma, \Omega | \D} \Bigg[ \sup_{\beta \in \F} \Bigg( \sum_{j \in I_{\beta}} \beta_j \Big( \sum_{i=1}^n \sigma_i \bhA_{i, \cdot} \Big)_j \Bigg) \Bigg]
		\\ &\stackrel{(a)} \le \frac{1}{n}  \Ex_{\sigma, \Omega | \D} \Bigg[ \sup_{\beta \in \F} \| \beta \|_2 \,\cdot \, \Big\| \Big(\sum_{i=1}^n \sigma_i  \bhA_{i, \cdot} \Big)_{I_{\beta}} \Big\|_2 \Bigg]
		\\ &\stackrel{(b)} \le \frac{ B}{n}  \Ex_{\sigma, \Omega | \D} \Bigg[  \Big\| \Big(\sum_{i=1}^n \sigma_i  \bhA_{i, \cdot} \Big)_{I_{\beta}} \Big\|_2 \Bigg]
		\\ &\stackrel{(c)} \le \frac{ B}{n} \left( \Ex_{\sigma, \Omega | \D} \Bigg[ \Big(\sum_{i=1}^n \sigma_i \bhA_{i, \cdot} \Big)_{I_{\beta}} \Big(\sum_{k=1}^n \sigma_k \bhA_{k, \cdot} \Big)_{I_{\beta}}^T \Bigg] \right)^{1/2}
		\\ &= \frac{ B}{n} \left( \Ex_{\Omega | \D} \Bigg[ \sum_{i=1}^n \Big\| (\bhA_{i, \cdot} )_{I_{\beta}} \Big\|_2^2 \Bigg] \right)^{1/2}
		\\ &\le \frac{B}{n} \left(n k \max_{i \in [n]} \norm{(\bhA_{i, \cdot} )_{I_{\beta}}}_{\infty}^2 \right)^{1/2}
		\\ &= \frac{\sqrt{k} B}{\sqrt{n}}  \cdot \| \bhA \|_{\infty}. 
	\end{align*}
Note that (a) makes use of the Cauchy-Schwartz Inequality, (b) follows from the boundedness assumption of the elements in $\F$ and noting the $\ell_2$-norm of a vector is less than the $\ell_1$-norm, and (c) applies Jensen's Inequality. 
\end{proof}

\begin{lemma} \label{lemma:rademacher_composition} {\bf Lipschitz composition of Rademacher averages. (\cite{lipschitz_rademacher})}  \\
Suppose $\{\phi_i\}, \{\psi_i\}$, $i=1, \dots, n$, are two sets of functions on $\Theta$ such that for each $i$ and $\theta, \theta' \in \Theta$, $\abs{ \phi_i(\theta) - \phi_i(\theta')} \le \abs{\psi_i(\theta) - \psi_i(\theta')}$. Then, for all functions $c: \Theta \rightarrow \Reals$, 
\begin{align*}
	\Ex \left[ \sup_{\theta \in \Theta} \left\{ c(\theta) + \sum_{i=1}^n \sigma_i \phi_i(\theta) \right\} \right] &\le \Ex \left[ \sup_{\theta \in \Theta} \left\{ c(\theta) + \sum_{i=1}^n \sigma_i \psi_i(\theta) \right\} \right],
\end{align*}
where $\sigma_i$ are Rademacher random variables. 
\end{lemma} 

\begin{proof}
The proof can be found in \cite{lipschitz_rademacher}. 
\end{proof}

\begin{lemma} \label{lemma:rademacher_final}
	Let Property \ref{prop:bounded_covariates} hold and recall $\emph{rank}(\bhA) = k$. Then, 
	\begin{align*}
		R_n(\ell\circ \F) \le C \frac{\sqrt{k} B^{2}}{\sqrt{n}} \cdot \| \bhA \|^{2}_{\infty} \cdot \| \beta^* \|_1,
	\end{align*}
	where  $C > 0$ is an absolute constant. 
\end{lemma}

\begin{proof}
	Using Lemma \ref{lemma:worst_error}, we have for any $\beta \in \F$, 
\begin{align*}
	\max_{i \in [N]} \, \, | \ell' (\bA_{i, \cdot} \beta^*, \bhA_{i, \cdot} \beta) | &\le 2 \sqrt{C(\bhA)},
\end{align*}
where $\ell'(\cdot, \cdot)$ denotes the derivative of the loss function with respect to our estimate. Since our loss function of interest has bounded first derivative, the Lipschitz constant of $\ell(\cdot, \cdot)$ is bounded by $2 C(\bhA)^{1/2}$; hence, applying Lemma \ref{lemma:rademacher_composition} for Lipschitz functions and using Lemma 
\ref{lemma:rademacher_complexity_linear_functions_sparse} yields the following inequality: 
	\begin{align*}
		R_n(\ell\circ \F) &\le 2 \sqrt{C(\bhA) } \cdot R_n (\F)
		\le  C \frac{\sqrt{k} B^{2}}{\sqrt{n}} \cdot \| \bhA \|^{2}_{\infty} \cdot \| \beta^* \|_1,
	\end{align*}
	for some absolute constant, $C > 0$.
	This concludes the proof. 
\end{proof}

\subsubsection{Proof of Theorem \ref{thm:test_pcr}}
Now we are ready to complete the proof of Theorem \ref{thm:test_pcr}. 
\begin{proof}[Proof of Theorem \ref{thm:test_pcr}]
The testing error, for PCR with parameter $k$ or, equivalently, Linear Regression with
covariate pre-processing via HSVT thresholded at the $k$-th singular value, is 
\begin{align}\label{eq:gen0.a}	
\text{MSE}(\widehat{Y}) &= \frac{1}{N} \Ex_{\D | \Omega} \left[ \sum_{i=1}^N \Big( \widehat{Y}_{i} - \bA_{i, \cdot} \beta^* \Big)^2  \right] 
=\Ex_{\D | \Omega} \left[\cE(\hbeta)  \right],
\end{align} 
where the expectation is taken with respect to the randomness in the data. 

\vspace{2mm}

\noindent And, for a given training set $\Omega$, the training error is 
\begin{align}\label{eq:gen0.b}	
\text{MSE}_{\Omega}(\widehat{Y}) &= \frac{1}{n} \Ex_{\D | \Omega} \left[ \sum_{i \in \Omega} \Big( \widehat{Y}_{i} - \bA_{i, \cdot} \beta^* \Big)^2  \right]=\Ex_{\D | \Omega} \left[\hcE_{\Omega}(\hbeta)  \right].
\end{align}
Recall that we shall consider the training set $\Omega$ being chosen uniformly at random amongst subsets of $[N]$ of size $n$. 
Given any $\Omega$, observe that
\begin{align} \label{eq:gen.1}
	\cE(\hbeta) &\le \hcE_{\Omega}(\hbeta) + \sup_{\beta \in \F} \Big( \cE(\beta) - \hcE_{\Omega}(\beta) \Big) =  \hcE_{\Omega}(\hbeta)  + \phi(\Omega),
\end{align}
where $\phi(\Omega)$ is as defined by \eqref{eq:generalization_error_general}. 
Taking expectations of the above inequality, we obtain
\begin{align}
	\Ex_{\D, \Omega} [\cE(\hbeta)] &\le \Ex_{\D, \Omega} [\hcE(\hbeta)] + \Ex_{\D, \Omega} [\phi(\Omega)] \nonumber
	\\ &= \Ex_\Omega \left[ \Ex_{\D | \Omega} [\hcE(\hbeta)]  \right] + \Ex_\D \left[ ~\Ex_{\Omega | \D} [ \phi(\Omega)] \right] \label{eq:test.1}. 
%	\\ &\stackrel{(a)} \le \Ex_\Omega \left[ \text{MSE}_\Omega(\hY) \right] + 2 \, \Ex_\D \left[  R_n (\ell \circ \F) \right]
%	\\ &\stackrel{(b)} \le \Ex_\Omega \left[ \text{MSE}_\Omega(\hY) \right] + 4 r B \cdot \Ex_\D \left[ \sqrt{ \frac{C(\bhA)}{n}} \cdot \| \bhA \|_{\infty} \right] 
%	\\ &= 
\end{align}	
We now bound each term on the right-hand side of \eqref{eq:test.1} separately. Beginning with the leftmost term, observe that, by definition, we have
\[ \Ex_\Omega \left[ \Ex_{\D | \Omega} [\hcE(\hbeta) ] \right] = \Ex_\Omega \left[ \text{MSE}_\Omega(\hY) \right]. \] 
Moreover, applying Lemmas \ref{lemma:rademacher_bound} and \ref{lemma:rademacher_final}, we obtain
\begin{align*}
	\Ex_\D \left[ ~\Ex_{\Omega | \D} [ \phi(\Omega)] \right]  &\le 2 \, \Ex_\D \left[  R_n (\ell \circ \F) \right]
	\\ &\le  C \Ex_\D \left[  \frac{\sqrt{k} B^{2}}{\sqrt{n}} \cdot \| \bhA \|^{2}_{\infty} \cdot \| \beta^* \|_1\right]
	\\ &=  C C'' \frac{ k^{5/2}}{\sqrt{n}} \Ex_\D \left[ \| \hbeta \|^{2} \cdot \| \bhA \|^{2}_{\infty} \cdot \right] \cdot \| \beta^* \|_1
\end{align*}
where we recall $B \coloneqq  C'' \cdot k \cdot \| \hbeta \|_1$.
Combining the above results completes the proof.
\end{proof}

% Examples
\section{Proof of Propositions \ref{prop:gaussian_example}, \ref{prop:geo_decay_finite_sample} and \ref{lemma:dg_decaying_sv}: Examples} 

% Example 1
\subsection{Proof of Proposition \ref{prop:gaussian_example}: Embedded Random Gaussian Features}\label{sec:gaussian_features}
%\paragraph{Analysis for the Example.} 
Recall that we let $\bA = \tilde{\bA} \tilde{\bR}$ where $\tilde{\bA} \in \Reals^{N \times r} $ is a random matrix whose entries are independent 
standard normal random variables, i.e., $\tilde{A}_{ij} \sim \mathcal{N}(0,1)$, and $\tilde{\bR} \in \Reals^{r \times p}$ is another random matrix 
with independent entries such that $\tilde{R}_{ij} = 1/\sqrt{r}$ with probability $1/2$ and $\tilde{R}_{ij} = - 1/\sqrt{r}$ with probability $1/2$ in 
Proposition \ref{prop:gaussian_example}. In this subsection, we show that $s_r(\bbA) = \Omega \Big(\sqrt{\frac{Np}{r}} \Big)$ and $\| \bA \|_\infty = 
O\big( \sqrt{ \log(Np) } \big)$ with high probability.

\subsubsection{Helper Lemmas}

\begin{lemma}\label{lem:quasi_isometry}
	Suppose that $r \leq \frac{ \sqrt{p} }{ 4\sqrt{2\log p}} + 1$ and let $\bR \in \Reals^{r \times p}$ be a random matrix with independent entries 
	such that $\bR_{ij} = \frac{1}{\sqrt{p}}$ with probability $\frac{1}{2}$ and $\bR_{ij} = - \frac{1}{\sqrt{p}}$ with probability $\frac{1}{2}$. 
	With probability at least $1 - \frac{1}{p^2}$, for all $v \in \Reals^r$,
	\[	\frac{1}{2} \| v \|_2^2 \leq \| \bR^T v \|_2^2 \leq \frac{3}{2} \| v \|_2^2.	\]
\end{lemma}

\begin{proof}
	For $i \in [r]$, let $\bR_i$ denote the $i$-th row of $\bR$. Observe that $\| \bR_i \|_2 = 1$ for all $i \in [r]$. Also, note that for $i \neq j \in [r]$,
	$\langle \bR_i, \bR_j \rangle = \frac{1}{p} \sum_{k=1}^p \tilde{\bR}_{ik} \tilde{\bR}_{jk}$ is a sum of $p$ independent binary random variables;
	$\tilde{\bR}_{ik} \tilde{\bR}_{jk} = 1$ with probability $\frac{1}{2}$ and $-1$ with probability $\frac{1}{2}$. Therefore, $\mathbb{E} \langle \bR_i, \bR_j \rangle = 0$. 
	By Hoeffding's inequality for bounded random variables, 
	\[	\Prob{ | \langle \bR_i, \bR_j \rangle | > t } \leq 2 \exp \left( - \frac{p t^2}{2} \right).	\]
	Letting $t = \frac{ 2 \sqrt{2 \log p } }{\sqrt{p} }$, we can conclude that for any pair of $i \neq j \in [r]$, $ | \langle \bR_i, \bR_j \rangle |  
	\leq \frac{ 2 \sqrt{2 \log p } }{\sqrt{p} }$ with probability at least $1 - \frac{2}{p^4}$. There are ${r \choose 2} \leq \frac{r^2}{2}$ such pairs and $r \leq p$. 
	Thus, applying the union bound, we know that $ | \langle \bR_i, \bR_j \rangle |  \leq \frac{ 2 \sqrt{2 \log p } }{\sqrt{p} }$ for all pairs $i \neq j$ 
	with probability at least $1 - \frac{1}{p^2}$.
	
	Now we observe that 
	\begin{align*}
		\| \bR^T v \|_2^2	&= \left\langle \sum_{i=1}^r v_i \bR_i, \sum_{i=1}^r v_i \bR_i, \right\rangle\\
			&= \sum_{i=1}^r v_i^2 \| \bR_i \|_2^2 + \sum_{i=1}^r \sum_{j \neq i} v_i v_j \langle \bR_i, \bR_j \rangle\\
			&\leq \sum_{i=1}^r v_i^2 \| \bR_i \|_2^2 + \sum_{i=1}^r \sum_{j \neq i } | v_i v_j | | \langle \bR_i, \bR_j \rangle |.
	\end{align*}
	With probability at least $1 - \frac{1}{p^2}$,
	\begin{align*}
		\| \bR^T v \|_2^2	&\leq   \sum_{i=1}^r v_i^2 \| \bR_i \|_2^2 + \sum_{i=1}^r \sum_{j \neq i } | v_i v_j | \frac{2\sqrt{2 \log p}}{\sqrt{p}}\\
			&\stackrel{(a)}{\leq}  \sum_{i=1}^r v_i^2 + (r-1)  \sum_{i=1}^r v_i^2 \frac{2\sqrt{2 \log p}}{\sqrt{p}}\\
			&\leq \| v \|_2^2  \bigg( 1 + \frac{2(r-1)\sqrt{2 \log p}}{\sqrt{p}} \bigg)
	\end{align*}
	where (a) follows from that $\| \bR_i \|_2^2 = 1$ for all $i \in [r]$ and the Cauchy-Schwarz inequality ($2|v_i v_j| \leq v_i^2 + v_j^2$). 
	By the same argument, $\| \bR^T v \|_2^2 \geq \| v \|_2^2  \Big( 1 - \frac{2(r-1)\sqrt{2 \log p}}{\sqrt{p}} \Big)$. 
	
	Lastly, we note that $\frac{2(r-1)\sqrt{2 \log p}}{\sqrt{p}} \leq \frac{1}{2}$ if and only if $r \leq \frac{\sqrt{p}}{4\sqrt{2 \log p}} + 1$ to complete the proof.
\end{proof}

\begin{remark}\label{rem:quasi_iso}
	Lemma \ref{lem:quasi_isometry} implies that given $r \leq 1 + \frac{\sqrt{p}}{4\sqrt{2 \log p }}$, the right multiplication of $\bR$ defines 
	a quasi-isometric embedding from $\Reals^r$ to $\Reals^p$ with high probability. More precisely, with probability at least $1 - \frac{1}{p^2}$, 
	the following inequalities are true:
	\begin{align*}
		\frac{1}{2} \| v \|_2^2 \leq \|  \bR^T v \|_2^2 \leq \frac{3}{2} \| v \|_2^2,	\quad \forall v \in \Reals^r,\qquad \text{and} \qquad
		\frac{1}{2} \| w \|_2^2 \leq \| \bR w \|_2^2 \leq \frac{3}{2} \| w \|_2^2,	\quad \forall w \in \text{rowspace}(\bR).
	\end{align*}
	The first inequality is just the conclusion of Lemma \ref{lem:quasi_isometry}; it implies that $\frac{1}{2} \leq \lambda_i( \bR \bR^T ) \leq \frac{3}{2}$ 
	for all $i \in [r]$ where $\lambda_i( \bR \bR^T)$ denotes the $i$-th largest eigenvalue of $\bR \bR^T$. Let $v_i$ be an eigenvector corresponding to 
	$\lambda_i( \bR \bR^T)$; $\{ v_1, \ldots, v_r \}$ forms an orthonormal basis of $\mathbb{R}^r$. 
	
	To see why the second inequality also holds, suppose that $w = \bR^T v_w$ for some $v_w \in \mathbb{R}^r$ (such a $v_w$ exists because $w \in \bR^T$). 
	Observe that $\| w \|_2^2 = w^T w = v_w^T \bR \bR^T v_w$ and that $\| \bR w \|_2^2 = w^T \bR^T \bR w = v_w^T \bR \bR^T \bR \bR^T v_w$. 
	We may write $v_w = \sum_{i=1}^r c_i v_i$ for some $c_i \in \mathbb{R}$. It follows that $\| w \|_2^2 = \sum_{i=1}^r c_i^2 \lambda_i( \bR \bR^T )$ and 
	$\| \bR w \|_2^2 = \sum_{i=1}^r c_i^2 \lambda_i^2( \bR \bR^T )$; therefore, $ \frac{1}{2} \leq \lambda_r( \bR \bR^T ) \leq \frac{\| \bR w \|_2^2}{ \| w \|_2^2 } 
	\leq \lambda_1( \bR \bR^T ) \leq \frac{3}{2}$.
\end{remark}

\begin{remark}\label{rem:sr_A}
	By Remark \ref{rem:quasi_iso}, with probability at least $1 - \frac{1}{p^2}$, 
	\begin{align*}
		s_r( \tilde{\bbA}\bR)
			&= \sup_{W \subset \Reals^p\atop\dim W = r} \inf_{w \in W} \frac{\| \tilde{\bbA} \bR w \|_2}{ \|w \|_2}
			= \inf_{w \in \text{\normalfont rowspace}{\bR}}\frac{\| \tilde{\bbA} \bR w \|_2}{ \|w \|_2}\\
			&\geq \sqrt{\frac{1}{2}} \inf_{w \in \text{\normalfont rowspace}{\bR}}\frac{\| \tilde{\bbA} \bR w \|_2}{ \| \bR w \|_2}
			= \sqrt{\frac{1}{2}} \inf_{v \in \Reals^r} \frac{\| \tilde{\bbA} v \|_2}{ \| v \|_2}
			= \sqrt{\frac{1}{2}} s_r(\tilde{\bbA}).
	\end{align*}
\end{remark}

\begin{lemma}[Spectral properties of $\tilde{\bbA}$]\label{lem:balance.1}
	Let $\tilde{\bbA} \in \Reals^{N \times r}$ be a random matrix whose entries are i.i.d. standard Gaussian random variable. 
	Then, 
	
	(1) with probability at least $1 - {2 \exp(-\frac{1}{2} \sqrt{Nr}) }$, $\rank(\tilde{\bbA}) = r$ and
	\[	\frac{s_1(\tilde{\bbA})}{s_r (\tilde{\bbA})} \leq { \frac{ 1 + (r/N)^{1/4} + (r/N)^{1/2} }{ 1 - (r/N)^{1/4} - (r/N)^{1/2} }; }	\]
	
	(2) with probability at least $1-\exp \left( -\frac{Nr}{8} \right)$, 
	\[	 \| \tilde{\bbA} \|_F^2  > \frac{Nr}{2}.	\]
\end{lemma}

\begin{proof}
	{\bf Proof of Claim 1}
	By \cite[Corollary 5.35]{vershynin2010introduction}, for any $t \geq 0$, we have
	\[	\sqrt{N} - \sqrt{r} - t \leq s_{\textrm{min}}(\tilde{\bbA}) \leq s_{\textrm{max}}(\tilde{\bbA}) \leq \sqrt{N} + \sqrt{r} + t, 	\]
	with probability at least $1 - 2 \exp(-t^2/2)$. { Choosing $t = (Nr)^{1/4}$ concludes the proof.} \\
	{\bf Proof of Claim 2}
	Observe that $\| \tilde{\bbA} \|_F^2 = \sum_{i,j} \tilde{\bbA}_{ij}^2$. We can easily observe that $\mathbb{E}\| \tilde{\bbA} \|_F^2 = Nr$. 
	By Bernstein's inequality, it follows that for every $t \geq 0$,
	\[	\mathbb{P}\{  \| \tilde{\bbA} \|_F^2  - \mathbb{E}  \| \tilde{\bbA} \|_F^2 \leq -t \}	\leq  \exp \left( -\frac{1}{2} \min\left\{ \frac{t^2}{Nr}, t \right\} \right).	\]
	With $t = \frac{Nr}{2}$, we have
	\[	\mathbb{P}\{  \| \tilde{\bbA} \|_F^2  \leq \frac{Nr}{2}\}	\leq  \exp \left( -\frac{Nr}{8} \right).	\]
%	%
\end{proof}

\begin{remark}\label{rem:sr_tildeA}
	Lemma \ref{lem:balance.1} implies that with probability at least $1 - 2 \exp(-2 \sqrt{Nr}) - \exp \left( -\frac{Nr}{8} \right)$, 
	\[	s_r(\tilde{\bbA})^2 \geq \left[ 1 + (r-1) \frac{s_1(\tilde{\bbA})^2}{s_r(\tilde{\bbA})^2} \right]^{-1} \| \tilde{\bbA} \|_F^2	
				\geq \left[ 1 + (r-1) { \bigg( \frac{ 1 + (r/N)^{1/4} + (r/N)^{1/2} }{ 1 - (r/N)^{1/4} - (r/N)^{1/2} } \bigg)^2 } \right]^{-1} \frac{Nr}{2}.	\]
\end{remark}

\begin{lemma}[Structural properties of $\bbA$]\label{lem:balance.2}
	Let $\bbA \in \Reals^{N \times p}$ be a matrix generated as above. With probability at least $1 - \frac{2}{N^2p}$, 
	\[	\max_{i,j} |A_{ij}| \leq  4 \sqrt{ \log(Np)}. \]%4 \max_{j \in [p]} \| \bR_{\cdot, j} \| \sqrt{ \log(Np)}	\] 
\end{lemma}

\begin{proof}
	By construction, $A_{ij} = \sum_{k=1}^r \tilde{A}_{ik} \tilde{R}_{kj}$ and $A_{ij} | \tilde{\bR} \sim \mathcal{N}(0, \sum_{k=1}^r \tilde{R}_{kj}^2 )$ 
	conditioned on $\tilde{\bR}$ and note $\sum_{k=1}^r \tilde{R}_{kj}^2 = 1$ regardless of $\tilde{\bR}$. 
	Therefore, for each fixed $j \in [p]$, $A_{\cdot j} | \tilde{\bR}  \sim \mathcal{N}(0, I_N)$. 
	Observe that $\max_{i} |A_{ij} |  \Big| \tilde{\bR} $ is the maximum absolute value of 
	$N$ i.i.d. standard Gaussians and $\mathbb{E}\Big[ \max_{i} |A_{ij} | \Big| \tilde{\bR}\Big] \leq 2 \sqrt{\log N }$. 
	Since this holds regardless of $\tilde{\bR}$, by tower law we can remove the conditioning on $\tilde{\bR}$.
	In addition, by the concentration of Lipschitz function 
	(note that $\max: \mathbb{R}^N \to \mathbb{R}$ is $1$-Lipschitz),
	\[	\Prob{ | \max_{i} |A_{ij} | - \mathbb{E}[ \max_{i} |A_{ij} | ] | \geq t} \leq 2 \exp\Big( - \frac{t^2}{2 } \Big).	\] 
	Letting $t = 2 \sqrt{\log(Np)}$, it follows for each $j \in [p]$ that $\Prob{ | \max_{i} |A_{ij} | \geq 4 \sqrt{\log (Np)} } \leq \frac{2}{N^2p^2}$. 
	Taking union bound over $j \in [p]$, we conclude that with probability at least $1 - \frac{2}{N^2p}$,
	\[	\max_{i,j} |A_{ij}| \leq 4 \sqrt{ \log(Np) }.	\] 
	
\end{proof}

\subsubsection{Completing the Proof of Proposition \ref{prop:gaussian_example}}

\begin{proof}[Proof of Proposition \ref{prop:gaussian_example}]
Observe that $\bA = \tilde{\bA} \tilde{\bR} = \sqrt{\frac{p}{r}} \tilde{\bA} \bR $. By Lemmas \ref{lem:quasi_isometry}, \ref{lem:balance.1} 
(along with Remarks \ref{rem:sr_A} and \ref{rem:sr_tildeA}), we have 
{
\begin{align*}
	s_r(\bbA) 
		&= s_r( \tilde{\bbA}\tilde{\bR}) =		\sqrt{\frac{p}{r}} s_r( \tilde{\bbA}\bR)
		\geq \sqrt{\frac{Np}{4r}} \left[ \Delta + \frac{1}{r} \left( 1 - \Delta \right)  \right]^{-1/2} 
\end{align*}
}
with probability at least $1 - 2 \exp(-2 \sqrt{Nr}) - \exp \left( -\frac{Nr}{8} \right)$, where $\Delta = \frac{ 1 + (r/N)^{1/4} + (r/N)^{1/2} }{ 1 - (r/N)^{1/4} - (r/N)^{1/2} }$.
Note that if $r \ll N$, then $| \Delta - 1| =  o(1)$.
This inequality combined with Lemma \ref{lem:balance.2} 
%and Theorem \ref{thm:training_pcr} 
completes the proof. 

\end{proof}

\subsection{Proof of Proposition \ref{prop:geo_decay_finite_sample}: Geometrically Decaying Singular Values}\label{sec:appendix_geo_decay_finite_sample}

\begin{proof} [Proof of Proposition \ref{prop:geo_decay_finite_sample}]
Recall the (slightly simplified) bound of Corollary \ref{cor:training_pcr_generic} is 
\begin{align} \label{eq:mse_upper_generic_refined.rep}
	\emph{MSE}_{\Omega}(\hY) & \le
	 \frac{C' \| \beta^* \|_1^2 }{\rho^4} \left(\frac{k}{n} + \frac{n \vee p}{ (\tau_k - \tau_{k+1})^2} \right) \log^5(np) 
	 + \frac{3 \| \beta^* \|_1^2}{n}\| \bA^k - \bA \|_{2, \infty}^2
	+  \frac{20}{n} \| \phi \|_2^2,
\end{align} 
where $C' = C (1+\sigma^2)(1+\gamma^2)(1+K^4_\alpha)$ and $C>0$ is an absolute constant. 

\noindent Let us evaluate each of the first four terms in the right hand side of \eqref{eq:mse_upper_generic_refined.rep} to reach the desired \eqref{eq:geo_decay}.

\paragraph{First term.} Due to choice of $k$ we immediately have
follows that it is 
\begin{align}\label{eq.term1}
 \frac{C'  \| \beta^* \|_1^2 }{\rho^4} \log^5(np) \frac{k}{n}   & \le  \frac{C' C'(\theta) \| \beta^* \|_1^2 }{\rho^4} \frac{C_2 \log^6(np)}{n}.
\end{align}

\paragraph{Second term.}  
\begin{align}
 \frac{C' \| \beta^* \|_1^2 }{\rho^4} \frac{n \vee p}{ (\tau_k - \tau_{k+1})^2 }\log^5(np)
& \le  \frac{C' \| \beta^* \|_1^2 }{\rho^4}  \frac{n \vee p}{( \sqrt{Np}(\theta^{k-1} - \theta^{k}) )^{2}}\log^5(np)
\\&= \frac{C' \| \beta^* \|_1^2 }{\rho^4} \frac{n \vee p}{Np (  \theta^{k-1} (1 - \theta) )^{2}}\log^5(np)
\\& \le  \frac{C' \| \beta^* \|_1^2 }{\rho^4} C(\theta)\frac{1}{ n \wedge p} \frac{1}{ \theta^{2k} }\log^5(np)
\\& \le  \frac{C' \| \beta^* \|_1^2 }{\rho^4} C(\theta) \frac{1}{(n \wedge p)^{1/2}} \log^5(np)
\end{align}
where we have used the fact that $\tau_i = \tau_1 \theta^{i-1}$ for $i \geq 1$, $\tau_1 = C_1 \sqrt{Np}$, 
$n = \Theta(N)$ and $C(\theta) > 0$ is a term that depends only on $\theta$. 

\paragraph{Third term.} The goal is to bound $\|\bbA^k - \bbA\|_{2,\infty}^2$. With notation
$\bbE = \bbA - \bbA^k$, this is equivalent to bounding $\max_{j \in [p]} \|\bbE_{\cdot, j}\|_2^2$.  
With $\bbA = \sum_{i=1}^N \tau_i \mu_i \nu_i^T$ where $\mu_i \in \Reals^N$, $\nu_i \in \Reals^p$ for $i \in [N]$, 
for any $j \in [p]$, we have 
\begin{align*}
\frac{1}{n}\norm{\bbE_{\cdot, j}}^2
	&= \frac{1}{n} \bigg\| \bigg(\sum^N_{i = k + 1} \tau_i \mu_i \nu^T_i \bigg) e_j \bigg\|^2 
	= \frac{1}{n} \bigg\|\sum^N_{i = k + 1} \tau_i \mu_i (\nu^T_i e_j) \bigg\|^2 \\
	&\stackrel{(a)}= \frac{1}{n} \sum^N_{i = k + 1} \tau_i^2  (\nu^T_i e_j)^2 \\
	&\stackrel{(b)}\le \frac{1}{n} \sum^N_{i = k + 1} \tau^2_1 \theta^{2(i -1)}  (\nu^T_i e_j)^2  \\
	&\stackrel{(c)}\le \frac{C_1 Np}{n} \sum^N_{i = k + 1}  \theta^{2(i -1)}  (\nu^T_i e_j)^2 \\
	&\stackrel{(d)}\le \frac{C_1 Np}{np} \sum^N_{i = k + 1}  \theta^{2(i -1)} \\
	&\stackrel{(e)}\le C \theta^{2k} %\\
	~ \stackrel{(f)} \leq \frac{C}{(n \wedge p)^{1/2}}
 \end{align*}
Here, (a) follows from the orthonormality of the (left) singular vectors; (b) follows from $\tau_i = \tau_1 \theta^{i-1}$; 
(c) follows from $\tau_1 =C_1 \sqrt{Np}$; (d) `incoherence' property of singular vector, i.e. $\nu_i^T e_j = O(1/\sqrt{p})$
for all $i, j \in [p]$; (e) follows from property of geometric series for some absolute constant $C > 0$; and
(f) follows from choice of $k$. 
\paragraph{Concluding the proof.} The final term is repeat of $\frac{20}{n} \| \phi \|_2^2$. Therefore, putting all of the above together, 
the proof concludes. 
\end{proof}

\subsection{Geometrically Decaying Singular Values - Example from Signal Processing} \label{sec:proof_geo_decay}
%
%In many applications, e.g., graphon estimation, the parameter matrix is known to have exponentially (gemoetrically) decaying spectrum. 
% 
%
As an illustration, we construct a matrix, popular in signal processing, which satisfies the conditions on the spectrum laid out in Proposition \ref{eq:geo_decay}.
We will construct an example based on the incoherence between the canonical basis and the Discrete Fourier Transform (DFT) basis. 

Suppose that $\bA = \bU \bSigma \bV^T $, where:
(i) $\bSigma$ is a diagonal matrix such that $\Sigma_{11} = C\sqrt{Np}$ for some $C > 0$ and the diagonal entries of $\bSigma$ satisfy $0 \leq \Sigma_{i+1, i+1}/\Sigma_{i,i} \leq \theta$ for all $i \in [ N \wedge p - 1]$ and for some $\theta \in (0,1)$; 
(ii) $\bU \in \Reals^{N \times N}$ is a DFT matrix such that $U_{ij} = (1/\sqrt{N})\cdot \exp( 2 \pi \imag (i-1)(j-1) / N)$ for all $i, j \in [N]$, where $\imag$ denotes the imaginary unit; 
(iii) $\bV \in \Reals^{p \times p}$ is a DFT matrix such that $V_{ij} = (1/\sqrt{p}) \cdot \exp(2 \pi \imag (i-1)(j-1) / p)$ for all $i, j \in [p]$.

The entries of the resulting matrix $\bA$ are complex numbers, but one could also construct $\bA$ by taking $\bU$ and $\bV$ as discrete cosine (or sine) transform matrices. Further, observe that $\bU$ and $\bV$ are orthogonal matrices; hence, $\sigma_i(\bA) = \sigma_i(\bSigma)$ for all $i \in [N \wedge p]$.
Finally, to show $\bA$ fits within our setting, we argue $\| \bA \|_\infty \, \leq C'$ for some constant $C' > 0$. 
% (the proof of which can be found in Appendix \ref{sec:proof_geo_decay}). 
%
\begin{proposition} \label{lemma:dg_decaying_sv}
Let $\bA$ be generated as above. 
Then, $\| \bA \|_\infty \, \leq C / (1-\theta)$.
Here, $C > 0$ and $\theta \in (0,1)$ are the constants that appear in the description of $\bSigma$.
Further, we have $v_i^T e_j = O(1/\sqrt{p})$ for all $i, j \in [p]$.
\end{proposition}
\noindent Proof of Proposition \ref{lemma:dg_decaying_sv} can be found in Appendix \ref{sec:proof_geo_decay}.

\begin{proof}[Proof of Proposition \ref{lemma:dg_decaying_sv}]
For $(i,j) \in [N] \times [p]$, we have $\bA_{ij} = \sum_{k=1}^{N \wedge p} \Sigma_{kk} U_{ik} V_{jk}$. Thus,
\begin{align*}
	| A_{ij} |	&= \left| \sum_{k=1}^{N \wedge p} \Sigma_{kk} U_{ik} V_{jk} \right|
		\leq	\sum_{k=1}^{N \wedge p} \Sigma_{kk} | U_{ik}| | V_{jk}|\\
		&\stackrel{(a)}{\leq}	\sum_{k=1}^{N \wedge p} \Sigma_{11} \theta^{k-1}\frac{1}{\sqrt{Np}}
		= \Sigma_{11} \frac{1 - \theta^{N \wedge p}}{1 - \theta} \frac{1}{\sqrt{Np}}\\
		&\stackrel{(b)}{\leq} \frac{C}{1 - \theta}.
\end{align*}
Here, (a) follows from that $| U_{ik} | = \frac{1}{\sqrt{N}}$, $| V_{jk} | = \frac{1}{\sqrt{p}}$, and $\Sigma_{kk} \leq \Sigma_{11} \theta^{k-1}$; 
and (b) follows from the assumption $\Sigma_{11} = C \sqrt{Np}$ and that $1 - \theta^{N\wedge p} \leq 1$.
\end{proof}

{
%\newpage

\section{Proof of Propositions \ref{prop:lvm}, \ref{prop:linear_comb}}\label{sec:syn_control_proofs}

\subsection{Proof of Proposition \ref{prop:lvm}}\label{sec:lvm_low_rank_proof}

This analysis is taken from \cite{xu2017rates} and is stated for completeness.

\medskip
\noindent \textbf{Step 1: Partitioning the space $[0,1)^{K}$.}
Let $\cE$ denote a partition of the cube $[0,1)^{K}$ into a finite number (denoted by $|\cE|$) of cubes $\Delta$. 
Let $\ell \in \Nb$. 
We say $P_{\cE,\ell}: [0,1]^K \to \Rb $ is a piecewise polynomial of degree $\ell$ if
\begin{align}\label{eq:def_piecewise_poly}
	P_{\cE,\ell} (\theta) = \sum_{\Delta \in \cE} P_{\Delta,\ell} (\theta) \mathbb{1}(\theta \in \Delta),  
\end{align}
where $P_{\Delta,\ell} (\theta): [0,1]^K \to \Rb $ denotes a polynomial of degree at most $\ell$. 

It suffices to consider an equal partition of $[0,1)^K$. 
More precisely, for any $k \in \Nb$, we partition the the set $[0,1)$into $1 / k$ half-open intervals of lengths $1/ k$,
i.e, 
$
[0, 1) = \cup_{i=1}^k \left[ (i-1)/k, i/k \right).
$
It follows that $[0,1)^{K}$ can be partitioned into $k^{K}$ cubes of forms $\otimes_{j=1}^{K}  \left[ (i_j-1)/k,   i_j/k \right)$ with $i_j \in [k]$.
Let $\cE_k$ be such a partition with $I_1, I_2, \ldots, I_{k^{K}} $ denoting all such cubes and  $z_1, z_2, \ldots, z_{k^{K}} \in \Rb^{K} $ denoting the centers of those cubes. 

\medskip
\noindent \textbf{Step 2: Taylor Expansion of $g(\cdot, \rho_j)$.}
For Step 2 of the proof, to reduce notational overload, we suppress dependence of $\rho_j$ on $g$, i.e.let $g(\cdot) = g(\cdot, \rho_j)$. 

For every $I_i$ with $1 \le i \le k^K$, define $P_{I_i, \ell} (\theta) $ as the degree-$\ell$ Taylor's series expansion of $g(\theta)$ at point $z_i$:
\begin{align}\label{eq:Taylor_series}
	P_{I_i, \ell} (\theta) =\sum_{\kappa: |\kappa| \le \ell} \frac{1}{\kappa ! } \left(\theta-z_i \right)^\kappa \nabla_\kappa g( z_i ), 
\end{align}
where $\kappa=(\kappa_1,\ldots, \kappa_K)$ is a multi-index with $\kappa!=\prod_{i=1}^d \kappa_i!$, and $\nabla_k g(z_i)$ is the partial derivative defined in \eqref{eq:def_partial_derivative}.
Note similar to $g$, $P_{I_i, \ell} (\theta)$ really refers to $P_{I_i, \ell} (x, \rho_j)$
Now we define a degree-$\ell$ piecewise polynomial as in \eqref{eq:def_piecewise_poly}, i.e., 
\begin{align}\label{eq:piece_wise_polynomial}
P_{\cE_k, \ell} (\theta) = \sum_{i=1}^{k^{K}}  P_{I_i, \ell} (\theta) \mathbb{1}(\theta \in I_i). 
\end{align}

\noindent For the remainder of the proof, let $\ell = \lfloor \zeta \rfloor$.
Since $g(\cdot, \rho_j) \in \cH(\zeta,L)$, it follows from the  that 
\begin{align*}
&\sup_{ \theta \in [0, 1)^{K} } \left| g(\theta) - P_{\cE_k, \ell} (\theta) \right| \\
&= \sup_{1 \le i \le k^{K}} \sup_{\theta \in I_i}  \left| g(\theta) - P_{I_i, \ell} (\theta) \right| \\
&\stackrel{(a)} = \sup_{1 \le i \le k^{K}} \sup_{\theta \in I_i} \left| \sum_{\kappa:|\kappa| \le \ell -1} \frac{\nabla_\kappa g ( z_i )}{\kappa !}(\theta - z_i)^\kappa 
			  + \sum_{\kappa:|\kappa| = \ell} \frac{\nabla_\kappa g ( z_i^' )}{\kappa !}(\theta - z_i)^{\kappa}
			  - P_{I_i, \ell} ( \theta ) \right| \\
&= \sup_{1 \le i \le k^{K}} \sup_{\theta \in I_i} \left| \sum_{\kappa:|\kappa| \le \ell -1} \frac{\nabla_\kappa g ( z_i )}{\kappa !}(\theta - z_i)^\kappa 
			\pm  \sum_{\kappa:|\kappa| = \ell} \frac{\nabla_\kappa g ( z_i )}{\kappa !}(\theta - z_i)^{\kappa} 
			+ \sum_{\kappa:|\kappa| = \ell} \frac{\nabla_\kappa g ( z_i^' )}{\kappa !}(\theta - z_i)^{\kappa} 
			- P_{I_i, \ell}  (\theta) \right| \\
&= \sup_{1 \le i \le k^{K}} \sup_{\theta \in I_i} \left| \sum_{\kappa:|\kappa| \le \ell} \frac{\nabla_\kappa g ( z_i )}{\kappa !}(\theta - z_i)^\kappa 
			+ \sum_{\kappa:|\kappa| = \ell} \frac{\nabla_\kappa g ( z_i^' ) - \nabla_\kappa g ( z_i )}{\kappa !}(\theta - z_i)^{\kappa} 
			- P_{I_i, \ell}  (\theta) \right| \\
&= \sup_{1 \le i \le k^{K}} \sup_{\theta \in I_i} \, \left| \sum_{\kappa:|\kappa| = \ell} \frac{\nabla_\kappa g ( z_i^' ) - \nabla_\kappa g ( z_i )}{\kappa !}(\theta - z_i)^{\kappa} 
			 \right| \\
&\stackrel{(b)} \le  \sup_{1 \le i \le k^{K}}  \sup_{\theta \in I_i} \|\theta - z_i \|_\infty^\ell \, \sup_{\theta \in I_i} 
\sum_{\kappa: |\kappa|=\ell} \frac{1}{\kappa!} \left| \nabla_\kappa g ( z_i^' ) -\nabla_{\kappa} g(z_{i} ) \right|  \\
&\stackrel{(c)} \le  \cL  \sup_{1 \le i \le k^{K}} \sup_{\theta \in I_i} \|\theta - z_i \|_\infty^{\zeta} =  \cL  k^{-\zeta}.
\end{align*}
where (a) follows from multivariate's version of Taylor's theorem (and using the Lagrange form for the remainder) and $z^'_i \in [0, 1)^K$ is a vector that can be represented as $z^'_i = (1-c) z_i + c x$ for $c \in (0, 1)$; (b) follows from Holder's inequality; (c) follows from Definition \ref{def:holder}.

\medskip
\noindent \textbf{Step 3: Construct Low-Rank Approximation of $\bA'$ Using $P_{\cE_k, \ell} (\cdot, \rho_j)$.}
Recall $\bA'_{ij} = g(\theta_i, \rho_j)$, and $g(\cdot, \rho_j) \in \cH(\zeta, \cL)$. 
We now construct a low-rank approximation of it using $P_{I_i, \ell} (\cdot, \rho_j)$. 
Define $\bA^{\text{(lr)}} \in \Rb^{N \times p}$, where $\bA^{\text{(lr)}}_{ij} = P_{\cE_k, \ell}  (\theta_i, \rho_j)$.

By Step 2, we have that for all $i \in [N], j \in [p]$, 
\[
	\Big| \bA'_{ij} - \bA^{\text{(lr)}}_{ij} \Big| \le \cL k^{-\zeta}
\]
It remains to bound the rank of $\bA^{\text{(lr)}}$. Note that since $P_{\cE_k, \ell}  (\theta_i, \rho_j)$ is a piecewise polynomial of degree $\ell = \lfloor \zeta \rfloor$, it has a decomposition of the form
\[
	\bA^{\text{(lr)}}_{ij} = P_{\cE_k, \ell}  (\theta_i, \rho_j) = 
	\sum_{i=1}^{k^{K}} \langle \Phi(\theta), \beta_{I_i, s} \rangle 
	\mathbb{1}(\theta \in I_i)
\]
where the vector 
\[
	\Phi(\theta) = \Big(1, \theta_1, \dots, \theta_K, \dots, \theta_1^\ell, \dots, \theta_K^\ell \Big)^T,
\]
i.e., is the vector of all monomials of degree less than or equal to $\ell$. The number of such monomials is easily show to be equal to $C(\zeta, K) := \sum^{\lfloor \zeta \rfloor}_{i=0} { i + K  - 1 \choose K - 1}$. 

\noindent Thus the rank of $\bA^{\text{(lr)}}$ is bounded by $k^K C(\zeta, K)$. Setting $k = 1/\delta$ completes the proof.

\subsection{Proof of Proposition \ref{prop:linear_comb}}\label{sec:linearity_exists}

\noindent Let $\bA^{\text{(lr)}}$ and $\beta^{*}$ be defined as in Property \ref{glm_linearity}. 
Then,
\begin{align*} 
	| A'_{i0} - \sum_{k=1}^{r} \beta^*_k \cdot A'_{ik}| &= | A'_{i0} \pm \bA^{\text{(lr)}}_{i0} - \sum_{k=1}^{r} \beta^*_k \cdot A'_{ik} \pm \sum_{k=1}^{r} \beta^*_k \cdot \bA^{\text{(lr)}}_{ik}|
	\\ &\le | A'_{i0} - \bA^{\text{(lr)}}_{i0} | + | \sum_{k=1}^{r} \beta^*_k \cdot A'_{ik} - \sum_{k=1}^{r} \beta^*_k \cdot \bA^{\text{(lr)}}_{ik}| + | \bA^{\text{(lr)}}_{i0} - \sum_{k=1}^{r} \beta^*_k \cdot \bA^{\text{(lr)}}_{ik}| 
	\\ &= | A'_{i0} - \bA^{\text{(lr)}}_{i0} | + | \sum_{k=1}^{r} \beta^*_k \cdot A'_{ik} - \sum_{k=1}^{r} \beta^*_k \cdot \bA^{\text{(lr)}}_{ik}| 
	\\ &\le  | A'_{i0} - \bA^{\text{(lr)}}_{i0} | + \sum_{k=1}^{r} |  \beta^*_k \cdot A'_{ik} -  \beta^*_k \cdot \bA^{\text{(lr)}}_{ik}| 
	\\ &\le C (r +1) \cL \cdot \delta^\zeta
\end{align*}
By Property \ref{glm_linearity}, we have $r \le C(\zeta, K) \Big(\dfrac{1}{\delta}\Big)^K$, which completes the proof.
}

\subsection{Proof of Theorem \ref{thm:mse_sc}}\label{sec:mse_sc_proof}
\begin{proof}
The bound in Theorem \ref{thm:mse_sc}, given by \eqref{eq:mse_upper_generic_refined_LVM_detailed-SC}, is a sum of the pre-intervention error term and the additional penalty paid for the generalization error in the post-intervention period.
The first term,
$$
\frac{C' C(\zeta, K) \cL^{2} \|  \beta^*\|_1^2}{\rho^4} \left(\frac{1}{(n \wedge p)^{{1 - \frac{K}{2\zeta}}}}  \right) \log^5(np) 
$$
comes due to the pre-intervention error and the it follows immediately from Corollary \ref{cor:LVM-spectra-error}.
The second term,
$$
\frac{ C'''  k^{5/2} } {\sqrt{n}} \| \beta^* \|_1
$$
comes due to the generalization error of RSC/PCR for the post-intervention period. 
The proof of this bound on the generalization error of RSC/PCR follows in an identical fashion to Theorem \ref{thm:test_pcr} -- the only change in the proof of Theorem \ref{thm:test_pcr} is that wherever an expectation over $\Omega$ was taken, we appropriately substitute it by taking an expectation over $\Theta$, the latent distribution from which $\theta_i$ is sampled.
\end{proof}

\end{appendix}

\end{document}